\DeclareMathOperator*{\argmax}{arg\,max}
\DeclareMathOperator*{\argmin}{arg\,min}
\newenvironment{proofof}[1]{\vspace{0.8em}\par{\noindent \textit{Proof of #1.}}}{\hspace*{\fill} $\qed$ \par}
\newtheorem{theorem}{Theorem}[section]
\newtheorem{lemma}[theorem]{Lemma}
\newtheorem{definition}[theorem]{Definition}
\newtheorem{corollary}[theorem]{Corollary}
\newtheorem{remark}[theorem]{Remark}
\newtheorem{assumption}{Assumption}
\newcommand{\E}{\mathbb{E}}
\newcommand{\ignore}[1]{}
\def\Rset{\mathbb{R}}
\let\Pr\undefined
\DeclareMathOperator*{\Pr}{\mathbb{P}}
\DeclareMathOperator{\VCdim}{VCdim}
\newcommand{\cA}{\mathcal{A}}
\newcommand{\cB}{\mathcal{B}}
\newcommand{\cC}{\mathcal{C}}
\newcommand{\cD}{\mathcal{D}}
\newcommand{\cF}{\mathcal{F}}
\newcommand{\cG}{\mathcal{G}}
\newcommand{\cH}{\mathcal{H}}
\newcommand{\cO}{\mathcal{O}}
\newcommand{\cP}{\mathcal{P}}
\newcommand{\cX}{\mathcal{X}}
\newcommand{\cY}{\mathcal{Y}}
\newcommand{\cZ}{\mathcal{Z}}
\newcommand{\bbZ}{\mathbb{Z}}
\newcommand{\bbR}{\mathbb{R}}
\newcommand{\round}{\text{Round}}
\newcommand{\rh}{\tilde{h}}
\newcommand{\eps}{{\varepsilon}}
\DeclareMathOperator{\BR}{\mathsf{BR}}
\newcommand{\inv}{^{-1}}
\newcommand{\bfhaty}{ {\hat{ {\bf y}}} }
\newcommand{\SQE}{\mathrm{SQE}}
\newcommand{\SQErr}{\mathrm{SQErr}}
\newcommand{\CalDist}{\mathrm{CalDist}}
\newcommand{\ECE}{\mathrm{ECE}}
\newcommand{\hy}{\hat{y}}
\newcommand{\Tk}[1]{T^{\ge #1}}
\newcommand{\alice}{A}
\newcommand{\bob}{B}
\newcommand{\yak}[2]{\hat{y}^{#1,#2}_A}
\newcommand{\ybk}[2]{\hat{y}^{#1,#2}_B}
\newcommand{\ymk}{\yak}
\newcommand{\yhk}{\ybk}
\newcommand{\byak}[2]{\bfhaty^{#1,#2}_A}
\newcommand{\bymk}[2]{\byak}
\newcommand{\byhk}[2]{\byhk}
\newcommand{\pak}[2]{\hat{y}^{#1,#2}_A}
\newcommand{\pbk}[2]{\hat{y}^{#1,#2}_B}
\newcommand{\pmk}{\pak}
\newcommand{\phk}{\pbk}
\newcommand{\barpak}[2]{\bar{p}^{#1,#2}_A}
\newcommand{\barpbk}[2]{\bar{p}^{#1,#2}_B}
\newcommand{\barpmk}[2]{\barpak}
\newcommand{\barphk}[2]{\barpbk}
\newcommand{\1}{\mathbbm{1}}
\renewcommand{\BR}{\mathrm{BR}}
\newcommand{\aliceModel}[1]{f_A^{#1}}
\newcommand{\aliceOracle}{\cO_{\cH_A}}
\newcommand{\aliceSample}{S_A}
\newcommand{\aliceData}{x_A}
\newcommand{\alicePreds}[1]{P_A^{#1}}
\newcommand{\aliceModelBoost}[2]{f_{A}^{#1,v,#2}}
\newcommand{\aliceG}{g_{r+1,v,k+1}^{v',i}}
\newcommand{\aliceOracleModel}[1]{h_A^{r+1,v,#1,v'}}
\newcommand{\aliceTranscript}[1]{\pi_A^{#1}}
\newcommand{\bobModel}[1]{f_B^{#1}}
\newcommand{\bobOracle}{\cO_{\cH_B}}
\newcommand{\bobSample}{S_B}
\newcommand{\bobData}{x_B}
\newcommand{\bobPreds}[1]{P_B^{#1}}
\newcommand{\bobTranscript}[1]{\pi_B^{#1}}
\newcommand{\playerOracle}{\cO_{\cH_\bullet}}
\newcommand{\playerSample}{S_{\bullet}}
\newcommand{\playerSampleLS}[2]{S_\bullet^{#1,#2}}
\newcommand{\playerSampleLSBoost}[1]{S_\bullet^{r,v,#1,v'}}
\newcommand{\playerModel}[1]{f_{\bullet}^{#1}}
\newcommand{\playerModelLS}[2]{f_{\bullet}^{#1,#2}}
\newcommand{\playerModelLStilde}[2]{\tilde{f}_{\bullet}^{#1,#2}}
\newcommand{\playerData}{x_\bullet}
\newcommand{\playerModelBoost}[1]{f_{\bullet}^{r,v,#1}}
\newcommand{\playerOracleModel}[1]{h_\bullet^{r,v,#1,v'}}
\newcommand{\playerPreds}[1]{P_{\circ}^{#1}}
\newcommand{\playerDecisionRule}[1]{\rho_{\bullet}^{#1}}
\newcommand{\err}{\text{err}}
\newcommand{\ls}{\text{LS}}
\newcommand{\lsboost}{\textrm{INTERNAL-BOOST}}
\newcommand{\lsboosteval}{\textrm{INTERNAL-BOOST-EVAL}}
\newcommand{\crossboost}{\textrm{CROSS-BOOST}}
\newcommand{\crossboosteval}{\textrm{CROSS-BOOST-EVAL}}
\newcommand{\ndim}{\textup{Ndim}}
\newcommand{\pdim}{\textup{Pdim}}
\newcommand{\inducedLS}{\cG_{\cH_J, \eta}}
\newcommand{\inducedThreshold}{\cG_{\cH_J, \eta}^\le}
\newcommand{\transcript}[1]{\pi^{#1}}
\newcommand{\lsTranscript}[1]{\pi^{r,v}}
\newcommand{\lsSubtranscript}[1]{\pi^{r,v,#1}}
\newcommand{\sampleExp}{\E_{(\playerData, y) \sim S_\bullet}}
\newcommand{\sampleExpLS}{\E_{(\playerData, y) \sim S_\bullet^{r,v}}}
\newcommand{\roundStuff}{\round\left(\tilde{f}_\bullet^{r,v,K}(\playerData);m^2\right)}
\newcommand{\unroundedModel}{\tilde{f}_\bullet^{r,v,K}(\playerData)}
\newcommand{\roundedModel}{f_\bullet^{r,v,K}(\playerData)}
\newcommand{\miniquad}{\quad\quad\quad}
\newcommand{\megaquad}{\quad\quad\quad\quad\quad\quad\quad\quad\quad\quad}
\newcommand{\megamegaquad}{\megaquad\megaquad}
\newenvironment{protocol}[1][htb]{%
    \renewcommand{\ALG@name}{Protocol}% Update algorithm name
   \begin{algorithm}[#1]%
  }{\end{algorithm}}
\title{Collaborative Prediction: \\ Tractable Information Aggregation via Agreement}
\author{Natalie Collina \vspace{4pt} \\ Varun Gupta \and Ira Globus-Harris  \vspace{4pt} \\ Aaron Roth \vspace{4pt}  \\ 
  \textit{University of Pennsylvania} 
  \and Surbhi Goel \vspace{4pt} \\ Mirah Shi \\
}
\begin{document}
\pagestyle{empty} %fixing numbering for first pages

\maketitle
\thispagestyle{empty}
\begin{abstract}
We give efficient ``collaboration protocols'' through which two parties, who observe different features about the same instances, can interact to arrive at predictions that are more accurate than either could have obtained on their own. The parties only need to iteratively share and update their own label predictions---without either party ever having to share the actual features that they observe. Our protocols are efficient reductions to the problem of learning on each party's feature space alone, and so can be used even in settings in which each party's feature space is illegible to the other---which arises in models of human/AI interaction and in multi-modal learning. The communication requirements of our protocols are independent of the dimensionality of the data. In an online adversarial setting we show how to give regret bounds on the predictions that the parties arrive at with respect to a class of benchmark policies defined on the joint feature space of the two parties, despite the fact that neither party has access to this joint feature space. We also give simpler algorithms for the same task in the ``batch'' setting in which we assume that there is a fixed but unknown data distribution.  We  generalize our protocols to a decision theoretic setting with high dimensional outcome spaces---the parties in this setting do not need to communicate their (high dimensional) predictions about the outcome, but can instead communicate only ``best response actions'' with respect to a known utility function and their predicted outcome distribution. 

Our theorems give a computationally and statistically tractable generalization of past work on information aggregation amongst Bayesians who share a common and correct prior, as part of a literature studying ``agreement'' in the style of Aumann's agreement theorem. Our results require no knowledge of (or even the existence of) a prior distribution and are computationally efficient. Nevertheless we show how to lift our theorems back to this classical Bayesian setting, and in doing so, give new information aggregation theorems for Bayesian agreement. In particular we give the first distribution-agnostic information aggregation theorems that do not require making assumptions on the prior distribution, but instead are able to give worst-case accuracy guarantees with respect to restricted classes of functions on the parties' joint feature spaces.

\end{abstract}

% \setcounter{page}{0}
% \thispagestyle{empty}
% \clearpage
% \setcounter{page}{0}
% \clearpage
\newpage
\tableofcontents
\clearpage
\pagestyle{plain}
% \setcounter{page}{0}
% \thispagestyle{empty}
%  \clearpage
\pagenumbering{arabic}
\setcounter{page}{1}

\section{Introduction}

Imagine that there are multiple parties who hold different kinds of information about the same examples, and would like to collaboratively learn an optimal label predictor on their joint feature space. The straightforward solution would be for them to  pool their features and attempt to train an optimal predictor on the data containing the pooled features. But there are settings in which this approach is infeasible. For example, the features legible to one party might not be legible to another. This is the case in e.g. models of human/AI interaction in which one of the parties is a human being and another is a predictive model \citep{alur2024human,collina2025tractable}. In this case the human being is in possession of qualitative features that are difficult to encode for a model (in a medical application, e.g. observations about patient demeanor, mood, smell, etc. that are difficult to formalize) and the model in turn has been trained on enormous amounts of data that cannot be easily used by the human being. In other settings, directly sharing the data might be impossible because of legal or contractual obligations, which is the case in  regulated industries like healthcare. This setting motivates the field of ``vertically federated learning'' \citep{wei2022vertical}. It may also be that the data itself is very high dimensional, and communication bandwidth constraints preclude sharing it in its entirety --- which motivates the study of the communication complexity of distributed learning \citep{balcan2012distributed}. Finally, the technical expertise to train on different kinds of features might be siloed: for example, the data held by different parties might be multi-modal; one party might hold image data (e.g. CT scans) while another might hold text data (e.g. physician notes). Each party might have the tooling necessary to learn on data in their own modality, but none may have the tooling to be easily able to learn on all of the data together.

Aumann's agreement theorem suggests a tempting general (if stylized) solution to these problems: it states that two perfectly informed Bayesians with a common prior, different observations, and common knowledge of each other's posteriors must share the same posterior \citep{aumann1976}. Subsequent work has given finite-time convergence protocols through which the different parties  engage in a conversation about their beliefs about the outcome \citep{geanakoplos1982we,aaronson2004complexity} without ever sharing their observations. In particular, \cite{aaronson2004complexity} showed that for 1-dimensional real-valued outcomes, two Bayesian will reach approximate agreement quickly --- in a number of rounds that depends only (polynomially) on the error parameters characterizing ``approximate'' agreement, independently of the dimensionality or complexity of the prior distribution. Agreement on its own does not in general solve the collaborative learning problem --- it has been known since  \cite{geanakoplos1982we} that \emph{agreement} does not imply \emph{information aggregation}\footnote{Consider a joint distribution on bits $x_A$, $x_B$, and $y$ that are all marginally uniform, but such that $y = x_A + x_B \mod 2$. If Alice is in possession of $x_A$ and Bob is in possession of $x_B$ they will agree that $\Pr[y = 1] = \frac{1}{2}$, even though they would know $y$ with certainty if they shared their data with each other.}. In other words, although two Bayesians engaging in an agreement protocol can only improve the accuracy of their beliefs, they may \emph{agree} on beliefs that are less accurate than those they would have arrived at had they instead shared their information and formed a posterior belief conditional on their joint observations.  Nevertheless \cite{kong2023false} and \cite{bo2023agreement} have studied conditions (on the prior distribution) under which agreement \emph{does} imply full information aggregation --- i.e. optimal learning on the joint feature space.  Unfortunately, since it studies perfectly informed Bayesians, this literature makes implausible computational and epistemic assumptions (Why do the two parties share the same, perfect prior knowledge? How do they perform Bayes updates in complex settings?) which makes these approaches seemingly far from algorithmic solutions. Recently, \cite{collina2025tractable} showed how to recover and generalize  quantitative agreement theorems without making any distributional assumptions (i.e. in online adversarial settings) using only computationally and statistically tractable calibration conditions that substantially relax Bayesian rationality. But the work of \cite{collina2025tractable} says nothing about information aggregation, and so does not provide a solution to the collaborative learning problem.

In this paper we generalize the connection between agreement and information aggregation and  give computationally efficient protocols that provably result in information aggregation after only a small number of rounds of communication, that need not make any distributional assumptions at all.  In our model, different parties hold different (possibly overlapping) features of the same examples, and may interact over a small number of rounds to share (only) their label predictions, computed from their own features, with each other. Because they hold different information from each other, different parties will likely initially make different predictions about the same example. Nevertheless, during the interaction, they may update their predictions in response to the predictions of their counterparty in the collaboration protocol. We would like them to converge on predictions that are more accurate than any single party could have obtained on their own --- and ideally predictions that are \emph{optimal} with respect to some benchmark class of predictors that are defined on their joint feature space, despite the fact that every participant in the protocol only has access to their own feature space. Because the parties only need to communicate their label predictions (to bounded precision) at each round, the communication complexity of our protocol is independent of the dimensionality of the data. 

We give two variants of our protocol: one for prediction in the online adversarial setting, in which we make no distributional assumptions at all --- and a simpler protocol for the batch setting, in which data is assumed to be drawn i.i.d. from a fixed but unknown and arbitrary distribution. In both cases we guarantee that the collaboration protocol is accuracy-improving, and give conditions under which the predictions are provably \emph{optimal} with respect to a  benchmark class of models defined on the pooled features. These conditions are frequentist ``weak-learning'' assumptions that substantially generalize the ``information substitutes'' condition on prior distributions used by \cite{bo2023agreement} in Bayesian settings.  Moreover our protocols are computationally efficient to run in the sense that they are computationally efficient reductions from the problem of multi-party learning to the problem of single-party learning, and therefore efficient in the worst case whenever the single-party learning problem can be efficiently solved. Each party only needs to run their own learning algorithm, tailored to data from their own modality, on their own data a bounded number of times in order to engage in the protocol. 

Finally, we show that all of our results ``lift'' back to the Bayesian setting of \cite{aumann1976,aaronson2004complexity,bo2023agreement}, resulting in new theorems about agreement and information aggregation in the classical setting in which examples are assumed to be drawn from a fixed and known prior, and a conversation between two perfect Bayesians occurs for a single draw from this prior. Among other things, we show that Bayesian agreement implies accuracy at least that of the best linear function on the joint feature space of the two parties, independently of any assumptions on the prior distribution --- the first such distribution-independent information aggregation theorem we are aware of in the agreement literature.

\subsection{Our Model and Results}
We begin by describing our results in the online-adversarial setting, when our goal is to solve a one-dimensional regression problem; we then describe extensions to more complex prediction tasks and to simpler algorithms in the batch setting. Finally we describe how our results ``lift'' to one-shot interactions if both parties are perfect Bayesians and share a common and correct prior --- the traditional setting for Aumann's agreement theorem \citep{aumann1976}.

There is a feature space $\cX = \cX_A \times \cX_B$ partitioned into parts $\cX_A$ and $\cX_B$ which may each be arbitrary, as well as a label space $\cY$ that initially we take to be $\cY = [0,1]$. At each day $t$, an arbitrary adaptive adversarial process chooses an example $x^t = (x_A^t,x_B^t) \in \cX$ and a label $y^t \in \cY$. Party 1 (Alice) receives $x_A^t$ and Party 2 (Bob) receives $x_B^t$. Each day, Alice and Bob then engage in a ``collaboration protocol'' which is an interaction that takes place across $K$ rounds. In odd rounds $k$, Alice produces a prediction $\hat{y}^{t,k} \in \cY$ that may be a function of $x^t_A$ as well as all previously observed history (including Bob's predictions at previous rounds on the same day). Similarly, in even rounds $k$, Bob produces a prediction $\hat{y}^{t,k} \in \cY$ that may be a function of $x_B^t$ and all previously observed history. Crucially Alice and Bob never share their feature vectors with one another---only label predictions. At the final round $K$ each day, they fix their final prediction $\hat y^t = \hat y^{t,K}$, at which point both Alice and Bob learn the true label $y^t$, and time proceeds to the next day. 

Our goal in interaction is to arrive at a set of predictions $\hat y^1,\ldots,\hat y^T$ that have squared error that is as low as the \emph{best predictor in hindsight} in some class of models $\cH_J$ defined on the \emph{joint} feature space of Alice and Bob: i.e. each function $h \in H_J$ has the form $h:\cX\rightarrow \mathbb{R}$ and produces a prediction $h(x_A,x_B)$ that is a function of the features available to both parties. For example, we might take $\cH_J$ to be the set of all norm-bounded linear functions on the joint feature space. In other words, we want to be able to guarantee, against an arbitrary adversarial sequence:
\begin{definition}[Predictions have No (External) Regret to $\cH_J$] 
The final predictions $\hat y^1,\ldots \hat y^T$ have no (external) regret to $\cH_J$ if for every $h \in \cH_J$:
$$\sum_{t=1}^T (\hat y^t - y^t)^2 \leq \sum_{t=1}^T (h(x^t) - y^t)^2$$
\end{definition}

The difficulty is that there may be no predictor defined over $\cX_A$ or $\cX_B$ individually that can obtain this --- collaborating to use information from \emph{both} parties is essential. What each party can try to do instead is to make predictions $\hat{y}^{t,k}$ during their own rounds of conversation that have no regret with respect to classes of functions $\cH_A$ and $\cH_B$ that are respectively defined only on their own feature spaces --- i.e. each function $h_A \in \cH_A$ is defined as $h_A:\cX_A \rightarrow \mathbb{R}$ and each function $h_B \in \cH_B$ is defined as $h_B:\cX_B \rightarrow \mathbb{R}$. For example, $\cH_A$ and $\cH_B$ might be the set of unit-norm linear functions defined only over $\cX_A$ and $\cX_B$ respectively. We take as an intermediate goal to produce a \emph{single} sequence of predictions at some round $k \in \{1,\ldots,K\}$ that has no \emph{swap}-regret with respect to $\cH_A$ and $\cH_B$ simultaneously.

\begin{definition}[Swap Regret (Informal Version of Definition \ref{def:swap})]
A sequence of predictions $\hat y^{1,k},\ldots,\hat y^{T,k}$ has no swap regret with respect to a class $\cH$ if for every value $v \in \{\hat y^{1,k},\ldots,\hat y^{T,k}\}$ and for every $h \in \cH$:
$$\sum_{t=1}^{T}\mathbbm{1}[\hat y^{t,k} = v](\hat y^{t,k} - y^{t})^{2} \leq
    \min_{h \in \cH}\left( \sum_{t = 1}^{T}\mathbbm{1}[\hat y^{t,k} = v](h(x^{t}) - y^{t})^{2}\right)  $$
\end{definition}

Swap regret is a stronger condition than external regret --- it requires that our predictions $\hat y^t_k$ be as accurate as the best model $h \in \cH$ not just marginally, but conditionally on the value of our own predictions. As an intermediate step towards our goal of obtaining predictions with no regret to $\cH_J$, we will hope to produce a single sequence of predictions that has no swap regret with respect to  classes $\cH_A$ and $\cH_B$ which are individually weaker than $\cH_J$ (as they are defined only on $\cX_A$ and $\cX_B$ respectively). To relate this condition to our ultimate goal,  we define a weak learning condition (related to the weak learning condition used to characterize the relationship between multicalibration and boosting by \cite{globus-harris2023multicalibration}) that relates $\cH_A, \cH_B$ and $\cH_J$. 

\begin{definition}[Weak Learning for Regression (Informal Version of Definition~\ref{def:joint-weak})]
We say that $\cH_A$ and $\cH_B$ jointly satisfy the \emph{weak learning} condition with respect to $\cH_J$ if for any joint distribution $\cD$ over $\cX\times \cY$ we have that if:
$$\min_{h_J \in \cH_J}\E_\cD[(h_J(x)-y)^2] < \E_\cD[(\mu(\cD)-y)^2]$$
Then there either exists $h_A \in \cH_A$ such that:
$$\E_\cD[(h_A(x_A)-y)^2] < \E_\cD[(\mu(\cD)-y)^2]$$
or there exists $h_B \in \cH_B$ such that:
$$\E_\cD[(h_B(x_B)-y)^2] < \E_\cD[(\mu(\cD)-y)^2]$$
where $\mu(\cD) = \E_\cD[y]$ is  the label mean over the distribution.
\end{definition}
 In other words, the weak learning condition requires that if there is any model in the joint class $\cH_J$ that obtains predictions with lower error than a constant predictor, there must be some model in either Alice's class $\cH_A$ or Bob's class $\cH_B$ that also obtains lower error than a constant predictor. This is a weak learning condition because the model in the joint class might obtain \emph{much} better error than a trivial constant predictor, but we only require that $\cH_A$ or $\cH_B$ obtain \emph{slightly} better than trivial error. 

 In Section \ref{sec:boost} we prove a ``boosting'' theorem: if $\cH_A$ and $\cH_B$ satisfy the weak learning condition with respect to $\cH_J$, and a sequence of predictions  $\{\hat y^{1,k},\ldots,\hat y^{T,k}\}$ has no swap regret with respect to $\cH_A$ and $\cH_B$, then it must in fact have no external regret with respect to the joint class $\cH_J$.  
 %A weaker consequence of this is that the predictions  $\{\hat y^{1,k},\ldots,\hat y^{T,k}\}$ have no (external) regret with respect to $\cH_J$. 
 In fact, we show that a slightly weaker condition than swap regret suffices. It is enough that the sequence of predictions has low \emph{distance} to no-swap regret with respect to $\cH_A$ and $\cH_B$ --- i.e. that it is possible to perturb the sequence of predictions by a small amount in the $L_1$ norm such that the perturbed predictions have no swap regret. This is related to recently studied notions of ``distance to calibration'' \citep{blasiok2023unifying,qiao2024distance,arunachaleswaran2025elementary}, and will be easier for us to satisfy. 
  It then follows that any other sequence of predictions  $\{\hat y^1,\ldots,\hat y^T\}$ that has lower squared error than predictions  $\{\hat y^{1,k},\ldots,\hat y^{T,k}\}$ must in turn have no (external) regret with respect to $\cH_J$. 

  Given classes $\cH_A$ and $\cH_B$ that satisfy our weak learning condition with respect to a class of models $\cH_J$ defined on the joint feature space $\cX$, our problem is thus reduced to giving a collaboration protocol that quickly converges to a sequence of predictions that simultaneously has no swap regret to both $\cH_\cA$ and $\cH_B$. Towards this end, we ask that both Alice and Bob satisfy a condition that we call ``conversation swap regret'' relative to $\cH_A$ and $\cH_B$ respectively.
  \begin{definition}[Conversation Swap Regret (Definition \ref{def:CSR})]
  We say that Bob's predictions have no \emph{conversation swap regret} with respect to $\cH_B$ if for every even round of conversation $k$ and for every pair of values $v \in \{\hat y^{1,k},\ldots,\hat y^{T,k}\}$ and $v' \in \{\hat y^{1,k-1},\ldots,\hat y^{T,k-1}\}$:
$$\sum_{t=1}^{T}\mathbbm{1}[\hat y^{t,k-1} = v']\mathbbm{1}[\hat y^{t,k} = v](\hat y^{t,k} - y^{t})^{2} \leq
    \min_{h \in \cH}\left( \sum_{t = 1}^{T}\mathbbm{1}[\hat y^{t,k-1} = v']\mathbbm{1}[\hat y^{t,k} = v](h(x^{t}) - y^{t})^{2}\right)  $$
    If Alice satisfies a symmetric condition on odd rounds $k$ with respect to $\cH_A$ we say that Alice has no conversation swap regret with respect to $\cH_A$.
  \end{definition}
  In other words, conversation swap regret requires that Alice and Bob satisfy the no swap regret condition (with respect to their respective model classes $\cH_A$ and $\cH_B$) not just marginally over the whole sequence, but on each subsequence defined by the other party's prediction \emph{at the previous round of interaction}. Whenever $\cH_A$ and $\cH_B$ contain all constant functions with range in $[0,1]$, having no conversation swap regret implies satisfying the ``conversation calibration'' condition defined in \cite{collina2025tractable}. 
 
  In Section \ref{sec:online-alg}, we show that when both Alice and Bob make predictions with no conversation swap regret with respect to $\cH_A$ and $\cH_B$ respectively, then if the collaboration protocol runs for sufficiently  many rounds $K$, there must exist some round $k \leq K$ at which the sequence of predictions $\{\hat y^{1,k},\ldots,\hat y^{T,k}\}$ has low distance to swap regret with respect to \emph{both} $\cH_A$ and $\cH_B$ simultaneously. Although this round $k$ may not be the final round $K$, we also show that the final set of predictions has only lower squared error than the predictions made at any previous round $k$:
  $$\sum_{t=1}^T (\hat y^{t,K}-y^t)^2 \leq \sum_{t=1}^T (\hat y^{t,k} - y^t)^2$$
  Thus, by applying our ``boosting'' theorem from Section \ref{sec:boost}, we can conclude that if $\cH_A$ and $\cH_B$ satisfy the weak learning condition with respect to a joint class $\cH_J$, then the final sequence of predictions $\{\hat y^{1},\ldots,\hat y^{T}\}$ has no (external) regret with respect to the joint class $\cH_J$. 

  It remains to ask: for which classes $\cH_A$ and $\cH_B$ do there exist efficient algorithms for satisfying the no-conversation-swap regret condition, and are there examples of classes $(\cH_A, \cH_B, \cH_J)$ that satisfy the weak learning condition? In Sections~\ref{sec:linear} and \ref{sec:algorithmic} we provide answers to these questions. \cite{garg2024oracle} gave an  efficient reduction (in the style of \cite{blum2007external})  from the problem of obtaining no swap regret with respect to an arbitrary class of functions $\cH$ to the problem of obtaining no external regret with respect to $\cH$. We in turn give an efficient reduction from the problem of obtaining no conversation swap regret with respect to an arbitrary class of functions $\cH$ to the problem of obtaining no swap regret with respect to $\cH$. In combination, these results mean that there are computationally efficient algorithms for engaging in our collaboration protocol for any class of models $\cH_A, \cH_B$ that admit standard efficient online learning algorithms with regret guarantees --- and ``oracle efficient'' algorithms for any class of models for which there are online learning algorithms with good (external) regret bounds, even if they are not computationally efficient in the worst case. Because there exist computationally efficient algorithms for online adversarial norm-bounded linear regression \cite{azoury2001relative,vovk2001competitive} and related problems (e.g. squared error regression over reproducing kernel Hilbert spaces \cite{vovk2006line}), this immediately implies efficient algorithms for obtaining no conversation swap regret with respect to classes $\cH_A, \cH_B$ representing norm-bounded linear functions over $\cX_A$ and $\cX_B$ respectively. Moreover, we show in Section \ref{sec:linear}  that norm-bounded linear functions over $\cX_A$ and $\cX_B$ respectively satisfy the weak learning condition with respect to norm-bounded linear functions on the joint feature space $\cX$. In fact we show a more general theorem for any class of functions $\cH_J$ that can be represented as the Minkowski sum of classes $\cH_A$ and $\cH_B$ that are themselves bounded and star-shaped. Moreover we show (also in Section \ref{sec:linear}) that the weak learning condition we prove is quantitatively tight for linear functions.  
  
  All together, this means that we have a computationally and statistically efficient collaboration protocol for learning predictors that are as accurate as the best linear function on the joint feature space (and more general classes of functions). 

  \begin{theorem}[Informal statement of Theorem \ref{thm:sublinear-to-sublinear}]
      Fix any triple of hypothesis classes $\cH_A,\cH_B,$ and $\cH_J$. Suppose $\cH_A$ and $\cH_B$ consist of functions with bounded range and admit efficient online algorithms guaranteeing no external regret with respect to $\cH_A$ and $\cH_B$ respectively. If $\cH_A$ and $\cH_B$ satisfy the weak learning condition with respect to $\cH_J$, and the conversation length $K$ is sublinear in $T$ (but not constant), then there is an efficient collaboration protocol such that:
      \[
      \sum_{t=1}^T (\hat y^{t} - y^t)^2 - \min_{h_J\in\cH_J} \sum_{t=1}^T (h_J(x^t) - y^t)^2 \leq o(T)
      \]
      In particular, this is true for the classes of norm-bounded linear functions. 
  \end{theorem}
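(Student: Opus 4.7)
The proof I would give is obtained by chaining together the three main ingredients developed in Sections \ref{sec:boost}, \ref{sec:online-alg}, \ref{sec:linear}, and \ref{sec:algorithmic}. First, I would implement the protocol so that Alice and Bob each run an algorithm guaranteeing no conversation swap regret with respect to $\cH_A$ and $\cH_B$. By the reduction of Section \ref{sec:algorithmic}, the problem of achieving no conversation swap regret with respect to a class $\cH$ reduces (in an oracle-efficient way) to the problem of achieving no swap regret with respect to $\cH$, which in turn reduces via \cite{garg2024oracle} to the problem of no external regret with respect to $\cH$. Since $\cH_A$ and $\cH_B$ are assumed to have bounded range and admit efficient no-external-regret algorithms, each party can implement their side of the protocol efficiently, and the compositional overhead of the reductions still leaves the total conversation swap regret of each player sublinear in $T$ (with dependence on $K$ and the discretization scale).

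Second, I would apply the existence-of-a-good-round result from Section \ref{sec:online-alg}. That lemma, under the conversation-swap-regret assumption on both players, guarantees that among the $K$ rounds of interaction there exists some round $k^\star \le K$ at which the single sequence of predictions $\{\hat y^{t,k^\star}\}_{t=1}^T$ has simultaneously low distance to swap regret with respect to both $\cH_A$ and $\cH_B$. The quantitative point is that this bound scales like total conversation swap regret divided by $K$, so taking $K$ to grow with $T$ shrinks this aggregation factor. In parallel, the same section provides the monotonicity statement
\[
\sum_{t=1}^T (\hat y^{t,K}-y^t)^2 \;\le\; \sum_{t=1}^T (\hat y^{t,k^\star}-y^t)^2,
\]
so it suffices to prove the desired regret bound for the round-$k^\star$ predictions.

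Third, I would invoke the boosting theorem of Section \ref{sec:boost}: since $\cH_A$ and $\cH_B$ satisfy the weak learning condition with respect to $\cH_J$, any sequence of predictions with low distance to swap regret w.r.t.\ both $\cH_A$ and $\cH_B$ automatically has low external regret with respect to the joint class $\cH_J$. Applying this to the round-$k^\star$ predictions gives
\[
\sum_{t=1}^T (\hat y^{t,k^\star}-y^t)^2 \;\le\; \min_{h_J \in \cH_J}\sum_{t=1}^T (h_J(x^t)-y^t)^2 + o(T),
\]
and the monotonicity step then transfers this bound to the final predictions $\hat y^t = \hat y^{t,K}$. The specialization to norm-bounded linear functions follows because Section \ref{sec:linear} verifies the weak learning condition in that case, and there are known efficient no-external-regret algorithms for online linear regression \citep{azoury2001relative,vovk2001competitive}.

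The main technical obstacle is the calibration of $K = K(T)$. The per-round conversation swap regret blows up through the two reductions (polynomially in the discretization and in $K$), while the distance-to-swap-regret at the best round scales inversely in $K$; in addition, the quantitative form of the boosting theorem introduces a modulus depending on the weak-learning gap. I expect that choosing $K = T^{c}$ for a sufficiently small constant $c > 0$ (dictated by the exponents in the base no-external-regret bounds) simultaneously makes the per-round swap-regret term $o(T)$ when summed across rounds and drives the $1/K$ aggregation loss to $o(1)$, while remaining sublinear in $T$ as required. Verifying that such a $c$ exists, and that the qualitative weak-learning hypothesis indeed admits an effective quantitative version that can be plugged into the boosting argument, is the most delicate part of the proof.
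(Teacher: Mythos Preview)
Your proposal is correct and follows essentially the same approach as the paper: reduce to conversation swap regret via the chain of reductions in Section~\ref{sec:algorithmic}, invoke Theorems~\ref{thm:rounds}--\ref{thm:main} to find a good round $k^\star$, apply the boosting theorem (Theorem~\ref{thm:weak-learning}) there, and then transfer the bound to round $K$.

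One small correction: the monotonicity you state is not exact. The paper's Theorem~\ref{thm:cases} only gives
\[
\sum_{t=1}^T (\hat y^{t,K}-y^t)^2 \;\le\; \sum_{t=1}^T (\hat y^{t,k^\star}-y^t)^2 \;+\; O\bigl(K\,T\,\beta(T,f'_A,f'_B)\bigr),
\]
i.e.\ each pair of rounds can increase squared error by $O(T\beta)$, and this accumulates over $K$ rounds. This extra $K\cdot T\beta$ term is precisely why $K$ must be chosen \emph{sublinear} in $T$ (specifically $K=o(T^{\alpha''})$ in the paper's notation), not merely $\omega(1)$: if $K$ grows too fast this term becomes linear. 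You already anticipate $K$-dependent overhead in your last paragraph, so this is a detail you would catch when writing out the quantitative bounds, but the exact inequality as you wrote it does not hold.
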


\subsubsection{Tightness of Our Approach}
In \Cref{sec:lower-bounds} we give several lower bounds intended to illustrate the tightness of various aspects of our approach, answering several questions:

\paragraph{Is interaction necessary?} Perhaps for sufficiently simple classes of functions (e.g. linear functions) that satisfy our weak learning condition, no interaction is necessary --- maybe the optimal linear predictor on $\cX_A$ and $\cX_B$ already contains enough information to compete with the best linear predictor on the full feature space. We show that this is not the case, by exhibiting a lower bound instance (\Cref{thm:interaction-necessity}) such that the Bayes optimal predictors $h^*(x_A), h^*(x_B)$, and $h^*(x)$ defined on $\cX_A$, $\cX_B$, and $\cX$ are all linear, and yet no function of $h^*(x_A)$ and $h^*(x_B)$ has squared error competitive with $h^*(x)$. 
    % \ar{Point to which statement}

\paragraph{Is our weak learning condition necessary?} Can we relax our weak learning condition? We show that the answer is no, at least for any similar approach. Our boosting theorem demonstrates that the weak learning condition is sufficient for no swap regret with respect to $\cH_A$ and $\cH_B$ to imply no external regret with respect to $\cH_J$. We give a lower bound instance (\Cref{thm:weak-learning-necessity}) showing that it is also necessary: for any triple of function classes $\cH_A,\cH_B,\cH_J$ that fail to satisfy the weak learning condition, there is a distribution and a sequence of predictions such that the predictions have no swap regret with respect to $\cH_A$ and $\cH_B$ and yet have positive external regret with respect to $\cH_J$. We also show that our weak learning condition is strictly weaker than the ``information substitutes'' condition studied in \cite{bo2023agreement}, and that indeed linear functions do \emph{not} satisfy the information substitutes condition on all distributions (\Cref{lem:weak-is-weaker} and \Cref{cor:linear-weak-IS}).  
    % \ar{Point to these statements}

\paragraph{Is swap regret necessary?} Our collaboration protocol is designed to converge to a single sequence of predictions that has low (distance to) swap regret with respect to $\cH_A$ and $\cH_B$ simultaneously --- despite the fact that our ultimate goal is simply to have no \emph{external} regret with respect to $\cH_J$. Might it instead suffice to converge to a single sequence of predictions that has no \emph{external} regret with respect to $\cH_A$ and $\cH_B$? No. We give a lower bound instance (\Cref{lem:swap-necessary}) exhibiting that even for linear functions $\cH_A$ and $\cH_B$ (which satisfy the weak learning condition relative to linear functions on the joint feature space $\cH_J$), predictions that have no external regret with respect to $\cH_A$ and $\cH_B$ can still have positive external regret with respect to $\cH_J$. 
    % \ar{Point to these statements}
% \end{enumerate}

\subsubsection{A Decision Theoretic Extension for Higher Dimensional Outcome Spaces}
In Section \ref{sec:action} we give a decision theoretic extension of our setting to high dimensional outcome spaces. Now  the outcome space $\cY \subseteq [0,1]^d$ is $d$ dimensional, and we model a decision maker with a finite action space $\cA$ and a utility function $u:\cA\times \cY \rightarrow [0,1]$ that maps an action and an outcome to a utility. The natural extension of our one-dimensional solution to a $d$-dimensional outcome space---by asking for swap regret with respect to outcome predictions themselves---would inherit exponential dependencies on $d$. We circumvent this difficulty by not  communicating predictions $\hat y$ of the outcome $y \in \cY$ itself. Instead, in each round $k$, parties produce predictions $\hat y^{t,k} \in \cY$ but   communicate only \emph{actions} $a^{t,k} \in \cA$ that are utility maximizing given their predictions: $a^{t,k} = \BR_u(\hat y^{t,k})$ where the best response function is defined as:
$$ \BR_u(\hat y) = \arg\max_{a \in \cA} u(a,\hat y)$$
We use a definition of decision calibration sufficient to guarantee swap regret of the best response actions first used by \cite{noarov2023high}, generalizing the original definition given by \cite{zhao2021calibrating} (the definition from \cite{zhao2021calibrating} does imply swap regret bounds). 
\begin{definition}[Decision Calibration (Definition \ref{def:decision-calibration})]
 Fix an action space $\cA$ and a utility function $u:\cA\times \cY \rightarrow [0,1]$. 
 A sequence of outcome predictions $\{\hat y^{1,k},\ldots \hat y^{T,k}\}$ is decision calibrated  if for every action $a \in \cA$:
 $$\left\|\sum_{t=1}^T \mathbbm{1}[\BR_u(\hat y^{t,k}) = a](\hat y^{t,k} - y) \right\| = 0$$
 \end{definition}
We also use a definition of decision \emph{cross-calibration} first used by \cite{lu2025sample}:
\begin{definition}[Decision Cross Calibration (Definition \ref{def:decision-cross-calibration})]
 Fix an action space $\cA$, a utility function $u:\cA\times \cY \rightarrow [0,1]$, and a class of benchmark policies $\cC$ containing functions $c:\cX\rightarrow \cA$ mapping contexts to actions.  
 A sequence of outcome predictions $\{\hat y^{1,k},\ldots \hat y^{T,k}\}$ is decision cross-calibrated with respect to $\cC$ if for every pair of actions $a, a' \in \cA$ and for every $c \in \cC$:
 $$\left\|\sum_{t=1}^T \mathbbm{1}[\BR_u(\hat y^{t,k}) = a]\mathbbm{1}[c(x^t) = a'](\hat y^{t,k} - y) \right\| = 0$$
 \end{definition}
 If a sequence of predictions $\{\hat y^{1,k},\ldots \hat y^{T,k}\}$ is simultaneously decision calibrated and decision cross-calibrated with respect to $\cC$, then the corresponding sequence of actions $a^{t,k} = \BR_u(\hat y^{t,k})$ have no swap regret with respect to $\cC$ --- i.e. for every $c \in \cC$ and for every $a \in \cA$:
 $$\sum_{t=1}^T \mathbbm{1}[a^{t,k} = a]u(a^{t,k},y^t) \geq \max_{c \in \cC}\sum_{t=1}^T \mathbbm{1}[a^{t,k}=a]u(c(x^t),y^t)$$
 We ask that both Alice and Bob are decision calibrated and decision cross calibrated conditional on the action that the other communicated at the previous round --- which implies that both parties have no conversation swap regret with respect to $\cC_A$ and $\cC_B$ respectively on their own rounds. It also allows us to invoke a fast agreement theorem from \cite{collina2025tractable} which lets us establish fast convergence to a round of predicted actions that simultaneously has no swap regret to $\cC_A$ and $\cC_B$. This lets us apply a similar boosting theorem to the one we develop in Section \ref{sec:boost} to establish that the sequence of \emph{actions} $a^1,\ldots,a^T$ that result from the collaboration protocol have no regret with respect to a collection of action policies defined on the joint feature space. Our regret bounds and our communication requirements depend only polynomially on the dimension of the outcome space and the cardinality of the action space.
 % in this setting are independent \emph{both} of the dimensionality of the data \emph{and} of the dimensionality of the outcome space, depending instead only (polynomially) on the cardinality of the action space $\cA$. 

\subsubsection{Simpler Algorithms in the Batch Setting}
In the bulk of this paper, we study the collaborative learning problem in the difficult online adversarial setting, in which examples are assumed to arrive adversarially. Of course the problem is still interesting in the more standard \emph{batch} setting, in which examples $(x,y)$ are assumed to be drawn i.i.d. from a fixed but unknown distribution. In Section \ref{sec:batch} we give a simpler algorithm for this setting, which can be viewed as a two-party generalization of the ``level-set boosting'' algorithm given in \cite{globus-harris2023multicalibration}. This algorithm is a reduction to the problem of squared error regression over the classes $\cH_A$ and $\cH_B$ respectively; we prove fast convergence and out-of-sample generalization theorems for it. Our algorithm in this setting uses \emph{test-time} compute to make predictions on new instances: the two parties engage in a polynomial-length interaction exchanging and updating predictions about each test-time instance before agreeing on a final prediction.

\subsubsection{Lifting to the One-Shot Bayesian Setting}
Finally in Section \ref{sec:bayes} we show that the theorems we prove in the online adversarial section can be ``lifted'' to the one-shot Bayesian setting in which agreement theorems have been traditionally studied \citep{aumann1976,geanakoplos1982we,aaronson2004complexity,kong2023false,bo2023agreement}. This is, informally, because Bayesians with correct priors have beliefs that are unbiased conditional on \emph{any} event, and in particular their predictions are guaranteed in expectation to have no conversation swap regret with respect to any fixed collection of benchmark functions. For any class of benchmark functions for which empirical squared error converges uniformly to expected squared error (e.g. any class of functions with bounded fat shattering dimension) this means that they are guaranteed to satisfy the conditions of our boosting theorems on any sufficiently long sequence of instances drawn from a known prior. Thus we can imagine that Bayesian Alice and Bayesian Bob engage in an interaction for an arbitrarily long sequence of examples drawn i.i.d. from their commonly shared prior, and apply our theorems to bound the accuracy of the predictions that result along this imagined sequence. But  when examples are drawn i.i.d. from a fixed prior the final predictions at each day in this imagined sequence will also be i.i.d. Thus our theorems, which generically apply to the average error of predictions over a sequence, actually in this case apply to the expected squared error of the predictions that result from the collaboration protocol on the \emph{first} day of the sequence, and hence apply in the one-shot setting. 

The result is new information aggregation theorems in the classical Bayesian setting. In comparison to the information aggregation theorem given by \cite{bo2023agreement}, our theorem is of an ``agnostic learning'' sort: \cite{bo2023agreement} assume an ``information substitutes'' condition on the Bayes optimal predictors on $\cX_A,\cX_B$, and $\cX$ respectively, and under this assumption show that agreement implies Bayes optimality. In contrast, our theorem makes no assumption on the underlying distribution $\cD$ at all, and implies that for every data distribution, agreement implies accuracy at least that of the best predictor in any benchmark class of functions that satisfy our weak learning condition and has bounded fat-shattering dimension. This includes bounded norm linear functions among other things.

\subsection{Related Work}
\paragraph{Agreement.} 
Aumann's classic ``agreement theorem'' \citep{aumann1976} states that two Bayesians with a common and correct prior, who have \emph{common knowledge} of each other's posterior expectation of any predicate must have the same posterior expectation of that predicate. ``Common Knowledge'' is the limit of an infinite exchange of information, but Geanakoplos and Polemarchakis \citep{geanakoplos1982we} showed that whenever the underlying state space is finite, then agreement occurs after a finite number rounds in which the information exchanged in each round is the posterior expectation of each party. Aaronson \citep{aaronson2004complexity} showed  that for 1-dimensional expectations, $\epsilon$-approximate agreement can be obtained (with probability $1-\delta$ over the draw from the prior distribution) after the parties exchange only $O(1/\epsilon^2\delta)$ messages. Two papers \citep{kong2023false,bo2023agreement} study conditions under which Aumannian agreement implies information aggregation --- i.e. when ``agreement'' is reached at the same posterior belief that would have resulted had the two parties shared all of their information, rather than interacting within an agreement protocol. These papers all assume perfect Bayes updates based on a correctly specified and commonly known prior distribution, and so in general do not correspond to computationally tractable algorithms. \cite{collina2025tractable} generalizes \cite{aaronson2004complexity} and proves agreement theorems without making any distributional assumptions (i.e. in an online adversarial setting as in this paper), and using tractable calibration conditions that relax Bayesian rationality --- but says nothing about information aggregation. Our paper extends the work of \cite{collina2025tractable} to be able to give information-aggregation like statements in an online adversarial setting --- in particular, regret bounds with respect to a class of models defined on the joint feature space across the two parties. When applied to the Bayes optimal predictors, our ``weak learning'' condition is strictly weaker than the ``information substitutes'' condition given by \cite{bo2023agreement}, and our weak learning condition can be applied to any other class of models (not necessarily Bayes optimal). Our results can be lifted back to the Bayesian setting of \citep{aumann1976,aaronson2004complexity,bo2023agreement} to give new information aggregation theorems.%In particular, we give the first ``agnostic'' information aggregation theorems that make no assumptions on the underlying prior distribution, and instead show that the predictions arrived at through agreement are competitive with the best linear function on the combined feature space (across both parties) --- or with respect to any other class of functions satisfying our weak learning condition. 

\paragraph{Vertically Federated Learning.} Vertically federated learning (see e.g. \cite{wei2022vertical}) studies distibuted learning problems in which features are distributed amongst parties, just as we do. The goal in this literature is to simulate learning on the shared feature space without sharing the data in the clear. Standard techniques in this literature involve running stochastic gradient descent over the full feature space over a cryptographic substrate --- see e.g \cite{hardy2017private} who give an algorithm for solving logistic regression over the joint feature space using additively homomorphic encryption and \cite{cheng2021secureboost} who give similar results for tree based models. In contrast to this line of work, our protocols require only learning on one's own data and communicating only predictions. This is what allows us to lift our results to the Bayesian agreement setting (all of the learning conditions we need are satisfied by Bayesian reasoners),  gives us protocols whose communication complexity is independent of the data dimension, and gives our protocols the form of direct reductions from multi-party learning to single-party learning, with no cryptographic overhead.

\paragraph{Calibration, Ensembling, and Boosting.} Beyond \cite{collina2025tractable} which replaces the assumption of Bayesian rationality with tractable calibration conditions in the context of Aumann's agreement theorem, several papers \citep{camara2020mechanisms,collina2023efficient} have replaced traditional assumptions of Bayesian rationality (and common prior assumptions) with calibration assumptions in \emph{principal agent} problems arising e.g. in contract theory and Bayesian Persuasion. In particular, \cite{collina2023efficient} shows how to do this with tractable decision calibration conditions. 

Our weak learning condition is a generalization of the weak learning condition given by \cite{globus-harris2023multicalibration}, which they showed characterizes when \emph{multicalibration} \citep{hebert2018multicalibration} with respect to one class of functions implies error optimality with respect to another. An important step in our analysis is that agents with ``conversation swap regret'' converge quickly to predictions that agree on most days, which we obtain by showing that conversation swap regret implies conversation calibration as defined in \cite{collina2025tractable}, which in turn implies fast agreement. The fact that swap regret with respect to squared loss implies low calibration error is a classical result originally due to \cite{FV99}. In the ``action setting'' in Section \ref{sec:action}, the conditions we require on each party are that they be decision calibrated and decision ``cross-calibrated'' with respect to a benchmark class of functions --- conditions that were recently used in \cite{lu2025sample}. These conditions are variants of decision calibration as studied by \cite{zhao2021calibrating,noarov2023high} and ``decision outcome indistinguishability'' as studied by \cite{gopalan2023loss}. We use the algorithm of \cite{noarov2023high} to constructively enforce these conditions. ``Cross calibration'' conditions have also been used to ensemble models in accuracy improving ways \citep{roth2023reconciling,alur2024human} --- but with the exception of \cite{globus-harris2023multicalibration} (which gives results in a single-party setting) these methods do not promise to compete with a benchmark class of models that is \emph{strictly} more accurate than the initial models.

\paragraph{Human-AI Collaboration.} The HCI literature on human-AI interaction has identified \emph{complementarity} as a core goal --- that a team consisting of a human and a model should perform measurably better than either of them could perform alone \cite{bansal2021does}. In particular, collaboration in the form of interaction is an explicit design goal \cite{gomez2025human}, although one that has been hard to realize. Peng, Garg, and Kleinberg \citep{peng2024no} prove a ``no-free-lunch'' theorem for human-AI collaboration, showing that for protocols that do not engage in an interaction (i.e. are just a post-processing of individual static predictors), non-trivial aggregation schemes (that do not always follow the prediction of a single model) must sometimes perform \emph{worse} than the worst single predictor. Other empirical and theoretical studies of human-AI collaboration with the goal of improving over the best individual model include \citep{green2019principles,donahue2022human,noti2025ai}. We give a protocol involving interaction (thus circumventing the barrier result proven by \cite{peng2024no}) that guarantees that a collaborative team can do strictly better than either alone. Additionally, common empirical approaches to human-AI collaboration often use insights into the model's reasoning through 'explanations' as a form of communication. However, empirical studies show mixed results \cite{bansal2021does,goh2024large}; explanations can sometimes be ineffective or even misleading, potentially hindering human understanding or team performance, particularly if the explanations themselves are flawed. Our framework explores a different pathway for collaboration, that circumvents the need for explanations by replacing them with sharing only predictions.
\paragraph{Multi-modal Learning.} Effectively integrating information across modalities like vision and language is a key challenge in multi-modal learning (see \cite{baltruvsaitis2018multimodal,li2024multimodal} and citations within). Standard techniques often involve either early fusion, combining representations before joint processing, or late fusion, typically averaging predictions from unimodal models. Early fusion may require complex joint models and careful feature alignment, while our theoretical results suggest simple late fusion can be suboptimal. In contrast, our protocols utilize iterative prediction or action exchange, requiring only learning on native data modalities. This mechanism avoids feature-level fusion entirely, enables communication complexity independent of data dimensionality, and represents a direct reduction to single-party learning, thus sidestepping the need for explicit feature alignment or joint model training overhead.  

\paragraph{Other Related Work.}
The setting we study, in which different parties hold different features about the same example and want to coordinate on a single learning task resembles \emph{co-training} as studied by \cite{blum1998combining,balcan2004co}. Models of co-training generally assume that the features each party hold are sufficient to learn a perfect model, but that labels are scarce: co-training protocols seek to use agreement with the other party as a regularization technique that allows them to learn with only small amounts of labeled data (together with larger amounts of unlabeled data). In contrast, our interest is in the setting in which each individual's features are not sufficient to learn an accurate model, and the goal is to collaboratively learn a model that is more accurate than could be learned by any party alone, even with arbitrarily many samples. \cite{blum2017collaborative} define \emph{collaborative learning}, later studied by \citep{haghtalab2022demand,donahue2021model,blum2021one,zhang2024optimal,peng2024no,haghtalab2023unifying}. In the collaborative learning setting, multiple parties have data from different distributions that are all labeled with the same function, and are interested in collaborating to learn their shared label function with fewer samples than it would take for each party to learn the function only from their own data. In contrast, in our setting, there is a single distribution (or no distribution, in the online adversarial setting), and it is the features that are distributed amongst parties.

\section{Preliminaries}
\label{sec:prelim}
We study a setting with two parties, Alice and Bob. Both parties are able to make predictions about a label not only given their observed features, but as a function of an interaction that they have had with their counterparty. With the exception of Sections \ref{sec:batch} and \ref{sec:bayes}, we consider the adaptive, online setting where Alice and Bob interact to make label predictions over a sequence of days $t= 1, \ldots, T$. We let $\cX_A$ and $\cX_B$ denote feature spaces for Alice and Bob, respectively, and we let $\cX=\cX_A\times\cX_B$ denote the joint feature space. We let $\cY$ represent the outcome (label) space which we will take to be $\cY = [0,1]$ for much of the paper, generalizing it to higher dimensions in Section \ref{sec:action}. 
%For notational simplicity, we will assume that $\cY$ is convex, so that averages over $\cY$ are themselves elements of $\cY$, although this is not necessary. 

On each day $t$, the parties converse for exactly $K$ rounds about their predictions of that day's outcome $y^t$ based on the features they each see: $x^t_A$ and $x^t_B$, respectively. At each round $k$ when they are speaking, an agent makes a prediction of the label, denoted $\ymk{t}{k}$ and $\yhk{t}{k}$ respectively. This prediction can be a function of everything the agent has observed so far --- the features relevant to the instance, the predictions sent by the other party, and past outcomes on previous days. 

They will alternate speaking, and we suppose that  Alice (Party 1) acts in odd numbered rounds; Bob (Party 2) acts in even numbered rounds. In an odd round $k$, Alice sends her prediction $\ymk{t}{k}$, and then in the next round $k+1$; Bob responds with a prediction $\yhk{t}{k+1}$. We use the subscript $\alice$ and $\bob$ for readability, so there is a clear distinction between Alice and Bob's messages when possible. However, since which party is speaking is simply a function of the parity of the round $k$, we can also write $\hat{y}^{t,k}$ as shorthand for $\hat{y}_B^{t, k}$ or $\hat{y}_A^{t,k}$ when the round $k$ is even or odd, respectively. 
%We write $\Omega_A$ for Alice's message space and and $\Omega_B$ for Bob's message space. The message each agent sends at each round will be a function of their underlying predictions $\yak{t}{k}$ and $\ybk{t}{k}$. For example, the messages sent might be the underlying predictions themselves (as in the real-valued feedback setting we study in Section \ref{sec:online-alg}) ---  but the messages might also be some ``coarsening" of the prediction, as in the action feedback setting we study in Section \ref{sec:action}.
% and \ref{sec:limited}.

% Each day terminates after round $K$. It will be useful to refer to an agreement condition---informally, a condition ensuring that the parties' predictions are approximately close in some pair of adjacent rounds. But we emphasize that, unlike in \cite{collina2025tractable}, achieving agreement is not the focus of our study and interactions each day continue for all $K$ rounds, even if agreement is reached prior. 

We formalize the interaction between the two agents in Protocol \ref{alg:general-agreement}---a generic ``collaboration protocol."

\begin{protocol}[ht]
\begin{algorithmic}
    \STATE{ {\bf Input} $(\cX, \cY, K, T)$}
    \FOR{each day $t = 1, \ldots,T$}
        \STATE Receive $x^t = (x^t_A,x^t_B)$. Alice sees $x^t_A$ and Bob sees $x^t_B$.
        \FOR{each round $k = 1, 2, \ldots,K$}
            \IF { $k$ is odd}
                \STATE Alice predicts $\ymk{t}{k} \in \cY$, and sends Bob $\ymk{t}{k}$. 
            \ENDIF
            \IF{ $k$ is even}
                \STATE Bob  predicts $\yhk{t}{k} $, and sends Alice $\yhk{t}{k}$. 
                \ENDIF
        \ENDFOR
        \STATE{Alice and Bob observe $y^t \in \cY$.}
    \ENDFOR

\end{algorithmic}
\caption{\textsc{Online Collaboration Protocol}}  \label{alg:general-agreement}
\end{protocol}

We informally refer to the history of interaction \emph{within} any given day $t$ as a ``conversation." This is, the sequence of predictions exchanged by Alice and Bob specifically about the currently unknown label $y^t$. We refer to the history of interaction \emph{across} multiple days as a ``conversation transcript." It is an object that records the interactions between the agents and is visible to both, and which they can use to make their predictions.

\begin{definition}[Conversation Transcript $\pi^{1:T,1:K}$] \label{def:prediction-transcript}
    A conversation transcript $\pi^{1:T,1:K} \in \left\{ \cY^{K+1} \right\}^T $ is a sequence of tuples of predictions over rounds made by Alice and Bob (alternating across rounds), and the outcome, over $T$ days:
    \begin{align*}
        \pi^{1:T, 1:K} = \left\{
        \left(\ymk{1}{1}, \yhk{1}{2}, \ymk{1}{3}, \ldots \ymk{1}{K}, y^1\right), 
        \ldots,  
        \left(\ymk{T}{1}, \yhk{T}{2}, \ymk{T}{3}, \ldots \ymk{T}{K}, y^T\right)        \right\}.
    \end{align*}
    We define $\pi^{1:T: k}$ to be the restriction to only round $k$ of conversation across days as follows:

    \begin{align*}
        % \pi^{t,:} = \begin{cases}
        %     \left((\ymk{t}{1}, \yhk{t}{2}, \ymk{t}{3}, \ldots, \ymk{t}{K}), y^t) \right) & \text{if } K \text{ is odd,}\vspace{1ex}\\
        %     \left((\ymk{t}{1}, \yhk{t}{2}, \ymk{t}{3}, \ldots, \yhk{t}{K}), y^t) \right) & \text{otherwise.}
        % \end{cases} \qquad
        \pi^{1:T:k} = \begin{cases}
        \{ (\yak{t}{k}, y^t) \}_{t \in [T]} & \text{if } k \text{ is odd,}\vspace{1ex}\\
        \{(\ybk{t}{k}, y^t)\}_{t \in [T]}& \text{otherwise.}
        \end{cases}
    \end{align*}

    % Finally, we will also use the restriction $\pi^{1:T}_A$ and $\pi^{1:T}_B$ to refer to the prediction transcript restricted to Alice and Bob's predictions, respectively, through day $T$.\mirah{do we use this?}

\end{definition}

We will use the notation $\pi^{1:T}$ to refer to a single sequence of predictions over $T$ days, outside the context of a conversation.

\begin{definition}[Prediction Transcript $\pi^{1:T}$]
    A prediction transcript $\pi^{1:T} \in \left\{ \cY^2 \right\}^T $ is a sequence of tuples of predictions and outcomes over $T$ days:
    \[
    \pi^{1:T} = \left\{
        \left(\hat{y}^1, y^1\right), 
        \ldots,  
        \left(\hat{y}^T, y^T\right)        \right\}
    \]
\end{definition}

\subsection{Information Aggregation}

Our focus is on giving algorithms in this collaborative learning setting that give strong information aggregation guarantees, in the sense that the parties, using only their own sets of features individually, converge on predictions that are optimal with respect to a benchmark class of predictors is defined with respect to \emph{both} parties' features. 

In order to state such guarantees, we need to define a benchmark class. We first define the class of benchmark functions that map each of Alice and Bob's features, individually, to predictions, and then the class of benchmark functions defined on their joint feature space.

\begin{definition}[Individual Hypothesis Classes $\cH_A,\cH_B$]
\label{def:hypothesis}
    Let $\cH_A: \{h:\cX_A \mapsto \mathbb{R}\}$  be a set of functions mapping from Alice's feature set to $\mathbb{R}$. Analogously, let $\cH_B: \{h:\cX_B \mapsto \mathbb{R}\}$ be a set of functions mapping from Bob's features to $\mathbb{R}$.
\end{definition}

\begin{definition}[Joint Hypothesis Class $\cH_J$]
\label{def:joint-hypothesis}
    Let $\cH_J: \{h:\cX \mapsto \mathbb{R}\}$ be a set of functions mapping from the joint feature set $\cX = \cX_A \times \cX_B$ to $\mathbb{R}$. 
\end{definition}

For simplicity, it will be convenient for us to assume that the hypothesis classes $\cH_A,\cH_B,\cH_J$ contain constant functions (this is the case for most natural concept classes and is easy to enforce for any class for which it is not true originally)

\begin{assumption}\label{assumption:constant-functions}
    We assume that the hypothesis classes $\cH$ we work with contain the set of all constant functions $\{h(x)=v\}_{v\in[0,1]}$. 
\end{assumption}

The goal of our collaboration protocol will be to guarantee that the sequence of predictions resulting from the interaction have error that is competitive with the best model in $\cH_J$. In service of this, we will leverage the ability of Alice and Bob to make predictions that have low swap regret with respect to their individual hypothesis classes $\cH_A$ and $\cH_B$ respectively:

\begin{definition}[$(f, \cH)$-Swap Regret]
\label{def:swap}
Fix an error function $f:\{1, \ldots, T\} \rightarrow \mathbb{R}$ and a hypothesis class $\cH$. A transcript $\pi^{1:T}$ has $(f, \cH)$-swap regret if: 
\begin{align*}
\sum_{t=1}^{T}(\hat{y}^t - y^{t})^{2} -
    \sum_{v}\min_{h \in \cH}\left( \sum_{t = 1}^{T}\1[\hat{y}^t = v](h(x^{t}) - y^{t})^{2}\right) \leq f(T) 
\end{align*}
Here $v$ ranges over values of the predictions: $v \in \{\hat y^1,\ldots,\hat y^T\}$.
\end{definition}

It will also be useful to have a notion of \emph{distance} to swap regret. Distance to swap regret, which we define below, is analogous to the recently defined measure of \emph{distance to calibration} \cite{blasiok2023unifying}. A sequence of predictions has low distance to swap regret, informally, if they are close (in $\ell_1$ distance) to a sequence of predictions that itself has low swap regret. 

\begin{definition}[$(q, f, \cH)$-Distance to Swap Regret]
Fix an error functions $f, q:\{1, \ldots, T\} \rightarrow \mathbb{R}$ and a hypothesis class $\cH$. Let $Q_{f, \cH}$ be the set of prediction sequences $p^{1:T}$ that have $(f, \cH)$-swap regret. A transcript $\pi^{1:T}$ has $(q, f, \cH)$-distance to swap regret if:
\begin{align*}
\min_{p^{1:T} \in Q_{f, \cH}}||\hat{y}^{1:T} - p^{1:T}||_{1} \leq q(T)
\end{align*}
\end{definition}

\subsection{Conversation Swap Regret}
Our collaboration protocols involve ``conversations'' over $k$ rounds. An important condition for us in our construction is called ``conversation swap regret'', which informally requires that the predictions that Alice (resp. Bob) make at each round of conversation have no swap regret with respect to $\cH_A$ (resp. $\cH_B$) not just marginally, but \emph{conditionally} on the prediction that their counter-party made at the round before. 

\begin{definition}[$(f, g, \cH)$-Conversation Swap Regret]\label{def:CSR}
Fix an error function $f:\{1, \ldots, T\} \rightarrow \mathbb{R}$, a bucketing function $g:\{1, \ldots, T\} \rightarrow \mathbb{R}$ and a prediction class $\cH_{B}$. Let $v$ range over the values $\in \{\hat y^1_k,\ldots,\hat y^T_k\}$. Given a conversation transcript $\pi^{1:T, 1:K}$ from an interaction in the Collaboration Protocol (Protocol \ref{alg:general-agreement}), Bob has $(f, g, \cH_{B})$-swap regret if for all even rounds $k$ and buckets $i \in \left\{1,\ldots,\frac{1}{g(T)}\right\}$: 
\begin{align*}
\sum_{t \in T_{A}(k-1,i)}(\hat{y}^{t,k} - y^{t})^{2} -
    \sum_{v}\min_{h \in \cH_{B}}\left( \sum_{t \in T_{A}(k-1,i)}\mathbb{I}[\hat{y}^{t,k} = v](h(x^{t}) - y^{t})^{2}\right) \leq f(|T_{A}(k-1,i)|).
\end{align*}

Where $T_{A}(k-1,i)=\{t:\hat{y}^{t,k-1}\in[(i-1)g(T), ig(T))\}$ is the subsequence of days where the predictions of Alice in round $k-1$ fall in bucket $i$.

If Alice satisfies a symmetric condition on odd rounds $k$ with respect to $\cH_{A}$, we say that Alice has $(f, g, \cH_{A})$-Conversation Swap Regret with respect to $\cH_{A}$.  
\end{definition}

\begin{assumption}
    We assume that all error functions $f(\cdot)$ are concave.
\end{assumption}

%Equivalently, an agent with $(f,g,\cH)$-Conversation Swap Regret has $(f,\cH)$-Swap Regret (Definition~\ref{def:HSR}) across every round of the Collaboration Protocol, on subsequences determined by $g(T)$-bucketings of the other party's previous prediction. 

\section{Boosting for Collaboration}
\label{sec:boost}

In this section we give a weak learning condition that characterizes when swap regret guarantees with respect to $\cH_A$ and $\cH_B$ on a single sequence of predictions imply regret guarantees with respect to a richer hypothesis class $\cH_J$. We also show that linear functions (and substantial generalizations) over $\cX_A$ and $\cX_B$ indeed satisfy the weak learning condition with respect to $\cH_J$, linear functions over the joint feature space $\cX$. This justifies the algorithmic approach we pursue in Sections \ref{sec:online-alg}, \ref{sec:action} and \ref{sec:batch}, giving collaboration protocols whose aim is to arrive at a sequence of predictions that have no swap regret with respect to both $\cH_A$ and $\cH_B$ --- the final accuracy guarantees in those sections will then follow from applying the boosting theorem we will prove here.

% \begin{definition}[Baseline prediction classes $H_{1},H_{2}$]
%     Let $H_{1}$: \{$\mathcal{X}_{1} \mapsto \mathbb{R}\}$ and $h_{2}$: \{$\mathcal{X}_{2} \mapsto \mathbb{R}\}$ be sets of functions mapping from an agent's feature set to a value in $\mathbb{R}$. 
% \end{definition}

% \begin{definition}[Joint prediction class $H_{J}$]
%     Let $H_{J}$: \{$\mathcal{X} \mapsto \mathbb{R}\}$ be a set of functions mapping from the entire feature set $x$ to a value in $\mathbb{R}$. 
% \end{definition}

% First, we will show that there is an efficient method by which two agents can compute predictions that are multi-calibrated with respect to $H_{1}$ and $H_{2}$. Next, we will show that, in the case where $H_{1}$ and $H_{2}$ are all linear functions over their respective features and $H$ is all linear functions over both features, this implies predictions which are multi-calibrated with respect to $H$.

We first state our weak learning condition, which roughly speaking requires that on every distribution, if there is any model in $\cH_J$ that is able to obtain error lower than that of a constant predictor (by any margin $\gamma$), then there must also be a model in either $\cH_A$ or $\cH_B$ that can obtain error better than a constant predictor (by some smaller margin $w(\gamma)$). This is a generalization of a condition given in \cite{globus-harris2023multicalibration} in the context of studying the boosting properties of multicalibration. Our definition below generalizes that of \cite{globus-harris2023multicalibration} to multiple parties, and to a general margin function $w$ (rather than just a linear function $w(\gamma) = \gamma$ as stated in \cite{globus-harris2023multicalibration}). This generalization is important because as we will see, linear functions satisfy the weak learning condition only with the margin $w(\gamma) = \Theta(\gamma^2)$.

% \mirah{rewrite in terms of expectations}

\begin{definition}[$w(\cdot)$-Weak Learning Condition]
\label{def:joint-weak}
    Let $\cH_A = \{h_A:\cX_A\to\cY\}$ and $\cH_B = \{h_B:\cX_B\to\cY\}$ be hypothesis classes over $\cX_A$ and $\cX_B$ respectively. Let $\cH_J$ be a hypothesis class of over the joint feature space $\cX = \cX_A \times \cX_B$. Let $w:[0,1]\to[0,1]$ be a strictly increasing, continuous, convex function that satisfies $w(\gamma)\leq \gamma$. We say that $\cH_A$ and $\cH_B$ jointly satisfy the $w(\cdot)$-weak learning condition with respect to $\cH_J$ if for any distribution $\cD$ over $\cX_A \times \cX_B \times \cY$, and any $\gamma\in[0,1]$, if:
    \[
     \min_{c \in \Rset} \E_\cD[(c-y)^2] - \min_{h_J \in \cH_J} \E_\cD[(h_J(x)-y)^2] \ge \gamma,
    \]

    \noindent then there exists either $h_A \in \cH_A$ or $h_B \in \cH_B$ such that:
    \[
    \min_{c \in \Rset}\E_\cD[(c-y)^2] - \E_\cD[(h_A(x_A)-y)^2] \ge w(\gamma)
    \]
    or:
    \[
    \min_{c \in \Rset} \E_\cD[ (c-y)^2] - \E_\cD[(h_B(x_B)-y)^2] \ge w(\gamma)
    \]
\end{definition}
% \begin{definition}[$w(\cdot)$-Weak Learning Condition]
% \label{def:joint-weak}
%     Let $\cH_J$ be a hypothesis class of over the joint feature space $\cX$. Let $\cH_A = \{h_A:\cX_A\to\cY\}$ and $\cH_B = \{h_B:\cX_B\to\cY\}$ be hypothesis classes over $\cX_A$ and $\cX_B$ respectively. Let $w:[0,1]\to[0,1]$ be a strictly increasing, continuous, convex function that satisfies $w(\gamma)\leq \gamma$. We say that $\cH_A$ and $\cH_B$ jointly satisfy the $w(\cdot)$-weak learning condition with respect to $\cH_J$ if for any sequence of contexts $x^{1:T}$ and labels $y^{1:T}$, any $S \subseteq [T]$, and any $\gamma\in[0,1]$, if:
%     \[
%      \min_{c \in \Rset} \frac{1}{|S|}\sum_{t\in S} (c-y^t)^2 - \min_{h_J \in \cH_J} \frac{1}{|S|} \sum_{t\in S} (h_J(x^t)-y^t)^2 \ge \gamma,
%     \]

%     \noindent then there exists either $h_A \in \cH_A$ or $h_B \in \cH_B$ such that:
%     \[
%     \min_{c \in \Rset} \frac{1}{|S|} \sum_{t\in S} (c-y^t)^2 - \frac{1}{|S|} \sum_{t\in S} (h_A(x_A^t)-y^t)^2 \ge w(\gamma)
%     \]
%     or:
%     \[
%     \min_{c \in \Rset} \frac{1}{|S|} \sum_{t\in S} (c-y^t)^2 - \frac{1}{|S|} \sum_{t\in S} (h_B(x_B^t)-y^t)^2 \ge w(\gamma)
%     \]
% \end{definition}

\begin{remark}
We note that the conditions that $w$ is convex and satisfies $w(\gamma)\leq \gamma$ is without loss. Indeed, if $\cH_A$ and $\cH_B$ jointly improve by a margin $w'$ that is non-convex, there exists a convex function $w$ such that $w(\gamma)\leq w'(\gamma)$ for all $\gamma\in[0,1]$, and thus, $\cH_A$ and $\cH_B$ also jointly improve by the margin $w$. Similarly, if $w(\gamma) > \gamma$ for some $\gamma$ --- i.e. $\cH_A$ and $\cH_B$ jointly improve by more than $\gamma$ --- then they certainly improve by at least $\gamma$. We impose these conditions for technical reasons in the proof of Theorem \ref{thm:weak-learning}. 
\end{remark}

We now state our ``boosting'' theorem. In fact, we will not need that our predictions have low swap regret --- it will suffice that they have low \emph{distance to swap regret}, which will be an easier condition to obtain. If we have a single sequence of predictions such that those predictions have low distance to swap regret with respect to $\cH_A$ \emph{and} $\cH_B$, and $\cH_A$ and $\cH_B$ satisfy our weak learning condition with respect to a stronger joint class of functions $\cH_J$, then in fact the sequence of predictions has no regret with respect to $\cH_J$. 

\begin{theorem}\label{thm:weak-learning}
    Let $\cH_J$ be a hypothesis class over the joint feature space $\cX$. Let $\cH_A = \{h_A:\cX_A\to\cY\}$ and $\cH_B = \{h_B:\cX_B\to\cY\}$ be hypothesis classes over $\cX_A$ and $\cX_B$ respectively. Let $\cD\in\Delta(\cX\times\cY)$ be the empirical distribution over a sequence $(x^t,y^t)_{t=1}^T$.
    % Fix a transcript $\pi^{1:T}$. 
    If:
    \begin{itemize}
        %\item $\pi^{1:T}$ is $f^c$-calibrated
        \item Predictions $\hat{y}^{1:T}$ have $(q, f, \cH_A\cup\cH_B)$-distance to swap regret over $\cD$, and
        \item $\cH_A$ and $\cH_B$ jointly satisfy the $w(\cdot)$-weak learning condition with respect to $\cH_J$
    \end{itemize}
    Then: 
    \[
    \E_\cD[(\hat y - y)^2] - \min_{h_J\in\cH_J}\E_\cD[(h_J(x)-y)^2] \leq 2w\inv\left(\frac{f(T)}{T}\right) + 3\frac{q(T)}{T}
    \]
    % \[
    % \sum_{t=1}^T (\hat{y}^t-y^t)^2 - \min_{h_J\in\cH_J} \sum_{t=1}^T (h_J(x^t) - y^t)^2 \leq 2Tw\inv\left(\frac{f(T)}{T}\right) + 3q(T)
    % \]
    whenever the inverse of $w$ exists.
\end{theorem}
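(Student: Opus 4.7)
The plan is a level-set boosting argument carried out on a nearby perturbed sequence that genuinely has swap regret. First, $(q, f, \cH_A \cup \cH_B)$-distance to swap regret produces a witness sequence $p^{1:T}$ with $\|\hat y^{1:T} - p^{1:T}\|_1 \leq q(T)$ that has $(f, \cH_A \cup \cH_B)$-swap regret. Since all quantities lie in $[0,1]$, pointwise $|(\hat y - y)^2 - (p - y)^2| \leq 2|\hat y - p|$, so the expected squared error of $\hat y$ differs from that of $p$ by at most $O(q(T)/T)$. It therefore suffices to show $\E_\cD[(p - y)^2] - \min_{h_J \in \cH_J} \E_\cD[(h_J(x) - y)^2] \leq 2 w^{-1}(f(T)/T) + O(q(T)/T)$ and transfer at the end.

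\textbf{Level-set decomposition of swap regret.} For each value $v$ in the range of $p$, let $\cD_v$ be the empirical distribution conditioned on $p = v$, $\bar y_v = \E_{\cD_v}[y]$, $\eta_v = \min_{h \in \cH_A \cup \cH_B} \E_{\cD_v}[(h(x) - y)^2]$, and $\gamma_v = \mathrm{Var}_{\cD_v}(y) - \min_{h_J \in \cH_J} \E_{\cD_v}[(h_J(x) - y)^2]$. The bias-variance identity on each level set gives $\E_{\cD_v}[(v - y)^2] = (v - \bar y_v)^2 + \mathrm{Var}_{\cD_v}(y)$, and since Assumption~\ref{assumption:constant-functions} puts the constant function $\bar y_v$ inside $\cH_A \cup \cH_B$, the per-level-set slack factors as $\E_{\cD_v}[(v - y)^2] - \eta_v = (v - \bar y_v)^2 + (\mathrm{Var}_{\cD_v}(y) - \eta_v)$, with both summands nonnegative. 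Summing weighted by $\Pr_\cD[p = v]$, the $(f, \cH_A \cup \cH_B)$-swap regret hypothesis yields simultaneously that the calibration error $\sum_v \Pr_\cD[p = v](v - \bar y_v)^2$ and the total variance improvement $\sum_v \Pr_\cD[p = v](\mathrm{Var}_{\cD_v}(y) - \eta_v)$ are each at most $f(T)/T$.

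\textbf{Weak learning, Jensen, and combining.} Applying the weak-learning condition to each $\cD_v$ gives $\mathrm{Var}_{\cD_v}(y) - \eta_v \geq w(\gamma_v)$, so $\sum_v \Pr_\cD[p = v]\, w(\gamma_v) \leq f(T)/T$. Convexity of $w$ (which the preceding Remark notes is without loss) plus Jensen's inequality then give $w\left(\sum_v \Pr_\cD[p = v]\, \gamma_v\right) \leq f(T)/T$, hence $\sum_v \Pr_\cD[p = v]\, \gamma_v \leq w^{-1}(f(T)/T)$. Unpacking $\gamma_v$ and pulling the minimum outside the sum via $\sum_v \Pr_\cD[p = v] \min_{h_J} \E_{\cD_v}[(h_J(x) - y)^2] \leq \min_{h_J} \E_\cD[(h_J(x) - y)^2]$ yields $\sum_v \Pr_\cD[p = v]\, \mathrm{Var}_{\cD_v}(y) \leq \min_{h_J} \E_\cD[(h_J(x) - y)^2] + w^{-1}(f(T)/T)$. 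Adding the calibration bound via $\E_\cD[(p - y)^2] = \sum_v \Pr_\cD[p = v](v - \bar y_v)^2 + \sum_v \Pr_\cD[p = v]\, \mathrm{Var}_{\cD_v}(y)$, and using $w(\gamma) \leq \gamma$ (so $w^{-1}(x) \geq x$) to absorb the stray $f(T)/T$ into $w^{-1}(f(T)/T)$, produces $\E_\cD[(p - y)^2] - \min_{h_J} \E_\cD[(h_J(x) - y)^2] \leq 2 w^{-1}(f(T)/T)$. Combining with the perturbation slack from the first paragraph completes the proof.

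\textbf{Main obstacle.} The critical step is Jensen. Without convexity of $w$ one cannot convert $\sum_v \Pr_\cD[p = v]\, w(\gamma_v)$ into $w\left(\sum_v \Pr_\cD[p = v]\, \gamma_v\right)$, and the resulting bound would depend on the full distribution of the level-set gaps $\gamma_v$ rather than only on their average --- this is precisely where the preceding Remark (that convexity of $w$ is WLOG) earns its keep. Everything else reduces to careful bias-variance bookkeeping on each conditional distribution $\cD_v$, exploiting crucially that constant predictors lie inside $\cH_A \cup \cH_B$ so that swap regret simultaneously controls calibration and delivers the per-level-set variance-improvement needed to invoke weak learning.
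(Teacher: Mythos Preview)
Your proposal is correct and follows essentially the same approach as the paper: pass to the swap-regret witness $p^{1:T}$, decompose on its level sets, use swap regret against constants to control both the calibration term $(v-\bar y_v)^2$ and the per-level-set variance improvement, invoke weak learning on each $\cD_v$, and aggregate via Jensen using convexity of $w$. The paper packages the core step (your Steps 2--4, which it isolates as Lemma~\ref{lem:weak-learning-nodist}) as a contrapositive rather than a direct argument, but the ingredients and their order are identical; your direct bias--variance bookkeeping is arguably cleaner, and your Lipschitz constant of $2$ for the perturbation transfer is in fact tighter than the $3$ the paper imports from Lemma~\ref{lem:bound_error_diff}.
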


We first show that if our predictions have \textit{no} distance to swap regret, then the weak learning condition implies low external regret with respect to $\cH_J$. We will then argue that perturbing the predictions by a small amount cannot increase external regret by very much. 

\begin{lemma}\label{lem:weak-learning-nodist}
    Let $\cH_J$ be a hypothesis class over the joint feature space $\cX$. Let $\cH_A = \{h_A:\cX_A\to\cY\}$ and $\cH_B = \{h_B:\cX_B\to\cY\}$ be hypothesis classes over $\cX_A$ and $\cX_B$ respectively. Let $\cD\in\Delta(\cX\times\cY)$ be the empirical distribution over a sequence $(x^t,y^t)_{t=1}^T$. %Fix a transcript $\pi^{1:T}$. 
    If:
    \begin{itemize}
        % \item $\pi^{1:T}$ is $f^c$-calibrated
        \item Predictions $\hat{y}^{1:T}$ have $(f, \cH_A\cup\cH_B)$-swap regret over $\cD$, and
        \item $\cH_A$ and $\cH_B$ jointly satisfy the $w(\cdot)$-weak learning condition with respect to $\cH_J$
        %\item $\cH_A$ and $\cH_B$ jointly satisfy the $f^S(T)$-weak learning condition with respect to $\cH_J$ and $\pi^{1:T}$
    \end{itemize}
    Then: 
    \[
    \E_\cD[(\hat y - y)^2] - \min_{h_J\in\cH_J}\E_\cD[(h_J(x)-y)^2] \leq 2w\inv \left(\frac{f(T)}{T}\right)
    \]
    % \[
    % \sum_{t=1}^T (\hat{y}^t-y^t)^2 - \min_{h_J\in\cH_J} \sum_{t=1}^T (h_J(x^t) - y^t)^2 \leq 2Tw\inv\left(\frac{f(T)}{T}\right)
    % %2f^S(T)
    % \]
    whenever the inverse of $w$ exists. 
\end{lemma}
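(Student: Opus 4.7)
The plan is to partition the days into the level sets of the predictions, $S_v = \{t : \hat y^t = v\}$ with induced conditional empirical distribution $\cD_v$, and reduce the overall regret bound to a sum of per-level-set comparisons on which the weak learning condition converts an improvement available to $\cH_A \cup \cH_B$ into one available to $\cH_J$.

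First, let $\mu_v = \E_{\cD_v}[y]$ and let $\gamma_v = \min_{c\in\Rset}\E_{\cD_v}[(c-y)^2] - \min_{h_J\in\cH_J}\E_{\cD_v}[(h_J(x)-y)^2]$ be the per-level-set gap by which $\cH_J$ beats the best constant. Since any single $h_J$ achieves joint loss $\sum_v (|S_v|/T)\E_{\cD_v}[(h_J(x)-y)^2]$ which is at least $\sum_v (|S_v|/T)\min_{h_J}\E_{\cD_v}[(h_J(x)-y)^2]$, and since $\E_{\cD_v}[(v-y)^2] - \min_c\E_{\cD_v}[(c-y)^2] = (v-\mu_v)^2$, I get the decomposition
\begin{align*}
\E_\cD[(\hat y - y)^2] - \min_{h_J\in\cH_J}\E_\cD[(h_J(x)-y)^2] \;\leq\; \sum_v \tfrac{|S_v|}{T}(v-\mu_v)^2 \;+\; \sum_v \tfrac{|S_v|}{T}\gamma_v.
\end{align*}

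Next I bound both terms using swap regret. Because $\cH_A \cup \cH_B$ contains all constants (Assumption~\ref{assumption:constant-functions}), I may instantiate the swap regret bound on each level set with $h = \mu_v$, giving
\begin{align*}
\sum_v |S_v|(v-\mu_v)^2 \;=\; \sum_v \Big[\!\sum_{t\in S_v}\!(v-y^t)^2 - \!\sum_{t\in S_v}\!(\mu_v-y^t)^2\Big] \;\leq\; f(T),
\end{align*}
which handles the first term. For the second, replacing $(v-y^t)^2$ by $(\mu_v-y^t)^2$ on the left of the swap regret inequality only decreases it, so
\begin{align*}
\sum_v \tfrac{|S_v|}{T}\!\left[\min_{c}\E_{\cD_v}[(c-y)^2] \,-\, \min_{h\in\cH_A\cup\cH_B}\E_{\cD_v}[(h(x)-y)^2]\right] \;\leq\; \tfrac{f(T)}{T}.
\end{align*}
The $w(\cdot)$-weak learning condition applied to each $\cD_v$ lower bounds the bracketed quantity by $w(\gamma_v)$, yielding $\sum_v \tfrac{|S_v|}{T}\, w(\gamma_v) \leq f(T)/T$. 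By convexity of $w$ (Jensen's inequality), $w(\bar\gamma) \leq \sum_v \tfrac{|S_v|}{T} w(\gamma_v) \leq f(T)/T$ where $\bar\gamma = \sum_v \tfrac{|S_v|}{T}\gamma_v$; since $w$ is strictly increasing and invertible, this gives $\bar\gamma \leq w\inv(f(T)/T)$.

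Combining the two bounds produces
\begin{align*}
\E_\cD[(\hat y - y)^2] - \min_{h_J\in\cH_J}\E_\cD[(h_J(x)-y)^2] \;\leq\; \tfrac{f(T)}{T} + w\inv\!\left(\tfrac{f(T)}{T}\right) \;\leq\; 2\,w\inv\!\left(\tfrac{f(T)}{T}\right),
\end{align*}
where the last step uses the hypothesis $w(\gamma)\leq\gamma$, which implies $w\inv(u) \geq u$. The main subtlety is extracting two complementary consequences from a single swap regret bound: a \emph{calibration} consequence comparing $v$ to the conditional mean $\mu_v$, and a \emph{learning} consequence comparing $\mu_v$ to the best hypothesis in $\cH_A\cup\cH_B$ on the level set. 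The convexity of $w$ must then be invoked in the direction that moves $w$ outside the weighted average, and all the per-level-set minima for $\cH_J$ must be recombined via the correct inequality between $\min_{h_J} \sum_v \cdots$ and $\sum_v \min_{h_J}\cdots$.
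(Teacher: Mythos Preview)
Your proof is correct and follows essentially the same route as the paper's: partition into level sets, use the swap regret against constants to control $(v-\mu_v)^2$, use the swap regret against $\cH_A\cup\cH_B$ to control the per-level-set improvement, apply the weak learning condition on each $\cD_v$, and invoke Jensen on the convex $w$. The only cosmetic difference is that the paper argues by contrapositive whereas you argue directly, which makes your presentation slightly cleaner.
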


% Suppose there exists $h_J\in\cH_J$ such that:
% \[
% \sum_v \sum_{t=1}^T \1[\hat{y}^t=v] (h_J(x^t) - y^t)^2 < \sum_v \sum_{t=1}^T \1[\hat{y}^t=v] (v -y^t)^2 - Reg(T)
% \]
% Then there must exist some $v$ satisfying:
% \begin{align*}
%     \sum_{t=1}^T \1[\hat{y}^t=v] (h_J(x^t) - y^t)^2 &< \sum_{t=1}^T \1[\hat{y}^t=v] (v -y^t)^2 - \frac{Reg(T)}{V} \\
%     &\leq \sum_{t=1}^T \1[\hat{y}^t=v] (\bar{y}_v -y^t)^2 + f^S(T) - \frac{Reg(T)}{V}
% \end{align*}
% since on this subsequence, the squared error of predictions $\hat{y}^t$ is at most $f^S(T)$ away from the squared error of the label mean $\bar{y}_v$. In order to invoke the WL condition, $h_J$ must improve upon $\bar{y}_v$, i.e. $f^S(T) - \frac{Reg(T)}{V}$ should be negative. The concern is that $Reg(T)\sim V\cdot f^S(T)$.

\begin{proof}
    We show the contrapositive. Suppose there exists $h_J\in\cH_J$ such that:
    % \[
    % \sum_{t=1}^T (h_J(x^t) - y^t)^2 < \sum_{t=1}^T (\hat{y}^t-y^t)^2 - 2Tw\inv\left(\frac{f(T)}{T}\right)
    % \]
    % We can equivalently write:
    \[
    \frac{1}{T} \sum_v \sum_{t=1}^T \1[\hat{y}^t=v] (h_J(x^t) - y^t)^2 < \frac{1}{T} \sum_v \sum_{t=1}^T \1[\hat{y}^t=v] (v -y^t)^2 - 2w\inv\left(\frac{f(T)}{T}\right)
    \]
    Since a swap benchmark is only stronger, there exists a collection $\{h_{J,v}\}_v\subseteq \cH_J$ such that: $$\frac{1}{T} \sum_v \sum_{t=1}^T \1[\hat{y}^t=v] (h_{J,v}(x^t) - y^t)^2 \leq \frac{1}{T} \sum_v \sum_{t=1}^T \1[\hat{y}^t=v] (h_J(x^t) - y^t)^2$$ and thus:
    \[
    \frac{1}{T} \sum_v \sum_{t=1}^T \1[\hat{y}^t=v] (h_{J,v}(x^t) - y^t)^2 < \frac{1}{T} \sum_v \sum_{t=1}^T \1[\hat{y}^t=v] (v -y^t)^2 - 2w\inv\left(\frac{f(T)}{T}\right)
    \]
    Let $S_v=\{t: \hat{y}^t = v\}$ be the level set corresponding to the subset of the domain that the prediction is $v$. Let $\bar{y}_v = \frac{1}{|S_v|}\sum_{t=1}^T \1[\hat{y}^t=v] y^t$ be the label mean of this subset. % Note that on this subsequence, $\bar{y}_v$ is the constant prediction that minimizes squared error. 
    By Assumption \ref{assumption:constant-functions}, $\cH_A\cup\cH_B$ contains the set of all constant functions in $[0,1]$. Let $\cH_C \subset \cH_A\cup\cH_B$ denote the set of constant functions. 
    Since, for every $v$, $h_c(x)=\bar{y}_v$  is the constant function that minimizes squared error, and $\hat{y}^{1:T}$ has $(f, \cH_C)$-swap regret, we have that the average swap regret with respect to $\cH_C$ is bounded by:
    \begin{align*}
        &\frac{1}{T} \sum_v \sum_{t=1}^T \1[\hat{y}^t=v] (v-y^t)^2 - \frac{1}{T} \sum_v \sum_{t=1}^T \1[\hat{y}^t=v] (\bar{y}_v-y^t)^2 \\
        &= \frac{1}{T} \sum_v \sum_{t=1}^T \1[\hat{y}^t=v] (v-y^t)^2 - \frac{1}{T} \sum_v \min_{h_c\in\cH_c}\sum_{t=1}^T \1[\hat{y}^t=v] (h_c(x^t)-y^t)^2 \\
        &\leq \frac{f(T)}{T} \\
        &\leq w\inv\left( \frac{f(T)}{T} \right)
    \end{align*}
    In the last step, we use the fact that $w(\gamma)\leq \gamma$, and so $\gamma\leq w\inv(\gamma)$.
    % We can compute:
    % \begin{align*}
    %     & \sum_v \sum_{t=1}^t \1[\hat{y}^t=v] (v-y^t)^2 - \sum_v \sum_{t=1}^t \1[\hat{y}^t=v] (\bar{y}_v-y^t)^2 \\
    %     &= \sum_v \sum_{t=1}^t \1[\hat{y}^t=v] \left((v-y^t)^2 - (\bar{y}_v-y^t)^2\right) \\
    %     &= \sum_v \sum_{t=1}^t \1[\hat{y}^t=v] \left(v^2 - 2vy^t + (y^t)^2 -\bar{y}_v^2 + 2\bar{y}_vy^t - (y^t)^2\right) \\
    %     &= \sum_v \sum_{t=1}^t \1[\hat{y}^t=v] \left(v^2 - \bar{y}_v^2\right) + 2 \sum_v \sum_{t=1}^t \1[\hat{y}^t=v] y^t \left(\bar{y}_v - v\right) \\
    %     &\leq \sum_v \sum_{t=1}^t \1[\hat{y}^t=v] \left(v - \bar{y}_v\right) + 2 \sum_v \sum_{t=1}^t \1[\hat{y}^t=v] \left(\bar{y}_v - v\right) \\
    %     &= \sum_v \sum_{t=1}^t \1[\hat{y}^t=v] \left(v - y^t\right) + 2 \sum_v \sum_{t=1}^t \1[\hat{y}^t=v] \left(y^t - v\right) \\
    %     &\leq 3f^S(T)
    % \end{align*}
    % where the first inequality follows from the fact that $v,\bar{y}_v,y^t\in[0,1]$, the second inequality follows from the fact that $\bar{y}_v$ is the label mean of the subsequence, and the last inequality follows from $(f^S, \cH_C)$-swap regret. 
    Then, since the squared error of $\{h_{J,v}\}_v$ is less than the squared error of $\hat{y}^{1:T}$, and the squared error of $\hat{y}^{1:T}$ is close to the squared error of the label mean $\bar{y}_v$ on each level set, we have that:
    \begin{align*}
        \frac{1}{T} \sum_v \sum_{t=1}^T \1[\hat{y}^t=v] (h_{J,v}(x^t) - y^t)^2 
        &< \frac{1}{T} \sum_v \sum_{t=1}^T \1[\hat{y}^t=v] (v-y^t)^2 - 2w\inv\left(\frac{f(T)}{T}\right) \\
        &\leq \frac{1}{T} \sum_v \sum_{t=1}^T \1[\hat{y}^t=v] (\bar{y}_v-y^t)^2 + w\inv\left(\frac{f(T)}{T}\right) - 2w\inv\left(\frac{f(T)}{T}\right) \\
        &= \frac{1}{T} \sum_v \sum_{t=1}^T \1[\hat{y}^t=v] (\bar{y}_v-y^t)^2 - w\inv\left(\frac{f(T)}{T}\right)
    \end{align*}
    %Let $V$ be the number of distinct predictions $v$. 
    Letting $$\gamma_v = \frac{1}{|S_v|}\sum_{t=1}^T \1[\hat{y}^t=v] (\bar{y}_v-y^t)^2 - \frac{1}{|S_v|}\sum_{t=1}^T \1[\hat{y}^t=v] (h_{J,v}(x^t)-y^t)^2,$$ 
    we can rewrite the expression above as:
    \begin{align*}
        &\frac{1}{T} \sum_v \sum_{t=1}^T \1[\hat{y}^t=v] (\bar{y}_v-y^t)^2 - \frac{1}{T} \sum_v \sum_{t=1}^T \1[\hat{y}^t=v] (h_{J,v}(x^t)-y^t)^2 \\
        &= \frac{1}{T} \sum_v |S_v| \cdot \frac{1}{|S_v|} \sum_{t=1}^T \1[\hat{y}^t=v] (\bar{y}_v-y^t)^2 - \frac{1}{T} \sum_v |S_v| \cdot \frac{1}{|S_v|} \sum_{t=1}^T \1[\hat{y}^t=v] (h_{J,v}(x^t)-y^t)^2 \\
        &= \frac{1}{T} \sum_v |S_v| \gamma_v \\
        &> w\inv\left(\frac{f(T)}{T}\right)
    \end{align*}
    Observe that since $\cH_J$ contains the set of all constant functions (Assumption \ref{assumption:constant-functions}), there is always a choice of $\{h_{J,v}\}_v$ such that $\gamma_v$ is non-negative for all $v$.

    Thus, by the $w(\cdot)$-weak learning condition applied to the empirical distribution over the sequence on which $\hat{y}^t = v$ for any level set $v$, if $h_{J,v}$ improves over the best constant prediction $\bar{y}_v$ by $\gamma_v$, there is some $h_v\in \cH_A\cup\cH_B$ that improves over $\bar{y}_v$ by $w(\gamma_v)$. %(note that $h_v$ cannot be a constant function). 
    That is, there exists a collection $\{h_v\}\subseteq \cH_A\cup\cH_B$ such that:
    \begin{align*}
        & \frac{1}{T} \sum_v \sum_{t=1}^T \1[\hat{y}^t=v] (\bar{y}_v-y^t)^2 - \frac{1}{T} \sum_v \sum_{t=1}^T \1[\hat{y}^t=v] (h_v(x^t)-y^t)^2 \\
        &= \frac{1}{T} \sum_v |S_v| \cdot \frac{1}{|S_v|} \sum_{t=1}^T \1[\hat{y}^t=v] (\bar{y}_v-y^t)^2 - \frac{1}{T} \sum_v |S_v|\cdot \frac{1}{|S_v|} \sum_{t=1}^T \1[\hat{y}^t=v] (h_{v}(x^t)-y^t)^2 \\
        &\geq \frac{1}{T} \sum_v |S_v| w(\gamma_v) \tag{by the $w$-weak learning condition} \\
        &\geq w\left( \frac{1}{T} \sum_v |S_v| \gamma_v \right) \tag{by convexity of $w$ and Jensen's inequality} \\
        &> w\left( w\inv\left(\frac{f(T)}{T}\right) \right) \tag{by monotonicity of $w$} \\
        &= \frac{f(T)}{T}
    \end{align*}

    In particular, this implies that:
    \begin{align*}
        \sum_v \min_{h_v^*\in\cH_A\cup\cH_B} \sum_{t=1}^T \1[\hat{y}^t=v] (h_v^*(x^t)-y^t)^2 &\leq \sum_v \sum_{t=1}^T \1[\hat{y}^t=v] (h_v(x^t)-y^t)^2 \\
        &< \sum_v \sum_{t=1}^T \1[\hat{y}^t=v] (\bar{y}_v-y^t)^2 - f(T) \\
        &\leq \sum_v \sum_{t=1}^T \1[\hat{y}^t=v] (v-y^t)^2 - f(T) \\
        &= \sum_{t=1}^T (\hat{y}^t-y^t)^2 - f(T)
    \end{align*}
    Here, the third line follows from the fact that on level set $v$, the squared error of the constant prediction $v$ is at least the squared error of the best constant prediction $\bar{y}_v$. This violates the $(f, \cH_A\cup\cH_B)$-swap regret condition, which completes the proof. 
    
    % Now, by the weak learning condition, for any level set $v$ on which $h_J$ improves over the best constant prediction $\bar{y}_v$, there is some $h_v\in \cH_A\cup\cH_B$ that improves over $\bar{y}_v$ on the same level set (note that $h_v$ cannot be a constant function). Over all level sets, the total improvement of $h_J$ is at least $f^S(T)$. Thus, there exists a collection $\{h_v\}_v \subseteq \cH_A\cup\cH_B$ such that the total improvement of $\{h_v\}_v$ is at least $f^S(T)$, i.e.: \mirah{or $f^S(T)/2$? check with boosting definition we're using}
    % \begin{align*}
    %     \sum_v \sum_{t=1}^T \1[\hat{y}^t=v] (h_v(x^t) - y^t)^2 < \sum_v \sum_{t=1}^t \1[\hat{y}^t=v] (\bar{y}_v-y^t)^2 - f^S(T)
    % \end{align*}
    % In particular, this implies that:
    % \begin{align*}
    %     \sum_v \min_{h\in\cH_A\cup\cH_B} \left( \sum_{t=1}^T \1[\hat{y}^t=v] (h(x^t) - y^t)^2 \right) &< \sum_v \sum_{t=1}^t \1[\hat{y}^t=v] (\bar{y}_v-y^t)^2 - f^S(T) \\
    %     &\leq \sum_v \sum_{t=1}^t \1[\hat{y}^t=v] (v-y^t)^2 - f^S(T) \\
    %     &= \sum_{t=1}^t (\hat{y}^t-y^t)^2 - f^S(T)
    % \end{align*}
    % Here, the second line follows from the fact that on level set $v$, the squared error of the constant prediction $v$ is at least the squared error of the best constant prediction $\bar{y}_v$. This violates the $(f^S, \cH_A\cup\cH_B)$-swap regret condition, which completes the proof. 
\end{proof}
We can now complete the proof by noting that squared error is Lipschitz in the predictions --- so perturbing predictions that have low swap regret to those that merely have low \emph{distance} to swap regret does not affect the final error bound by much:

\begin{proofof}{Theorem \ref{thm:weak-learning}}
     By definition of distance to swap regret, there is a sequence $p^{1:T}$ with $(f, \cH_A\cup\cH_B)$-swap regret such that $\|\hat{y}^{1:T}-p^{1:T}\|_1\leq q(T)$. Furthermore, by Lemma \ref{lem:weak-learning-nodist}, $p^{1:T}$ satisfies:
     \[
     \frac{1}{T}\sum_{t=1}^T (p^t-y^t)^2 - \min_{h_J\in\cH_J} \frac{1}{T}\sum_{t=1}^T (h_J(x^t) - y^t)^2 \leq 2w\inv\left(\frac{f(T)}{T}\right)
     \]
     Applying Lemma \ref{lem:bound_error_diff}, we can conclude:
     \begin{align*}
        &\frac{1}{T}\sum_{t=1}^T (\hat{y}^t-y^t)^2 - \min_{h_J\in\cH_J} \frac{1}{T}\sum_{t=1}^T (h_J(x^t) - y^t)^2 \\
        &\leq \frac{1}{T}\sum_{t=1}^T (q^t-y^t)^2 - \min_{h_J\in\cH_J} \frac{1}{T}\sum_{t=1}^T (h_J(x^t) - y^t)^2 + 3\frac{q(T)}{T} \\
        &\leq 2w\inv\left(\frac{f(T)}{T}\right) + 3\frac{q(T)}{T}
     \end{align*}
     as desired.
\end{proofof}

\subsection{Function Classes Satisfying the Weak Learning Guarantee}
\label{sec:linear}
Next, we show that a broad set of function classes satisfy our weak learning assumption. What we require is that $\cH_A$ and $\cH_B$ be ``star shaped'' (i.e. closed under downward scaling), bounded, and closed under additive shifts, and that $\cH_J$ be representable as the Minkowski sum of $\cH_A$ and $\cH_B$ --- that is, for every $h_j \in \cH_J$ there should be $h_A \in \cH_A$ and $h_B \in \cH_B$ such that $h_J(x) = h_A(x_A) + h_B(x_B)$. In particular, the class of linear functions over the feature spaces of Alice and Bob respectively satisfy our weak learning assumption relative to linear functions on their joint feature space.

%Let $\cX = \cX_A \times \cX_B$ be the joint feature space and $\cY \subseteq \mathbb{R}$ be the label space. 
In order to define Alice and Bob's function classes, let us first define a few useful properties.
\begin{definition}[Bounded and star-shaped function class]
  For any class $\cF = \{f : \cX \rightarrow \mathbb{R}\}$ on domain $\cX$, we say it is
\begin{enumerate}
    \item \textit{$C$-bounded}: if there exists $C > 0$ such that $\sup_{f \in \cF, x \in \cX} |f(x)|  \le C$
    \item \textit{Star-shaped:} if $f \in \cF$ then $\alpha f \in \cF$ for all $0 \le \alpha \le 1$.
\end{enumerate}  
\end{definition}
Note that the function class of linear functions with bounded norms $\cF = \{x \mapsto \theta^\top x: \|\theta\|_2 \le C\}$  over bounded inputs $\cX = \{x \in \mathbb{R}^d: \|x\|_2 \le 1\}$ is $C$-bounded and star-shaped.

We now state our weak-learnability guarantees with respect to the Minowski sum of our base function classes satisfying the above properties.
\begin{theorem}\label{thm:linear-weak-learning}
Let $\cH_A = \{f_A + b_A : f_A \in \cF_A, b_A \in \mathbb{R}\}$ and $\cH_B = \{f_B + b_B : f_B \in \cF_B, b_B \in \mathbb{R}\}$ where $\cF_A = \{f_A: \cX_A \to \mathbb{R}\}$ and $\cF_B = \{f_B: \cX_B \to \mathbb{R}\}$ are $C$-bounded and star-shaped. Let $\cH_J = \{h_A + h_B : h_A \in \cH_A, h_B \in \cH_B\}$ be the Minkowski sum of $\cH_A$ and $\cH_B$. If $C \ge 1/2$, then $\cH_A$ and $\cH_B$ jointly satisfy the $w(\cdot)$-weak learning condition with respect to $\cH_J$ for:
$$w(\gamma) = \frac{\gamma^2}{16C^2}$$
\end{theorem}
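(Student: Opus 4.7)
My plan is to construct, from a joint predictor $h_J^* \in \cH_J$ that beats the best constant by $\gamma$, a scaled version of one of its components in $\cH_A$ (or $\cH_B$) that beats the best constant by $\gamma^2/(16C^2)$.

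First I would reduce to $\E_\cD[y]=0$ (whereupon the best constant predictor is $0$ with error $\E[y^2] \le 1/4$, using Popoviciu's inequality on $y\in[0,1]$): this is legal because $\cH_A,\cH_B,\cH_J$ all carry an unrestricted bias term, so shifting $y$ and all biases by the same constant preserves every squared-error gap. Writing $h_J^* = f_A^* + f_B^* + b$ with $f_A^* \in \cF_A$ and $f_B^* \in \cF_B$, and optimizing over the bias, a direct expansion gives
\begin{align*}
\gamma &\le \E[y^2] - \min_{b'} \E[(f_A^* + f_B^* + b' - y)^2] \\
&= 2\,\mathrm{Cov}(f_A^*, y) + 2\,\mathrm{Cov}(f_B^*, y) - \mathrm{Var}(f_A^* + f_B^*) \\
&\le 2\,\mathrm{Cov}(f_A^*, y) + 2\,\mathrm{Cov}(f_B^*, y).
\end{align*}
Hence, without loss of generality, $\mathrm{Cov}(f_A^*, y) \ge \gamma/4$. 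This is the one-line distillation of the joint-improvement hypothesis into a correlation between $y$ and a single component, and it is the only quantitative fact the remainder of the proof will use.

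The star-shape property now enters: for every $\alpha\in[0,1]$, $\alpha f_A^* \in \cF_A$ and so $\alpha f_A^* + b'$ lies in $\cH_A$ for every $b' \in \mathbb{R}$. Choosing $b'$ optimally, its improvement over the best constant is $2\alpha\,\mathrm{Cov}(f_A^*, y) - \alpha^2 \mathrm{Var}(f_A^*)$. I would split on whether the unconstrained optimizer $\alpha^\ast = \mathrm{Cov}(f_A^*, y)/\mathrm{Var}(f_A^*)$ lies in $[0,1]$. If $\alpha^\ast\le 1$, picking $\alpha=\alpha^\ast$ gives improvement $\mathrm{Cov}(f_A^*,y)^2/\mathrm{Var}(f_A^*)\ge \mathrm{Cov}(f_A^*,y)^2/C^2$, using $\mathrm{Var}(f_A^*) \le \E[(f_A^*)^2] \le C^2$. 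If $\alpha^\ast > 1$, picking $\alpha = 1$ yields improvement $2\,\mathrm{Cov}(f_A^*,y) - \mathrm{Var}(f_A^*) > \mathrm{Var}(f_A^*)$; Cauchy--Schwarz combined with $\mathrm{Var}(y)\le 1/4$ gives $\mathrm{Var}(f_A^*) \ge 4\,\mathrm{Cov}(f_A^*,y)^2 \ge \mathrm{Cov}(f_A^*,y)^2/C^2$ whenever $C \ge 1/2$. Either way the improvement is at least $\mathrm{Cov}(f_A^*, y)^2/C^2 \ge \gamma^2/(16C^2)$, as required.

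The main obstacle --- and the only step where the hypothesis $C \ge 1/2$ actually bites --- is the clamped case $\alpha^\ast > 1$: there the direct quadratic lower bound $\mathrm{Cov}^2/\mathrm{Var}$ is unavailable because the optimum lies outside $[0,1]$, and the matching estimate must be recovered by combining Cauchy--Schwarz with the Popoviciu variance bound $\mathrm{Var}(y) \le 1/4$ coming from $y \in [0,1]$.
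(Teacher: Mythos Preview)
Your proof is correct and tracks the paper's argument almost exactly: center the label, deduce $\E[f_A^*\,\bar y]\ge \gamma/4$ (or the same for $f_B^*$), then use star-shapedness to scale by some $\alpha\in[0,1]$ and convert correlation into squared-error gain. The only divergence is in the last step. You optimize over $\alpha$, which forces a case split on whether the unconstrained optimum exceeds $1$, and in the clamped case you recover the bound via Cauchy--Schwarz together with Popoviciu's inequality $\mathrm{Var}(y)\le 1/4$. The paper avoids all of this by simply \emph{setting} $\alpha=\gamma/(4C^2)$: since $C\ge 1/2$ and $\gamma\le 1$ this is automatically in $[0,1]$, and plugging it into $2\alpha\cdot(\gamma/4)-\alpha^2 C^2$ gives $\gamma^2/(16C^2)$ in one line, with no case analysis and no appeal to $\mathrm{Var}(y)$. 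So your identification of ``the only place where $C\ge 1/2$ bites'' is slightly off --- in the paper it enters only to make the chosen $\alpha$ admissible, not through any variance bound on $y$.
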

The key idea is to show that if a function in the joint class $h_J(x) = h_A(x_A) + h_B(x_B)$ improves over the constant predictor then this translates to at least one of the base functions $h_A(x_A)$ or $h_B(x_B)$ having non-trivial correlation with the label $y$. Now appropriately choosing the scaling of the base function allows us to transfer this correlation to an improvement in squared loss over the constant predictor. This transfer is not exact and leads to the weaker $\gamma^2$ improvement, which we later show is actually tight!

\begin{proof}[Proof of \Cref{thm:linear-weak-learning}]
Consider a distribution $\cD$ over $\cX_A \times \cX_B \times \cY$ with $\mu = \E_\cD[y]$. Define $\bar{y} = y - \mu$ so that $\E_\cD[\bar{y}] = 0$, and for any predictor $h(x)$, define the centered predictor $\bar{h}(x) = h(x) - \mu$. The best constant predictor for predicting $\bar{y}$ is 0 with error $\E_\cD[\bar{y}^2]$, and for any predictor, $\E_\cD[(h(x)-y)^2] = \E_\cD[(\bar{h}(x)-\bar{y})^2]$.

To prove the weak learnability condition, assume that there exists $h_J(x) = h_A(x_A) + h_B(x_B) = f_A(x_A) + f_B(x_B) + b_A + b_B$ and its corresponding centered version $\bar{h}_J(x) = h_J(x) -\mu$ such that
\begin{align*}
\E_\cD[(\bar{h}_J(x)-\bar{y})^2] \leq \E_\cD[\bar{y}^2] - \gamma \implies - \E_\cD[(\bar{h}_J(x))^2] + 2\E_\cD[\bar{h}_J(x)\bar{y}]  \ge \gamma.
\end{align*}
Given that $\E_\cD[(\bar{h}_J(x))^2] \geq 0$, we have:
\begin{align*}
&\E_\cD[\bar{h}_J(x)\bar{y}] \geq \frac{\gamma}{2}\\
&\implies \E_\cD[(f_A(x_A) + f_B(x_B) + b_A + b_B - \mu)\bar{y}] \geq \frac{\gamma}{2}\\
&\implies  \E_\cD[f_A(x_A)\bar{y}] + \E_\cD[f_B(x_B)\bar{y}] \geq \frac{\gamma}{2},
\end{align*}
where the last inequality follows from the fact that $\E_\cD[\bar{y}] = 0$. This implies that either 
\begin{align*}
\E_\cD[f_A(x_A) \bar{y}] \geq \frac{\gamma}{4} \quad \text{or} \quad \E_\cD[f_B(x_B)\bar{y}] \geq \frac{\gamma}{4}.
\end{align*}

Without loss of generality, assume $\E_\cD[f_A(x_A)\bar{y}] \geq \frac{\gamma}{4}$. Now let us construct $h_A(x_A) = \alpha f_A(x_A) + \mu$ and corresponding centered predictor $\bar{h}_1(x_1) = \alpha f_A(x_A)$ for $\alpha = \frac{\gamma}{4 C^2}$. Note that $h_A \in \cH_A$ since $\alpha  \le 1$ (by assumption) and $\cF_A$ is star-shaped.

Now let us compute the error of $h_A(x_A)$.
\begin{align*}
\E_\cD[\bar{y}^2] - \E_\cD[(\bar{h}_A(x_A)-\bar{y})^2] &= -\E_\cD[(\bar{h}_A(x_A))^2] + 2\E_\cD[\bar{h}_A(x_A)\bar{y}] \\
&= -\E_\cD[(\alpha f_A(x_A))^2] + 2\E_\cD[\alpha f_A(x_A) \bar{y}] \\
&\ge -\alpha^2C^2 + \frac{\alpha \gamma}{2}= \frac{\gamma^2}{16C^2}.
\end{align*}
where the inequality follows from $C$-boundedness of $\cF_A$ and $\E_\cD[f_A(x_A)\bar{y}] \geq \frac{\gamma}{4}$. Removing the centering gives us,
\begin{align*}
\E_\cD[(\mu-y)^2] - \E_\cD[(h_A(x_A)-y)^2] = \E_\cD[\bar{y}^2] - \E_\cD[(\bar{h}_A(x_A)-\bar{y})^2] \geq \frac{\gamma^2}{16C^2}.
\end{align*}
\end{proof}

We next establish the tightness of various aspects of our theorem, both qualitatively and quantitatively. First, we have assumed that our function classes are bounded. This is necessary:
\begin{theorem}\label{thm:bounded-necessary}
There exists classes $\cF_A = \{f_A: \cX_A \to \mathbb{R}\}$ and $\cF_B = \{f_B: \cX_B \to \mathbb{R}\}$ that are star-shaped but unbounded over some domain $\cX_A, \cX_B$ such that $\cH_A = \{f_A + b_A : f_A \in \cF_A, b_A \in \mathbb{R}\}$ and $\cH_B = \{f_B + b_B : f_B \in \cF_B, b_B \in \mathbb{R}\}$ do not jointly satisfy $w(\cdot)$-weak learning with respect to $\cH_J = \{h_A + h_B: h_A \in \cH_A, h_B \in \cH_B\}$ for any strictly increasing $w$.
% Consider $\cH_A  = \{x_A \mapsto w_A x_A + b_A: w_A, b_A \in \mathbb{R}\}$ and $\cH_B  = \{x_B \mapsto w_B x_B + b_B: w_B, b_B \in \mathbb{R}\}$ to be the class of unbounded 1-dimensional linear functions and  . Then $\cH_A$ and $\cH_B$ do not jointly satisfy $w(\cdot)$-weak learning with respect to $\cH_J$ for any strictly increasing $w$.
\end{theorem}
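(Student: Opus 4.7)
The plan is to construct an explicit counterexample using the simplest unbounded, star-shaped classes: linear functions on $\Rset$ with unrestricted slope. I take $\cX_A = \cX_B = \Rset$ and $\cF_A = \cF_B = \{x \mapsto \alpha x : \alpha \in \Rset\}$. These are trivially closed under scaling by $[0,1]$ and unbounded in the sense of the paper since $\sup_{f \in \cF_A, x \in \cX_A} |f(x)| = \infty$. The resulting $\cH_A, \cH_B$ are affine functions on $\Rset$, and $\cH_J$ consists of all functions of the form $x \mapsto \alpha_A x_A + \alpha_B x_B + b$.

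The main construction is a family of distributions $\{\cD_n\}_{n \geq 1}$: draw $y$ uniformly from $\{0,1\}$ and $\eta \sim N(0,n)$ independently, and set $x_A = y + \eta$, $x_B = -y + \eta$. The shared noise $\eta$ cancels in the combination $(x_A - x_B)/2 = y$, so $\cH_J$ contains a perfect predictor (decomposable as $x_A/2 \in \cH_A$ plus $-x_B/2 \in \cH_B$), giving an improvement over the best constant predictor equal to the full label variance $\gamma_J = \mathrm{Var}(y) = 1/4$. On the other hand, a direct least-squares calculation shows that the best affine predictor in $\cH_A$ improves by exactly $\mathrm{Cov}(y,x_A)^2/\mathrm{Var}(x_A) = (1/4)^2/(1/4 + n) = 1/(16(1/4+n))$, and the same bound holds for $\cH_B$ by symmetry. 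Both tend to $0$ as $n \to \infty$.

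To finish, I fix any candidate $w:[0,1] \to [0,1]$ satisfying the hypotheses of Definition \ref{def:joint-weak}. The conditions $w(0) \geq 0$, $w(\gamma) \leq \gamma$, and strict monotonicity force $w(0) = 0$ and hence $w(1/4) > 0$. Choosing $n$ large enough that $1/(16(1/4 + n)) < w(1/4)$ produces a distribution $\cD_n$ on which $\cH_J$ improves by $\gamma = 1/4$ while no element of $\cH_A \cup \cH_B$ improves by $w(1/4)$, refuting the $w(\cdot)$-weak learning condition on $\cD_n$. The only step requiring any calculation is the closed-form least-squares improvement, which is routine; conceptually, the argument just says that the $1/C^2$ factor in Theorem \ref{thm:linear-weak-learning} is genuinely needed, and sending $C \to \infty$ through the noise scale $n$ drives the individual improvement to zero while leaving the joint improvement fixed.
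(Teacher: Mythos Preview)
Your proof is correct and follows the same blueprint as the paper's: exhibit a one-parameter family of distributions in which a shared noise term cancels in a linear combination of $x_A$ and $x_B$ (giving a fixed joint improvement) while individually the signal-to-noise ratio tends to zero, so the best affine predictor on either feature alone improves by an arbitrarily small amount.

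The implementations differ in a way worth noting. The paper works on the bounded domain $\cX_A=\cX_B=[-1,1]$ with unbounded weights and an asymmetric design ($x_A$ is pure noise, $x_B=x_A+\xi_B/(2\rho)$, $y=\xi_B$); there the joint predictor $h_J^*(x)=2\rho(x_B-x_A)$ \emph{requires} weights of order $\rho$, which is exactly what points to the role of the bound $C$ in Theorem~\ref{thm:linear-weak-learning}. Your construction is symmetric with Gaussian noise on the unbounded domain $\Rset$, and your joint predictor uses fixed weights $\pm 1/2$; the unboundedness of $\cF_A$ enters only through the unbounded domain. Both satisfy the theorem as stated, but the paper's version more directly isolates the mechanism (large weights) that the $C$-bound rules out, and in fact the same construction is reused in the proof of Theorem~\ref{thm:quadratic} to show the quadratic rate is tight even when weights are bounded.
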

To prove this theorem, we construct a simple distribution ($\cX_A$ and $\cX_B$ are one-dimensional and $\cH_A$, $\cH_B$, and $\cH_J$ are linear functions) where both the features to the individual parties $x_A$ and $x_B$ have a small signal to noise ratio and hence cannot predict the label $y$ very accurately, but their difference can exactly cancel out the noise to recover a scaled down version of the signal. Now scaling it up can recover the label $y$ exactly. The signal to noise ratio for the individual parties is inversely proportional to the norm of the joint predictor, therefore we can make this arbitrarily small if the norms are allowed to be unbounded. 

% \nat{to surbhi: add blurb here}

% \begin{remark}
%     Boundedness is necessary. Unbounded functions do not satisfy weak learnability. \ar{needs to be completed} \sg{I converted it to the above theorem. Can someone check if this is good enough?}
% \end{remark}
Using the same construction, we establish that the quadratic dependence on the weak learning margin $w(\gamma) = \Theta(\gamma^2)$ cannot be improved. In particular, despite bounding the norm of the predictors, the perfect canceling of noise allows the joint predictor to do an order of magnitude better than any individual predictor on noisy features.
\begin{theorem}\label{thm:quadratic}
There exists classes $\cF_A = \{f_A: \cX_A \to \mathbb{R}\}$ and $\cF_B = \{f_B: \cX_B \to \mathbb{R}\}$ that are star-shaped and 1-bounded over some domain $\cX_A, \cX_B$ such that $\cH_A = \{f_A + b_A : f_A \in \cF_A, b_A \in \mathbb{R}\}$ and $\cH_B = \{f_B + b_B : f_B \in \cF_B, b_B \in \mathbb{R}\}$ do not jointly satisfy $w(\cdot)$-weak learning with respect to $\cH_J = \{h_A + h_B: h_A \in \cH_A, h_B \in \cH_B\}$ for any strictly increasing $w$ such that $w(\gamma) = \omega(\gamma^2)$.
% Consider $\cH_A  = \{x_A \mapsto w_A x_A + b_A: w_A, b_A \in \mathbb{R}, |w_A| \le 1\}$ and $\cH_B  = \{x_B \mapsto w_B x_B + b_B: w_B, b_B \in \mathbb{R}, |w_B| \le 1\}$ to be the class of norm-bounded 1-dimensional linear functions over $\cX_A= \cX_B = [-1,1]$ and  $\cH_J = \{h_A + h_B: h_A \in \cH_A, h_B \in \cH_B\}$. Then $\cH_A$ and $\cH_B$ do not jointly satisfy $w(\cdot)$-weak learning with respect to $\cH_J$ for any $w$ such that $w(\gamma) = \omega(\gamma^2)$.
\end{theorem}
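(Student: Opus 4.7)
The plan is to reuse the noise-cancellation construction underlying \Cref{thm:bounded-necessary}, but rescaled so that each base class is genuinely $1$-bounded. This will give an explicit distribution on which the joint class improves by $\gamma = \epsilon^2$ while no individual predictor improves by more than $\Theta(\epsilon^4) = \Theta(\gamma^2)$, thereby ruling out any weak-learning rate $w$ with $w(\gamma) = \omega(\gamma^2)$. The key qualitative intuition is that when the individual features have an order-one noise component but only an $\epsilon$-scale signal component, the best bias-plus-linear predictor on a single feature can only correlate weakly with the label, whereas the joint class can exactly cancel the noise.

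Concretely I would fix $\cX_A = \cX_B = [-2,2]$ and take $\cF_A = \cF_B = \{x \mapsto \alpha x : |\alpha| \leq 1/2\}$. These classes are star-shaped and $1$-bounded, since $|\alpha x| \leq (1/2)\cdot 2 = 1$ for all $x\in[-2,2]$. The induced classes $\cH_A, \cH_B$ consist of affine maps $x \mapsto \alpha x + b$ with $|\alpha|\leq 1/2$. For any $\epsilon \in (0,1]$, consider the distribution in which $s \in \{-\epsilon,\epsilon\}$ and $z \in \{-1,1\}$ are independent and uniform, with $x_A = s+z$, $x_B = s-z$, and $y = s$. Both features lie in $[-2,2]$, and the function $h^\star(x_A,x_B) := \tfrac{1}{2}x_A + \tfrac{1}{2}x_B$ lies in $\cH_J$ and equals $s = y$ almost surely.

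Two short second-moment computations then do the work. Since $\E[y]=0$, the mean predictor has error $\E[y^2]=\epsilon^2$, and $h^\star$ achieves error zero, so the joint-class improvement is exactly $\gamma = \epsilon^2$. For any $\alpha x_A + b \in \cH_A$, the optimal bias is $b^\star = \E[y]-\alpha\E[x_A]=0$, and using $\E[x_A y]=\epsilon^2$ and $\E[x_A^2]=1+\epsilon^2$ the improvement over the mean predictor equals
\[
2\alpha\,\E[x_A y] - \alpha^2\,\E[x_A^2] \;=\; 2\alpha\epsilon^2 - \alpha^2(1+\epsilon^2).
\]
The unconstrained maximizer $\alpha^\star = \epsilon^2/(1+\epsilon^2)$ lies in $[-1/2,1/2]$, so the best improvement achievable by any $h_A \in \cH_A$ is $\epsilon^4/(1+\epsilon^2) \leq \epsilon^4 = \gamma^2$, and by symmetry the same bound holds for $\cH_B$.

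To finish, suppose for contradiction that $\cH_A, \cH_B$ jointly satisfy $w(\cdot)$-weak learning for some strictly increasing $w$ with $w(\gamma)=\omega(\gamma^2)$. Then $w(\gamma)/\gamma^2 \to \infty$ as $\gamma\to 0^+$, so for all sufficiently small $\epsilon$ we have $w(\epsilon^2) > \epsilon^4/(1+\epsilon^2)$. Applied to the distribution above at $\gamma = \epsilon^2$, the weak-learning hypothesis would force some function in $\cH_A \cup \cH_B$ to improve over the mean predictor by strictly more than $\epsilon^4/(1+\epsilon^2)$, contradicting the bound just derived. The main obstacle is simply the bookkeeping: choosing the scaling of $\cX_A, \cX_B$ and the slope bound in $\cF_A, \cF_B$ so that $1$-boundedness holds globally while still permitting the exact joint predictor $\tfrac{1}{2}(x_A + x_B) = y$; the rest is routine algebra.
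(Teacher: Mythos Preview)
Your proposal is correct and follows essentially the same noise-cancellation strategy as the paper: engineer a family of distributions where the signal-to-noise ratio in each individual feature is $\Theta(\epsilon)$, so that the best individual improvement over the mean is $\Theta(\epsilon^4)$, while the joint class can cancel the noise exactly and achieve improvement $\Theta(\epsilon^2)$. The paper's instantiation is asymmetric (reusing the construction from \Cref{thm:bounded-necessary} with $x_A$ pure noise and $x_B$ noise plus a $1/\rho$-scaled signal, parametrized by $\rho$), whereas yours is a symmetric variant ($x_A = s+z$, $x_B = s-z$) parametrized directly by the signal scale $\epsilon$; both yield the same $\gamma$ versus $\Theta(\gamma^2)$ gap and the same conclusion.
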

% \begin{remark}
%     Quadratic dependence is optimal for weak learning. \ar{needs to be completed}\sg{Added a formal statement.}
% \end{remark}

% In Appendix~\ref{sec:lower-bounds}, we motivate our approach by establishing that various weakenings to this approach can lead to high external regret with respect to $\cH_{J}$. %In particular, we show that 1) interaction between the agents is necessary, even for linear prediction classes, 2) the weak learning condition is necessary, and 3) 

% \begin{theorem}
%     For each $\gamma \in (0, 1/4]$, there exists a distribution $\mathcal{D}_\gamma$ such that
%     \[
%     \min_{h \in \cH_J}\E[(h(X_1, X_2)-Y)^2] \leq \E[(\mu-Y)^2] - \gamma
%     \]
%     but both
%     \[
%     \min_{h \in \cH_A}\E[(h_A(X_1)-Y)^2] > \E[(\mu-y)^2] - 4\gamma^2
%     \]
%     and
%     \[
%     \min_{h \in \cH_B}\E[(h_B(X_2)-Y)^2] > \E[(\mu-y)^2] - 4\gamma^2
%     \]
% \end{theorem}

\section{Collaboration in the Online Setting}
\label{sec:online-alg}
In Section~\ref{sec:boost}, we established that if $\cH_{A}$ and $\cH_{B}$ satisfy our weak learning condition with respect to $\cH_{J}$, a sequence of predictions that has low distance to swap regret with respect to $\cH_{A} \cup \cH_{B}$ has low external regret with respect to $\cH_{J}$. In this section, we show how to arrive at such a prediction sequence via a collaboration protocol. 

The high level idea of the proof is straightforward, but the details are surprisingly subtle.~\cite{collina2025tractable} defines a notion called Conversation Calibration in settings (such as our collaboration protocol) in which two parties engage in conversations about predictions of a real valued outcome. This notion is formally defined in Appendix~\ref{app:conv_calib}. Informally speaking, conversation calibration requires that at each round $k$ of the conversation, the sequence of predictions made over the $T$ days is unbiased relative to the outcomes, conditional both on the prediction made at round $k$ \emph{and} on the prediction made by the other party at round $k-1$. \cite{collina2025tractable} show that if both parties satisfy the conversation calibration condition across all rounds, then most conversations must quickly converge to approximate agreement. The conversation swap regret condition we require of our parties implies that the predictions also satisfy conversation calibration, and so the theorem of \cite{collina2025tractable} implies fast approximate agreement in our setting as well. The idea at a high level is that if Alice's predictions have no swap regret with respect to $\cH_A$ at every round, and Bob's predictions have no swap regret with respect to $\cH_B$ at every round, then when they agree, we will have a single sequence of predictions that has no swap regret with respect to both $\cH_A$ and $\cH_B$ simultaneously, exactly the condition that we need in order to invoke our boosting theorem.

However, several difficulties arise. First, the agreement theorem of \cite{collina2025tractable} states informally that conversations on most days must reach agreement quickly, but they might reach agreement at different rounds on different days. Just because the predictions at each round satisfy swap-regret guarantees does not mean that the sequence of final ``agreed upon'' predictions --- stitched together from different rounds at different days --- will have the same guarantee. To solve this problem, we use a different protocol than \cite{collina2025tractable}: rather than halting conversation at agreement, we continue each conversation for $K$ rounds even if agreement is reached earlier. We generalize the agreement theorem of \cite{collina2025tractable} to show that (even if it is not the final round), for sufficiently large $K$ there must \emph{exist} a round $k$ at which Alice's predictions at round $k$ are close to Bob's predictions at round $k-1$:

%, which operates in a setting such as our Collaboration Protocol and provides per-round swap regret guarantees on subsequences defined by the prediction of the other agent in the previous round. We know from~\cite{collina2025tractable} that if two Conversation-Calibrated agents repeatedly engage in conversation, they will reach agreement quickly on most rounds. If, additionally, both agents have no-swap regret to $\cH_{A}$ and $\cH_{B}$ respectively in each round, a condition which is tractable to impose, then one might imagine that the sequence they agree upon has swap regret to both $\cH_{A}$ and $\cH_{B}$. However, this is not necessarily true; both agents will quickly reach agreement on most days, but this agreement does not necessarily happen on the same \emph{round} each day. Thus, the final agreed-upon sequence of predictions may stitch together outputs from multiple different rounds and violate the swap regret conditions. 

%In order to circumvent this problem, our collaboration protocol differs from the conversation protocol in~\cite{collina2025tractable} by \emph{not} halting upon agreement. Instead, our protocol continues until a fixed round $K$. In Theorem~\ref{thm:rounds}, we extend the results of~\cite{collina2025tractable} to show that conversation calibrated agents interacting via the collaboration protocol can only report significantly different probabilities from the agent the round before a bounded number of times. 

\begin{theorem} \label{thm:rounds}If Alice has $(f_{A},g_{A},\cH_{A})$-conversation swap regret and Bob has $(f_{B},g_{B},\cH_{B})$-conversation swap regret and they engage in a Collaboration Protocol (Protocol \ref{alg:general-agreement}) for $K$ rounds, then for any $\epsilon \in (0,1)$, there is at least one round $k$ such that \[ \frac{1}{T} \sum_{t=1}^{T} \mathbb{I}[|\hat{y}^{t,k} - \hat{y}^{t,k-1}| \geq \epsilon] \leq \frac{1}{2K\epsilon^{2}} + \frac{\beta(T,f_{A},f_{B})}{2\epsilon^{2}}
\] 
That is, the fraction of predictions in round $k$ that are $\epsilon$-away from those in round $k-1$ is at most $$\frac{1}{2K\epsilon^{2}} + \frac{\beta(T,f_{A},f_{B})}{2\epsilon^{2}}$$

Here, and for the other theorems following, we let  $\beta(T,f_{A},f_{B}) = \frac{f_{A}(g_{A}(T)\cdot T)}{Tg_{A}(T)} + \frac{f_{B}(g_{B}(T)\cdot T)}{Tg_{B}(T)} + g_{A}(T) + g_{B}(T)$, $f_{A}'(x) = \sqrt{x \cdot f_{A}(x)}$ and $f_{B}'(x) = \sqrt{x \cdot f_{B}(x)}$. 
\end{theorem}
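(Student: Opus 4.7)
The plan is to reduce the statement to a martingale-style argument on the conversation dynamics: I will show that the sequence of predictions $\hat{y}^{t,1},\ldots,\hat{y}^{t,K}$ evolves across rounds as an approximate martingale (with slack controlled by the calibration errors packaged into $\beta$), so that a telescoping sum of squared successive differences is bounded uniformly in $K$. Pigeonhole over the $K$ rounds and Markov's inequality then yield the stated bound.

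\textbf{Step 1: from conversation swap regret to approximate calibration.} By Assumption~\ref{assumption:constant-functions}, $\cH_A$ and $\cH_B$ contain all constant predictors, so restricting the swap regret benchmark to the constant subclass and using the identity $\sum_{t \in S}[(\hat y^t - y^t)^2 - (\bar y_S - y^t)^2] = \sum_{t \in S}(\hat y^t - \bar y_S)^2$ gives, at every even round $k$ and bucket $i$,
\[
\sum_{v}\sum_{t \in L_{v,i}^{k}}\left(\hat{y}^{t,k} - \bar{y}_{v,i}^{k}\right)^2 \;\leq\; f_B\!\left(|T_A(k-1,i)|\right),
\]
where $L_{v,i}^{k} = \{t \in T_A(k-1,i) : \hat{y}^{t,k}=v\}$ and $\bar{y}_{v,i}^{k}$ is its label mean. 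Summing over buckets $i$ and using concavity of $f_B$ with Jensen's inequality gives the aggregate per-round bound $\sum_{v,i}\sum_{t \in L_{v,i}^k}(\hat y^{t,k} - \bar y_{v,i}^{k})^2 \leq f_B(T g_B(T))/g_B(T)$; the symmetric bound holds on odd rounds with $f_A, g_A$. These estimates are the source of the $f/(Tg)$-type summands in $\beta$.

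\textbf{Step 2: telescoping the squared error.} For each $t$ and $k$, set $\Delta_k^t := \hat{y}^{t,k} - \hat{y}^{t,k-1}$ and use the pointwise identity
\[
(\hat y^{t,k-1}-y^t)^2 - (\hat y^{t,k}-y^t)^2 \;=\; (\Delta_k^t)^2 \;-\; 2\,\Delta_k^t\,(\hat y^{t,k}-y^t).
\]
I would bound the cross-term $\sum_t \Delta_k^t(\hat y^{t,k}-y^t)$ by decomposing into level sets $L_{v,i}^k$ and replacing $\Delta_k^t$ with its level-set average $\bar\Delta_{v,i}$; since $\hat y^{t,k-1}$ varies by at most $g(T)$ within a bucket, this replacement costs at most $g(T)$ per point (this is the origin of the $g_A(T)+g_B(T)$ summands in $\beta$). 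The remaining term $\bar\Delta_{v,i}\sum_{t \in L_{v,i}^k}(\hat y^{t,k}-y^t)$ is controlled by Cauchy--Schwarz together with Step~1. Summing the telescoping identity over $k$ and $t$ then gives
\[
\sum_{k=1}^{K}\sum_{t=1}^{T}(\Delta_k^t)^2 \;\leq\; \sum_{t=1}^{T}(\hat y^{t,0}-y^t)^2 \;+\; T\cdot K \cdot \tfrac{1}{2}\beta(T,f_A,f_B),
\]
and the first right-hand-side term is at most $T$ because predictions and labels lie in $[0,1]$. By pigeonhole there exists a round $k \in \{1,\ldots,K\}$ with $\frac{1}{T}\sum_t (\Delta_k^t)^2 \leq \frac{1}{K} + \tfrac{1}{2}\beta(T,f_A,f_B)$, and Markov's inequality converts this into
\[
\frac{1}{T}\sum_{t=1}^{T}\mathbb{I}\!\left[|\Delta_k^t|\geq \epsilon\right] \;\leq\; \frac{1}{\epsilon^2}\cdot\frac{1}{T}\sum_{t=1}^{T}(\Delta_k^t)^2 \;\leq\; \frac{1}{2K\epsilon^2} + \frac{\beta(T,f_A,f_B)}{2\epsilon^2},
\]
after optimizing the constant from the telescoping step.

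\textbf{The main obstacle} is the careful handling of the bucketing inside the telescoping argument: conversation swap regret conditions on the previous prediction falling in an interval of width $g(T)$ rather than equaling a specific value, so one must simultaneously control the within-bucket level-set calibration error (contributing $f(Tg(T))/(Tg(T))$) and the intra-bucket variation of $\hat y^{t,k-1}$ (contributing $g(T)$). The crucial accounting is to verify that the per-round slack is \emph{additive} across rounds rather than growing multiplicatively with $K$, which is what allows the $\beta$ term in the final bound to be independent of $K$ and lets the $1/(K\epsilon^2)$ vanishing term emerge from the telescoping budget of size $T$.
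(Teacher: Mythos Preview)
Your plan is the paper's plan: use the cumulative squared error $\sum_t(\hat y^{t,k}-y^t)^2$ as a potential that telescopes across rounds, show each round's drop dominates the (squared) disagreement count up to a slack determined by the conversation-swap-regret error, then pigeonhole over the $K$ rounds. Where you diverge is in how the slack is extracted. The paper does not bound the cross term $\sum_t\Delta_k^t(\hat y^{t,k}-y^t)$ directly; instead it first converts conversation swap regret into a \emph{distance-to-calibration} bound (Theorem~\ref{thm:CSR2CC}, giving $f'(x)=\sqrt{xf(x)}$), then replaces $\hat y^{\cdot,k}$ by a nearby perfectly calibrated sequence using the $\ell_1$ Lipschitzness of squared error (Lemma~\ref{lem:bound_error_diff}), so that on each level set the cross term vanishes exactly and one reads off $(\epsilon-g)^2|D^k|$ from the Pythagorean identity (Lemma~\ref{lem:mh}, Theorem~\ref{thm:cases}). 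Your direct route---replace $\Delta_k^t$ by its level-set average at cost $g(T)$ per point, then Cauchy--Schwarz against the Step~1 bound---is a legitimate alternative and lands in the same place: Cauchy--Schwarz produces $\sqrt{T}\cdot\sqrt{f(Tg)/g}=\sqrt{Tf(Tg)/g}=f'(Tg)/g$, so your per-round slack is exactly the paper's after its calibration conversion. This is consistent with how the theorem is actually \emph{used} downstream (the proof of Theorem~\ref{thm:main} invokes it with $\beta(T,f'_A,f'_B)$), even though the statement of Theorem~\ref{thm:rounds} literally writes $\beta(T,f_A,f_B)$.

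Two things to tighten. First, your Cauchy--Schwarz step does not give the per-round slack $\tfrac12 T\beta$ you assert; it gives (up to constants) $T\beta(T,f'_A,f'_B)$, so your derivation yields $\frac{1}{K\epsilon^2}+\frac{\beta(T,f'_A,f'_B)}{\epsilon^2}$ before any ``constant optimization.'' Second, the factor $1/2$ in the paper's statement does not fall out of a Markov step on $\sum_t(\Delta_k^t)^2$; it comes from the paper counting disagreements directly via the inequality $[\text{tel}]_k\ge(\epsilon-g)^2|D^k|-\text{slack}$ in Lemma~\ref{lem:mh}/Theorem~\ref{thm:cases} and then summing, rather than passing through $\sum_t(\Delta_k^t)^2$. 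If you want the stated constant you should mimic that direct disagreement-count bound rather than the $\ell_2$-then-Markov route.
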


The proof for this theorem (and all other theorems this section) can be found in Appendix~\ref{sec:main_app}. \\

If on Alice's rounds, she has low swap regret with respect to $\cH_A$ and on Bob's rounds, he has low swap regret with respect to $\cH_B$, then if on a pair of adjacent rounds, they made \emph{exactly} the same predictions, then on (both) of these rounds, the predictions would have no swap regret with respect to $\cH_A$ and $\cH_B$ simultaneously. Unfortunately Theorem \ref{thm:rounds} does not guarantee a pair of rounds on which Alice and Bob's predictions are exactly the same --- it only guarantees a pair of adjacent rounds on which the predictions are \emph{close} on \emph{most days}. Naively, this gives us two sequences, one of which has low swap regret with respect to $\cH_A$ and low distance to swap regret with respect to $\cH_B$, and the other of which has low swap regret with respect to $\cH_B$ and low distance to swap regret with respect to $\cH_A$. But to apply our boosting theorem, we need a single sequence of predictions that simultaneously has low distance to swap regret with respect to both $\cH_A$ and $\cH_B$. The following theorem (Theorem \ref{thm:main}) shows that in fact the round $k$ identified in Theorem \ref{thm:rounds} has this property:

%If the prediction sequences on these rounds were \emph{exactly} the same as each other, we would immediately have a single sequence that has vanishing swap regret with respect to $\cH_{A} \cup \cH_{B}$. However, there is another stumbling block: we are able to efficiently attain $\epsilon$-agreement, not perfect agreement. Thus, one of these sequences has no swap regret with respect to $\cH_{A}$ and $\epsilon$-distance to no swap regret with respect to $\cH_{B}$, and the other sequence has no swap regret with respect to $\cH_{B}$ and $\epsilon$-distance to no swap regret with respect to $\cH_{A}$. In order to apply our boosting result, we need our output to be close to a \emph{single} sequence which simultaneously has no swap regret with respect to $\cH_{A}$ and $\cH_{B}$. In Theorem~\ref{thm:main}, we show that the sequence reported in round $k$ in fact has this property; by changing it only slightly, we attain a sequence which has no swap regret with respect to $\cH_{A} \cup \cH_{B}$.

\begin{theorem}\label{thm:main}If Alice has $(f_{A},g_{A},\cH_{A})$-conversation swap regret and Bob has $(f_{B},g_{B},\cH_{B})$-conversation swap regret, and they engage in a Collaboration Protocol (Protocol \ref{alg:general-agreement}) for $K$ rounds, then there exists a round $k$ of the protocol such that the transcript $\pi^{1:T,k}$ at round $k$ has $(q, f, \cH_{A} \cup \cH_{B})$-distance to swap regret, where 
\[q = \frac{T}{2}(g_{A}(T) + g_{B}(T))\] and \[ f= 8T\left(\frac{\beta(T,f_{A}',f'_{B}) + 1/K}{2}\right)^{\frac{1}{3}} + \frac{11}{2}T\beta(T,f_{A},f_{B})\]
\end{theorem}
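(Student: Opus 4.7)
The plan is to pinpoint a round $k$ at which Alice's round-$k$ predictions $\hat y^{t,k}$ and Bob's round-$(k-1)$ predictions $\hat y^{t,k-1}$ nearly agree on most days, and then to manufacture a witness sequence $p^{1:T}$ close in $L_1$ to $\hat y^{1:T,k}$ that inherits low swap regret against both $\cH_A$ and $\cH_B$ simultaneously, so that the round-$k$ transcript $\pi^{1:T,k}$ has $(q,f,\cH_A\cup\cH_B)$-distance to swap regret via this witness.

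\emph{Step 1 (Identify the good round).} I would apply Theorem~\ref{thm:rounds} in its form following the standard Cauchy--Schwarz conversion from squared-error swap regret to $L_1$-calibration distance; the conversion $f_\star \mapsto f_\star'(x) = \sqrt{x\cdot f_\star(x)}$ is exactly what accounts for the appearance of $\beta(T,f_A',f_B')$ in the target bound. This yields some $k\le K$ at which, for any $\epsilon>0$, the fraction of days with $|\hat y^{t,k}-\hat y^{t,k-1}|\ge \epsilon$ is at most $\delta(\epsilon) := \tfrac{1}{2K\epsilon^{2}} + \tfrac{\beta(T,f_A',f_B')}{2\epsilon^{2}}$. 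Choosing $\epsilon$ to balance the Lipschitz-style cost $\epsilon T$ against the bad-day cost $\delta(\epsilon)T$ gives the optimum $\epsilon \asymp ((\beta(T,f_A',f_B') + 1/K)/2)^{1/3}$, which is exactly what produces the $(\cdot)^{1/3}$ factor in $f$.

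\emph{Step 2 (Marginal swap regret by bucket aggregation).} Summing Alice's $(f_A,g_A,\cH_A)$-conversation swap regret over the at most $1/g_A(T)$ buckets and applying Jensen's inequality (via concavity of $f_A$) yields $\sum_i f_A(|T_B(k-1,i)|)\le \tfrac{1}{g_A(T)} f_A(g_A(T)T)$; combining this with the elementary observation that the sum of per-bucket swap benchmarks upper-bounds the single swap benchmark over the union gives ordinary swap regret of $\hat y^{1:T,k}$ against $\cH_A$ bounded by $\tfrac{f_A(g_A(T)T)}{g_A(T)}$. The symmetric argument on Bob at round $k-1$ gives swap regret of $\hat y^{1:T,k-1}$ against $\cH_B$ bounded by $\tfrac{f_B(g_B(T)T)}{g_B(T)}$. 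Together with the bucket widths $g_A(T)+g_B(T)$ that will be absorbed by Lipschitz perturbations in Step~3, these quantities produce the $\tfrac{11}{2} T\beta(T,f_A,f_B)$ portion of $f$.

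\emph{Step 3 (The witness $p^{1:T}$, and the main obstacle).} I would take $p^t$ to be a canonical representative of the joint cell $\bigl(\mathrm{bucket}(\hat y^{t,k};\,g_B(T)),\,\mathrm{bucket}(\hat y^{t,k-1};\,g_A(T))\bigr)$, with distinct cells getting distinct values, all chosen within the window $[\hat y^{t,k} - (g_A(T)+g_B(T))/2,\,\hat y^{t,k} + (g_A(T)+g_B(T))/2]$, which immediately gives $\|p-\hat y^{1:T,k}\|_1 \le q$. For swap regret against $\cH_A$, the level sets of $p^{1:T}$ are exactly the joint refinement controlled by Alice's per-bucket conversation swap regret from Step~2, so the $\cH_A$-bound transfers directly up to a squared-error Lipschitz correction of order $q$ (using Lemma~\ref{lem:bound_error_diff}). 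The main obstacle is the transfer of Bob's swap regret against $\cH_B$, since Bob's bound controls level sets of $\hat y^{1:T,k-1}$ rather than of $p^{1:T}$. Here the closeness guarantee from Step~1 becomes essential: on the $(1-\delta(\epsilon))T$ good days the joint cells nearly coincide with the level sets of the rounded $\hat y^{t,k-1}$, so Bob's bound transfers with only an $O(T\epsilon)$ Lipschitz loss, while the $\delta(\epsilon)T$ bad days contribute at most $O(\delta(\epsilon)T)$ in extra squared error per level set. Plugging in the optimal $\epsilon$ from Step~1 turns this overhead into the $8T((\beta(T,f_A',f_B')+1/K)/2)^{1/3}$ term and completes the argument. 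The book-keeping around misaligned level sets --- specifically that Bob's bucketing uses Alice's round-$(k-2)$ predictions rather than round-$k$ ones --- is what forces the use of the closeness guarantee rather than just the marginal swap regret of Step~2, and is the main source of delicacy in the proof.
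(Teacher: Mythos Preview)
Your Steps~1--2 and the $\epsilon$-optimization match the paper. The gap is in Step~3, specifically in the $\cH_B$ direction for your joint-cell witness.

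Your witness $p$ has level sets equal to the joint cells $\bigl(\mathrm{bucket}_{g_B}(\hat y^{t,k}),\,\mathrm{bucket}_{g_A}(\hat y^{t,k-1})\bigr)$. These cross-cut Bob's level sets $\{t:\hat y^{t,k-1}=v'\}$: a single Bob-level-set sits in one $g_A$-bucket but can split across several $g_B$-buckets of $\hat y^{t,k}$, while conversely a joint cell can contain many distinct $v'$ values. Hence the swap benchmark $\sum_{\text{joint cells}}\min_{h}\sum_{t\in\text{cell}}(h(x^t)-y^t)^2$ is incomparable to Bob's benchmark $\sum_{v'}\min_h\sum_{t:\hat y^{t,k-1}=v'}(h(x^t)-y^t)^2$, and your ``closeness on good days'' argument only controls the first (squared-error) term via Lipschitzness --- it does nothing for the benchmark term. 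Even on good days $|\hat y^{t,k}-\hat y^{t,k-1}|<\epsilon$ does not force the $g_B$-buckets to coincide near bucket boundaries. Your remark about Bob's bucketing using round-$(k{-}2)$ predictions identifies a genuine mismatch, but closeness between rounds $k$ and $k{-}1$ cannot repair it.

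The paper's construction sidesteps all of this with a simpler witness. Relabel so that the good-agreement round is $k{+}1$ and analyze the \emph{preceding} round $k$ (say Bob speaks at $k$, Alice at $k{+}1$); set $\bar y^t = \mathrm{round}(\hat y^{t,k};\,g_A)$, i.e.\ round the round-$k$ predictions to the \emph{other party's} grid only. Then the level sets of $\bar y$ are exactly the $g_A$-buckets of $\hat y^{t,k}$, and both directions are clean: for $\cH_B$ these level sets are a \emph{coarsening} of Bob's own level sets, so his marginal swap regret from Step~2 transfers directly (plus one Lipschitz correction); for $\cH_A$ these level sets are \emph{exactly} the conditioning buckets for Alice's conversation swap regret at round $k{+}1$, so her per-bucket bound applies to each level set verbatim, with closeness (Lemma~\ref{lem:diffbound}) used only to swap $\sum_t(\bar y^t-y^t)^2$ for $\sum_t(\hat y^{t,k+1}-y^t)^2$. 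No joint cells, no cross-cutting, and $\|\bar y-\hat y^{1:T,k}\|_1\le T\max(g_A,g_B)/2\le q$.
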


We have thus established that there must be a sequence of predictions corresponding to \emph{some} round in the collaboration protocol which we can apply our boosting theorem to. However, this will not necessarily be the final round, and so 
the accuracy guarantees that we get from our boosting theorem will not necessarily apply to the final sequence of predictions. We show in the following theorem (Theorem \ref{thm:last-round}) that, while the final sequence of predictions do not necessarily have swap regret guarantees with respect to $\cH_A$ and $\cH_B$, it nevertheless has external regret guarantees with respect to $\cH_J$, the joint function class.

\begin{theorem}\label{thm:last-round} 
Let $\cH_J$ be a hypothesis class over the joint feature space $\cX$. Let $\cH_A = \{h_A:\cX_1\to\cY\}$ and $\cH_B = \{h_B:\cX_2\to\cY\}$ be hypothesis classes over $\cX_A$ and $\cX_B$ respectively. Consider some transcript $\pi^{1:T, 1:K}$ generated via the Collaboration Protocol (Protocol \ref{alg:general-agreement}) between Alice and Bob over $K$ rounds. If:
     \begin{itemize}
        \item Alice has $(f_{A}, g_{A}, \cH_A)$-conversation swap regret
        \item Bob has $(f_{B}, g_{B}, \cH_B)$-conversation swap regret
        \item $\cH_A$ and $\cH_B$ jointly satisfy the $w(\cdot)$-weak learning condition with respect to $\cH_J$ 
    \end{itemize}
    The transcript $\pi^{1:T,K}$ on the last round $K$ satisfies:
    \[
    \sum_{t=1}^T (\hat{y}^{t,k}-y^t)^2 - \min_{h_J\in\cH_J} \sum_{t=1}^T (h_J(x^t) - y^t)^2 \leq \\  \]
     \[
    2Tw\inv\left(8\left(\frac{\beta(T,f'_{A},f'_{B}) + 1/K}{2}\right)^{\frac{1}{3}} + \frac{11}{2}\beta(T,f_{A},f_{B})\right) + \frac{3}{2}T(g_{A}(T) + g_{B}(T)) + 3K\beta(T,f'_{A},f'_{B})
    \]

\end{theorem}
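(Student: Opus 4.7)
The proof will combine three ingredients. First, Theorem~\ref{thm:main} guarantees an intermediate round $k^\star \le K$ at which the single sequence $\pi^{1:T, k^\star}$ enjoys $(q, f, \cH_A \cup \cH_B)$-distance to swap regret with $q(T) = \tfrac{T}{2}(g_A(T) + g_B(T))$ and $f(T) = 8T\bigl((\beta(T, f'_A, f'_B) + 1/K)/2\bigr)^{1/3} + \tfrac{11}{2} T \beta(T, f_A, f_B)$. Second, Theorem~\ref{thm:weak-learning} (the boosting theorem) is applied to this round, using the hypothesis that $\cH_A$ and $\cH_B$ jointly satisfy the $w(\cdot)$-weak learning condition with respect to $\cH_J$. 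This immediately yields
\[
\sum_{t=1}^T (\hat y^{t, k^\star} - y^t)^2 - \min_{h_J \in \cH_J} \sum_{t=1}^T (h_J(x^t) - y^t)^2 \;\le\; 2T\, w^{-1}\!\left(\tfrac{f(T)}{T}\right) + 3 q(T),
\]
which already accounts for the first two summands in the target bound, namely $2Tw^{-1}(\cdot)$ and $\tfrac{3}{2}T(g_A(T)+g_B(T))$.

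What remains is to transport the guarantee from round $k^\star$ to the final round $K$. For this, I would establish a near-monotonicity lemma: for every $k \in \{k^\star+1, \ldots, K\}$,
\[
\sum_{t=1}^T (\hat y^{t,k} - y^t)^2 \;\le\; \sum_{t=1}^T (\hat y^{t,k-1} - y^t)^2 + \Delta_k,
\]
for some slack $\Delta_k$ that will sum to at most $3K\beta(T, f'_A, f'_B)$. On a Bob-round $k$, the conversation-swap-regret condition applied within each bucket $T_A(k-1, i)$ bounds $\sum_{t \in T_A(k-1,i)} (\hat y^{t,k} - y^t)^2$ by the minimum over $h \in \cH_B$ of the corresponding squared error on each level set of $\hat y^{\cdot,k}$, plus $f_B(|T_A(k-1,i)|)$. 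Choosing the constant function equal to the bucket center (which lies in $\cH_B$ by Assumption~\ref{assumption:constant-functions}) yields an expression that is within $O(g_A(T))$ of $\sum_{t \in T_A(k-1,i)} (\hat y^{t,k-1} - y^t)^2$ because predictions and labels lie in $[0,1]$ and the bucket has width $g_A(T)$. A symmetric argument handles Alice-rounds. Summing over buckets, using concavity of $f_A, f_B$ with Jensen's inequality, and summing over the $K-k^\star \le K$ intermediate rounds yields the stated additive slack.

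\textbf{Main obstacle.} The delicate point is obtaining the tight slack $3K\beta(T, f'_A, f'_B)$ with the \emph{square-rooted} error functions $f'_A, f'_B$ rather than $f_A, f_B$ themselves. A naive per-round comparison as above gives slack on the order of $T\beta(T, f_A, f_B)$ per round, which is worse by a factor of $T$. To recover the sharper bound, one needs to exploit that the days on which the squared error actually increases are exactly the ones where $\hat y^{t,k}$ differs substantially from $\hat y^{t,k-1}$, and Theorem~\ref{thm:rounds} controls the fraction of such days. Then bounding the per-day deficit in absolute value (which is $O(1)$) and applying Cauchy--Schwarz against the measure of disagreeing days converts the per-round swap-regret term $f$ into its square-root version $f'(x) = \sqrt{x \cdot f(x)}$, exactly as encoded in $\beta(T, f'_A, f'_B)$. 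Combining the contributions from Alice's and Bob's rounds over $K$ rounds gives the $3K\beta(T, f'_A, f'_B)$ term, and summing with the boosting bound from Step 2 proves the theorem.
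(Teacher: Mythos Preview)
Your first two steps are correct and match the paper exactly: invoke Theorem~\ref{thm:main} to locate a round $k^\star$ with $(q,f,\cH_A\cup\cH_B)$-distance to swap regret, then apply the boosting Theorem~\ref{thm:weak-learning} at that round.

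The difficulty you identify in Step~3 is a phantom. The term ``$3K\beta(T,f'_A,f'_B)$'' in the statement is a typo for $3TK\beta(T,f'_A,f'_B)$: the paper's own proof derives the latter, and the downstream Theorem~\ref{thm:sublinear-to-sublinear} also uses $3TK\beta$. With the correct target, your ``naive'' per-round comparison already suffices (indeed it yields a bound with $f_A,f_B$ rather than $f'_A,f'_B$, which is at least as good whenever $f(x)\le x$).

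The paper's actual route for Step~3 is different and simpler than either of your proposals: conversation swap regret implies $(f',g)$-conversation \emph{calibration} via Theorem~\ref{thm:CSR2CC} (this is where the square-root $f'(x)=\sqrt{x\,f(x)}$ arises, from a Cauchy--Schwarz step converting the $\ell_2$-type swap bound into an $\ell_1$ calibration-distance bound). One then applies Theorem~\ref{thm:cases} directly to conversation-calibrated agents, which bounds the SQE increase over any two consecutive rounds by at most $3T\beta(T,f'_A,f'_B)$ (dropping the nonpositive disagreement term). Summing over at most $K$ rounds yields $3TK\beta(T,f'_A,f'_B)$.

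Your ``sophisticated'' argument is unnecessary and, as sketched, has a gap: Theorem~\ref{thm:rounds} is an existence statement for a \emph{single} round with few disagreements; it does not bound the number of disagreements at each round between $k^\star$ and $K$, so it cannot be invoked round-by-round as you suggest. Moreover, the appearance of $f'$ has nothing to do with disagreeing days---it comes entirely from the CSR$\to$CC conversion.
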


To prove this theorem, we apply our boosting theorem (Theorem~\ref{thm:weak-learning}) to the round $k$ identified in Theorem \ref{thm:main}, which establishes an external regret guarantee with respect to $\cH_J$ for the predictions made at round $k$. We then show that the swap regret conditions we assume of Alice and Bob also imply that the squared error cannot substantially increase at any subsequent round, which allows is to conclude that the error of our predictions at the final round $K$ is not much larger than it is at the round $k$ at which our boosting theorem applied. External regret (unlike swap regret) is monotone in the squared error of our predictions, which thus allows us to conclude that our final predictions satisfy the claimed external regret bound with respect to $\cH_J$.

%This proof follows from the fact that the squared error of the predictions sequence at the final round of our protocol cannot be too much higher than it was at any previous round--including the round that we applied our boosting theorem to. As external regret has a monotone relationship with squared error, the final round of predictions also has low external regret with respect to $\cH_{J}$. 

\subsection{Reducing Conversation Swap Regret to External Regret}
\label{sec:algorithmic}

We have now established that two agents, engaging in our collaboration protocol, will arrive at predictions that have no external regret to $\cH_J$ if their predictions have no conversation swap regret with respect to classes $\cH_A$ and $\cH_B$ respectively. We now turn to reducing the algorithmic problem of engaging in our collaboration protocol with conversation swap regret guarantees with respect to a hypothesis class $\cH$ to the well studied problem of making predictions in an adversarial environment that simply have no \emph{external} regret with respect to $\cH$. \citet{garg2024oracle} give a generic reduction that efficiently transforms an algorithm guaranteeing no external regret with respect to $\cH$ into an algorithm that guarantees no \textit{swap} regret with respect to $\cH$. We in turn show how to transform any algorithm guaranteeing no swap regret with respect to $\cH$ into one that can engage in a collaboration protocol and guarantee no  \textit{conversation} swap regret with respect to $\cH$. \citet{collina2025tractable} use a similar reduction from conversation calibration to calibration. Together, this gives an efficient reduction from the problem of interacting with collaboration protocol \ref{alg:general-agreement} with no conversation swap regret guarantees (what is needed to invoke Theorem \ref{thm:last-round}) to the problem of making no (external) regret predictions. As we will see, whenever we start with an algorithm that guarantees sublinear external regret rates, we obtain an algorithm that guarantees sublinear conversation swap regret rates.

We begin by quoting the result of \citet{garg2024oracle}.

\begin{theorem}[Theorem 3.1 of~\cite{garg2024oracle}]\label{thm:external-to-swap}
Fix a hypothesis class $\cH$. If:
\begin{itemize}
    \item All $h \in \cH$ satisfy $h(x)^2\leq B$ for all $x\in\cX$
    \item $\cH$ has finite sequential fat-shattering dimension (Definition \ref{def:sequential-fat})
    \item There exists an efficient online algorithm producing predictions $\hat{y}^1,...,\hat{y}^T$ that achieve, for any sequence of outcomes $y^1,...,y^T$, external regret with respect to $\cH$ bounded by $r(T)$, i.e.:
    \[
    \sum_{t=1}^T (\hat{y}^t-y^t)^2 - \min_{h\in\cH} \sum_{t=1}^T (h(x^t)-y^t)^2 \leq r(T)
    \]
    where $r(T)$ is a concave function. 
\end{itemize}
 Then, for any $m > 0$, there exists an efficient online algorithm which, with probability $1 - \rho$, guarantees $(f,\cH)$-swap regret, where 
 \[
 f(T) \leq m \cdot r\left(\frac{T}{m}\right)+ \frac{3T}{m} + m + \max(8B,2\sqrt{B}) \cdot m \cdot C_{\cH} \cdot \sqrt{T\log\left(\frac{4m}{\rho}\right)}
 \]
 Here, $C_{\cH}$ is a constant that depends on the sequential fat-shattering dimension of $\cH$.
\end{theorem}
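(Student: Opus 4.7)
\medskip
\noindent\textbf{Proof proposal.} The plan is to follow the template of the classical Blum--Mansour reduction from external to swap regret, adapted to the real-valued squared-loss setting by introducing a discretization of the prediction interval, and then paying for discretization twice: once in the loss and once in the high-probability regret concentration via sequential fat-shattering. Concretely, I would partition $[-\sqrt{B},\sqrt{B}]$ (the range of predictions compatible with the bound $h(x)^2 \le B$) into $m$ buckets $I_1,\dots,I_m$ of width $O(\sqrt{B}/m)$ with representative points $v_1,\dots,v_m$. For each bucket $i$ I would maintain a private copy $\cA_i$ of the external-regret learner guaranteed by hypothesis, whose role is to be the benchmark ``on days when the master prediction lands in bucket $i$.''

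Next I would describe the master algorithm. On each round $t$ every $\cA_i$ produces a candidate prediction $\hat y_i^t$; from the map $i \mapsto \mathrm{bucket}(\hat y_i^t)$, viewed as a (possibly stochastic) transition matrix over the finite state space $\{1,\dots,m\}$, I would extract a stationary distribution $p^t$ (existence by Brouwer/Markov-chain standard arguments), sample $I^t \sim p^t$ and output $\hat y^t = \hat y_{I^t}^t$ (which, by stationarity, lies in bucket $I^t$ up to the discretization scale with high probability after concentration). Only $\cA_{I^t}$ is then updated with $(x^t, y^t)$. This routing is exactly what is needed to turn the per-copy external-regret guarantees into a pooled swap-regret guarantee.

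The accounting then proceeds as follows. For each bucket $i$, let $T_i$ be the (random) number of rounds on which $I^t = i$. The external regret of $\cA_i$ on its subsequence is at most $r(T_i)$, so summing and applying concavity of $r$ together with Jensen gives $\sum_i r(T_i) \le m\cdot r(T/m)$, which yields the leading $m\,r(T/m)$ term. The $3T/m$ term arises from two sources of discretization slack: rounding the predictions inside each bucket contributes $O(T/m)$ because squared loss is $O(1)$-Lipschitz on a bounded range, and any residual mismatch between the sampled bucket and the actual bucket of $\hat y_i^t$ contributes a second $O(T/m)$ term. To turn the in-expectation bound into a high-probability bound against an adaptive adversary, I would use a uniform concentration inequality for the excess squared loss of functions in $\cH$ along adaptively chosen sequences, controlled by the sequential fat-shattering dimension via a sequential Rademacher/Dudley chaining bound; this is where the factor $C_\cH \sqrt{T \log(m/\rho)}$ (multiplied by the Lipschitz factor $\max(8B, 2\sqrt B)$ from the squared loss) enters, with a union bound over the $m$ buckets producing the $\log m$ factor.

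The main obstacle, as in the original Blum--Mansour argument generalized to a continuous action space, is the \emph{self-consistency} of the bucket routing: one must guarantee that, on each round, there is a way to assign the master prediction to a bucket in a way that is compatible with which $\cA_i$ is being held accountable. In the finite-action case this is solved cleanly by a stationary distribution of the row-stochastic matrix whose $(i,j)$ entry is $\1[\hat y_i^t \in I_j]$; in our continuous setting the technical subtlety is that this matrix may have a stationary distribution placing mass on bucket $i$ even though $\hat y_i^t$ straddles the boundary between two buckets, which is precisely what forces the additive $O(T/m)$ rounding loss and also drives the need to carry the fat-shattering concentration through a \emph{sequential}, rather than i.i.d., argument. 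The remaining work is purely bookkeeping: choose $m$ and verify that the stated closed form is the sum of the three contributions above plus the additive $+m$ for initialization of the $m$ learners.
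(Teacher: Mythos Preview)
The paper does not contain a proof of this statement: it is quoted verbatim as Theorem~3.1 of \cite{garg2024oracle} and used as a black box in the reduction of Algorithm~\ref{alg:csr-algorithm} and Theorem~\ref{thm:alg-conversation-sr}. There is therefore nothing in the present paper to compare your proposal against.

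That said, your sketch is in the right spirit for the underlying result: Garg et al.\ do build on a Blum--Mansour-style reduction, discretizing the prediction range into $m$ levels and running a copy of the external-regret learner per level, with the $m\cdot r(T/m)$ term arising from concavity/Jensen across the $m$ subsequences, the $O(T/m)$ term from discretization of the predictions, and the $\sqrt{T\log(m/\rho)}$ term from high-probability concentration controlled by the sequential fat-shattering dimension. One detail worth flagging: in the standard Blum--Mansour template each copy $\cA_i$ is fed a \emph{reweighted} loss $p_i^t\cdot\ell_t$ on every round, not just on the rounds where bucket $i$ is sampled; your ``update only $\cA_{I^t}$'' variant is workable but shifts additional variance into the concentration step, and you would need to be explicit that the martingale argument handles this. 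Also, the self-consistency issue you describe (master prediction landing in the bucket that routed to it) is handled in the discrete setting by the fixed point of the stochastic matrix directly, so the ``straddling boundaries'' concern is essentially absorbed into the discretization error rather than being a separate obstacle. If you want to actually write this proof out, you should consult \cite{garg2024oracle} rather than the present paper.
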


\begin{algorithm}[ht]
\begin{algorithmic}
    \STATE {\bf Input} External regret algorithm $M_0$, hypothesis class $\cH$, bucketing function $g$
    \vspace{0.5em}

    Let $M$ be the swap regret algorithm given by Theorem \ref{thm:external-to-swap}, when initiated with $M_0$.
    
    For every odd $k\in\{3,...,K\}$ and bucket $i\in\{1,...,1/g(T)\}$, instantiate a copy of $M$, called $M_{k,i}$. For the first round $k=1$, instantiate a copy of $M$, called $M_1$.

    Let $\pi^{1:t,k|i}$ denote the transcript on round $k$ up until day $t$, restricted to $\{t:\hat{y}^{t,k-1}\in[(i-1)g(T), ig(T))\}$, the subsequence where the previously communicated predictions falls into bucket $i$.

    Let $M(\pi^{1:t,k|i}, \cH)$ denote the output of $M$ given this transcript and hypothesis class $\cH$. 
    
    \FOR{each day $t = 1, \ldots, T$}
        \STATE Receive $x^t_A$
        \STATE Make prediction $\hat{y}^{t,1}_A = M_{1}(\pi^{1:t-1,1}, \cH)$
        \STATE Send to Bob $\hat{y}^{t,1}_A$
        \FOR{each odd round $k = 3, 5, \ldots,K$}
            \STATE Observe Bob's prediction from the previous round $\hat{y}^{t,k-1}_B$ and let $i$ be an integer such that $\hat{y}^{t,k-1}_B \in [(i-1)g(T), ig(T))$.
            \STATE Make prediction $\hat{y}^{t,k}_A = M_{k, i}(\pi^{1:t-1,k|i}, \cH)$
            \STATE Send to Bob $\hat{y}^{t,k}_A$
        \ENDFOR
        \STATE{Observe $y^t \in \cY$.}
    \ENDFOR
\end{algorithmic}
\caption{A reduction from a conversation swap regret algorithm to an external regret algorithm}  \label{alg:csr-algorithm}
\end{algorithm}

We formalize our reduction from conversation swap regret to external regret in Algorithm~\ref{alg:csr-algorithm} and prove its correctness in Theorem~\ref{thm:alg-conversation-sr}. We state the algorithm from the perspective of Alice; Bob's is symmetric. 
\begin{theorem}\label{thm:alg-conversation-sr}
    Fix a hypothesis class $\cH$. If:
    \begin{itemize}
        \item All $h \in \cH$ satisfy $h(x)^2\leq B$ for all $x\in\cX$
        \item $\cH$ has finite sequential fat-shattering dimension
        \item There exists an efficient online algorithm guaranteeing external regret with respect to $\cH$ bounded by $r(T)$
        where $r(T)$ is a concave function. 
    \end{itemize}
    Then, for any $m>0$ and bucketing function $g$, Algorithm \ref{alg:csr-algorithm} guarantees, with probability $1-\rho$ (over the internal randomness of the algorithm), $(f,g,\cH)$-conversation swap regret for:
    \[
    f(|T(k-1,i)|) \leq m \cdot r\left(\frac{|T(k-1,i)|}{m}\right)+ \frac{3|T(k-1,i)|}{m} + m + \max(8B,2\sqrt{B}) \cdot m \cdot C_{\cH} \cdot \sqrt{|T(k-1,i)| \log\left(\frac{2mK}{g(T)\rho}\right)}
    \]
    where $T(k-1,i)$ is the subsequence of days where the predictions of round $k-1$ fall into bucket $i$ and $C_\cH$ is a constant that depends on the sequential fat-shattering dimension of $\cH$. 
\end{theorem}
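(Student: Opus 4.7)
\begin{proofof}{Theorem \ref{thm:alg-conversation-sr} (Proposal)}

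The plan is a straightforward reduction. For each odd round $k \geq 3$ and each bucket index $i \in \{1, \ldots, 1/g(T)\}$, Algorithm \ref{alg:csr-algorithm} maintains an independent copy $M_{k,i}$ of the swap regret algorithm $M$ from Theorem \ref{thm:external-to-swap}. The algorithm feeds $M_{k,i}$ only the subsequence of days $T(k-1,i) = \{t : \hat y^{t,k-1}_B \in [(i-1)g(T), i g(T))\}$, i.e., those days on which Bob's round $(k-1)$ prediction falls in bucket $i$. By definition, Definition \ref{def:CSR} asks for a swap regret guarantee with respect to $\cH$ exactly on each such subsequence $T(k-1,i)$, so if we can show each $M_{k,i}$ achieves the claimed per-subsequence swap regret bound, we are done.

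To that end, first I would invoke Theorem \ref{thm:external-to-swap} on each copy $M_{k,i}$, run with failure parameter $\rho' := \rho g(T) / (2K)$. Since the sequence that $M_{k,i}$ sees is a subsequence of length at most $|T(k-1,i)| \leq T$ of the original stream, and since the underlying external regret algorithm's guarantee holds against adaptive adversaries, Theorem \ref{thm:external-to-swap} instantiated on this subsequence guarantees, with probability $1-\rho'$, that
\[
\sum_{t \in T(k-1,i)} (\hat y^{t,k} - y^t)^2 - \sum_v \min_{h \in \cH}\sum_{t \in T(k-1,i)} \1[\hat y^{t,k}=v](h(x^t)-y^t)^2 \leq f(|T(k-1,i)|),
\]
with $f$ being exactly the bound stated in the theorem, using $|T(k-1,i)|$ in place of $T$ and $\rho'$ in place of $\rho$. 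A direct substitution of $\rho' = \rho g(T)/(2K)$ into Theorem \ref{thm:external-to-swap}'s bound yields the $\log(2mK/(g(T)\rho))$ factor that appears in our target bound.

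Next I would union bound over all copies. Alice instantiates one copy per (odd round, bucket) pair together with the single initial copy $M_1$; the total number of copies is at most $(K/2) \cdot (1/g(T)) + 1 \leq K/g(T)$ (for $K \geq 2$). By a union bound, with probability at least $1 - (K/g(T)) \cdot \rho' \geq 1 - \rho$, every copy $M_{k,i}$ simultaneously achieves its per-subsequence swap regret bound. For $k = 1$ the condition is trivially the unconditional swap regret guarantee on the entire sequence (bucket $i$ is undefined, and the definition of conversation swap regret only constrains rounds $k \geq 2$ / $k \geq 3$ for Alice). On this $1-\rho$ event, for every round $k$ and every bucket $i$, the predictions $\{\hat y^{t,k}\}_{t \in T(k-1,i)}$ are exactly those output by $M_{k,i}$ on its subsequence, and hence satisfy the required swap regret inequality with the stated $f$. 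This is precisely the $(f, g, \cH)$-conversation swap regret condition of Definition \ref{def:CSR}.

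The only mild subtlety (not really an obstacle) is confirming that the external/swap regret guarantees of $M$ still hold when its inputs form an adaptively chosen subsequence: the selection of which days land in subsequence $T(k-1,i)$ depends on Bob's predictions, which in turn may depend on Alice's earlier outputs. This is handled by the fact that the algorithms underlying Theorem \ref{thm:external-to-swap} are valid against adaptive adversaries, so filtering to a subsequence defined by past history preserves the regret guarantee; this is the same argument used in the calibration-to-conversation-calibration reduction of \cite{collina2025tractable}. No new ideas beyond Theorem \ref{thm:external-to-swap} and a union bound are required.
\end{proofof}
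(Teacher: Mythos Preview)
Your proposal is correct and follows essentially the same approach as the paper: instantiate one swap-regret copy $M_{k,i}$ per (round, bucket) pair, apply Theorem~\ref{thm:external-to-swap} to each with failure probability $\rho'$, and union bound over the $\tfrac{K}{2}\cdot\tfrac{1}{g(T)}$ copies. The only discrepancy is a factor-of-four slip in your choice of $\rho'$ (the paper uses $\rho' = \tfrac{2g(T)\rho}{K}$, which after substitution into $\log(4m/\rho')$ yields exactly $\log\bigl(\tfrac{2mK}{g(T)\rho}\bigr)$), but this affects only an additive constant inside the log and is immaterial.
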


\subsection{End-to-End Results}
Now we are able to state our end-to-end reduction which starts with algorithms with external regret guarantees to $\cH_{A}$ and $\cH_{B}$ respectively and instantiates a collaboration protocol  with external regret guarantees to $\cH_{J}$. In Theorem~\ref{thm:sublinear-to-sublinear}, we show that as long as the external regret bounds we start with  are sublinear in $T$ and the number of rounds $K$ that parameterize the collaboration protocol grows sublinearly with $T$ (but is not constant), we obtain sublinear regret guarantees with respect to $\cH_{J}$. 

\begin{theorem}\label{thm:sublinear-to-sublinear}
    Fix any tuple of hypothesis classes $\cH_A,\cH_B,$ and $\cH_J$. If:
    \begin{itemize}
        \item All $h\in\cH_A$ and $h\in\cH_B$ satisfy $h(x)^2\leq B$ for some constant $B$, for all $x\in\cX$.
        \item $\cH_A$ and $\cH_B$ have finite sequential fat-shattering dimension
        \item There exists an efficient online algorithm guaranteeing external regret with respect to $\cH_A$ bounded by $r_A(T)$, and there exists an efficient online algorithm achieving external regret with respect to $\cH_B$ bounded by $r_B(T)$, where $r_A(T)\leq \tilde{O}(T^{\alpha_A})$ and $r_B(T)\leq \tilde{O}(T^{\alpha_B})$, $\alpha_1,\alpha_2\in(0,1)$, are sublinear in $T$
        \item $\cH_A$ and $\cH_B$ jointly satisfy the $w(\cdot)$-weak learning condition with respect to $\cH_J$
    \end{itemize}
    Then, there is an efficient online algorithm such that if Alice and Bob both use the algorithm to interact in the Collaboration Protocol (Protocol \ref{alg:general-agreement}), then the transcript $\pi^{1:T,K}$ at the last round $K$ satisfies, with probability $1-\rho$:
    \begin{align*}
        &\sum_{t=1}^T (\hat{y}^{t,K}-y^t)^2 - \min_{h_J\in\cH_J} \sum_{t=1}^T (h_J(x^t) - y^t)^2 \\ &\leq 2Tw\inv\left(\tilde{O}\left( T^{-\alpha'} \sqrt{\log \left( \frac{K}{\rho} \right)} +\frac{1}{K^{1/3}} \right) \right) + \tilde{O}\left( KT^{1-\alpha''} \log ^{1/4}\left( \frac{K}{\rho} \right)\right) + O(T^{\alpha})
    \end{align*}
    %\tilde{O}\left(T^{\alpha'} \sqrt{\log \left( \frac{K}{\rho} \right)} + \frac{K}{T^{\alpha''}}\log ^{1/4}\left( \frac{K}{\rho} \right) + \frac{T}{K^{1/3}} \right)
    %\tilde{O}\left( T^{\alpha} \sqrt{\log \left( \frac{1}{\rho} \right)} \right) 
    for some constants $\alpha, \alpha', \alpha'' \in(0,1)$.
    %and $T$ sufficiently large (larger than a constant that depends on $w, \alpha_A, \alpha_B$, and $\rho$). That is, the transcript at the last round has sublinear external regret with respect to $\cH_J$. 
    
    Moreover, if $K=\omega(1)$ and $K = o(T^{\alpha''})$, then the transcript $\pi^{1:T,K}$ satisfies, with probability $1-\rho$:
    \begin{align*}
        \sum_{t=1}^T (\hat{y}^{t,K}-y^t)^2 - \min_{h_J\in\cH_J} \sum_{t=1}^T (h_J(x^t) - y^t)^2 \leq \tilde{O}\left( T^{\alpha'''} \log ^{1/4}\left( \frac{1}{\rho} \right)\right) + o(T)
    \end{align*}
    for some constant $\alpha'''\in(0,1)$ and $T$ sufficiently large (larger than a constant that depends on $w, \alpha_A, \alpha_B$, and $\rho$). Here, $o(T)$ is a sublinear term that depends on $w, K, \alpha_A,$ and $\alpha_B$.
    That is, the transcript on the last round achieves sublinear regret.
  \end{theorem}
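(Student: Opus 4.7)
The plan is to compose the three ingredients already assembled in Sections~\ref{sec:online-alg} and \ref{sec:algorithmic}. Starting from the assumed external-regret algorithms for $\cH_A$ and $\cH_B$ with rates $r_A(T) \leq \tilde{O}(T^{\alpha_A})$ and $r_B(T) \leq \tilde{O}(T^{\alpha_B})$, I would first invoke Theorem~\ref{thm:external-to-swap} with the aggregation parameter tuned to balance the two dominant terms (e.g.\ $m_A = \Theta(T^{(1-\alpha_A)/(2-\alpha_A)})$), converting each into a swap-regret algorithm with rate $f_A(T) \leq \tilde{O}(T^{\alpha_A'})$ for some $\alpha_A' \in (\alpha_A, 1)$ (and analogously for Bob). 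I would then plug these swap-regret algorithms into Algorithm~\ref{alg:csr-algorithm} with bucketing functions $g_A(T) = T^{-\gamma_A}$ and $g_B(T) = T^{-\gamma_B}$; by Theorem~\ref{thm:alg-conversation-sr}, the resulting composite algorithm guarantees $(f_A,g_A,\cH_A)$- (resp.\ $(f_B,g_B,\cH_B)$-) conversation swap regret per bucket with rate $\tilde{O}(|T(k-1,i)|^{\alpha_A'})$. Alice and Bob each run their own copy of this composite algorithm inside the Collaboration Protocol for $K$ rounds. The weak-learning hypothesis on $\cH_A,\cH_B,\cH_J$ then supplies the last premise of Theorem~\ref{thm:last-round}.

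Next, I would apply Theorem~\ref{thm:last-round} directly to the transcript. Its bound is controlled by the quantity $\beta(T,f_A,f_B) = \frac{f_A(g_A(T)T)}{Tg_A(T)} + \frac{f_B(g_B(T)T)}{Tg_B(T)} + g_A(T)+g_B(T)$ and its analogue $\beta(T,f_A',f_B')$ with $f'(x)=\sqrt{xf(x)}$. Substituting $f_A(x)=\tilde{O}(x^{\alpha_A'})$ and $g_A(T)=T^{-\gamma_A}$ gives $\frac{f_A(g_A(T)T)}{Tg_A(T)} = \tilde{O}\bigl(T^{(1-\alpha_A')(\gamma_A-1)}\bigr)$; balancing against the $g_A(T)$ term yields the optimal $\gamma_A^* = \frac{1-\alpha_A'}{2-\alpha_A'} \in (0,1)$ and $\beta(T,f_A,f_B) = \tilde{O}(T^{-\eta})$ for some $\eta>0$. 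The analogous computation for $f'$ yields $\beta(T,f_A',f_B') = \tilde{O}(T^{-\eta'})$ for another strictly positive $\eta'$. The high-probability overhead contributes only a $\sqrt{\log(K/\rho)}$ factor per bucket, which stays polylogarithmic.

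Plugging these estimates into the three summands of Theorem~\ref{thm:last-round} gives the first displayed bound of the theorem: (i) $2Tw^{-1}\!\left(8((\beta(T,f_A',f_B')+1/K)/2)^{1/3} + \tfrac{11}{2}\beta(T,f_A,f_B)\right) \leq 2Tw^{-1}\!\left(\tilde{O}(T^{-\alpha'}\sqrt{\log(K/\rho)} + K^{-1/3})\right)$; (ii) $\tfrac{3}{2}T(g_A(T)+g_B(T)) = \tilde{O}(T^{1-\min\gamma^*})$; (iii) $3K\beta(T,f_A',f_B') = \tilde{O}(KT^{1-\alpha''}\log^{1/4}(K/\rho))$ once the $\sqrt{\cdot}$ inside $f'$ is folded in. For the asymptotic claim, note that when $K=\omega(1)$ the inner argument of $w^{-1}$ tends to $0$; since $w$ is strictly increasing, continuous, and $w(0)=0$, $w^{-1}$ is continuous at $0$ with $w^{-1}(0)=0$, so the first summand is $o(T)$ for $T$ sufficiently large (depending on how fast $w$ vanishes near $0$, which is why the threshold depends on $w,\alpha_A,\alpha_B,\rho$). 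Summand (ii) is $o(T)$ unconditionally because $\gamma_A^*,\gamma_B^*>0$, and summand (iii) is $o(T)$ whenever $K = o(T^{\alpha''})$, matching the hypothesis.

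The main obstacle will be the bookkeeping: carefully carrying the exponents $\alpha_A,\alpha_B \to \alpha_A',\alpha_B' \to \eta,\eta'$ through the chain, and choosing $m_A,m_B,\gamma_A,\gamma_B$ so that every summand in Theorem~\ref{thm:last-round}'s bound is simultaneously sublinear. A secondary subtlety is that Theorem~\ref{thm:alg-conversation-sr}'s bound is stated per-bucket as a function of the random bucket size $|T(k-1,i)|$; the assumed concavity of $f$ together with $\sum_i |T(k-1,i)| = T$ and Jensen's inequality lets us pull the rate out to an aggregate polynomial in $T$, at the cost of the $1/g(T)$ factor that is already absorbed into our choice of $\gamma$. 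Finally, because $w$ is given only qualitatively (strictly increasing and continuous), no quantitative rate for $w^{-1}$ near $0$ is available, which is precisely why the asymptotic statement is phrased non-quantitatively in terms of sublinear $o(T)$ rather than a closed-form polynomial.
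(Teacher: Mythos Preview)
Your proposal is correct and follows essentially the same route as the paper: instantiate Algorithm~\ref{alg:csr-algorithm} (which already wraps the external-to-swap reduction of Theorem~\ref{thm:external-to-swap} internally, so you need not invoke it separately) with polynomial choices of $m$ and $g$, apply Theorem~\ref{thm:last-round}, and then bound each summand by substituting the resulting $f_A,f_B,f_A',f_B'$ into $\beta(\cdot)$; the asymptotic part follows from continuity of $w^{-1}$ at $0$, which is exactly the paper's Lemma~\ref{lem:limit}. The only notable differences are cosmetic: the paper fixes $m=T^{1/4}$ and bounds $f_A(\tau)\le \tilde O(T^{\alpha_1})$ uniformly in $\tau$ rather than tracking the dependence on the bucket size, and it imposes $\alpha_g<\min\{1/2-\alpha_1/2,1/2-\alpha_2/2\}$ rather than solving for the balancing $\gamma^*$ as you do---your optimization yields slightly better constants but the same qualitative conclusion.
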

\begin{remark} 
    Observe that Theorem \ref{thm:sublinear-to-sublinear} allows us to trade off $K$, the parameter controlling the length of the conversation at each day in our collaboration protocol, with the final regret bound. Increasing $K$ can improve the regret bound, at the cost of increasing the amount of daily communication and computation. There is a range of choices of $K$, growing with $T$, that guarantee regret that grows only sublinearly with $T$. The algorithm itself is an efficient reduction to the external regret algorithms for $\cH_A$ and $\cH_B$ that we start with.
    
  %  our regret bound presents a tradeoff in the choice of $K$, the number of total rounds Alice and Bob converse for. Decreasing $K$ decreases the amount of communication needed, but worsens the rate at which regret diminishes. The second part of the theorem states that for choices of $K$ that are not too small (or too large), the predictions on the last round have loss at most the loss of the best hypothesis in $\cH_J$ up to a sublinear regret term. In this case, the final transcript is of size $K\cdot T = O(T)$. Therefore, if the swap regret algorithm of Theorem \ref{thm:external-to-swap} is efficient, i.e. has runtime poly$(T)$, then the entire protocol has runtime that is poly$(T)$. \mirah{do we want to say more about runtime?}
\end{remark}

Finally, we derive concrete regret bounds when $\cH_A,\cH_B,$ and $\cH$ are norm-bounded linear functions over the domains $\cX_A,\cX_B\subseteq\mathbb{R}^d$ and $\cX_J \subseteq \mathbb{R}^{2d}$ respectively (recall that these classes satisfy the weak learning condition). First, for linear functions there indeed exists an efficient algorithm due to \cite{azoury1999relative} that achieves diminishing external regret --- and thus conversation swap regret --- and so we can apply our reductions to get worst-case polynomial-time algorithms to interact in our collaboration protocol.

\begin{theorem}\citep{azoury1999relative}\label{thm:linear-external-regret}
    There exists an efficient online algorithm producing predictions such that for $x^t\in\mathbb{R}^d, \|x^t\|_2\leq 1$ and for all parameter vectors $\theta\in\mathbb{R}^d$:
    \[
    \sum_{t=1}^T (\hat{y}^t - y^t)^2 - \sum_{t=1}^T (\langle\theta, x^t\rangle, y^t)^2 \leq 2d\ln(T+1) + \|\theta\|^2
    \]
\end{theorem}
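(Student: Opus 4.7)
My plan is to invoke the Vovk--Azoury--Warmuth (VAW) ``forward'' forecaster for online linear regression with squared loss and then carry out the standard log-determinant potential analysis. Concretely, I would define the regularized design matrices $A_t = I_d + \sum_{s=1}^t x^s (x^s)^\top$ and the running label-feature sums $b_{t-1} = \sum_{s=1}^{t-1} y^s x^s$, and at each round $t$ (after observing $x^t$ but before seeing $y^t$) predict $\hat{y}^t = \langle \theta_t, x^t \rangle$ where $\theta_t = A_t^{-1} b_{t-1}$. The defining ``forward'' feature is that $A_t$ already incorporates the current $x^t$ in the regularized covariance, which is what enables the logarithmic regret. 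This is implementable in $O(d^2)$ per round via a Sherman--Morrison rank-one update of $A_t^{-1}$, so the algorithm is efficient.

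For the regret analysis, I would set up the regularized cumulative loss
\[ L_t(\theta) = \|\theta\|^2 + \sum_{s=1}^t (\langle \theta, x^s\rangle - y^s)^2, \]
whose minimizer is $\theta_t^{*} = A_t^{-1} \sum_{s=1}^t y^s x^s$ and whose minimum value is $L_t(\theta_t^{*})$. The crux is a per-round identity of the form
\[ L_t(\theta_t^{*}) - L_{t-1}(\theta_{t-1}^{*}) + (\hat{y}^t - y^t)^2 \leq \log\det A_t - \log\det A_{t-1}, \]
which I would derive by using Sherman--Morrison to express $\theta_t^{*}$ as a rank-one update of $\theta_{t-1}^{*}$, expanding the resulting change in the ridge objective value, and applying the matrix determinant lemma $\log\det A_t - \log\det A_{t-1} = \log\bigl(1 + (x^t)^\top A_{t-1}^{-1} x^t\bigr)$. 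Telescoping across $t = 1,\ldots,T$, and using $L_0(\theta_0^{*}) = 0$ together with $L_T(\theta_T^{*}) \leq L_T(\theta)$ for every $\theta \in \mathbb{R}^d$, rearranges to
\[ \sum_{t=1}^T (\hat{y}^t - y^t)^2 - \sum_{t=1}^T (\langle \theta, x^t\rangle - y^t)^2 \leq \|\theta\|^2 + \log\det A_T. \]

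To finish, I would bound the potential $\log\det A_T$ via the elliptical potential argument: since $\|x^t\|_2 \leq 1$, $\mathrm{tr}(A_T) = d + \sum_t \|x^t\|^2 \leq d + T$, so AM--GM applied to the eigenvalues of $A_T$ gives $\det A_T \leq (1 + T/d)^d$ and hence $\log\det A_T \leq d \log(1 + T/d) \leq d \log(T+1)$. The factor of $2$ in the stated bound $2d\ln(T+1) + \|\theta\|^2$ can be recovered either by a slightly loose bookkeeping step when relating $\log(1+u)$ to the per-round error, or by using Vovk's variant that clips predictions to a bounded range; both routes are routine once the identity above is in hand.

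The main obstacle is the per-round identity tying $L_t(\theta_t^*) - L_{t-1}(\theta_{t-1}^*) + (\hat{y}^t - y^t)^2$ to the log-det increment. This is a careful quadratic-form calculation: one has to rewrite $\theta_t^*$ in terms of $\theta_{t-1}^*$ and $(x^t, y^t)$ via Sherman--Morrison, expand the difference of the two ridge objectives, cancel cross terms, and then recognize the residual as exactly the scaled squared prediction error that telescopes against $\log(1 + (x^t)^\top A_{t-1}^{-1} x^t)$. Once this identity is established, the rest of the argument (telescoping, the $\|\theta\|^2$ initial term, and the elliptical potential bound) is essentially mechanical.
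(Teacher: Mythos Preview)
The paper does not prove this theorem; it is quoted as a known result from \citep{azoury1999relative} and used purely as a black box to instantiate the conversation-swap-regret reduction for linear classes. Your sketch is essentially the standard Vovk--Azoury--Warmuth forward-algorithm analysis from that reference, so there is nothing in the paper itself to compare against.

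One technical point on your sketch: the per-round inequality as you wrote it has the sign of $L_t(\theta_t^*)-L_{t-1}(\theta_{t-1}^*)$ flipped and is missing a $(y^t)^2$ factor. As stated, telescoping would give $\sum_t (\hat y^t - y^t)^2 \le \log\det A_T - L_T(\theta_T^*)$, from which you cannot replace $L_T(\theta_T^*)$ by $L_T(\theta)$ since the inequality $L_T(\theta_T^*)\le L_T(\theta)$ goes the wrong way. The correct per-round lemma is
\[
(\hat y^t - y^t)^2 \;\le\; \bigl[L_t(\theta_t^*)-L_{t-1}(\theta_{t-1}^*)\bigr] \;+\; (y^t)^2\,(x^t)^\top A_t^{-1} x^t,
\]
which telescopes to $\sum_t (\hat y^t - y^t)^2 \le L_T(\theta_T^*) + \sum_t (y^t)^2 (x^t)^\top A_t^{-1} x^t$; only then does $L_T(\theta_T^*)\le L_T(\theta)=\|\theta\|^2+\sum_t(\langle\theta,x^t\rangle-y^t)^2$ give the comparison, and the elliptical potential bound $\sum_t (x^t)^\top A_t^{-1} x^t \le \log\det A_T$ handles the residual. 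The $(y^t)^2$ factor is why a uniform label bound is needed; here $\cY=[0,1]$, so it is free and the factor of $2$ in the stated bound is slack.
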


Recall that norm-bounded linear functions satisfy the weak learning condition with margin  $w(\gamma) = \Omega(\gamma^2)$ (Theorem \ref{thm:linear-weak-learning}). Together with the conversation swap regret rates we have just derived, we can instantiate Theorem \ref{thm:last-round} for norm-bounded linear functions on the joint feature space. Our result is  Theorem~\ref{thm:linear}.

\begin{theorem}\label{thm:linear} Let $\cX_A = \cX_B =\{x\in\mathbb{R}^d : \|x\|_2\leq 1\}$. Let $\cH_A = \{x_A \mapsto\langle\theta,x_A\rangle : \|\theta\|_2 \leq C\}$ and $\cH_B = \{x_B \mapsto\langle\theta,x_B\rangle : \|\theta\|_2 \leq C\}$ be the sets of all linear functions with bounded norm over $\cX_1$ and $\cX_2$ respectively, for $C\geq 1/2$. Let $\cH_J = \{h_A + h_B : h_A \in \cH_A, h_B \in \cH_B\}$ be the Minowski sum of $\cH_A$ and $\cH_B$.
%Let $\cH_J$ be the class of all $1$-bounded linear functions over the joint feature space $\cX$. Let $\cH_1 = \{h_1:\cX_1\to\cY\}$ and $\cH_2 = \{h_2:\cX_2\to\cY\}$ be $1$-bounded linear hypothesis classes over $\cX_1$ and $\cX_2$ respectively. 
Consider some transcript $\pi^{1:T, 1:K}$ generated via the Collaboration Protocol between Alice and Bob over $K$ rounds (Protocol~\ref{alg:general-agreement}). There exists an online algorithm (Algorithm \ref{alg:csr-algorithm}, instantiated with the algorithm of Theorem \ref{thm:linear-external-regret}) such that the transcript $\pi^{1:T,K}$ at the last round $K$ satisfies, with probability $1-\rho$:
    \[
    \sum_{t=1}^T (\hat{y}^{t,K}-y^t)^2 - \min_{h_J\in\cH_J} \sum_{t=1}^T (h_J(x^t) - y^t)^2 \leq   \]
    \[
    \tilde{O} \left( T^{47/48}\sqrt{\max(C^{2},C)d\log\left(\frac{KT^{1/8}}{\rho}\right)} + TK^{-\frac{1}{6}} + KT^{\frac{7}{8}}\sqrt{\max(C^{2},C)d\log\left(\frac{KT^{1/8}}{\rho}\right)}\right) 
    \]
\end{theorem}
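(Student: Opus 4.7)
The plan is to apply the end-to-end Theorem \ref{thm:last-round} to the linear case, verifying its hypotheses with the paper's existing machinery and then carrying out a parameter-optimization calculation. The weak-learning hypothesis follows directly: since $\cF_A=\cF_B=\{x\mapsto\langle\theta,x\rangle:\|\theta\|_2\le C\}$ on the unit ball is $C$-bounded (as $|\langle\theta,x\rangle|\le\|\theta\|_2\|x\|_2\le C$) and star-shaped, and $\cH_J$ is by definition the Minkowski sum of $\cH_A$ and $\cH_B$, Theorem \ref{thm:linear-weak-learning} (valid since $C\ge 1/2$) yields the $w(\cdot)$-weak learning condition with $w(\gamma)=\gamma^2/(16C^2)$ and hence $w^{-1}(z)=4C\sqrt{z}$, which is what produces the overall $\sqrt{\cdot}$ structure in the final bound.

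For the conversation-swap-regret hypothesis, I would compose two reductions. The Azoury--Warmuth algorithm (Theorem \ref{thm:linear-external-regret}) provides external regret $r(T)=2d\ln(T{+}1)+C^2$ against either $\cH_A$ or $\cH_B$. Feeding this into Algorithm \ref{alg:csr-algorithm} and invoking Theorem \ref{thm:alg-conversation-sr}, each party obtains an $(f,g,\cH_\bullet)$-conversation-swap-regret algorithm with
\[
f(T)\le m\,r(T/m)+3T/m+m+O\!\bigl(\max(C^2,C)\,m\,C_{\cH}\sqrt{T\log(mK/(g\rho))}\bigr),
\]
where $C_\cH$ is the (polynomial in $d$) constant coming from the sequential fat-shattering dimension of norm-bounded linear functions. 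Choosing $m=T^{1/8}$ makes $T/m=T^{7/8}$ the dominant term of $f(T)$ in the relevant regime and supplies the $T^{1/8}$ that appears inside the logarithm of the target statement, giving $f(T)=\tilde O\bigl(T^{7/8}\sqrt{\max(C^2,C)\,d\log(KT^{1/8}/\rho)}\bigr)$.

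The final step is to substitute the rates into the bound of Theorem \ref{thm:last-round} and optimize the bucketing $g$. Computing $f'(x)=\sqrt{xf(x)}$, which scales like $x^{15/16}$ up to the $C$, $d$, and log factors, and the associated $\beta_f,\beta_{f'}$ from their definitions, the $2Tw^{-1}(\cdots)$ term splits via $\sqrt{a+b}\le\sqrt{a}+\sqrt{b}$ into three contributions. The explicit $1/K$ inside $w^{-1}$ yields the $TC/K^{1/6}$ piece directly; the $\beta_{f'}^{1/3}$ piece, after substituting $\beta_{f'}\approx (gT)^{-1/16}+g$ and using $w^{-1}(z)=4C\sqrt{z}$, produces the $T^{47/48}\sqrt{\max(C^2,C)\,d\log(\cdot)}$ leading term; and the $\beta_f$ piece, together with the $\tfrac{3}{2}Tg$ and $3K\beta_{f'}$ summands from the statement of Theorem \ref{thm:last-round}, produces the $KT^{7/8}\sqrt{\max(C^2,C)\,d\log(\cdot)}$ residual. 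Tuning $g$ (essentially balancing $Tg$ against the various $\beta$-driven terms) sweeps out the claimed bound.

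The main obstacle is the algebraic bookkeeping through nested fractional powers: a square root inside $w^{-1}$, wrapped around a cube root of $\beta_{f'}$, wrapped around the $\sqrt{xf(x)}$ hidden in $f'$. The precise exponents $47/48$ and $7/8$ are sensitive both to the choice of $m$ and $g$ and to which of several candidate terms dominates in each regime, and the $\max(C^2,C)$, $d$, and logarithmic factors must be propagated correctly through each composition. Carefully identifying the dominating term at each step and confirming that the log argument collapses to $\log(KT^{1/8}/\rho)$ (as forced by $m=T^{1/8}$) is the core technical work.
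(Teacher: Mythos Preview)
Your high-level plan is exactly the paper's: invoke Theorem \ref{thm:linear-weak-learning} for the weak-learning hypothesis (giving $w^{-1}(z)=4C\sqrt{z}$), feed the Azoury--Warmuth algorithm through the reduction of Theorem \ref{thm:alg-conversation-sr} to obtain conversation swap regret for both parties, and then instantiate Theorem \ref{thm:last-round} and chase the exponents. Where you diverge from the paper is in the parameter choices, and the divergence matters.

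First, the paper takes $m=T^{1/4}$, not $m=T^{1/8}$. With $m=T^{1/4}$ the two potentially dominant pieces of the swap-regret bound, $3T/m$ and $\max(8C^2,2C)\,m\,C_{\cH}\sqrt{T\log(\cdot)}$, are both of order $T^{3/4}$, and the latter carries the $\max(C^2,C)$, $d$, and $\log$ factors; this is exactly the content of Corollary \ref{cor:linear-alg}, which gives $f(\tau)=\tilde O\bigl(\max(C^2,C)\,d\log(K/(g(T)\rho))\,T^{3/4}\bigr)$. With your choice $m=T^{1/8}$, the dominant term is $3T/m=T^{7/8}$, which does \emph{not} carry $\max(C^2,C)$, $d$, or the log; the term that does carry them is only $T^{5/8}$. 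So your claimed form $f(T)=\tilde O\bigl(T^{7/8}\sqrt{\max(C^2,C)\,d\log(KT^{1/8}/\rho)}\bigr)$ is not what $m=T^{1/8}$ actually yields, and propagating the correct $f$ for that $m$ through $\beta$ and $w^{-1}$ would not land on the exponent $47/48$.

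Second, the $T^{1/8}$ inside the logarithm does not come from $m$. In the paper's proof it comes from the bucketing choice $g_A=g_B=T^{-1/8}$, via $\log\bigl(K/(g(T)\rho)\bigr)=\log\bigl(KT^{1/8}/\rho\bigr)$; you have to fix $g$ before you can read off the log argument. With the paper's $m=T^{1/4}$ and $g=T^{-1/8}$, one computes $\beta(T,f_A,f_B)\approx T^{-1/8}$ and $\beta(T,f'_A,f'_B)\approx T^{-1/16}$ (up to the $C$, $d$, $\log$ factors), and the $T^{47/48}$ term then emerges from the $T\cdot\beta^{1/6}$ piece of the Theorem \ref{thm:last-round} expansion after applying $w^{-1}(z)=4C\sqrt{z}$. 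The remaining $TK^{-1/6}$ and $KT\,\beta(T,f')\approx KT^{15/16}\le KT^{7/8}\cdot\tilde O(\cdot)$ terms fall out as you describe. So the skeleton of your last paragraph is right, but the concrete exponents require $m=T^{1/4}$ and $g=T^{-1/8}$ rather than the values you propose.
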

\begin{remark}
    By setting $K = T^{\frac{3}{28}}$,  the external regret is sublinear:  \[
    \tilde{O} \left( T^{47/48}\sqrt{\max(C^{2},C)d\log\left(\frac{T^{15/56}}{\rho}\right)} + T^{55/56} + T^{55/56}\sqrt{\max(C^{2},C)d\log\left(\frac{T^{15/56}}{\rho}\right)}\right) 
    \] 
    \[
     = \tilde{O} \left( T^{55/56}\sqrt{\max(C^{2},C)d\log\left(\frac{T^{15/56}}{\rho}\right)}\right) 
    \] 
\end{remark}

\section{Collaboration via Decisions}
\label{sec:action}

Thus far we have focused on real valued outcome spaces $\cY = [0,1]$ in which we evaluate predictions by their squared error. Next we turn to an extension where the outcome space $\cY=[0,1]^d$ is $d$-dimensional. The number of possible predictions (up to any reasonable discretization) now grows exponentially in $d$, and so the natural extension of our previous approach of asking the two parties to obtain no swap regret with respect to our predictions becomes infeasible --- all known algorithms for obtaining this would have both run-time and regret bounds scaling exponentially with $d$ or else regret bounds diminishing exponentially slowly with $T$. To circumvent this issue, we model Alice and Bob as decision makers who use predictions to inform downstream actions. More concretely, Alice and Bob have an action set $\cA$ and a utility function $u: \cA\times\cY\to[0,1]$ taking as input an action and outcome. As before, both parties will maintain predictions of the real-valued underlying outcome. However, rather than communicating their estimates of the state directly, they will now simply communicate \textit{actions} --- specifically, the utility-maximizing action relative to their prediction.

\subsection{Decision Preliminaries}
\begin{definition}[Best Response Action]
    Fix a utility function $u:\cA\times\cY\to[0,1]$ and an outcome/prediction $y\in\cY$. The best response to $y$ according to $u$ is the action $\BR_u(y) = \argmax_{a\in\cA}u(a, y)$. %We will simply write $\BR(y)$ when the underlying utility function is clear. 
\end{definition}

Throughout this section, will assume that the utility function $u$ is linear and Lipschitz in the outcome. 

\begin{assumption}
\label{def:utility}
    We assume that the utility function $u: \mathcal{A} \times \mathcal{Y} \rightarrow [0,1]$ satisfies: for every action $a \in \mathcal{A}$,

    \begin{itemize}
        \item $u(a,\cdot)$ is linear in its second argument: for all $\alpha_{1},\alpha_{2}\in \mathbb{R}$, $y_{1},y_{2} \in [0,1]^{d}$, $$u(a,\alpha_{1}y_{1} + \alpha_{2}y_{2}) = \alpha_{1}u(a,y_{1}) + \alpha_{2}u(a,y_{2})$$
        \item $u(a,\cdot)$ is $L$-Lipschitz in its second argument in the $L_\infty$-norm: for all $y_{1}$, $y_{2} \in [0,1]^{d}$, $$|u(a,y_{1}) - u(a,y_{2})| \leq L \|y_{1} - y_{2} \|_{\infty}. $$
    \end{itemize}
\end{assumption}

\begin{remark}
    One natural special case is when $y$ represents a probability distribution over $d$ discrete outcomes $c_{1},\ldots,c_{d}$, such that there is an arbitrary mapping $M(a,c)$ from action/outcome pairs to utilities $[0,1]$. In this case, $u(a,y)$ represents the expected utility of the action $a$ over the outcome distribution, which is linear in $y$ by the linearity of expectation.  The utility function is $L$-Lipschitz in the $L_\infty$-norm, where $L = \max_{a,c_{1},c_{2}}(M(a,c_{1}) - M(a,c_{2})) \leq 1$. So our assumption is satisfied by any risk neutral (expectation maximizing) decision maker with arbitrary utilities over $d$ payoff relevant states---and is only more general.
\end{remark}

\iffalse
\begin{remark}
When the prediction space $\cy$ is the $d$ dimensional simplex (i.e. probability distributions over $d$ outcomes), and the utility function is defined over the $d$ outcomes and evaluated over distributions by taking expectations, then the assumptions are satisfied for $L = 1$. In other words, the assumptions hold for \emph{risk neutral} (expectation maximizing) decision makers with arbitrary utility functions over a discrete outcome space.
\end{remark}
\fi

\begin{protocol}[ht]
\begin{algorithmic}
    \STATE {\bf Input} $\cX,\cY,K,T$, action space $\cA$, utility function $u:\cA\times\cY\to[0,1]$ 
    \FOR{each day $t = 1, \ldots, T$}
        \STATE Receive $x^t = (x^t_A,x^t_B)$. Alice sees $x^t_A$ and Bob sees $x^t_B$.
        \FOR{each round $k = 1, 2, \ldots,K$}
            \IF { $k$ is odd}
                \STATE Alice predicts $\ymk{t}{k} \in \cY$, and sends Bob $a^{t,k}_A=\BR_u(\ymk{t}{k})$. 
            \ENDIF
            \IF{ $k$ is even}
                \STATE Bob  predicts $\yhk{t}{k} $, and sends Alice $a^{t,k}_B=\BR_u(\yhk{t}{k})$. 
                \ENDIF
        \ENDFOR
        \STATE{Alice and Bob observe $y^t \in \cY$.}
    \ENDFOR
\end{algorithmic}
\caption{\textsc{Online Collaboration Protocol via Decisions}}  \label{alg:collaboration-protocol-decisions}
\end{protocol}

The interaction between Alice and Bob is formalized in Protocol \ref{alg:collaboration-protocol-decisions} (we will sometimes omit the subscripts $A$ and $B$ when it is not important). The history of interaction is similarly captured by a conversation transcript, which now additionally contains the actions communicated by both parties. 

\begin{definition}[Conversation Transcript $\pi^{1:T,1:K}$] \label{def:prediction-transcript-action}
    A conversation transcript $\pi^{1:T,1:K} \in \left\{ \cY^{K+1} \times \cA^K \right\}^T $ is a sequence of tuples of predictions made and actions chosen over rounds by Alice and Bob (alternating across rounds), and the outcome, over $T$ days:
    \begin{align*}
        \pi^{1:T, 1:K} = \left\{
        \left(\ymk{1}{1}, a^{1,1}_A, \yhk{1}{2}, a^{1,2}_B, \ldots \ymk{1}{K}, a^{1,K}_A, y^1\right), 
        \ldots,  
        \left(\ymk{T}{1}, a^{T,1}_A, \yhk{T}{2}, a^{T,2}_B, \ldots \ymk{T}{K}, a^{T,K}_A, y^T\right)  \right\}.
    \end{align*}
    We define $\pi^{1:T: k}$ to be the restriction to only round $k$ of conversation across days as follows:
    \begin{align*}
        \pi^{1:T:k} = \begin{cases}
        \{ (\yak{t}{k}, a^{t,k}_A, y^t) \}_{t \in [T]} & \text{if } k \text{ is odd,}\vspace{1ex}\\
        \{(\ybk{t}{k}, a^{t,k}_B, y^t)\}_{t \in [T]}& \text{otherwise.}
        \end{cases}
    \end{align*}

    %Finally, we will also use the restriction $\pi^{1:T}_A$ and $\pi^{1:T}_B$ to refer to the prediction transcript restricted to Alice and Bob's predictions, respectively, through day $T$.\mirah{do we use this?}

\end{definition}

Similarly, we will use the notation $\pi^{1:T}$ to refer to a single sequence of predictions and actions over $T$ days, outside the context of a conversation.

\begin{definition}[Prediction Transcript $\pi^{1:T}$]
    A prediction transcript $\pi^{1:T} \in \left\{ \cY^2 \times \cA \right\}^T $ is a sequence of tuples of predictions, actions, and outcomes over $T$ days:
    \[
    \pi^{1:T} = \left\{
        \left(\hat{y}^1, a^{1}, y^1\right), 
        \ldots,  
        \left(\hat{y}^T, a^T, y^T\right)        \right\}
    \]
\end{definition}

Our goal is still to effectively aggregate information --- in that the sequence of \emph{actions} that results from interaction between two parties only with access to their own features has \emph{utility} comparable to the best function mapping the parties \emph{joint} feature space to actions in some benchmark policy class. Below, we  define benchmark classes for our setting as a collection of policies mapping contexts to actions. 

\begin{definition}[Individual Policy Classes $\cC_A, \cC_B$]
\label{def:policy}
    Let $\cC_A: \{\cX_A \mapsto \cA\}$  be a set of functions mapping from Alice's feature set to an action in $\cA$. We analogously refer to $\cC_B$ for Bob. 
\end{definition}

\begin{definition}[Joint Policy Class $\cC_J$]
\label{def:joint-policy}
    Let $\cC_J: \{\cX \mapsto \cA\}$ be a set of functions mapping from the entire feature set $\cX = \cX_A \times \cX_B$ to an action in $\cA$. 
\end{definition}

\begin{assumption}\label{assumption:decision-constant}
    As before, we assume that all classes $\cC$ contain the set of all constant functions $\{c(x)=a\}_{a\in\cA}$.
\end{assumption}

\subsection{Decision Calibration and Regret}

We will appeal to a coarse notion of calibration suitable for high dimensional prediction problems called ``decision calibration" \citep{zhao2021calibrating,noarov2023high,gopalan2023loss}. For a single sequence of predictions, decision calibration asks that the predictions are unbiased conditional not on the predictions themselves, but on the actions induced by best responding to the predictions. The variant we use here is from \cite{noarov2023high}.

\begin{definition}[$f$-Decision Calibration]\label{def:decision-calibration}
    Fix a utility function $u:\cA\times\cY\to[0,1]$. Fix an error function $f:[T] \rightarrow \mathbb{R}$. We say that a transcript $\pi^{1:T}$ is $f$-decision calibrated with respect to $u$ if for all $a\in\cA$:
    \[
    \left\| \sum_{t=1}^T \1[a^t=a] (\hat{y}_t - y_t) \right\|_\infty \leq f(|T(a)|)
    \]
    where $a^t=\BR_u(\hat{y}^t)$ and $T(a) = \{t : a^t=a\}$ is the subsequence of days in which the best response to $\hat{y}^t$ according to $u$ is $a$. 
\end{definition}

When we are interested in competing with a benchmark class $\cC$, another condition is also useful: decision \emph{cross}-calibration asks that predictions be unbiased conditional on the policy that best responds to our predictions, \emph{and} the decision made by each benchmark policy in $\cC$:

\begin{definition}[$(f, \cC)$-Decision Cross Calibration]\label{def:decision-cross-calibration}
    Fix a utility function $u:\cA\times\cY\to[0,1]$ and a policy class $\cC:\{c:\cX\to\cA\}$. Fix an error function $f:[T] \rightarrow \mathbb{R}$. We say that a transcript $\pi^{1:T}$ is $(f, \cC)$-decision cross calibrated with respect to $u$ if for all $a, a'\in\cA$ and all $c\in\cC$:
    \[
    \left\| \sum_{t=1}^T \1[a^t=a, c(x^t)=a'] (\hat{y}_t - y_t) \right\|_\infty \leq f(|T(a, a')|)
    \]
    where $a^t=\BR_u(\hat{y}^t)$ and $T(a, a') = \{t : a^t=a, c(x^t)=a' \}$ is the subsequence of days in which the best response to $\hat{y}^t$ according to $u$ is $a$ and the action suggested by policy $c$ is $a'$. 
\end{definition}

We can also define an analogous notion of swap regret with respect to a policy class $\cC$, which we will call \textit{decision swap regret}. Decision swap regret compares the utility of best response actions induced by predictions $\hat{y}^t$ to the counterfactual utility of actions suggested by policies in $\cC$.

\begin{definition}[$(f^S, \cC)$-Decision Swap Regret]\label{def:dec-SR}
Fix a utility function $u:\cA\times\cY\to[0,1]$ and a policy class $\cC:\{c:\cX\to\cA\}$. Fix an error function $f^S:[T] \rightarrow \mathbb{R}$. A transcript $\pi^{1:T}$ has $(f^S, \cC)$-decision swap regret if:
\begin{align*}
    \sum_{a\in\cA}\max_{c \in \cC}\left( \sum_{t = 1}^{T}\1[a^{t} = a]u(c(x^{t}), y^{t})\right) - \sum_{t=1}^{T}u(a^{t}, y^{t}) \leq f^S(T) 
\end{align*}
%where $a^t=\BR_u(\hat{y}^t)$.
\end{definition}

\begin{remark}
    This is the same as the notion of decision swap regret defined in \citet{lu2025sample},  restricted to a single utility function (\cite{lu2025sample} ask for this condition to hold over a class of utility functions).
\end{remark}

\cite{lu2025sample} relate decision calibration and decision swap calibration (conditions on \emph{predictions}) to decision swap regret on the sequence of actions that result from best-responding to the predictions:

\begin{theorem}[Theorem 1 of \citep{lu2025sample}]\label{thm:decision-sr-bound}
    Fix a utility function $u:\cA\times\cY\to[0,1]$ and a policy class $\cC:\{c:\cX\to\cA\}$. If a transcript $\pi^{1:T}$ is $f$-decision calibrated and $(f', \cC)$-decision cross calibrated, and $a^t=\BR_u(\hat{y}^t)$ for all $t\in[T]$, then $\pi^{1:T}$ has $(f^S,\cC)$-decision swap regret, where:
    \[
    f^S(T) \leq L|\cA|f\left(\frac{T}{|\cA|}\right) + L|\cA|^2f'\left(\frac{T}{|\cA|^2}\right)
    \]
\end{theorem}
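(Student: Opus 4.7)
The plan is to express the decision swap regret as a sum over actions $a \in \cA$ of per-action regret terms, fix the maximizing swap policy $c_a \in \cC$ for each $a$, and then reduce each per-action gap to two miscalibration terms controllable by decision calibration and decision cross-calibration respectively.

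First I would write out, for the optimal swap selection $\{c_a\}_{a \in \cA}$,
\[
\sum_{a \in \cA}\max_{c \in \cC}\sum_{t : a^t = a} u(c(x^t),y^t) - \sum_{t=1}^T u(a^t, y^t) = \sum_{a \in \cA}\sum_{a' \in \cA}\sum_{t \in T_{c_a}(a,a')}\bigl[u(a', y^t) - u(a, y^t)\bigr],
\]
where $T_{c_a}(a,a') = \{t : a^t = a,\ c_a(x^t) = a'\}$. Because $a^t = \BR_u(\hat y^t)$, we have $u(a', \hat y^t) \le u(a, \hat y^t)$, and by linearity of $u$ in its second argument,
\[
u(a', y^t) - u(a, y^t) \le u(a', y^t - \hat y^t) + u(a, \hat y^t - y^t).
\]
The key structural step is then to push sums inside $u(\cdot,\cdot)$ using linearity, and choose which way to group the resulting sums: for the term $u(a, \hat y^t - y^t)$ I would sum first over $a'$ and $t \in T_{c_a}(a,a')$, which collapses to $u\bigl(a, \sum_{t : a^t = a}(\hat y^t - y^t)\bigr)$ and is controlled by $f$-decision calibration. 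For the term $u(a', y^t - \hat y^t)$, the grouping stays at the $T_{c_a}(a,a')$ level and is controlled by $(f',\cC)$-decision cross calibration against the fixed policy $c_a$.

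Next I would apply the bound $|u(a,z)| \le L\,\|z\|_\infty$, which follows from $u(a,0) = 0$ (linearity) and the $L_\infty$-Lipschitz assumption. This yields
\[
\sum_{a \in \cA}\sum_{a' \in \cA}\sum_{t \in T_{c_a}(a,a')}\!\!\bigl[u(a', y^t) - u(a, y^t)\bigr]
\ \le\ L\sum_{a \in \cA} f(|T(a)|) + L \sum_{a \in \cA}\sum_{a' \in \cA} f'(|T_{c_a}(a,a')|).
\]
Finally, concavity of $f$ and $f'$ (our standing assumption on error functions) together with $\sum_a |T(a)| = T$ and $\sum_a \sum_{a'} |T_{c_a}(a,a')| = T$ lets Jensen's inequality convert these into $L|\cA|\,f(T/|\cA|) + L|\cA|^2\,f'(T/|\cA|^2)$, which is exactly the claimed bound on $f^S(T)$.

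The main obstacle, if any, is book-keeping: one must be careful that the same fixed policy $c_a$ appears on both sides of the add-and-subtract step so that decision cross-calibration (which is stated per $c \in \cC$) can be invoked, and one must choose the grouping order so that the calibration term aggregates over $\{t : a^t = a\}$ while the cross-calibration term aggregates over the finer buckets $T_{c_a}(a,a')$. Once those groupings are set correctly, the rest is linearity, Lipschitzness, and Jensen.
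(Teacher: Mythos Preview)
The paper does not give its own proof of this statement: it is quoted verbatim as Theorem~1 of \cite{lu2025sample} and used as a black box. So there is nothing in the paper to compare your argument against directly.

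That said, your proof is correct. The add-and-subtract of $u(a',\hat y^t)$ and $u(a,\hat y^t)$, the use of $u(a',\hat y^t)\le u(a,\hat y^t)$ from the best-response property, pushing sums inside $u$ via linearity, bounding $|u(a,z)|\le L\|z\|_\infty$ from $u(a,0)=0$ plus Lipschitzness, and the final Jensen step with the concave error functions --- all of this is exactly the standard route. Your observation about grouping is also right: the $u(a,\hat y^t-y^t)$ term must be summed over $a'$ first so that it collapses to the coarser buckets $\{t:a^t=a\}$ where decision calibration applies, while the $u(a',y^t-\hat y^t)$ term stays at the finer $T_{c_a}(a,a')$ level where cross-calibration applies. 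The same technique (linearity, Lipschitz, concavity/Jensen over action-pair buckets) appears in the paper's own proof of Lemma~\ref{lem:agreement-regret-action}, so your approach is entirely in the spirit of the paper even though the paper itself defers this particular proof to the cited reference.
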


\begin{remark}
    We remark that under the assumption that the class $\cC$ contains constant functions, $(f,\cC)$-decision cross calibration implies $f$-decision calibration (in fact, decision cross calibration implies decision calibration even if $\cC$ does not contain constant functions, but with a loss of a factor of $|\cA|$ in decision calibration error). Thus, $(f,\cC)$-decision cross calibration alone suffices to guarantee diminishing decision swap regret.    
\end{remark}

Moving to the collaboration setting, we define decision conversation calibration following \citet{collina2025tractable}; this condition asks for decision calibration conditional on the previous action sent by the other party. In other words, the predictions that each party makes should be unbiased conditional on both  their own best response action \textit{and} the best response action communicated at the previous round. 
%Here we use an expected-calibration-error style definition, since this is what we can achieve algorithmically using the algorithm of \cite{noarov2023high}. 

\begin{definition}[$f$-Decision Conversation Calibration]
\label{def:conversation-decision}
Fix a error function $f^S:[T] \rightarrow \mathbb{R}$. Given a transcript $\pi^{1:T, 1:K}$ from an interaction in the Collaboration Protocol (Protocol \ref{alg:collaboration-protocol-decisions}), Alice is $f$-decision conversation calibrated if for all odd rounds $k$ and all pairs of actions $a,a^\prime \in \cA$: 
\begin{align*}
    \left\| \sum_{t=1}^T \mathbbm{1}[a^{t,k}_A = a, a^{t,k-1}_B=a']  (\ymk{t}{k} - y^t) \right\|_\infty \leq  f(|T(k, a,a^\prime)|)
\end{align*}
where $T(k,a,a^\prime)= \{t : a^{t,k}_A = a \text{ and } a^{t,k-1}_B = a^\prime \}$ is the subsequence of days in which Alice communicates action $a$ on round $k$ and Bob communicates $a'$ on round $k-1$.

Symmetrically, Bob is $f$-decision conversation calibrated if for all even rounds $k$ and all pairs of actions $a,a^\prime \in \cA$:  
\begin{align*}
    \left\| \sum_{t=1}^T \mathbbm{1}[a^{t,k}_B = a, a^{t,k-1}_A=a']  (\yhk{t}{k} - y^t) \right\|_\infty \leq  f(|T(k, a,a^\prime)|)
\end{align*}
where $T(k,a,a^\prime)= \{t : a^{t,k}_B = a \text{ and } a^{t,k-1}_A = a^\prime \}$ is the subsequence of days in which Bob communicates action $a$ on round $k$ and Alice communicates $a'$ on round $k-1$.
\end{definition}

Similarly, we extend conversation swap regret to \textit{decision conversation swap regret}, which is the decision swap regret conditional on the action chosen by the other party in the previous round.

\begin{definition}[$(f^S, \cC)$-Decision Conversation Swap Regret]\label{def:DCSR}
Fix a utility function $u:\cA\times\cY\to[0,1]$. Fix an error function $f^S:[T] \rightarrow \mathbb{R}$ and a policy class $\cC_{A}$. 
Given a transcript $\pi^{1:T,1:K}$ from an interaction in the Collaboration Protocol (Protocol \ref{alg:collaboration-protocol-decisions}), Alice has $(f^S, \cC_{A})$-decision conversation swap regret if for all odd rounds $k$ and all $a'\in\cA$:
\begin{align*}
\sum_{a\in\cA}\max_{c \in \cC_{A}}\left( \sum_{t \in T_{B}(k-1,a')}\mathbb{I}[a^{t,k}_A=a] u(c(x^t_A), y^t) \right) - \sum_{t \in T_{B}(k-1,a')} u(a^{t,k}_A, y^t)
     \leq f^S(|T_{B}(k-1,a')|).
\end{align*}
where $a^{t,k}_A=\BR_u(\hat{y}^{t,k}_A)$ and  $T_{B}(k-1,a')=\{ t: \BR_u(\hat{y}^{t,k-1}_B) =a' \}$ is the subsequence of days where Bob's action in round $k-1$ is $a'$. 

If Bob satisfies a symmetric condition on even rounds $k$ with respect to $\cH_{B}$, we say that Bob has $(f, \cH_{B})$-decision conversation swap regret.
\end{definition}

\begin{assumption}
    As before, we assume that all error functions $f:[T]\to\mathbb{R}$ are concave. 
\end{assumption}

Our approach will be different compared to the one we took in Section \ref{sec:online-alg} for real valued outcomes. There, we argued that swap regret (with respect to the predictions) implied conversation calibration, and hence fast agreement. In the action setting, decision swap regret does \emph{not} necessarily imply decision calibration, which is what is needed to invoke the fast agreement theorems of \cite{collina2025tractable}. Instead we argue that decision calibration and decision cross calibration together imply both decision conversation swap regret and decision conversation calibration.

\subsection{A Boosting Theorem for Decisions}

We now give a weak learning condition that parallels Definition \ref{def:joint-weak}. Whereas Definition \ref{def:joint-weak} requires that $\cC_A$ and $\cC_B$ jointly improve on the squared error of the best constant prediction whenever $\cC_J$ does, the condition now requires that $\cC_A$ and $\cC_B$ jointly improve on the utility of the best constant \textit{action} whenever $\cC_J$ does.

\begin{definition}[$w(\cdot)$-Weak Learning Condition for Decisions]
\label{def:joint-weak-decisions}
    Fix a utility function $u:\cA\times\cY\to[0,1]$. Let $\cC_J$ be a policy class over the joint feature space $\cX$. Let $\cC_A = \{c_A:\cX_A\to\cA\}$ and $\cC_B = \{c_B:\cX_B\to\cA\}$ be policy classes over $\cX_A$ and $\cX_B$ respectively. Let $w:[0,1]\to[0,1]$ be a strictly increasing, continuous, and convex function that satisfies $w(\gamma)\leq \gamma$. We say that $\cC_A$ and $\cC_B$ jointly satisfy the $w(\cdot)$-weak learning condition with respect to $\cC_J$ if for any sequence of contexts $x^{1:T}$ and labels $y^{1:T}$, any $S \subseteq [T]$, and any $\gamma\in[0,1]$, if:
    \[
     \max_{c_J \in \cC_J} \frac{1}{|S|} \sum_{t\in S} u(c_J(x^t), y^t) - \max_{a \in \cA} \frac{1}{|S|}\sum_{t\in S} u(a, y^t) \ge \gamma,
    \]

    \noindent then there exists either $c_A \in \cC_A$ or $c_B \in \cC_B$ such that:
    \[
     \frac{1}{|S|} \sum_{t\in S} u(c_A(x_A^t), y^t) - \max_{a \in \cA} \frac{1}{|S|} \sum_{t\in S} u(a, y^t) \ge w(\gamma)
    \]
    or:
    \[
    \frac{1}{|S|} \sum_{t\in S} u(c_B(x_B^t), y^t) - \max_{a \in \cA} \frac{1}{|S|} \sum_{t\in S} u(a, y^t) \ge w(\gamma)
    \]
\end{definition}

Next we show that if $\cC_A$ and $\cC_B$ satisfy the weak learning condition with respect to $\cC_J$, then low decision swap regret with respect to the classes $\cC_A$ and $\cC_B$ implies that the best response action obtains utility as high as any policy $c_J\in\cC_J$ (up to regret terms). The proof mostly mirrors that of Theorem \ref{thm:weak-learning} and can be found in Appendix \ref{app:action-proofs}. 

% \begin{definition}[$\alpha$-Swap Regret Condition]\label{def:decision-swap-regret-condition}
%     Fix a utility function $u: \cA \times \cY \to [0,1]$. Let $\cC = \{c: \cX \rightarrow \cA\}$ be a class of policies over feature space $\cX$. We say that a sequence of predictions $\hat{y}^{1:T}$ satisfies the $\alpha$-swap regret condition with respect to $\cC$ and a sequence of labels $y^{1:T}$ if for all $a\in\cA$:
%     \[
%     \max_{c\in\cC} \sum_{t=1}^T \1[\BR(\hat{y}^t)=a] u(c(x^t),y^t) - \sum_{t=1}^T \1[\BR(\hat{y}^t)=a] u(a,y^t) \leq \alpha(|T_a|)
%     \]
%     where $T_a=\{t\in[T] \mid \BR(\hat{y}^t)=a\}$ is the subsequence of days where the best response to $\hat{y}^t$ according to $u$ is $a$.
% \end{definition}

\begin{theorem}\label{thm:decision-weak-learning}
    Fix a utility function $u:\cA\times\cY\to[0,1]$. Let $\cC_J$ be a policy class over the joint feature space $\cX$. Let $\cC_A = \{c_A:\cX_A\to\cA\}$ and $\cC_B = \{c_B:\cX_B\to\cA\}$ be policy classes over $\cX_A$ and $\cX_B$ respectively. Fix a transcript $\pi^{1:T}$. If:
    \begin{itemize}
        \item $\pi^{1:T}$ has $(f^S, \cC_A\cup\cC_B)$-decision swap regret (Definition \ref{def:dec-SR})
        \item $\cC_A$ and $\cC_B$ jointly satisfy the $w(\cdot)$-weak learning condition with respect to $\cC_J$ (Definition \ref{def:joint-weak-decisions})
    \end{itemize}
    Then, $\pi^{1:T}$ has $\left(2Tw\inv\left(\frac{f^S(T)}{T}\right),  \cC_J\right)$-decision swap regret when choosing the best response action. That is:
    \[
    \sum_{a\in\cA} \max_{c_J\in\cC_J} \sum_{t=1}^T \1[\BR_u(\hat{y}^t)=a] u(c_J(x^t), y^t) - \sum_{t=1}^T  u(\BR_u(\hat{y}^t), y^t) \leq 2Tw\inv\left(\frac{f^S(T)}{T}\right)
    \]
    whenever the inverse of $w$ exists. 
\end{theorem}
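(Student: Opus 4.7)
The plan is to mirror the structure of the proof of Lemma~\ref{lem:weak-learning-nodist} (the boosting theorem for squared loss) and translate it into the utility/action setting, proceeding by contrapositive. Suppose the best-response action sequence $a^t = \BR_u(\hat y^t)$ violates the desired conclusion, so that there is a swap benchmark $\{c_{J,a}\}_{a\in\cA}\subseteq\cC_J$ with
\[
\sum_{a\in\cA}\sum_{t\in S_a} u(c_{J,a}(x^t), y^t) > \sum_{t=1}^T u(a^t, y^t) + 2T\, w\inv(f^S(T)/T),
\]
where $S_a = \{t : a^t = a\}$ are the level sets defined by the best-response action.

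Next I would use the swap regret hypothesis applied to constant policies to ``reduce to the best constant action on each level set.'' Let $a^*_a = \argmax_{a'\in\cA}\sum_{t\in S_a} u(a', y^t)$. Since by Assumption~\ref{assumption:decision-constant} every constant action lies in $\cC_A\cup\cC_B$, the $(f^S,\cC_A\cup\cC_B)$-decision swap regret bound evaluated at these constant policies gives $\sum_a \sum_{t\in S_a} u(a^*_a, y^t) \leq \sum_{t=1}^T u(a^t, y^t) + f^S(T)$. Combining with the assumed violation and using $f^S(T) \leq T\, w\inv(f^S(T)/T)$ (which follows from $w(\gamma)\le\gamma$), I obtain
\[
\sum_{a\in\cA} |S_a|\, \gamma_a > T\, w\inv(f^S(T)/T),
\qquad \gamma_a := \tfrac{1}{|S_a|}\Bigl(\sum_{t\in S_a} u(c_{J,a}(x^t), y^t) - \sum_{t\in S_a} u(a^*_a, y^t)\Bigr) \geq 0,
\]
where non-negativity can be enforced without loss of generality by replacing $c_{J,a}$ with the constant policy $a^*_a\in\cC_J$ on level sets where the inequality would fail.

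Now I apply the weak learning condition to each level set $S_a$ (with $S = S_a$ in Definition~\ref{def:joint-weak-decisions}): this yields $c_a \in \cC_A\cup\cC_B$ with $\frac{1}{|S_a|}\sum_{t\in S_a} u(c_a(x^t), y^t) \geq \frac{1}{|S_a|}\sum_{t\in S_a} u(a^*_a, y^t) + w(\gamma_a)$. Summing over $a$ and applying Jensen's inequality to the convex function $w$,
\[
\sum_a |S_a|\, w(\gamma_a) \geq T\cdot w\!\left(\tfrac{1}{T}\sum_a |S_a|\, \gamma_a\right) > T\cdot w(w\inv(f^S(T)/T)) = f^S(T),
\]
where the strict inequality uses monotonicity of $w$. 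Therefore $\sum_a \sum_{t\in S_a} u(c_a(x^t), y^t) > \sum_{t=1}^T u(a^t, y^t) + f^S(T)$, which contradicts the assumed $(f^S, \cC_A\cup\cC_B)$-decision swap regret and completes the proof.

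The main obstacle is essentially bookkeeping: making sure the swap regret hypothesis is applied to the right ``swap'' benchmark (the $\{c_{J,a}\}_a$ vs. the constant $\{a^*_a\}_a$), that $\gamma_a \geq 0$ can be enforced so that the convex aggregation by Jensen goes through, and that the inequality $f^S(T)\le T w\inv(f^S(T)/T)$ is used to absorb the swap-regret-to-constants slack. There are no new ideas relative to Lemma~\ref{lem:weak-learning-nodist}; the only conceptual change is that ``best constant predictor = label mean'' is replaced by ``best constant action on a level set,'' and the utility setting allows us to avoid invoking any Lipschitz-type perturbation argument (so no analogue of the distance-to-swap-regret step is required here).
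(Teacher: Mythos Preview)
Your proposal is correct and follows essentially the same argument as the paper's proof: the contrapositive setup, the reduction to the best constant action on each level set via the swap-regret bound against constant policies (using Assumption~\ref{assumption:decision-constant} and $w(\gamma)\le\gamma$), the application of the weak learning condition level-set by level-set, and the aggregation via Jensen using convexity and monotonicity of $w$ are all identical to what the paper does. The only (cosmetic) difference is that the paper writes out explicitly the final chain $\sum_a\sum_{t\in S_a} u(c_a(x^t),y^t) > \sum_a\sum_{t\in S_a} u(a^*_a,y^t) + f^S(T) \ge \sum_t u(a^t,y^t) + f^S(T)$, whereas you compress the last inequality; this is fine since $a^*_a$ dominates $a$ on $S_a$ by definition.
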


\subsection{Online Decision Collaboration}
We now extend our collaboration protocol to the action setting. We show that if both parties make predictions that have low decision conversation swap regret with respect to $\cC_A$ and $\cC_B$ respectively and are decision conversation calibrated, then they must quickly converge to a sequence of predictions at some round $k$ (not necessarily the final round) at which they have low decision swap regret to both $\cC_A$ and $\cC_B$ simultaneously. At this round, if $\cC_A$ and $\cC_B$ satisfy the weak learning condition relative to a joint class $\cC_J$, then we can argue that the predictions have utility as high as the best policy in the joint class. We then go on to show that the final sequence of predictions must have utility not much lower than the predictions at round $k$, and therefore also the best policy in $\cC_J$. 

We begin by arguing that if Alice and Bob have low decision swap regret and are decision conversation calibrated with respect to their individual policy classes $\cC_A$ and $\cC_B$, the best response actions at some round $k$ will have low decision swap regret to both $\cC_A$ and $\cC_B$. The argument will closely follow that of Theorem \ref{thm:main}. Since both Alice and Bob are decision conversation calibrated, there will exist some round $k$ (assume for now that Alice communicates on round $k$) such that on most days, their actions $\eps$-agree --- that is, Alice's action at round $k$ is an $\eps$-approximate best response for Bob at round $k+1$, and vice versa (Lemma \ref{lem:agreement-round-action}). Our goal is to show that on round $k$, Alice has bounded decision swap regret simultaneously against $\cC_A$ \textit{and} against $\cC_B$. The first is simple: on round $k$, Alice has low decision conversation swap regret with respect to $\cC_A$, and thus she has low decision swap regret with respect to $\cC_A$ (Lemma \ref{lem:dec-sr}). To argue the second: on round $k+1$, Bob has bounded decision conversation swap regret with respect to $\cC_B$. In particular, this means that conditioned on Alice's action on round $k$, Bob's actions are competitive against any policy in $\cC_B$\footnote{Notice that the decision conversation swap regret condition is in fact stronger, since it guarantees that Bob's actions are competitive conditioned on \textit{both} Alice's action on round $k$ \textit{and} Bob's action on round $k+1$. We will only use the weaker ``external" regret guarantee at this step.}. We will additionally show that since they agree, Alice's actions at round $k$ obtain similar utility to Bob's actions at round $k+1$ (Lemma \ref{lem:agreement-regret-action}). Thus, conditioned on Alice's action on round $k$, Alice's actions are also competitive against any policy in $\cC_B$. Since this is true for any action that Alice chooses, Alice must also have low decision swap regret with respect to $\cC_B$ on this round.

\begin{theorem}\label{thm:decision-online}
Suppose Alice has $(f_A^S, \cC_A)$-decision conversation swap regret and $f_A$-decision conversation calibration. Similarly, suppose Bob has $(f_B^S, \cC_B)$-decision conversation swap regret and $f_B$-decision conversation calibration. If they engage in Protocol \ref{alg:collaboration-protocol-decisions} for $T$ days, with $K$ rounds each day, then there exists a round $k$ of the protocol such that the transcript $\pi^{1:T,k}$ has $(\max\{\lambda_A, \lambda_B\}, \cC_A\cup\cC_B)$-decision swap regret, where:
\[
\lambda_{A} \leq |\cA| f^S_A\left(\frac{T}{|\cA|}\right) + L|\cA|^2 f_A\left(\frac{T}{|\cA|^2}\right) + 2T\left(\frac{1}{(K-1)}+\beta(T)\right)^{1/2}
\]
and
\[
\lambda_B \leq |\cA| f^S_B\left(\frac{T}{|\cA|}\right) + L|\cA|^2 f_B\left(\frac{T}{|\cA|^2}\right) + 2T\left(\frac{1}{(K-1)}+\beta(T)\right)^{1/2}
\]
Here, $\beta(T) = \frac{L|\cA|^2}{T}\left( f_A\left( \frac{T}{|\cA|^2} \right) + f_B\left( \frac{T}{|\cA|^2} \right) \right)$. 
\end{theorem}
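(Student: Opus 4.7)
The plan follows the informal roadmap given just before the theorem statement, adapting the strategy of Theorem \ref{thm:main} from the real-valued setting to the decision-theoretic setting. The main obstacle is that, unlike the real-valued case where swap regret implies calibration implies agreement, here decision swap regret does not on its own give a handle on agreement, so we must use the decision conversation calibration hypothesis to obtain agreement and then carefully transfer regret guarantees between parties.

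\textbf{Step 1: Agreement round via decision conversation calibration.} First, I would establish a decision-theoretic analogue of Theorem \ref{thm:rounds}, showing that when both parties are $f_A$- and $f_B$-decision conversation calibrated, there must exist a round $k$ at which Alice's and Bob's \emph{best response actions} agree on most days. The natural notion of agreement here, as flagged in the paragraph preceding the theorem, is that $a^{t,k}_A$ is an $\eps$-approximate best response to $\hat y^{t,k+1}_B$ (and vice versa), which by Lipschitzness of $u$ follows from closeness of predictions. Invoking the fast agreement argument of \cite{collina2025tractable} (as extended to decision calibration in Lemma~\ref{lem:agreement-round-action} alluded to in the text), the pigeonhole-style bound yields a round $k$ whose action-disagreement rate is at most $O\bigl(\tfrac{1}{K-1} + \beta(T)\bigr)$, where $\beta(T) = \tfrac{L|\cA|^2}{T}(f_A(T/|\cA|^2) + f_B(T/|\cA|^2))$ captures the effective decision calibration error.

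\textbf{Step 2: Decision swap regret of Alice on round $k$ against $\cC_A$.} Without loss of generality, assume $k$ is odd (Alice's round). Alice's $(f_A^S, \cC_A)$-decision conversation swap regret gives, for each value $a'$ of Bob's previous action, a swap regret bound of $f_A^S(|T_B(k-1,a')|)$ against $\cC_A$ on the subsequence $T_B(k-1,a')$. Summing over $a' \in \cA$ and applying concavity of $f_A^S$ yields a decision swap regret bound of $|\cA| f_A^S(T/|\cA|)$ against $\cC_A$ on round $k$. Combining with Alice's $f_A$-decision conversation calibration (which provides the cross-calibration hypothesis of Theorem \ref{thm:decision-sr-bound} conditional on the benchmark policy's action), Theorem \ref{thm:decision-sr-bound} upgrades this to the bound $|\cA| f_A^S(T/|\cA|) + L|\cA|^2 f_A(T/|\cA|^2)$ against $\cC_A$.

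\textbf{Step 3: Decision swap regret of Alice on round $k$ against $\cC_B$.} This is the key step. Apply Step 2's reasoning to Bob at round $k+1$ to obtain that Bob's actions have decision swap regret at most $|\cA| f_B^S(T/|\cA|) + L|\cA|^2 f_B(T/|\cA|^2)$ against $\cC_B$ on round $k+1$. I now need to transfer this to Alice's actions on round $k$. For each $a \in \cA$, consider the sub-sequence $S_a = \{t : a^{t,k}_A = a\}$. On days in $S_a$ where Alice and Bob agree at round $k$/$k+1$, $u(a^{t,k}_A, y^t) = u(a^{t,k+1}_B, y^t)$; on the remaining days the per-day utility gap is bounded by $1$. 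Since the disagreement fraction is at most $\delta := \tfrac{1}{K-1} + \beta(T)$, summing contributes at most $T\delta$ to the utility gap, and by an analogue of Lemma~\ref{lem:agreement-regret-action} this transfer costs $O(T\sqrt{\delta})$ after optimizing the Lipschitz/agreement tradeoff. Thus, for each fixed $a$, $\sum_{t \in S_a} u(a^{t,k}_A,y^t)$ is within $O(T\sqrt{\delta})$ of $\sum_{t \in S_a} u(a^{t,k+1}_B,y^t)$, and the latter is competitive against every $c \in \cC_B$ restricted to $S_a$ by Bob's round-$k+1$ decision swap regret guarantee (using that the $\max$ over $c$ in the swap definition decomposes over the level sets $S_a$). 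Summing over $a$ yields Alice's decision swap regret against $\cC_B$ at round $k$, bounded by $|\cA| f_B^S(T/|\cA|) + L|\cA|^2 f_B(T/|\cA|^2) + 2T\sqrt{\delta}$.

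\textbf{Step 4: Conclusion.} Combining Steps 2 and 3, the transcript $\pi^{1:T,k}$ has $(\lambda, \cC_A \cup \cC_B)$-decision swap regret for $\lambda = \max\{\lambda_A, \lambda_B\}$ matching the claimed expressions; the symmetric case where $k$ is even is handled identically by swapping the roles of Alice and Bob. The main subtlety to watch out for is that in Step 3 the agreement guarantee is in terms of actions rather than predictions, so one must verify that the utility transfer really only depends on action agreement (which holds because $u$ is a function of $(a,y)$ alone) and not on prediction agreement; the Lipschitz hypothesis on $u$ is what lets us reduce prediction-closeness to action-agreement when deriving the agreement round in Step 1.
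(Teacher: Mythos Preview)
Your overall four-step architecture matches the paper's proof, but there are two concrete errors in how you fill in the steps.

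\textbf{Step 2 misuses Theorem \ref{thm:decision-sr-bound}.} You claim that Alice's $f_A$-decision conversation calibration ``provides the cross-calibration hypothesis'' of that theorem. It does not: decision conversation calibration (Definition~\ref{def:conversation-decision}) conditions on the pair $(a^{t,k}_A, a^{t,k-1}_B)$, whereas decision cross-calibration (Definition~\ref{def:decision-cross-calibration}) conditions on $(a^{t}, c(x^t))$ for each $c\in\cC_A$. These are different events, so Theorem~\ref{thm:decision-sr-bound} cannot be invoked. The paper's Step 2 is simpler: it just sums the $(f_A^S,\cC_A)$-decision conversation swap regret over $a'\in\cA$ (Lemma~\ref{lem:dec-sr}) to get the bound $|\cA|f_A^S(T/|\cA|)$ against $\cC_A$, with no calibration term. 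The $L|\cA|^2 f_A(T/|\cA|^2)$ term in the final $\lambda_A$ bound arises only in the \emph{even-$k$} case, from the transfer step, not from Alice's own round.

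\textbf{Step 3's transfer argument is too coarse.} You write that on agreement days $u(a^{t,k}_A,y^t)=u(a^{t,k+1}_B,y^t)$. This is false: $\eps$-agreement means $|u(a^{t,k}_A,\hat y^{t,k+1}_B)-u(a^{t,k+1}_B,\hat y^{t,k+1}_B)|\le\eps$, a statement about \emph{predicted} utilities, not realized ones; the actions may differ. The paper's Lemma~\ref{lem:agreement-regret-action} handles this by first using Bob's $f_B$-decision conversation calibration to replace $y^t$ by $\hat y^{t,k+1}_B$ (incurring the $L|\cA|^2 f_B(T/|\cA|^2)$ term), and only then applying $\eps$-agreement plus the $\delta$-fraction disagreement bound. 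This is the true source of the calibration term in $\lambda_B$. After this substitution, the key observation is that the level set $S_a=\{t:a^{t,k}_A=a\}$ is exactly $T_A(k,a)$, so Bob's $(f_B^S,\cC_B)$-decision \emph{conversation} swap regret on round $k+1$ applies directly to each $S_a$, giving $\sum_a f_B^S(|T_A(k,a)|)\le|\cA|f_B^S(T/|\cA|)$ without needing to first establish Bob's marginal swap regret on round $k+1$.
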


% Since Bob has low $\cC_B$-decision conversation swap regret at round $k+1$, conditioned on any subsequence $S^k_a$ of round $k$ on which Alice chose some action $a$, Bob has low $\cC_B$-decision swap regret (by definition). In particular, Bob has low $\cC_B$-decision external regret. Using agreement (and possibly decision calibration), we can argue that on $S_a^k$, Alice also has low $\cC_B$-decision external regret. This implies that over the entire round $k$, Alice has low $\cC_B$-decision swap regret. Thus, on round $k$, Alice has low $\cC_A$-decision swap regret \textit{and} low $\cC_A$-decision swap regret, as desired. 

Theorem \ref{thm:decision-online} shows that at some intermediate round, the transcript has bounded decision swap regret with respect to $\cC_A\cup\cC_B$. Our boosting result (Theorem \ref{thm:decision-weak-learning}) states that if, additionally, $\cC_A$ and $\cC_B$ are weak learners for $\cC_J$, then the transcript also has bounded decision swap regret with respect to $\cC_J$. Together, these results imply that at an intermediate round, the transcript has bounded decision swap regret with respect to $\cC_J$. 

One difficulty is that Alice and Bob will not know a priori which intermediate round will have these guarantees --- and so it is not clear a priori which downstream action to take on any day. However, we will use a similar argument as we did in the proof of Theorem \ref{thm:last-round} to argue that the transcript on the \textit{last} round inherits an external regret guarantee. That is, as long as Alice and Bob act according to the last round, they are sure to to achieve bounded external regret with respect to $\cC_J$.

\begin{theorem}\label{thm:decision-final-regret}
    Fix a utility function $u:\cA\times\cY\to[0,1]$. Let $\cC_J$ be a policy class over the joint feature space $\cX$. Let $\cC_A = \{c_A:\cX_A\to\cA\}$ and $\cC_B = \{c_B:\cX_B\to\cA\}$ be policy classes over $\cX_A$ and $\cX_B$ respectively. Fix a transcript $\pi^{1:T,1:K}$ generated via Protocol \ref{alg:collaboration-protocol-decisions}. If:
    \begin{itemize}
        \item Alice has $(f^S_A,\cC_A)$-decision conversation swap regret and $f_A$-decision conversation calibration
        \item Bob has $(f^S_B,\cC_B)$-decision conversation swap regret and $f_B$-decision conversation calibration
        \item $\cC_A$ and $\cC_B$ jointly satisfy the $w(\cdot)$-weak learning condition with respect to $\cC_J$ 
    \end{itemize}
    Then, there exists a round $k$ of the protocol such that the transcript $\pi^{1:T,k}$ has $\left(2Tw\inv\left(\frac{\max\{\lambda_A, \lambda_B\}}{T}\right), \cC_J\right)$-decision swap regret, whenever the inverse of $w$ exists.
    Moreover, on the last round $K$, the transcript $\pi^{1:T,K}$ satisfies:
    \[
    \max_{c_J\in\cC_J} \sum_{t=1}^T u(c_J(x^t), y^t) - \sum_{t=1}^T u(a^{t,K}, y^t) \leq 2Tw\inv\left(\frac{\max\{\lambda_A, \lambda_B\}}{T}\right) + (K-1)T\beta(T)
    \]
    whenever the inverse of $w$ exists. Here, 
    \[
\lambda_{A} \leq |\cA| f^S_A\left(\frac{T}{|\cA|}\right) + L|\cA|^2 f_A\left(\frac{T}{|\cA|^2}\right) + 2T\left(\frac{1}{(K-1)}+\beta(T)\right)^{1/2},
\]
\[
\lambda_B \leq |\cA| f^S_B\left(\frac{T}{|\cA|}\right) + L|\cA|^2 f_B\left(\frac{T}{|\cA|^2}\right) + 2T\left(\frac{1}{(K-1)}+\beta(T)\right)^{1/2}
\]
where $\beta(T) = \frac{L|\cA|^2}{T}\left( f_A\left( \frac{T}{|\cA|^2} \right) + f_B\left( \frac{T}{|\cA|^2} \right) \right)$.
\end{theorem}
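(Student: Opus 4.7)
The plan is to invoke Theorem \ref{thm:decision-online} to locate a good intermediate round, apply the boosting result in Theorem \ref{thm:decision-weak-learning} to transfer the guarantee from $\cC_A\cup\cC_B$ to $\cC_J$, and finally prove a per-round monotonicity lemma showing that utility cannot drop by much from one round to the next, so that the final round $K$ inherits the regret bound.

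First, I would apply Theorem \ref{thm:decision-online}: it provides a round $k^*$ at which $\pi^{1:T,k^*}$ has $(\max\{\lambda_A,\lambda_B\},\cC_A\cup\cC_B)$-decision swap regret, with the $\lambda_A,\lambda_B$ quoted in the theorem statement. Plugging this transcript into Theorem \ref{thm:decision-weak-learning} and using the weak learning condition immediately yields the first claim: $\pi^{1:T,k^*}$ has $(2Tw^{-1}(\max\{\lambda_A,\lambda_B\}/T),\cC_J)$-decision swap regret. Since external regret is dominated by swap regret, the same bound controls external regret against $\cC_J$ at round $k^*$.

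The remaining task is to transfer this utility guarantee to the last round $K$. I would establish the following monotonicity statement: if Alice acts at round $k$ and Bob at round $k+1$, then
\[
\textstyle\sum_t u(a^{t,k+1}_B,y^t) \;\geq\; \sum_t u(a^{t,k}_A,y^t) \;-\; 2L|\cA|^2 f_B(T/|\cA|^2),
\]
and symmetrically with $A,B$ swapped. To prove it, fix a pair $(a,a')$ and the set $T(k,a,a')=\{t : a^{t,k}_A=a,\ a^{t,k+1}_B=a'\}$. Since $a' = \BR_u(\hat{y}^{t,k+1}_B)$, we have $u(a',\hat{y}^{t,k+1}_B)\geq u(a,\hat{y}^{t,k+1}_B)$ pointwise. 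Summing over $T(k,a,a')$, using linearity of $u$ to move the sum inside the second argument, and twice invoking Bob's decision conversation calibration together with the $L$-Lipschitz property of $u$, one obtains the conditional inequality with a $2Lf_B(|T(k,a,a')|)$ slack. Summing over all pairs $(a,a')$ and applying Jensen's inequality (by concavity of $f_B$) twice absorbs the conditioning into the global $|\cA|^2 f_B(T/|\cA|^2)$ term.

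Chaining this monotonicity inequality through the at most $K-1$ rounds between $k^*$ and $K$ gives a total utility loss of at most $(K-1)T\beta(T)$, and adding this to the swap regret bound at round $k^*$ delivers the claimed bound on round $K$. The main subtlety will be carefully converting the $L_\infty$ calibration deviation into a utility deviation via linearity and Lipschitzness of $u$, and ensuring that the two Jensen applications (one over Bob's action $a'$, one over Alice's action $a$) aggregate the per-pair $f_B$ terms into exactly the $|\cA|^2 f_B(T/|\cA|^2)$ expression appearing in $\beta(T)$; a naive union bound would lose additional factors of $|\cA|$.
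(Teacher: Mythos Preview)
Your proposal is correct and matches the paper's proof: invoke Theorem \ref{thm:decision-online} for the good round $k^*$, apply Theorem \ref{thm:decision-weak-learning} to boost to $\cC_J$, then chain a per-round utility monotonicity bound to reach round $K$. The only cosmetic difference is that the paper quotes this monotonicity as Lemma \ref{lem:agreement-action} (from \cite{collina2025tractable}) rather than re-deriving it inline from best-response optimality, linearity/Lipschitzness of $u$, and decision conversation calibration as you do; the argument and the resulting $(K-1)T\beta(T)$ slack are identical.
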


\subsection{Achieving Conversation Decision Cross Calibration Algorithmically}

Finally, we turn attention to an algorithm that obtains low decision conversation swap regret and low decision conversation calibration; this will allow us to instantiate our results with concrete regret bounds. We use the algorithm of \citet{lu2025sample}, which guarantees diminishing decision calibration and decision cross calibration error and thus, by Theorem \ref{thm:decision-sr-bound}, diminishing decision swap regret. 

\begin{theorem}[Theorem 2 of \citet{lu2025sample}]\label{thm:decision-sr-alg}
    Fix a utility function $u:\cA\times\cY\to[0,1]$. Fix a policy class $\cC$. There is an algorithm that with probability $1-\rho$, for any sequence of outcomes $y^1,...,y^T$, outputs predictions $\hat{y}^1,...,\hat{y}^T$ that are $f$-decision calibrated and $(f,\cC)$-decision cross calibrated, where:
    \[
    f(\tau) \leq O\left( \ln(d|\cA||\cC|T) + \sqrt{T \ln\left(\frac{d|\cA||\cC|T}{\rho}\right)} \right)
    \]
    for any $\tau\in[T]$.
    % and for every pair of actions $a,a'\in\cA$:
    % \[
    % f(|T(a,a')|) \leq O\left( \ln(d|\cA||\cC|T) + \sqrt{T \ln\left(\frac{d|\cA||\cC|T}{\rho}\right)} \right) 
    % \]
\end{theorem}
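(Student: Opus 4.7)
The plan is to cast the problem as a high-dimensional unbiased prediction problem in the style of \cite{noarov2023high}, and then apply their minimax-based algorithm to a carefully chosen collection of events. Decision calibration and decision cross-calibration are both linear constraints of the form $\sum_{t=1}^T E^t (\hat y^t - y^t)_j \approx 0$ where $E^t \in \{0,1\}$ is an indicator of an event that may depend on the learner's own prediction. For decision calibration the events are $E^t = \1[\BR_u(\hat y^t) = a]$ for each $a \in \cA$; for decision cross-calibration they are $E^t = \1[\BR_u(\hat y^t) = a, c(x^t) = a']$ for each $a,a' \in \cA$ and $c \in \cC$. The total number of (event, coordinate) pairs is $N = O(d |\cA|^2 |\cC|)$, which is exactly what appears inside the logarithm of the target bound.

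The algorithm I would use is a per-round minimax solver. Maintain weights $w^t \in \Delta_{2N}$ over signed constraints (each event paired with a $\pm$ sign and a coordinate), updated by multiplicative weights / Hedge with losses equal to the current round's contribution to that constraint's bias. Given $x^t$ and the current weights, solve the zero-sum game
\[
\min_{p \in \Delta(\cY)} \max_{w \in \Delta_{2N}} \E_{\hat y \sim p} \sum_{(E,j,\sigma)} w_{(E,j,\sigma)} \cdot \sigma \cdot E(\hat y, x^t)(\hat y_j - \tilde y_j),
\]
where $\tilde y$ is the true $y^t$ substituted in via the saddle-point argument of \cite{noarov2023high}. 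The key self-referential subtlety is that $E$ depends on the best-response action induced by $\hat y$; this is handled either by enumerating the finitely many possible best-response cells in $\cY$ (one per action) and solving a finite LP, or equivalently by playing a distribution $p^t$ and invoking a fixed-point / minimax theorem to guarantee the existence of a strategy with expected per-round value $\leq 0$.

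The regret analysis is then standard. Hedge over $2N$ constraints guarantees $\sum_t \langle w^t, \ell^t \rangle \geq \max_{w^\star} \sum_t \langle w^\star, \ell^t \rangle - O(\sqrt{T \log N})$. Combined with the per-round value-$\leq 0$ guarantee, this bounds the worst-case signed bias $\max_{(E,j,\sigma)} \sigma \sum_t E^t (\hat y^t - y^t)_j$ in expectation over the algorithm's randomness by $O(\sqrt{T \log N})$, plus a small $O(\log N)$ term from the initial weights. The randomized per-round play produces a martingale difference sequence bounded in $[-1,1]$; Azuma-Hoeffding together with a union bound over the $2N$ constraints converts the in-expectation bound into a $1-\rho$ high-probability bound with an additional factor of $\sqrt{\log(N/\rho)}$. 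Plugging in $N = O(d|\cA|^2|\cC|)$ and absorbing constants yields exactly the claimed rate $f(\tau) \leq O(\ln(d|\cA||\cC|T) + \sqrt{T\ln(d|\cA||\cC|T/\rho)})$, uniformly over all $\tau \in [T]$ (the uniformity is free because each calibration sum is a cumulative quantity and we bound the worst-case final value).

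The main obstacle is the self-referential dependence of the events $E^t$ on $\hat y^t$: one cannot simply run no-regret on the constraint side and have the learner play a decoupled best response, since changing $\hat y^t$ changes which events fire. The resolution, which is the heart of the \cite{noarov2023high} framework, is the observation that the per-round game has value at most $0$ when the learner's strategy ranges over distributions on $\cY$ rather than deterministic predictions --- this follows from Sion's minimax theorem applied to the bilinear form above, using the fact that $y^t \in [0,1]^d$ and linearity of the bias functional in $\hat y$. Once this per-round guarantee is in place, the rest of the argument is a routine combination of Hedge regret and martingale concentration.
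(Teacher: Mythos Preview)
This theorem is not proved in the paper at all: it is quoted verbatim as Theorem~2 of \citet{lu2025sample} and simply invoked as a black box. There is therefore no ``paper's own proof'' to compare your proposal against.

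That said, your reconstruction is a faithful sketch of how the underlying algorithm (the \cite{noarov2023high} framework, as extended by \cite{lu2025sample}) actually works: reduce decision calibration and decision cross-calibration to a collection of $N = O(d|\cA|^2|\cC|)$ signed bias constraints, run Hedge over those constraints, and at each round solve the induced zero-sum game to guarantee nonpositive value, then finish with Azuma and a union bound. One small correction: your justification that the per-round game has value at most zero via ``linearity of the bias functional in $\hat y$'' is not quite right, since the event $\1[\BR_u(\hat y)=a]$ is discontinuous in $\hat y$. The correct argument is that once the constraint player's mixture $w$ is fixed, the learner's objective is linear in the \emph{distribution} over predictions, and the learner can restrict attention to finitely many candidate predictions (one per best-response cell in $\cY$), so the minimax theorem for finite games applies; the value is then shown to be at most zero by an explicit construction of a response that zeroes out the weighted bias regardless of the realized $y^t$. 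With that fix, your outline matches the source argument.
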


To guarantee diminishing decision conversation swap regret and decision conversation calibration, we instantiate a copy of this algorithm for each pair of rounds $k$ and actions $a$. On round $k$ of day $t$, we call on the copy corresponding to that round and the action chosen in the previous round on that day. This gives us precisely what we want: diminishing decision swap regret and decision calibration, conditioned on every round \textit{and} the most recently communicated action. This reduction is formalized in Algorithm \ref{alg:algorithm-decisions} (here, we take the perspective of Alice; Bob's is symmetric).

\begin{algorithm}[ht]
\begin{algorithmic}
    \STATE {\bf Input} Algorithm $M$, policy class $\cC$
    \vspace{0.5em}
    
    For every odd $k\in[K]$ and $a\in\cA$, instantiate a copy of $M$, called $M_{k,a}$. For the first round $k=1$, instantiate a copy of $M$, called $M_1$.

    Let $\pi^{1:t,k|a}$ denote the transcript on round $k$ up until day $t$, restricted to $\{t:a^{t,k-1}=a\}$, the subsequence where the previously communicated action was $a$.

    Let $M(\pi^{1:t,k|a}, \cC)$ denote the output of $M$ given this transcript. 
    
    \FOR{each day $t = 1, \ldots, T$}
        \STATE Receive $x^t_A$
        \STATE Make prediction $\hat{y}^{t,1}_A = M_{1}(\pi^{1:t-1,1}, \cC)$
        \STATE Send to Bob $a^{t,1}_A = \BR_u(\hat{y}^{t,1}_A)$

        \FOR{each odd round $k = 3, 5, \ldots,K$}
            \STATE Observe Bob's action from the previous round $a^{t,k-1}_B$
            \STATE Make prediction $\hat{y}^{t,k}_A = M_{k, a^{t,k-1}_B}(\pi^{1:t-1,k|a^{t,k-1}_B}, \cC)$
            \STATE Send to Bob $a^{t,k}_A = \BR_u(\hat{y}^{t,k}_A)$
        \ENDFOR
        \STATE{Observe $y^t \in \cY$.}
    \ENDFOR
\end{algorithmic}
\caption{A reduction from a decision conversation swap regret and decision conversation calibration algorithm to a decision cross calibration algorithm}  \label{alg:algorithm-decisions}
\end{algorithm}

\begin{theorem}\label{thm:decision-alg}
    Fix a utility function $u:\cA\times\cY\to[0,1]$. Fix a policy class $\cC$. With probability $1-\rho$, Algorithm \ref{alg:algorithm-decisions}, instantiated with the algorithm of Theorem \ref{thm:decision-sr-alg} and $\cC$, obtains $(f^S, \cC)$-decision conversation swap regret and $f$-decision conversation calibration for:
    \[
    f^S(\tau) \leq O\left( L|\cA|^2\ln(d|\cA||\cC|T) + L|\cA| \sqrt{T \ln\left(\frac{dK|\cA||\cC|T}{\rho}\right)} \right)
    \]
    and
    \[
    f(\tau) \leq O\left( \ln(d|\cA||\cC|T) + \sqrt{T \ln\left(\frac{dK|\cA||\cC|T}{\rho}\right)} \right)
    \]
    for any $\tau\in[T]$.
\end{theorem}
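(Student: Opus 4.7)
\begin{proofof}{Theorem \ref{thm:decision-alg}}
The proof plan is a direct reduction argument with a union bound. Algorithm \ref{alg:algorithm-decisions} instantiates one copy $M_{k,a}$ of the base algorithm from Theorem \ref{thm:decision-sr-alg} for each pair $(k,a)$ with $k$ an odd round in $\{3,\ldots,K\}$ and $a \in \cA$, plus a single copy $M_1$ for the first round. The total number of copies is at most $K|\cA|$. Instantiate each copy with failure parameter $\rho/(K|\cA|)$; by a union bound, with probability at least $1-\rho$, every copy simultaneously satisfies its own decision calibration and decision cross calibration guarantees from Theorem \ref{thm:decision-sr-alg}.

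Next, I would unpack the conditioning structure. Fix an odd round $k \geq 3$ and an action $a' \in \cA$; let $T_B(k-1, a') = \{t : a^{t,k-1}_B = a'\}$ be the subsequence on which $M_{k,a'}$ is actually invoked. By construction, the predictions $\hat y_A^{t,k}$ for $t \in T_B(k-1,a')$ are exactly the outputs of $M_{k,a'}$ on the transcript $\pi^{1:t, k \mid a'}$, so Theorem \ref{thm:decision-sr-alg} applied to this copy yields, for every $a \in \cA$,
\[
\left\|\sum_{t \in T_B(k-1,a')} \1[a^{t,k}_A = a](\hat y^{t,k}_A - y^t)\right\|_\infty \leq f\bigl(|T_B(k-1,a')|\bigr)
\]
with $f(\tau) \leq O(\ln(d|\cA||\cC|T) + \sqrt{T \ln(dK|\cA||\cC|T/\rho)})$, and the analogous $(f,\cC)$-decision cross calibration bound. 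The first display is exactly the $f$-decision conversation calibration condition of Definition \ref{def:conversation-decision} (after noting that $|T_B(k-1,a')| \leq T$ and $f$ is concave/monotone). The first round copy $M_1$ handles $k=1$ trivially since there is no preceding action.

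To obtain the decision conversation swap regret bound, I would apply Theorem \ref{thm:decision-sr-bound} within each conditioning class $T_B(k-1, a')$: on that subsequence, $M_{k,a'}$ guarantees $f$-decision calibration and $(f, \cC)$-decision cross calibration, and therefore the best-response actions $a^{t,k}_A = \BR_u(\hat y^{t,k}_A)$ have $(f^S, \cC)$-decision swap regret on the subsequence, with
\[
f^S(\tau) \leq L|\cA|\, f\!\left(\frac{\tau}{|\cA|}\right) + L|\cA|^2\, f\!\left(\frac{\tau}{|\cA|^2}\right).
\]
Plugging in the bound on $f$ and simplifying (absorbing constants and using $\tau \leq T$) gives the claimed decision conversation swap regret rate of $O(L|\cA|^2 \ln(d|\cA||\cC|T) + L|\cA|\sqrt{T \ln(dK|\cA||\cC|T/\rho)})$. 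This is precisely $(f^S, \cC_A)$-decision conversation swap regret in the sense of Definition \ref{def:DCSR}, since that definition quantifies the guarantee on each subsequence $T_B(k-1, a')$.

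The only subtlety I expect is the bookkeeping on the union bound and on how the concave error function composes under the subsequence restriction --- in particular, the $T$ inside the logarithm comes from the total horizon (not from $\tau$), which is harmless because the base algorithm already takes $T$ as a parameter and the dependence is only logarithmic. No new ideas are needed beyond what is already in Theorems \ref{thm:decision-sr-alg} and \ref{thm:decision-sr-bound}; this is a clean reduction.
\end{proofof}
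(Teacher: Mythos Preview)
Your proposal is correct and follows essentially the same approach as the paper: instantiate one copy of the base algorithm per (round, previous-action) pair, union bound over the at most $K|\cA|$ copies with per-copy failure probability $\Theta(\rho/(K|\cA|))$, then use Theorem~\ref{thm:decision-sr-alg} to get decision calibration and cross calibration on each subsequence and Theorem~\ref{thm:decision-sr-bound} to convert these into decision swap regret on each subsequence, which are exactly the conversation versions of these conditions. The paper's own proof is slightly terser but the structure is identical.
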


To end this section, we instantiate Theorem \ref{thm:decision-final-regret} with the algorithmic bounds. As before, we face a tradeoff in the choice of $K$, the length of the conversation. We show that for appropriately chosen $K$, we guarantee sublinear regret bounds with respect to $\cC_J$. 

\begin{theorem}\label{thm:decision-final-bounds}
    Fix a utility function $u:\cA\times\cY\to[0,1]$. Let $\cC_J$ be a policy class over the joint feature space $\cX$. Let $\cC_A = \{c_A:\cX_A\to\cA\}$ and $\cC_B = \{c_B:\cX_B\to\cA\}$ be policy classes over $\cX_A$ and $\cX_B$ respectively. Suppose Alice and Bob interact via Protocol \ref{alg:collaboration-protocol-decisions}. 
    If:
    \begin{itemize}
        \item Both Alice and Bob use Algorithm \ref{alg:algorithm-decisions}, instantiated with the algorithm of Theorem \ref{thm:decision-sr-alg} and policy classes $\cC_A$ and $\cC_B$ respectively
        \item $\cC_A$ and $\cC_B$ jointly satisfy the $w(\cdot)$-weak learning condition with respect to $\cC_J$
    \end{itemize}
    Then, with probability $1-\rho$, the transcript $\pi^{1:T,K}$ on the last round $K$ satisfies:
    \begin{align*}
        \max_{c_J\in\cC_J} \sum_{t=1}^T u(c_J(x^t), y^t) - \sum_{t=1}^T u(a^{t,K}, y^t) &\leq 2Tw\inv\left( O\left( \frac{L|\cA|^3 \ln\left( \frac{dK|\cA||\cC_A||\cC_B|T}{\rho} \right)}{T^{1/4}} + \frac{1}{\sqrt{K-1}} \right) \right) \\ & \ \ \ \ + O\left( (K-1)L|\cA|^2 \ln\left( \frac{dK|\cA||\cC_A||\cC_B|T}{\rho} \right) \sqrt{T}  \right)
    \end{align*}
    whenever the inverse of $w$ exists.

    Moreover, if $K=\omega(1)$ and $K=o(\sqrt{T})$, then the transcript $\pi^{1:T,K}$ satisfies, for some constant $\alpha\in(0,1)$:
    \begin{align*}
        \max_{c_J\in\cC_J} \sum_{t=1}^T u(c_J(x^t), y^t) - \sum_{t=1}^T u(a^{t,K}, y^t) &\leq 2Tw\inv\left( O\left( \frac{L|\cA|^3 \ln\left( \frac{d|\cA||\cC_A||\cC_B|T}{\rho} \right)}{T^{1/4}} + o(1) \right) \right) \\ & \ \ \ \ + O\left( L|\cA|^2 \ln\left( \frac{d|\cA||\cC_A||\cC_B|T}{\rho} \right) T^{\alpha}  \right) \\
        &\leq o(T)
    \end{align*}
    That is, the transcript at the last round achieves sublinear external regret with respect to $\cC_J$. 
\end{theorem}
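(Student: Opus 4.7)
The plan is to combine Theorem \ref{thm:decision-alg} (which furnishes explicit rates for decision conversation swap regret and decision conversation calibration produced by Algorithm \ref{alg:algorithm-decisions}) with Theorem \ref{thm:decision-final-regret} (which converts such rates plus the weak learning condition into a regret bound with respect to $\cC_J$ on the last round). The proof is essentially a bookkeeping exercise: instantiate the abstract error functions $f_A^S, f_B^S, f_A, f_B$ in Theorem \ref{thm:decision-final-regret} with the bounds guaranteed by Theorem \ref{thm:decision-alg}, then simplify.

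First I would apply Theorem \ref{thm:decision-alg} to both Alice and Bob, yielding
\[
f_A^S(\tau), f_B^S(\tau) = \tilde O\Bigl(L|\cA|^2 + L|\cA|\sqrt{T \ln(dK|\cA||\cC|T/\rho)}\Bigr),
\]
\[
f_A(\tau), f_B(\tau) = \tilde O\Bigl(\sqrt{T \ln(dK|\cA||\cC|T/\rho)}\Bigr),
\]
where $|\cC|$ stands for $\max(|\cC_A|,|\cC_B|)$, and a union bound over Alice and Bob costs only a constant factor in the failure probability. Then I would compute $\beta(T)$ from its definition in Theorem \ref{thm:decision-final-regret}: substituting gives
\[
\beta(T) = \tfrac{L|\cA|^2}{T}\bigl(f_A(T/|\cA|^2) + f_B(T/|\cA|^2)\bigr) = \tilde O\!\left(\tfrac{L|\cA|^2}{\sqrt{T}}\sqrt{\ln(dK|\cA||\cC|T/\rho)}\right).
\]

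Next I would plug these into the formulas for $\lambda_A$ and $\lambda_B$. The first two terms of $\lambda_A$ contribute $\tilde O\bigl(L|\cA|^3 \sqrt{T\ln(dK|\cA||\cC|T/\rho)}\bigr)$. The third term $2T(\tfrac{1}{K-1} + \beta(T))^{1/2}$, after substituting the bound on $\beta(T)$, contributes $\tilde O\bigl(T/\sqrt{K-1} + L^{1/2}|\cA| \, T^{3/4}\,\ln^{1/4}(\cdot)\bigr)$; the latter dominates the first two terms for large $T$. Dividing by $T$, this gives
\[
\tfrac{\max\{\lambda_A,\lambda_B\}}{T} = \tilde O\!\left(\tfrac{L|\cA|^3 \ln(dK|\cA||\cC_A||\cC_B|T/\rho)}{T^{1/4}} + \tfrac{1}{\sqrt{K-1}}\right),
\]
matching the argument of $w^{-1}$ in the theorem statement. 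The additive term $(K-1)T\beta(T)$ in Theorem \ref{thm:decision-final-regret} becomes $\tilde O\bigl((K-1)L|\cA|^2 \sqrt{T}\,\ln(dK|\cA||\cC_A||\cC_B|T/\rho)\bigr)$, again matching the statement. Assembling these into Theorem \ref{thm:decision-final-regret} delivers the first inequality.

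For the second (sublinearity) part, I would choose $K = \omega(1)$ with $K = o(\sqrt{T})$. Then $1/\sqrt{K-1} = o(1)$ and $T^{-1/4}\ln(\cdot) = o(1)$, so the argument of $w^{-1}$ tends to zero; by continuity and strict monotonicity of $w$ (and hence $w^{-1}$), $w^{-1}(\cdot) = o(1)$, making $2T w^{-1}(\cdot) = o(T)$. The additive $(K-1)$-term is $\tilde O(K\sqrt{T}\ln(\cdot))$, which is $o(T)$ by the choice $K = o(\sqrt{T})$; absorbing polylog factors into $T^{\alpha}$ for some $\alpha\in(0,1)$ yields the claimed form. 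I do not expect any substantive obstacle here beyond careful tracking of constants and log factors: the conceptual work has already been done in Theorems \ref{thm:decision-weak-learning}, \ref{thm:decision-online}, \ref{thm:decision-final-regret}, and \ref{thm:decision-alg}; the only mild subtlety is to verify that the union bound over Alice's and Bob's invocations of Algorithm \ref{alg:algorithm-decisions} does not inflate the $\ln(K/\rho)$ factor beyond what appears in the statement.
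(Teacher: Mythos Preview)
Your proposal is correct and follows essentially the same approach as the paper: instantiate Theorem \ref{thm:decision-alg} for Alice and Bob (with a union bound costing a factor of $2$ in $\rho$), plug the resulting $f^S_\bullet, f_\bullet$ into the expressions for $\beta(T)$ and $\lambda_A,\lambda_B$ from Theorem \ref{thm:decision-final-regret}, simplify, and then for the second part use $K=\omega(1)$, $K=o(\sqrt{T})$ together with continuity and strict monotonicity of $w$ (the paper packages this last step as Lemma \ref{lem:limit}).
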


\section{Collaboration in the Batch Setting}
\label{sec:batch}
Thus far, we have studied the \textit{online} setting, in which participants jointly predict the label on a new adversarially chosen example every day. However, we can also study this form of  collaborative learning in the simpler \textit{distributional} or \textit{batch} setting, where Alice and Bob both receive different features $x_A$ and $x_B$  drawn from a distribution. They will train on a sample of such data (paired with labels) at training time, and then at test time (when labels are unavailable) will be evaluated on examples drawn from the same distribution. This setting is strictly easier than the online adversarial setting, and hence admits (morally if not notationally) simpler algorithms which we develop in this section. 

At a high level, the algorithm here will proceed over $R$ rounds that we index by $r$. In the training phase, Alice and Bob will iteratively build their models as follows:

\begin{itemize}
\item Bob will begin by generating an initial model and sending his model's initial predictions for all of the points in the training set, $P^0$, to Alice. These predictions will be discretized to a finite range.
\item In the next round, Alice will refine her model according to Bob's predictions:
\begin{itemize}
\item First, she will bucket her data into level sets \textit{according to Bob's predictions}. ``Level set $v$'' corresponds to all the points in the training set for which Bob predicted $v$. 
\item On each level set $v$ in parallel, Alice will run an internal boosting procedure which we call \lsboost\ with respect to her hypothesis class (defined only on her own features), generating a model $\tilde{f}_A^{1,v}$. This internal boosting process is equivalent to the LSBoost algorithm from \cite{globus-harris2023multicalibration}. In essence, it repeatedly performs squared error regression over $\cH_A$ on Alice's own level sets, until doing so no longer substantially improves squared error. This procedure results in a (discretized) ensemble of models from $\cH_A$ defined in parallel for each of the $v$ level sets.
\item For each level set $v$, Alice will look at the error of her resulting model on that level set $f_A^{1,v}$, and compare it to the error of Bob's (constant) predictor $v$ constrained to that level set. Depending on whether her predictions improve substantially over Bob's, she will either set $f_A^{1,v}$ to $\tilde{f}_A^{1,v}$ or to the constant predictor $v$ (i.e. ``agreeing'' with Bob's predictions on that levelset). 
\item She will define her final predictor at the end of round $1$, $f_A^1$, as an ensemble of these models such that if a datapoint $x = (x_A,x_B)$ is given predicted label $v$ by Bob's initial predictor, $f_A^1(x_A)=f_A^{1,v}(x_A)$.
\end{itemize}
\item She will then evaluate $f_A^1$ on every point in the training sample and send the resulting predictions $P^1$ to Bob. 
\item In the next round, Bob will run a symmetric procedure using Alice's predictions $P^1$. They will continue in this manner in rounds until the predictions have converged to agreement.
\end{itemize}

During this process, Alice and Bob will separately maintain transcripts of the models which they have iteratively built across the rounds of communication. At test time, to make a prediction on a new datapoint with features  $x = (x_A, x_B)$ partitioned across Alice and Bob, they will again engage in an interactive conversation, at each round making predictions according to the models recorded in the transcript that was generated during training.  This will proceed as follows:

\begin{itemize}
\item Bob will look at his model transcript, extract his initial model, and evaluate it on $x_B$. He will then send the prediction to Alice. 
\item Alice will extract from her transcript the model $f^{1,v^*}$ corresponding to the value of Bob's prediction $v^*$, and send her prediction $f^{1,v^*}(x_A)$ to Bob.
\item They will proceed in this manner across rounds until they have evaluated the final models stored in their transcripts, whose predictions they will output.
\end{itemize}

\subsection{Preliminaries for the Batch Setting}

Formally, as in the online setting, Alice and Bob  have feature spaces $\cX_A$ and $\cX_B$ and there is a real-valued outcome space $\cY$. We now additionally assume that there is a joint distribution $\cD \in \Delta (\cX_A \times \cX_B \times \cY)$ from which examples are drawn. We will write $\cD_{A}$ to denote the marginal distribution over $(\cX_A,\cY)$ and $\cD_{B}$ to denote the marginal distribution over $(\cX_B, \cY)$.

\subsubsection{Training Phase}

In the training phase, a finite training set $S = \{(x_A^i, x_B^i, y_i)\}_{i \in [n]} \sim \cD^n$ of size $n$ is sampled i.i.d, where we write $[n]$ to denote $\{1,\ldots,n\}$. Alice is given $S_A = \{(x_A^i, y^i)\}_{i \in [n]}$ and Bob is given $S_B = \{(x_B^i, y^i)\}_{i \in [n]}$. Importantly, $i$ here indexes over the \textit{same instances} whose features are split between parties: $x^i = (x_A^i, x_B^i)$. Over their rounds of communication, Alice and Bob's models will be generated by ensembling hypotheses $h_A$ and $h_B$ respectively in hypothesis classes $\cH_A$ and $\cH_B$, where $h_A:\cX_A \rightarrow \mathbb{R}$ and $h_B:\cX_B \rightarrow \mathbb{R}$. In particular, we will assume that they generate these hypotheses via access to a \textit{squared error regression oracle}:

%During training, Alice and Bob converse over $t$ rounds of an algorithm which we call COLLABORATE, generating models $f_A^t$ and $f_B^t$ respectively at each round and sending each other their predictions on their shared training instances $P_A^t = \{f_A^t(\aliceData)\}_{(\aliceData, y) \in \aliceSample}$ or $P_B^t = \{f_B^t(\bobData)\}_{(\bobData, y) \in \bobSample}$ respectively depending on who is communicating in the round $r$.

\begin{definition}
\label{def:batch-oracle}
We say $\cO_{\cH}: \Delta (\cX \times \cY))\rightarrow (\cX \rightarrow \cY)$ is a squared error regression oracle for a class of real-valued functions $\cH$ if for every distribution $\cD \in \Delta (\cX \times \cY),$ $\cO_{\cH}$ outputs the squared-error minimizing function $h \in \cH$ over the distribution. I.e., if $h = \cO_{\cH}(\cD)$ then 
\[
h \in \arg\min_{h' \in \cH} \E_{(x,y) \sim \cD} \left[ (h'(x)-y)^2 \right].
\]

When we feed such an oracle a sample $S = (x_i, y_i)_{i\in[n]}$, we will interpret these expectations as over the sample.
\end{definition}

Across their interactions, Alice and Bob will round their predictions to some discretization, defined by a discretization parameter $m \in \bbZ^+$. We will write $[1/m] := \{0, \frac{1}{m}, \ldots, \frac{m-1}{m}, 1\}$ be a discretization of the range $[0,1]$ into multiples of $1/m$. They will round their predictions as follows: 

\begin{definition}[$\round(h; m)$]
    \label{def:round}
    Let $\cF$ be the collection of all real valued functions from features $\cX \rightarrow \bbR$. Then $\round$ is a function $\round:\cF \times \bbZ^+ \rightarrow \cF$ where $\round(h;m)$ outputs a function $\rh$ such that 
    \[
    \rh(x) = \min_{v \in [1/m]} \vert h(x) - v \vert.
    \]
\end{definition}

During  training, Alice and Bob will separately generate \textit{model transcripts} of the models they have generated so far, which they will use to construct predictions of the model out of sample. In essence, these model transcripts are simply a collection of models in $\cH_A$ and $\cH_B$ respectively, with the exception that in some rounds, their algorithm will generate $\bot$ instead of a model (indicating that they are deferring to their counter-party's prediction). 

\begin{definition}[Transcript]
Let $\cH_A$ be Alice's hypothesis class and let $m \in \bbZ^+$. Over her $R$ rounds of interaction with Bob, she will within each round run an internal algorithm in parallel $m$ times. This internal algorithm will either return $\bot$ or run for at most $K \in \bbZ^+$ phases. Over the course of these interactions she will generate her \textit{model transcript}, which is an object over both her interactions with Bob and her internal algorithm:
\[
\Pi_A^R = \{\pi_A^0, \ldots, \pi_A^R\} \in \left( \{\bot\} \cup \cH_A^{Km} \right)^{m R},
\]
where for each round \(r \in [R]\), we have
\[
\pi_A^r = \{\pi_A^{r,v}\}_{v \in [1/m]}
\]
and each of these sub-transcripts $\pi^{r,v}_A$ describes the (at most) $K$ phases of each of Alice's internal algorithm:\footnote{The internal algorithm will run for a variable number of phases across the rounds of the collaborative algorithm between Alice and Bob, but we can assume this variable number of phases is bounded by $K$. For the sake of notation, we can imagine instantiations with fewer phases to be padded with $\bot$ to make them length $K$.}
\[
\pi^{r,v}_A \in \left\{ \bot \right\} \cup \left\{ (\pi^{r,v,k}_A)_{k \in [K]} \right\},
\]
and each
\[
\pi^{r,v,k}_A = (h_A^{r,v,k,v'})_{v' \in [1/m]}, \quad \text{with} \quad h_A^{r,v,k,v'} \in \cH_A.
\]
Bob's transcript $\Pi_B^R$ will be defined analogously.
\end{definition}

Alice and Bob act in alternating rounds, Alice in even rounds and Bob in odd ones. At the end of each round $r$, the active player sends the other their current predictions on the training set (which are all discretized to lie in $[1/m])$. 

\begin{definition}[Prediction at round $r$]
We will write $P^r \in [1/m]^n$ to be the $n$ predictions generated at round $r$ for each $x^i \in S$. If $r$ is odd, $P^r = P^r_A$ are Alice's predictions, and if $r$ is even, $P^r = P^r_B$ are Bob's predictions. In our analyses, we will denote the $i$th prediction in the vector $P^r$ as $P^{r,i}$.

At the end of $R$ rounds, Alice and Bob will know a collection of predictions
\[
\cP^R = (P^0, \ldots, P^R)
    = \begin{cases}
    P^0_B, \ldots, P^{R-1}_B, P^R_A & \text{if } R \text{ is even}, \\
    P^0_B, \ldots, P^{R-1}_A, P^R_B & \text{else}.
\end{cases}
\]
\end{definition}

\begin{remark}
    Note that the dimension of these predictions $P^r$ is different than in the online setting. There, only a \textup{single} prediction $\hy^{r,k}$ is communicated between the players in their conversation. Here, we have a \textup{set} of $n$ predictions communicated in each round --- one for each point in the training set. 
\end{remark}

%Note that we assume here that the points in the training set are ordered so that both parties can identify which prediction in $P^T$ corresponds to which data point: the $i$'th prediction always corresponds to the $i$'th datapoint in the training sample $x^i = (x^i_A,x^i_B)$.

%\begin{assumption}
% The $i$th prediction  in $P^t$ always corresponds to the datapoint $x^i = (x^i_A, x^i_B)$ in the sample. 
%\end{assumption}

At round $r$, Alice will generate a model $f_A^r$. In the training algorithm defined in Section \ref{sec:batch-train}, this model will be only well-defined defined for the training sample; in Section \ref{sec:batch-test} we will discuss how to generate predictions on new data using the training transcript.

\begin{definition}[Model at round $r$]
At round $r$ of training, Alice will generate a model $f_A^r: \cX_A \times [1/m] \rightarrow [1/m]$ which is based on her datapoint and Bob's prediction from the previous round. In general this model will be invoked in contexts where Bob's prediction $v$ is clearly defined so we will write 
\[
f_A^r(\aliceData) = f_A^r(\aliceData,v).
\]
Bob's model $f_B^r$ will be defined analogously.
\end{definition}

\begin{definition}
\label{def:batch-functions}
At the final round $R$ of our collaboration algorithm COLLABORATE (Algorithm \ref{protocol:batch}), Alice and Bob will have two models $f^R_A$ and $f^R_B$ which will agree for all datapoints on both the training sample and at test time, so we can equivalently consider them as represented by a single model $f^R$. We will write $\cF^R$ to be the space of models which may be output by the collaboration algorithm on samples of size $n$, i.e., 
\[
\cF^R = \{f^R \vert f^R \leftarrow \text{COLLABORATE}((S_A,S_B),\aliceOracle,\bobOracle,m)\}_{(S_A,S_B) = \{(\aliceData^i,y^i),(\bobData^i,y^i)\}_{i \in [n]}},
\]
where $S_A$ and $S_B$ have been generated from a joint sample $S \in (\cX_A \times \cX_B \times \cY)^n.$
\end{definition}

\subsubsection{Test Time Evaluation}
Once Alice and Bob have completed training, they will have models $f^R_A$ and $f^R_B$ and model transcripts $\Pi^R_A$ and $\Pi^R_B$. However, their final models will be recursively defined in terms of their predictions in previous rounds. Thus, in order to evaluate $f^R$ on a new sample $(x_A,x_B)$, they will have to again interact over $R$ rounds, sending each other their predictions $\hy^r$ at each round, which will be computed based on their model transcripts $\Pi^R_A$ and $\Pi^R_B$. Note that here, since the prediction is for a single datapoint rather than a set of datapoints as it is in the training phase, we revert to the prediction notation used in the rest of the paper ($\hat{y}^r$ rather than $P^r$). This algorithm is formally described in Section \ref{sec:batch-test}.

% The reconstruction will at a high-level proceed as follows:

% \begin{itemize}
% \item Bob will begin by extracting $f^0_B = \round(h^0_B;m)$ from his transcript $\Pi^T_B$, and evaluate $\hy^0 = f^0_B(\bobData)$. He will then send $\hy^0$ to Alice.
% \item Alice will then examine her transcript for round $1$ and will retrieve the model $f^{1,\hy^0}$ which was used in training for all the datapoints in the level set corresponding to the prediction $\hy^0$. 
% \begin{itemize}
% \item This model will \textit{either} be exactly equal to Bob's prediction or it will itself be a model $\tilde{f}^{1,\hy^0}$ output from \lsboost\ (Algorithm \ref{protocol:ls-boost}). 
% \item If it is equal to Bob's prediction, she will predict $\hy^1=\hy^0$.
% \item Otherwise, she will have a sub-transcript $\pi_{r,v} = \{\pi_{r,v,k}\}_{k \in [K]}$ which will be a collection of models generated over the $K$ rounds of the \lsboost\ algorithm. She will examine this, and iteratively evaluate the models corresponding to the level sets of $\aliceData$ over these $K$ rounds, until she has a final prediction $\hy^1$.
% \end{itemize}
% \item She will then send $\hy^1$ to Bob.
% \item This process will continue over the $T$ rounds of the transcript, until both players have the final prediction $\hy^T$.
% \end{itemize}

%\sg{I will add a note about test-time compute here!}
\subsection{Batch Collaboration Algorithm}

Our algorithm will make use of level sets of Alice and Bob's model's (discretized) predictions on their own data as well as the level sets of \textit{each other}'s models. 

\begin{definition}[Level Sets]
\label{def:level-set}
Let $S_A$ be Alice's sample. Let Alice's predictions at round $r$ for each point in her sample $S_A$ be $\alicePreds{r} = \{P_A^{r,1}, \ldots, P_A^{r,n}\}$ and Bob's predictions at round $r$ be $\bobPreds{r} = \{P_B^{r,1}, \ldots, P_B^{r,n}\}$. Let $v \in \bbR$. We will say that 
\begin{align*}
\ls(S_A, P_A^r, v) &= \{\aliceData^i \vert P_A^{r,i} =v\}_{i \in [n]}\\
    &= \{\aliceData^i \vert \aliceModel{r}(\aliceData^i)=v\}_{i \in [n]}
\end{align*}
are Alice's $v$th level set on her own predictions. Similarly, we will call Alice's $v$th level set on \textit{Bob}'s predictions
\begin{align*}
\ls(S_A, P_B^r, v) &= \{\aliceData^i \vert P_B^{r,i} =v\}_{i \in [n]}\\
    &= \{\aliceData^i \vert \bobModel{r}(\bobData^i)=v\}_{i \in [n]}.
\end{align*}
\end{definition}

\begin{remark}
Note that the transcript at round $r$ is directly computable based only on Alice and Bob's knowledge of their and the other players' predictions $\alicePreds{r}$ and $\bobPreds{r}$---neither player has to recompute $\aliceModel{r}$ or $\bobModel{r}$, nor do they need access to the other players' features. 
\end{remark}

In general, for subroutines we use a subscript $\bullet$ to refer to either $A$ or $B$, depending on whose inputs the subroutine was called on, and a subscript $\circ$ to refer to the other player. With this notation in place, we can proceed to the algorithms. 

\subsubsection{Training Algorithm}
\label{sec:batch-train}

While training, Alice and Bob will run Algorithm \ref{protocol:batch}, COLLABORATE, on their training samples $(S_A, S_B)$. This algorithm proceeds in rounds, with Alice and Bob alternating who sends whom their most current predictions. In each round, the current player will call a subroutine \crossboost\ (Algorithm \ref{protocol:crossboost}), in which that player boosts their predictions in parallel on each of their datasets' level sets as defined \textit{by the other players' predictions}. This ``internal" boosting step which is done in parallel on each of these level sets is itself a boosting algorithm, which we call \lsboost\ (Algorithm \ref{protocol:ls-boost}), and is equivalent to the level set boosting algorithm from \citep{globus-harris2023multicalibration}: we restate it here as our parametrization is slightly different and to make our notational choices clear for the sake of our later analysis. At the end of the process, Alice and Bob will have a collection of individual model transcripts, which they will later use to evaluate the final model on new datapoints.

\begin{algorithm}[H]
\begin{spacing}{1.15}
\begin{algorithmic}
\STATE{ {\bf Alice's Input:} $\aliceOracle, \aliceSample, m$}
\STATE{ {\bf Bob's Input:} $\bobOracle, \bobSample, m$}
\STATE Let $h^0_B \in \bobOracle(S_B)$ and $f^0_B = \round(h^0_B; m)$.
\STATE Let $P^{-1} = \bot$ and $ \bobPreds{0} = \{\bobModel{0}(\bobData)\}_{(\bobData,y) \in \bobSample}$.
\STATE Bob sends $P^0 = \bobPreds{0}$ to Alice.
%\STATE Let $\err_{-1} = \infty$ and $\err_{0} = \E_{S_A}[(f^0(x) - y)^2]$.
\STATE Let $r=0, \Pi_A^0 = \emptyset,$ and $\Pi_B^0 = \{\pi_B^0\}=\{\{f_B^0\}\}$.
\WHILE{$P^r \ne P^{r-1}$}
%\igh{correct halting condition: stop if you didn't change your predictor at all and other player also didn't}
    \IF{$r$ is even}
        \STATE Alice plays, boosting her predictions on Bob's predictor's level sets:
        \begin{spacing}{0.8}
        \STATE \[
        \aliceModel{r+1}, \aliceTranscript{r+1} = \crossboost(\aliceSample, \aliceOracle, \bobPreds{r}, m)
        \]
        \end{spacing}
        \STATE Alice generates her predictions for this round, $\alicePreds{r+1} = \{ \aliceModel{r+1}(\aliceData)\}_{(\aliceData,y) \in \aliceSample}$
        \STATE Alice sends her updated predictions $P^{r+1} = \alicePreds{r+1}$ to Bob.
        \STATE Alice updates her model transcript, setting $\Pi_A^{r+1} = \Pi_A^{r} \cup \{\aliceTranscript{r+1}\}$.
        \STATE Bob does nothing, and sets $\bobModel{r+1}=\bobModel{r}$ and $\Pi_B^{r+1}=\Pi_B^{r}$.
    \ELSE
        \STATE Bob plays analogously, boosting his predictions on Alice's predictor's level sets:
        \begin{spacing}{0.8}
        \STATE \[
        \bobModel{r+1}, \bobTranscript{r+1}= \crossboost(\bobSample, \bobOracle, \alicePreds{r}, m)
        \]
        \end{spacing}
        \STATE Bob generates his predictions for this round, $\bobPreds{r+1} = \{ \bobModel{r+1}(\bobData)\}_{(\bobData,y) \in \aliceSample}$
        \STATE Bob sends his updated predictions $P^{r+1} = \bobPreds{r+1}$ to Alice.
        \STATE Bob updates his model transcript, setting $\Pi_B^{r+1} = \Pi_B^r \cup \{\bobTranscript{r+1}\}.$
        \STATE Alice does nothing, and sets $\aliceModel{r+1}=\aliceModel{r}$.
    \ENDIF
    \STATE $r = r+1$.
\ENDWHILE 
\STATE {\bf Alice's Output:} $\aliceModel{r}, \Pi_B^r$
\STATE {\bf Bob's Output:} $\bobModel{r}, \Pi_B^r$
\end{algorithmic}
\end{spacing}
\caption{COLLABORATE: Batch Collaboration Algorithm for Training}
\label{protocol:batch}
\end{algorithm}

\begin{algorithm}[H]
\caption{\textsc{CROSS-BOOST}}
\label{protocol:crossboost}
\begin{spacing}{1.15}
\begin{algorithmic}
\STATE \textbf{Input :} $\playerSample, \playerOracle, \playerPreds{r}, m$
\FOR {each $v \in [1/m]$}
   
    \STATE The player generates their $v$th level set on the other players' predictions $\playerPreds{r}$,% and evaluates their initial error on the model constrained to these sets: 
    \begin{spacing}{0.7}
    \[
    \playerSampleLS{r+1}{v} = \ls(\playerSample, \playerPreds{r}, v)
    \]
    \end{spacing}
    \STATE Using only their data constrained to this level set, they run the internal boosting algorithm, and evaluate their updated model's performance: 
    \begin{spacing}{0.7}
    \begin{align*}
    \playerModelLStilde{r+1}{v}, \tilde{\pi}^{r+1,v} &= \lsboost(\playerSampleLS{r+1}{v},
    \playerOracle, m)\\
    \widetilde{\err}^{r+1,v} &= \E_{(\playerData, y) \in \playerSampleLS{r+1}{v}}\left[(\playerModelLStilde{r+1}{v}(\playerData)-y)^2\right]
    \end{align*}
    \end{spacing}
    \STATE They then compare their updated model's performance to their counter-party's constant predictor, and determine which of the two to use as their final model:
    \STATE Let 
    \begin{spacing}{0.4}
    \[
    \err^v = \E_{(\playerData, y) \in \playerSampleLS{r+1}{v}}\left[(v-y)^2\right]
    \]
    \end{spacing}
    %\STATE Let $\Delta = \vert \playerSampleLS{r+1}{v} \vert/n $.
    %\IF{$\Delta \cdot (\err^v - \widetilde{\err}^{r+1,v}) > 1/m^2$ \igh{stopping cond may need updated}}
    \IF{$(\err^v - \widetilde{\err}^{r+1,v}) > 1/m^2$}
    \STATE $\playerModelLS{r+1}{v}(\playerData) = \playerModelLStilde{r+1}{v}$
    \STATE $\transcript{r+1,v} = \tilde{\pi}^{r+1,v}$
    \ELSE 
    \STATE $\playerModelLS{r+1}{v}(\playerData) = v$
    \STATE $\transcript{r+1,v} = \bot$
    \ENDIF
\ENDFOR
\STATE 
\STATE The player then ensembles their models on each of the level sets of the others' predictions and updates their transcript for the round:
\STATE \[
         \playerModel{r+1}(\playerData) = \sum_{v \in [1/m]} \1[x \in \playerSampleLS{r+1}{v} ] \cdot \playerModelLS{r+1}{v}(\playerData),
        \]
\STATE $\transcript{r+1} = \{ \transcript{r+1,v} \}_{v \in 1/m}$
\STATE \textbf{Output: } $\playerModel{r+1}, \transcript{r+1}$
\end{algorithmic}
\end{spacing}
\end{algorithm}

\begin{algorithm}[H]
\caption{\lsboost \quad [\cite{globus-harris2023multicalibration}]}
\begin{spacing}{1.2}
\begin{algorithmic}
\STATE \textbf{Input: } $\playerSample, \playerOracle, m$ 
\STATE Let $k=0$
\STATE Let $h_{\bullet}^{r,v,0} = \playerOracle(\playerSample)$ and $\lsSubtranscript{0}=\{h_{\bullet}^{r,v,0}\}$
\STATE Let $\playerModelBoost{k} = \round(h_{\bullet}^{r,v,0}; m^2)$ 
\STATE Let $\err_{-1} = \infty$ and $\err_0 = \E_{(\playerData,y) \sim \playerSample}[(h_{\bullet}^{r,v,0}(\playerData - y)^2]$
\WHILE{$\err_{k-1} - \err_{k} \ge 1/m^2$}
\FOR{each $v' \in [1/m^2]$}
\STATE $\playerSampleLSBoost{k+1} = LS(\playerSample, \playerModelBoost{k}, v')$
\STATE Let $\playerOracleModel{k+1} = \playerOracle(\playerSampleLSBoost{k+1})$.
\ENDFOR
\STATE The player ensembles their models:
\begin{spacing}{0.8}
\[\tilde{f}^{r,v,k+1}_{\bullet}(\playerData) = \sum_{v' \in [1/m]} \1[\playerModelBoost{k}(\playerData)=v'] \cdot \playerOracleModel{k+1}(\playerData)\]
\[\playerModelBoost{k+1}(\playerData) = \round(\tilde{f}^{r,v,k+1}; m^2)\]
\end{spacing}
\STATE Let $\err_{k+1} = \E_{(\playerData,y) \sim \playerSample}[(\tilde{f}^{r,v,k+1}_\bullet(\playerData - y)^2]$ and $k = k+1$.
\STATE Let $\lsSubtranscript{k+1}=\{\playerOracleModel{k+1}\}_{v'\in[1/m^2]}$.
\STATE Let $k = k+1$.
\ENDWHILE
\STATE Let $\lsTranscript{k} = (\lsSubtranscript{0}, \ldots, \lsSubtranscript{k-1})$
\STATE \textbf{Output:} $\playerModelBoost{k-1}, \lsTranscript{k}$
\end{algorithmic}
\end{spacing}
\label{protocol:ls-boost}
\end{algorithm}

\subsubsection{Test-time Evaluation of Collaborative Model}
\label{sec:batch-test}

Upon receiving a fresh datapoint $(x_A,x_B)$ from the distribution, Alice and Bob will use their model transcripts from training and a $R$-round interaction to evaluate $f^R$ on the new datapoint. This is described in detail in Algorithm \ref{protocol:oos-batch-collab}, which itself has two subroutines, \crossboosteval\ (Algorithm \ref{protocol:crossboost-eval}) and \lsboosteval\ (Algorithm \ref{protocol:oos-ls-boost}).

\begin{algorithm}[H]
\caption{Test-time Batch Collaboration Algorithm}
\begin{spacing}{1.2}
\begin{algorithmic}
\STATE \textbf{Alice's Input: } $\aliceData, \ \Pi_A^R=\{\pi_A^1, \ldots, \pi_A^R\},m$.
\STATE \textbf{Bob's Input: } $\bobData, \ \Pi_B^R=\{\pi_B^0, \ldots, \pi_B^R\},m$.

\STATE Bob extracts $f_B^0 = \round(h^0_B; m))$ from $\pi_B^0 = \{f_B^0\}.$
\STATE Bob evaluates $\hy^0 = f_B^0(\bobData)$, and sends it to Alice.
\STATE Let $r = 0$.
\WHILE{$r < R$}
\IF{$r$ is even}
\STATE Alice updates her prediction and sends it to Bob:
\STATE She extracts $\pi^{r+1}_A$ from $\Pi_A^R$ 
\STATE From her transcript from the round and Bob's predictions $\hy^{r}$, she reconstructs $f^{r+1}_A$ and evaluates it on $\aliceData$, generating her prediction $\hy^{r+1}$ for this round:
\begin{spacing}{0.7}
\[
\hy^{r+1}=\crossboosteval(\aliceData, \hy^{r}, \pi^{r+1}_A,m)
\]
\end{spacing}
\STATE \quad  She sends her updated prediction $\hy^{r+1}$ to Bob.
\STATE \quad  Bob does nothing.
\ELSE 
\STATE Bob updates his prediction and sends it to Alice:
\STATE He extracts $\pi^{r+1}_B$ from $\Pi_B^R$ 
\STATE He reconstructs $f^{r+1}_B$ and evaluates it on $\bobData$, generating his prediction $\hy^{r+1}$ for this round:
\begin{spacing}{0.7}
\[
\hy^{r+1}=\crossboosteval(\bobData, \hy^{r}, \pi^{r+1}_B,m)
\]
\end{spacing}
\STATE \quad  He sends his updated prediction $\hy^{r+1}$ to Alice.
\STATE \quad  Alice does nothing.
\ENDIF
\STATE $r = r+1$
\ENDWHILE
\STATE \textbf{Alice's Output:} $\hy^R$
\STATE \textbf{Bob's Output:} $\hy^R$
\end{algorithmic}
\end{spacing}
\label{protocol:oos-batch-collab}
\end{algorithm}

\begin{algorithm}[H]
\caption{\crossboosteval: Test time evaluation of \crossboost}
\label{protocol:crossboost-eval}
\begin{algorithmic}
\STATE \textbf{Input:} $\playerData, \hy^{r-1},\transcript{r}=\{\lsTranscript{v}\}_{v \in [1/m]}, m.$
\STATE Let $v^* =  \hy^{r-1}$ be the value of the other player's predictions on $\playerData$. 
\STATE The player extracts $\pi^{r,v^*}$ from $\transcript{r}$.
\IF{$\pi^{r,v^*}=\bot$}
\STATE $\hy^r = \playerModel{r}(\playerData) = v^*$.
\ELSE
\STATE $\hy^r = \lsboosteval(\playerData, \pi^{r,v^*}, m)$
\ENDIF
\STATE \textbf{Output: $\hy^r$} 
\end{algorithmic}
\end{algorithm}

\begin{algorithm}[H]
\caption{\lsboosteval: Test time evaluation of \lsboost}
\begin{algorithmic}
\STATE \textbf{Input:} $\playerData, \lsTranscript{r} = \{\lsSubtranscript{0}, \ldots, \lsSubtranscript{K}\}, m.$
\STATE The player extracts $\lsSubtranscript{0} = \{h_{\bullet}^{r,v,0}\} $ from $\lsTranscript{r}$.
\STATE Let $v^*_0 = \playerModelBoost{0}(\playerData) = \round(h_{\bullet}^{r,v,0}; m^2)(\playerData).$
\STATE Let $k=0$.
\WHILE{$k<K$}
\STATE The player extracts $\lsSubtranscript{k+1} = \{h_{\bullet}^{r,v,k+1,v'}\}_{v' \in [1/m]}$ from $\lsTranscript{r}$.
\IF{$\lsSubtranscript{k+1} \ne \bot$} 
\STATE Let $v^*_{k+1} = \playerModelBoost{k+1}(\playerData) = \round(h_\bullet^{r,v,k+1,v^*_k}(\playerData); m^2)$.
\STATE Let $k = k+1$
\ELSE
\STATE \textbf{Output: } $v^*_{k} = \playerModelBoost{k}(\playerData)$.
\ENDIF
\ENDWHILE
\STATE \textbf{Output: } $v^*_{K} = \playerModelBoost{K}(\playerData)$.
\end{algorithmic}
\label{protocol:oos-ls-boost}
\end{algorithm}

\subsection{Algorithm Analysis}
\label{sec:batch-alg-analysis}

We will first show that the COLLABORATE algorithm is guaranteed to converge in a small number of rounds. We will then show that if Alice and Bob's model classes satisfy a joint weak learning condition with respect to $\cH_J$, then the output of the COLLABORATE algorithm will have low regret with respect to $\cH_J$, and finally will demonstrate that it generalizes out of sample.

First, we state our convergence guarantee, the proof of which may be found in Appendix \ref{app:batch-convergence}. 

\begin{theorem}
\label{thm:batch-convergence}
In training, the subprocess \lsboost\ converges after $K=m^2$ (sub)rounds, and the COLLABORATE Algorithm \ref{protocol:batch} converges after $R=m^2$ rounds on the training sample $S$.
\end{theorem}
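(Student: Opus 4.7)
My plan is a two-part potential-function argument, both parts keyed to empirical squared error.

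For \lsboost, let $\err_k$ denote the empirical squared error of $\tilde f^{r,v,k}_\bullet$ on the restricted training sample $\playerSampleLS{r+1}{v}$. The while-loop guard $\err_{k-1}-\err_k \ge 1/m^2$ enforces a strict drop of at least $1/m^2$ per iteration; since labels and predictions both lie in $[0,1]$, we have $\err_0 \le 1$. Hence the inner loop halts after at most $m^2$ iterations, giving $K \le m^2$.

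For the outer COLLABORATE loop I would use the analogous potential $\mathrm{SE}(P^r) := \frac{1}{n}\sum_{i=1}^n (P^{r,i}-y^i)^2$ and argue that any non-halting round (one with $P^{r+1}\ne P^r$) strictly decreases it by at least $1/m^2$. Combined with $\mathrm{SE}(P^0)\le 1$, this caps $R$ at $m^2$. Fix such a round and WLOG let Alice play; partition the sample by Bob's round-$r$ level sets $S_v = \{i: P^{r,i}=v\}$ with $n_v = |S_v|$. By construction of CROSS-BOOST, on each level set $v$ Alice's new predictor is either the \lsboost output $\tilde f^{r+1,v}_A$ (if it beats the constant $v$ by more than $1/m^2$ in conditional squared error) or the constant $v$ itself. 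A level-set-weighted decomposition then yields
\[
\mathrm{SE}(P^r) - \mathrm{SE}(P^{r+1}) \;=\; \sum_{v}\frac{n_v}{n}\bigl(\err^v - \err^v_{\mathrm{new}}\bigr) \;>\; \frac{1}{m^2}\sum_{v\in\mathrm{accepted}} \frac{n_v}{n},
\]
which is strictly positive precisely when at least one level set is updated---equivalently, when the halting condition $P^{r+1}=P^r$ fails.

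The main obstacle I anticipate is upgrading ``strictly positive'' to the full $1/m^2$ drop required for the $R \le m^2$ bound, since the naive decomposition above could be as small as $1/(nm^2)$ if only a single sample index changes prediction. To close this gap I would leverage two structural facts in tandem: predictions $P^r$ live on the grid $[1/m]$, so there are at most $m+1$ distinct level sets with mass fractions summing to $1$; and when CROSS-BOOST accepts a level set, the \lsboost termination guarantee ensures the full $1/m^2$ gap against the constant $v$ is realized on that restricted sample (with the rounding to $[1/m^2]$ costing only lower-order terms). Aggregating the per-level-set improvement across the $\le m+1$ possible values of $v$ then amortizes to a per-round decrease of at least $1/m^2$ in $\mathrm{SE}$, and the bounded-potential argument caps the outer loop at $m^2$ rounds.
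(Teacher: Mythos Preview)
Your overall strategy—a squared-error potential for both the inner and outer loops—is exactly the paper's approach, and the \lsboost\ half is fine as written.

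The obstacle you flag for the outer loop is real, and your proposed fix does not close it. If in some round only a single level set $v^*$ of mass $n_{v^*}/n$ is accepted, the drop in $\mathrm{SE}$ is only guaranteed to exceed $\tfrac{n_{v^*}}{n}\cdot\tfrac{1}{m^2}$; your two ``structural facts'' (at most $m{+}1$ level sets, and the \lsboost\ termination gap on the restricted sample) do not supply the missing $n/n_{v^*}$ factor. Nothing prevents the same small-mass level set, or a sequence of different small-mass level sets, from being the sole accepted set across many consecutive rounds, so the ``amortization'' sentence is an assertion rather than an argument. As stated, your decomposition yields at best an $O(nm^2)$ bound, not $m^2$. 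For what it is worth, the paper's own proof follows exactly your decomposition and then silently drops the weight: it writes $\tfrac{|S^{v^*}|}{n}(\cdots)=\widetilde{\err}^{r+1,v^*}-\err^{v^*}\ge 1/m^2$, which is both a normalization error and a sign error. So you have correctly located a gap that the paper's argument also contains; neither your sketch nor the paper provides the idea needed to recover the stated $R=m^2$ bound from the per-level-set acceptance threshold alone.
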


We now prove an in-sample accuracy theorem for COLLABORATE. The proof of this statement follows from our Boosting Lemma \ref{lem:weak-learning-nodist} and a series of Lemmas, the formal statements and proofs of which may be found in Appendix \ref{app:batch-boosting}: 

\begin{itemize}
\item Any time that \lsboost\ is invoked, the resulting model will have small swap regret with respect to the players' own hypothesis class on the subset of data it was called on. (Lemma \ref{lem:lsboost-regret})
\item For any invocation of the \crossboost\ algorithm, either a model from \lsboost\ will be used or a constant predictor from the other player will be. If a model from \lsboost\ was used, it will have small swap regret on that subsample. And if not, the regret of the constant predictor which is used instead cannot be too much bigger, because the player only decided to use this constant predictor because the improvement from using \lsboost\ instead was small. Summing over the players' level sets gives a swap-regret guarantee on the entire model generated by \crossboost\ with respect to the players' own hypothesis class and their sample. (Lemma \ref{lem:crossboost-regret}) 
\item Because the final predictions by Alice and Bob generated by COLLABORATE always agree, the final predictions have low swap regret on $\cH_A \cup \cH_B$. (Corollary \ref{cor:batch-regret})
\item Hence, if $\cH_A$ and $\cH_B$ satisfy the weak learning condition with respect to $\cH_J$, we can directly apply the boosting result from Lemma \ref{lem:weak-learning-nodist}.
\end{itemize}

\begin{theorem}
\label{thm:boosting-batch}

    Let $\cH_J$ be a hypothesis class over the joint feature space $\cX$, and let $\cH_A = \{h_A:\cX_A\to\cY\}$ and $\cH_B = \{h_B:\cX_B\to\cY\}$ be hypothesis classes over $\cX_A$ and $\cX_B$ respectively. Let $f^R$ be the final model output by COLLABORATE. Then, if $\cH_A$ and $\cH_B$ jointly satisfy the $w(\cdot)$-weak learning condition with respect to $\cH_J$,
    \[
    \E_S[(f^R(x) - y)^2] - \min_{h_J\in\cH_J}\E_S[(h_J(x)-y)^2] \leq 2w\inv \left(\frac{3}{m}\right),
    \]
    whenever the inverse of $w$ exists. 
\end{theorem}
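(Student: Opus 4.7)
The plan is to reduce to the boosting result Lemma \ref{lem:weak-learning-nodist}. The key intermediate fact is that the final (agreed-upon) sequence of predictions produced by COLLABORATE on the training sample $S$ has swap regret with average $3/m$ per point against $\cH_A \cup \cH_B$ on the empirical distribution over $S$. Once that is in hand, the $w(\cdot)$-weak learning assumption converts it immediately into the stated external regret bound $2w^{-1}(3/m)$ against $\cH_J$.

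I would obtain the swap regret bound in two layers. First, for the \emph{inner} subroutine \lsboost: by construction it halts only when no call to the squared-error oracle $\playerOracle$ can reduce squared error on any current sub-level-set by more than $1/m^2$. Since $\playerOracle$ returns the squared-error minimizer in $\cH_\bullet$ over each sub-level-set, this halting condition directly certifies that on each of the at most $m^2$ discretized level sets of the \lsboost\ output, no hypothesis in $\cH_\bullet$ improves squared error by more than $1/m^2$. Accounting for the $O(1/m^2)$ rounding error and summing across sub-level-sets, Lemma \ref{lem:lsboost-regret} gives that \lsboost\ has $O(1/m)$ per-point swap regret against $\cH_\bullet$ on the subsample it was invoked on. Second, for the \emph{outer} routine \crossboost: a call partitions the player's sample into the $m$ level sets of the counter-party's predictions and runs \lsboost\ on each. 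On level sets where \crossboost\ keeps the \lsboost\ model, Lemma \ref{lem:lsboost-regret} supplies the needed bound directly; on level sets where \crossboost\ falls back to the counter-party's constant prediction $v$, the triggering if-condition $\err^v - \widetilde{\err}^{r+1,v} \leq 1/m^2$ certifies that $v$ is within $1/m^2$ of the \lsboost\ output in squared error, so its own swap regret exceeds that of \lsboost\ by at most $1/m^2$. Weighting by level-set mass and summing yields Lemma \ref{lem:crossboost-regret}: every \crossboost\ output has $O(1/m)$ per-point swap regret against $\cH_\bullet$ on $S_\bullet$.

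Finally, I would combine these with the termination condition of COLLABORATE. By Theorem \ref{thm:batch-convergence} the outer while loop terminates at some round $R \leq m^2$, and does so precisely when $P^R = P^{R-1}$. Because Alice and Bob alternate, at termination Alice's most recent \crossboost\ output and Bob's most recent \crossboost\ output induce identical prediction vectors on $S$. That single common prediction vector therefore inherits \emph{both} Alice's $O(1/m)$ swap regret bound against $\cH_A$ \emph{and} Bob's $O(1/m)$ swap regret bound against $\cH_B$, yielding the $O(1/m)$ swap regret against $\cH_A \cup \cH_B$ of Corollary \ref{cor:batch-regret}. Applying Lemma \ref{lem:weak-learning-nodist} with the empirical distribution $\cD$ over $S$, distance parameter $q = 0$, and $f(|S|)/|S| \leq 3/m$, I conclude $\E_S[(f^R(x)-y)^2] - \min_{h_J \in \cH_J} \E_S[(h_J(x)-y)^2] \leq 2w^{-1}(3/m)$.

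The main obstacle I anticipate is the bookkeeping across the two nested level-set decompositions: tracking how the $1/m^2$ discretization error inside \lsboost\ composes with the $1/m$ bucketing in \crossboost, and verifying that the if-condition in \crossboost\ bounds the constant-predictor fallback's swap regret in the correct direction so that no extra factor of $m$ creeps in. Once those calculations pin down the constant $3/m$, the invocation of Lemma \ref{lem:weak-learning-nodist} is immediate.
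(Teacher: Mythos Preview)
Your proposal is correct and follows essentially the same approach as the paper: establish the $3/m$ swap-regret bound against $\cH_A \cup \cH_B$ via the chain Lemma~\ref{lem:lsboost-regret} $\to$ Lemma~\ref{lem:crossboost-regret} $\to$ Corollary~\ref{cor:batch-regret}, then invoke Lemma~\ref{lem:weak-learning-nodist}. One minor note: Lemma~\ref{lem:lsboost-regret} actually gives a $2/m^2$ (not $O(1/m)$) per-point swap-regret bound, since \lsboost\ discretizes at scale $1/m^2$; the $3/m$ arises only after summing over the $m$ level sets in \crossboost.
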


\begin{proof}
We know from Corollary \ref{cor:batch-regret} that the final models $f_A^R$ and $f_B^R$ output by Algorithm \ref{protocol:batch} have $(3/m, \cH_A \cup \cH_B)$- swap regret on the sample $S$. By assumption, $\cH_A$ and $\cH_B$ jointly satisfy the $w(\cdot)$-weak learning condition with respect to $\cH_J$. So, we can directly apply boosting Lemma \ref{lem:weak-learning-nodist}, which will guarantee that 
\[
    \E_S[(f^R(x) - y)^2] - \min_{h_J\in\cH_J}\E_S[(h_J(x)-y)^2] \leq 2w\inv \left(\frac{3}{m}\right).
\]
\end{proof}

This gives us in-sample accuracy guarantees. Ultimately we are interested in out of sample accuracy guarantees. We conclude this section by stating a slightly informal version of our generalization theorem. The formal statement and its proof are in Appendix \ref{app:batch-generalization}.

\begin{theorem}
\label{thm:batch-generalization}
Let $\varepsilon, \delta > 0$ and let $\cF$ be the class of models output from Algorithm \ref{protocol:batch} as described in Definition \ref{def:batch-functions}. Let $d$ be the pseudodimension of Alice and Bob's joint hypothesis class $\cH_J$. Let $S = \{(\aliceData, \bobData,y_i)\}_{i\in[n]}\sim \cD^n$ be a sample of $n$ iid points drawn from $\cD$. Then, if  
    \[
    n \ge O\left(\frac{m^7 d \log(m d) + \log(1/\delta)}{\varepsilon^2} \right),
    \]
    \[
    \Pr\left( \max_{f \in \cF} \left\vert \E_{(\aliceData, \bobData, y)\sim \cD} \left[(y-f(x))^2\right]-\E_{(\aliceData, \bobData, y)\sim S} \left[(y-f(x))^2\right] \right\vert \ge \epsilon \right) \le \delta.
    \]
\end{theorem}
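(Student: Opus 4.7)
The plan is to bound the pseudodimension of the model class $\cF$ and then invoke a standard uniform convergence theorem for the bounded squared loss. By Theorem \ref{thm:batch-convergence}, \textsc{COLLABORATE} terminates in at most $R = m^2$ rounds, and each internal invocation of \textsc{LSBOOST} halts in at most $K = m^2$ phases. At each round, the acting player's transcript stores up to $(m+1)$ sub-transcripts (one per possible previous-round prediction value $v \in [1/m]$), each being either $\bot$ or a sequence of at most $K$ phases, and each phase stores up to $(m^2+1)$ base hypotheses from $\cH_A \cup \cH_B$ (one per inner level-set value $v' \in [1/m^2]$). Thus every $f \in \cF$ is specified by a fixed deterministic control flow (Algorithms \ref{protocol:oos-batch-collab}--\ref{protocol:oos-ls-boost}) together with a list of at most $N = R \cdot (m+1) \cdot K \cdot (m^2+1) = O(m^7)$ base hypotheses.

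Next I would bound $\Pdim(\cF)$ using this parametrization. For any $f \in \cF$ and any test input $x=(x_A,x_B)$, the value $f(x)$ is computed by tracing a single path through a decision tree of depth $O(RK) = O(m^4)$, whose internal nodes each test $\round(h(x_\bullet); m^2)$ for one of the stored base hypotheses $h$. A standard composition argument---Sauer's lemma applied within the base class of pseudodimension $d$, combined with a union bound over the $N$ possible hypotheses at each node---then gives $\Pdim(\cF) \le O(N d \log(N d)) = O(m^7 d \log(m d))$. Since squared loss on $[0,1]\times[0,1]$ is a fixed Lipschitz map into $[0,1]$, the induced loss class $\cL_{\cF} = \{(x,y)\mapsto (y-f(x))^2 : f \in \cF\}$ has pseudodimension of the same order up to constants.

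Finally, I would invoke a standard pseudodimension-based uniform convergence theorem for bounded losses, which ensures that for some absolute constant $c$, the sample size
\[
n \;\ge\; c \cdot \frac{\Pdim(\cL_{\cF}) + \log(1/\delta)}{\varepsilon^2}
\]
suffices to guarantee, with probability at least $1-\delta$ over $S\sim\cD^n$, that
\[
\max_{f \in \cF} \bigl|\E_{(x,y)\sim\cD}[(y-f(x))^2] - \E_{(x,y)\sim S}[(y-f(x))^2]\bigr| \le \varepsilon.
\]
Substituting the pseudodimension bound gives exactly the claimed $n \ge O\bigl(\tfrac{m^7 d \log(md) + \log(1/\delta)}{\varepsilon^2}\bigr)$.

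The main obstacle is verifying that the training-time data dependence of the control flow---level-set assignments and rounding thresholds computed from previously exchanged predictions on the sample---does not inflate the effective complexity of $\cF$ beyond the counting above. The resolution is the observation that at test time, once the transcript $\Pi^R$ is fixed, the evaluation procedure is a deterministic function of the test input alone, so the training data dependence is fully absorbed into the choice of which $N$ base hypotheses are stored in the transcript; no additional combinatorial factor beyond the compositional pseudodimension bound arises. A small amount of bookkeeping is also needed to argue that $\Pdim(\cH_A)$ and $\Pdim(\cH_B)$ are each at most $\Pdim(\cH_J) = d$, which follows by embedding each $h_A \in \cH_A$ (respectively $h_B \in \cH_B$) into $\cH_J$ via $h_A(x_A) + 0 \cdot h_B(x_B)$, so the composition estimate can be applied uniformly over $\cH_A \cup \cH_B$.
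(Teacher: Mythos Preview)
Your proposal is correct and very close to the paper's argument. Both reduce the problem to counting the number of base hypotheses from $\cH_A \cup \cH_B$ needed to specify any $f \in \cF$ (arriving at $O(m^7)$ via the bounds $R \le m^2$ and $K \le m^2$ from Theorem~\ref{thm:batch-convergence}), and both use a composition bound to lift the complexity of $\cH_J$ to the complexity of $\cF$.

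The route differs in which complexity measure is invoked. The paper exploits the fact that every $f \in \cF$ takes values in the finite grid $[1/m]$, treats $\cF$ as a multiclass hypothesis class, and bounds its \emph{Natarajan dimension}: each $f$ is written explicitly as a decision rule over at most $\ell \le m + RKm^3 = O(m^7)$ binary classifiers of the form $x \mapsto \1[\round(h(x);m) = v]$, the VC dimension of this binary class is shown to be at most $\Pdim(\cH_J) = d$, and then the standard bound $\ndim(\cF) \le 3\ell d \log(\ell d)$ (from Shalev-Shwartz and Ben-David) is combined with a multiclass uniform convergence theorem. Your proposal instead bounds $\Pdim(\cF)$ directly via a Sauer/composition argument and applies a real-valued uniform convergence theorem. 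Both yield the same $O(m^7 d \log(md))$ factor; the paper's route is slightly cleaner because the finite range makes the decision-rule decomposition and the multiclass machinery fit exactly, whereas your composition argument for pseudodimension is stated somewhat informally and would need a bit more care to make precise. Your closing remark about embedding $\cH_A, \cH_B$ into $\cH_J$ is the content of the paper's explicit ``closure'' assumption on $\cH_J$ in the formal statement of the theorem.
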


\section{Lifting to the One-Shot Bayesian Setting}
\label{sec:bayes}
Our paper primarily concerns itself with information aggregation in frequentist settings --- both the online adversarial setting studied in Sections \ref{sec:online-alg} and \ref{sec:action} in which there is no distribution at all, and the batch setting studied in Section \ref{sec:batch} in which there is a distribution, but the learners have no prior knowledge of it except through a training sample. However, the theorems we prove can be lifted to the one-shot Bayesian setting studied by \cite{aumann1976,aaronson2004complexity,bo2023agreement}, which extends and generalizes the information aggregation result from \cite{bo2023agreement} in the original setting of Aumann's agreement theorem. We generalize the information aggregation theorem of \cite{bo2023agreement} in two ways: first, our weak learning condition is strictly weaker than the information substitutes condition given by \cite{bo2023agreement} --- for example, as we have shown, our weak learning condition is satisfied by \emph{linear} functions, whereas the information substitutes condition is not (as we demonstrate in Section \ref{sec:lower-bounds}). Second, our information aggregation theorems are agnostic in the sense that we can guarantee that \emph{independently of the prior distribution}, Bayesians with a common prior must agree on predictions that are as accurate as the best model on their joint feature space in any hypothesis class with bounded  fat shattering dimension, so long as the hypothesis class satisfies our weak learning assumption. In contrast \cite{bo2023agreement} apply their information substitutes condition only to the Bayes optimal predictors on $\cX_A, \cX_B$, and $\cX$ respectively. 

Rather than the online adversarial setting we study in Sections \ref{sec:online-alg} and \ref{sec:action}, we assume (as we do in Section \ref{sec:batch}) that instances are drawn  from $\cD$: $(x_A, x_B, y) \sim \cD$, where $\cD$ is a joint distribution over $\cX_A \times \cX_B \times \cY$. However, unlike in Section \ref{sec:batch} we now assume that this distribution is known to both Alice and Bob as their (common) prior distribution. We now model Alice and Bob as perfect Bayesians, who at each round of conversation, form a posterior distribution conditional on all of their observations thus far (both the features visible to them and the transcript of the conversation so far) and communicate their posterior expectation of $y$. For simplicity, rather than communicating these expectations to arbitrary precision, Alice and Bob communicate expectations rounded to multiples of some discretization parameter $m \in \mathbb{N}$ (which guarantees among other things that the communication requires only a bounded number of bits). Let $[\frac{1}{m}]$ represent the discretization of the unit interval into $m$ grid points: $\{0, \frac{1}{m}, \frac{2}{m}, \ldots, 1\}$. We denote a prediction $\hat{y}$ that is rounded to the nearest multiple of $\frac{1}{m}$ as $\bar{y}$.

\begin{definition}[Bayesian Learner]
\label{def:bayesian-learner}
Fix a joint distribution $\cD \in \Delta (\cX_A \times \cX_B \times \cY)$ over features observable to Alice, features observable to Bob, and labels. We say that Alice (resp., Bob) is a Bayesian Learner if for all $t, k > 0$, given observable features $x_A^t$, prediction transcript $\pi^{1:t-1}$, and conversation $C^t_{1:k-1}$, they make a prediction as 
\[ \hat{y}^{t, k}_A = \mathbb{E}_{\cD} [ Y | x^t_A, \pi^{1:t-1}, C^t_{1:k-1}]. \]
\end{definition}

\begin{protocol}[ht]
\begin{algorithmic}
    \STATE{ {\bf Input} $(\cD, \cY, K)$}
    \FOR{each day $t = 1, \ldots$}
        \STATE Receive $x^t = (x^t_A,x^t_B, y^t) \sim \cD$. Alice sees $x^t_A$ and Bob sees $x^t_B$.
        \FOR{each round $k = 1, 2, \ldots,K$}
            \IF { $k$ is odd}
                \STATE Alice predicts $\ymk{t}{k} \in \cY$, and sends Bob $\bar{y}^{t, k}_A$ (i.e. the rounded version of $\ymk{t}{k}$)
            \ENDIF
            \IF{ $k$ is even}
                \STATE Bob  predicts $\yhk{t}{k} $, and sends Alice $\bar{y}^{t, k}_B$. 
                \ENDIF
        \ENDFOR
        \STATE{Alice and Bob observe $y^t \in \cY$.}
    \ENDFOR

\end{algorithmic}
\caption{\textsc{Bayesian $K$-round Collaboration Protocol}}  \label{alg:bayes-collaboration}
\end{protocol}

Our argument will proceed as follows:
\begin{enumerate}
    \item First, we observe that the predictions of a Bayesian are always unbiased at the time they are made. Among other things, this implies that a Bayesian always has no \emph{expected} conversation swap regret with respect to \emph{any} benchmark policy. 
    \item A consequence of this is that a Bayesian's average realized conversation swap regret tends to zero as the number of days of interaction tends to infinity, for any benchmark class for which the realized squared error uniformly converges to the expected squared error with sufficiently many samples. This is the case for any benchmark class of policies with finite fat shattering dimension \citep{anthony1999neural}.
    \item Thus, if we imagine sampling $T$ instances $(x_A,x_B,y) \sim \cD$  from the prior distribution and two Bayesians collaborating on these instances, in the limit as $T\rightarrow \infty$, we can apply our information aggregation theorems with respect to any benchmark class that satisfies our weak learning condition and has bounded fat shattering dimension to bound the expected squared error of the final predictions.
    \item Finally, we observe that since the examples are drawn i.i.d. and Bayesians will not condition on the history of past instances (as they are independent from the current instance),  the distribution on the sequence of interactions is permutation invariant. Thus we can bound the expected squared error of the prediction arrived at for the \emph{first} example, and hence our theorems apply even when $T = 1$.
\end{enumerate}

The broad strokes of this proof strategy mirror how \cite{collina2025tractable} lifted their sequential agreement theorems to the one-shot Bayesian setting. Since we aim for the stronger goal of information aggregation, we must now reason about swap regret with respect to an infinite benchmark class (rather than simple calibration).

\subsection{Bayesians and Conversation Swap Regret}

We first want to establish that Bayesians will have low conversation swap regret (Definition \ref{def:CSR}) when they participate in a sequential collaboration protocol (Protocol \ref{alg:bayes-collaboration}). Then, in the following section, we can proceed by instantiating Theorem \ref{thm:last-round}. In fact, Bayesians always have zero  \emph{expected} swap regret with respect to any fixed class of benchmark functions. To bound their realized swap regret, we need to uniformly bound the loss with respect to its expectation across every function in the benchmark class $\cH_B$, which is the step that causes us to require that $\cH_B$ has bounded  fat shattering dimension. 

\begin{theorem} \label{thm:bayesians-swap}
    Fix $\delta, \epsilon, m > 0$. Suppose the fat shattering dimension of $\cH_A$ is finite at any scale $\eps$. Fix transcript $\pi^{1:T,1:K}$. 
    Let $v$ range over values in $[\frac{1}{m}]$ and $g_B(T)$ be some bucketing.
    If Alice is a Bayesian learner with discretization $m$, 
    with probability $1 - \delta$, Alice's sequence of predictions resulting from Protocol \ref{alg:bayes-collaboration} has low conversation swap regret with respect to bucketing $g_B(T)$ and function class $\cH_A$: for all odd rounds $k$ and buckets $i \in \{1, \ldots, \frac{1}{g_B(T)} \}$ such that $\Pr{[ \bar{y}_B \in B_i]} > 0$, if $|T_B(k-1, i)|> \frac{ C_{\cH}^{\eps/256}  \ln(\frac{1}{\eps}) + \ln(\frac{1}{\delta}) } {\eps^2 }$: 
    \begin{align*}
        \sum_{t \in T_B(k-1, i)} &(\bar{y}^{t,k} - y^t)^2 - \sum_v{\min_{h \in \cH_A}} \sum_{t \in T_B(k-1, i)} \mathbb{I}[\bar{y}^{t,k} = v] (h(x^t) - y^t)^2 \\
        &\leq 2\sqrt{2T\ln{\frac{g_B(T)K}{\delta}}} + \frac{T}{m^2} + mT\eps ,
    \end{align*}
    where $T_B(k-1,i) = \left\{t  ~|~ \bar{y}^{t,k-1}_B \in B_i(1/g_B(T))\right\}$ is the 
    subsequence of days where the predictions of Bob at the previous round fall in bucket $i$, $C^{\eps}_{\cH_A}$ is the fat shattering dimension of $\cH_A$ at scale $\eps$, and $K$ is the number of rounds on each day. A symmetric condition holds for Bob. 
\end{theorem}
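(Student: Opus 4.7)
The plan is to exploit the defining property of a Bayesian: at each round $k$ her prediction $\hat{y}^{t,k}_A$ equals $\mathbb{E}[y^t \mid H^t_{k-1}]$ where $H^t_{k-1} := (x_A^t, \pi^{1:t-1}, C^t_{1:k-1})$. Fixing an odd round $k$ and a bucket $i$, both the indicator $\mathbb{I}[t \in T_B(k-1,i)]$ and the rounded prediction $\bar{y}^{t,k}=v$ are $H^t_{k-1}$-measurable. Bayes optimality therefore gives, for every $h \in \cH_A$,
\[
\mathbb{E}[(\hat{y}^{t,k} - y^t)^2 \mid H^t_{k-1}] \;\leq\; \mathbb{E}[(h(x_A^t) - y^t)^2 \mid H^t_{k-1}],
\]
and, because the cross term $\mathbb{E}[(\bar{y}^{t,k} - \hat{y}^{t,k})(\hat{y}^{t,k} - y^t) \mid H^t_{k-1}]$ vanishes (the conditional mean of $y^t$ is $\hat{y}^{t,k}$), a Pythagorean decomposition yields
\[
\mathbb{E}[(\bar{y}^{t,k} - y^t)^2 \mid H^t_{k-1}] = (\bar{y}^{t,k} - \hat{y}^{t,k})^2 + \mathbb{E}[(\hat{y}^{t,k} - y^t)^2 \mid H^t_{k-1}] \leq \tfrac{1}{4m^2} + \mathbb{E}[(h(x_A^t) - y^t)^2 \mid H^t_{k-1}].
\]
Summing over $t \in T_B(k-1,i)$, splitting the left side across level sets $\{\bar{y}^{t,k}=v\}$, and taking a separate witness $h_v \in \cH_A$ per level set on the right gives the \emph{expected} conversation-swap-regret bound with rounding slack $\tfrac{T}{4m^2}$.

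Next I would promote this to a realized high-probability bound via two concentration steps. For the left side, Azuma--Hoeffding applied to the martingale with increments $\mathbb{I}[t \in T_B(k-1,i)]\bigl((\bar{y}^{t,k}-y^t)^2 - \mathbb{E}[(\bar{y}^{t,k}-y^t)^2 \mid H^t_{k-1}]\bigr) \in [-1,1]$ gives deviation $\sqrt{2T\ln(1/\delta')}$ with probability $1-\delta'$. For the right side, I need uniform convergence of the empirical squared loss of every $h \in \cH_A$ (restricted to the $(k,i,v)$-subsequence) to its conditional-expectation analog. Here I would invoke the standard fact (Anthony--Bartlett Ch.\ 12, via Alon et al.) that a class of fat-shattering dimension $C^{\eps/256}_{\cH_A}$ admits an $\eps/256$-cover in $L_\infty$ of log-cardinality $O(C^{\eps/256}_{\cH_A}\ln(1/\eps))$; composing with the $2$-Lipschitz squared loss and applying a uniform Azuma bound over the cover produces per-$v$ uniform deviation at most $T\eps$, provided $|T_B(k-1,i)| \gtrsim (C^{\eps/256}_{\cH_A}\ln(1/\eps) + \ln(1/\delta'))/\eps^2$---exactly the sample-size hypothesis in the statement.

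Finally I would assemble the pieces: the expected inequality, the martingale concentration on the left, and the uniform convergence on the right. Summing the per-$v$ uniform-convergence deviations over the $O(m)$ level sets contributes the $mT\eps$ term; the rounding slack contributes $\tfrac{T}{m^2}$; and a union bound over $K$ rounds and $1/g_B(T)$ buckets (choosing $\delta' \asymp \delta g_B(T)/K$) yields the $\ln(g_B(T)K/\delta)$ factor inside the martingale term. Bob's statement follows by the symmetric argument with the roles of the two parties swapped. The main obstacle is the uniform convergence step over $\cH_A$, since one must argue that the $L_\infty$ fat-shattering cover transfers to the loss class and then interacts correctly with the conditional (rather than i.i.d.) concentration; once that is in hand, everything else is a routine consequence of the Bayes-optimality identity and the vanishing of the Pythagorean cross term.
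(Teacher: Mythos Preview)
Your proposal is correct and follows essentially the same three-step structure as the paper: (i) an \emph{expected} swap-regret bound of order $T/m^2$ via Bayes optimality plus the vanishing Pythagorean cross term (the paper's Lemma~\ref{lem:bayes-expected-regret}); (ii) Azuma on the left-hand realized loss (Lemma~\ref{lem:bayes-loss-concentration}); (iii) fat-shattering based uniform convergence on the right-hand benchmark loss, applied per level set $v$ and then summed over the $m$ level sets to produce the $mT\eps$ term (Lemma~\ref{lem:bayes-benchmark-concentration}); followed by a union bound over rounds and buckets.

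The only substantive difference is in how step (iii) is justified. You propose a martingale argument over an $L_\infty$ cover of $\cH_A$, anticipating that the subsequence may only be adapted rather than i.i.d. The paper instead observes that in the Bayesian setting the draws $(x^t,y^t)$ are i.i.d.\ and a Bayesian's round-$k$ prediction depends only on $(x_A^t,x_B^t)$ (not on prior days), so the event $\{\bar y^{t,k}=v\}$ is a fixed measurable function of the day's sample; this lets the paper invoke the standard i.i.d.\ uniform-convergence theorem for classes of finite fat-shattering dimension directly (together with a short lemma that multiplying by a fixed indicator cannot increase fat-shattering dimension). Your approach would also work and is more robust, but the paper's route avoids the extra cover-plus-martingale machinery.
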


Before proceeding to the proof of Theorem \ref{thm:bayesians-swap}, we first formalize a simple observation: if we resample the label every day after the $j^\text{th}$ round of conversation \emph{from the posterior distribution on the label conditional on the transcript of interaction so far},  this does not change the distribution of transcripts. This allows us to conduct all of our subsequent analysis under this resampling thought experiment. 

\begin{lemma}[Lemma 6.3 of \cite{collina2025tractable}] \label{lem:resampling-transcript} 
    Let $\cD$ be a probability distribution over  $\cX_m \times \cX_h \times \cY$ and fix a day $t \in [T]$. 
    Fix a transcript through day $t-1$: $\pi^{1:t-1}$.
    \begin{itemize}
        \item Consider an interaction at day $t$ under Protocol \ref{alg:bayes-collaboration}.
        Let $\pi^t$ be the transcript of day $t$ from this interaction.
        \item Fix an arbitrary round $j$. Consider an interaction when $(x_A, x_B, y^t)$ is sampled from $\cD$ at the beginning of day $t$ and then Alice and Bob correspond according to Protocol \ref{alg:bayes-collaboration} until round $j$. Then, in round $j$, the outcome is resampled from the posterior distribution conditional on the information observed so far: $y' \sim \cD_{\cY} | x^t_A,  {\pi}^{1:t-1}, C^{t}_{1:j-1}, \pmk{t}{j}$, where $\cD_{\cY}$ is the marginal distribution on $\cY$. Let $\bar{\pi}^t_{j}$ be the transcript of day $t$ from this interaction, with $y^t$ replaced with $y'$.
    \end{itemize}
    For all rounds $k$,
     \begin{align*}
        \Pr_{\cD}[\pi^{t, 1:k}] = \Pr_{\cD}[\bar{\pi}^{t,1:k}_{j}].
    \end{align*}
\end{lemma}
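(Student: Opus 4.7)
The plan rests on the observation that every Bayesian prediction $\pmk{t}{k}$ produced in Protocol~\ref{alg:bayes-collaboration} is, by Definition~\ref{def:bayesian-learner}, a deterministic measurable function of the predicting party's features, the past transcript $\pi^{1:t-1}$, and the conversation history $C^t_{1:k-1}$. Crucially, none of these arguments involves the current-day label $y^t$. Consequently the conversation $C^t_{1:K}$ can be written as a fixed deterministic map $\phi(x_A^t, x_B^t, \pi^{1:t-1})$, and neither the original $y^t$ nor the resampled $y'$ ever enters the computation of any prediction during day $t$. This observation is the engine of the proof: it immediately tells us that the random object ``conversation through round $k$'' is the \emph{same} measurable function of $(x_A^t, x_B^t, \pi^{1:t-1})$ in both scenarios.

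With this in hand, I would prove the equality by direct distributional comparison. In both scenarios $(x_A^t, x_B^t)$ is drawn from the marginal of $\cD$ and $\pi^{1:t-1}$ is fixed, so applying the common map $\phi$ yields identical distributions over conversation prefixes $(\ymk{t}{1},\ldots)$. For the piece of the transcript that does depend on the label, note that by construction $y' \sim \cD_{\cY} \mid x_A^t, \pi^{1:t-1}, C^t_{1:j-1}, \pmk{t}{j}$; because $\pmk{t}{j}$ is Alice's Bayesian posterior mean at round $j$, it is measurable with respect to the first three conditioning variables, so this reduces to $\cD_{\cY} \mid x_A^t, \pi^{1:t-1}, C^t_{1:j-1}$. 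By the very definition of a conditional distribution, this matches the conditional law of $y^t$ under $\cD$ given the same $\sigma$-algebra, and so the joint law of the conversation prefix together with the label is preserved by the resampling.

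The main technical obstacle I foresee is careful bookkeeping of $\sigma$-algebras and a precise statement of what $\pi^{t,1:k}$ contains (predictions alone for $k$ before the label is revealed, and predictions together with the label at the final round). One must verify that conditioning on $\pmk{t}{j}$ alongside $(x_A^t, \pi^{1:t-1}, C^t_{1:j-1})$ is genuinely redundant (a direct consequence of Definition~\ref{def:bayesian-learner}), and that no prediction in a later round $k > j$ secretly depends on $y^t$ (which again follows because each such prediction is by definition a function only of features and prior conversation). Once these checks are in place, the claim follows by composing the deterministic equality of the conversation in law with the Bayesian sufficiency of the round-$j$ posterior for the label.
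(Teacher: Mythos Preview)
The paper does not supply its own proof of this lemma: it is quoted as Lemma~6.3 of \cite{collina2025tractable} and invoked without argument in the proof of Lemma~\ref{lem:bayes-expected-regret}. There is therefore no in-paper proof to compare your proposal against.

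That said, your plan is the standard one for such resampling statements and is what one would expect the cited proof to contain. The two observations you isolate are exactly the right ones: (i) each Bayesian prediction $\hat y^{t,k}$ is, by Definition~\ref{def:bayesian-learner}, a measurable function of the predicting party's features, $\pi^{1:t-1}$, and $C^t_{1:k-1}$, and never of $y^t$, so the conversation component of the two transcripts is literally the same random variable in both experiments; and (ii) conditioning additionally on $\pmk{t}{j}$ is redundant since it is $\sigma(x_A^t,\pi^{1:t-1},C^t_{1:j-1})$-measurable, reducing the resampling law to $\cD_{\cY}(\cdot \mid x_A^t,\pi^{1:t-1},C^t_{1:j-1})$, which by definition of a regular conditional distribution coincides with the conditional law of $y^t$ given that same $\sigma$-field. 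The tower-property bookkeeping you flag as the main technical obstacle is indeed where the formal work sits, and your outline handles it appropriately.
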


Proofs in this section are deferred to Appendix \ref{app:bayes}.

Now, we analyze the expected conversation swap regret of Alice and Bob. Recall that in the definition of swap regret (Definition \ref{def:CSR}), we compare the squared error of Alice's (or Bob's) predictions to the predictions of the best comparator function in the benchmark class, separately for each level set of their prediction. Here, since predictions are restricted to the discretization $[\frac{1}{m}]$, we  have $m$ level sets. We first want to argue that for any possible swap function (i.e. selection of $m$ functions from $\cH_A$, one for each levelset),  Alice's expected swap regret is small.

\begin{lemma} \label{lem:bayes-expected-regret}
Fix some bucketing function $g_B(\cdot)$. If Alice is a Bayesian as in Protocol \ref{alg:bayes-collaboration}, she has low expected conversation swap regret with respect to any fixed swap function $\{h_0, h_{\frac{1}{m}}, \ldots,  h_1\} \in \{\cH_A\}^m$, where $h_v$ is the function she compares to her prediction $v \in [\frac{1}{m}]$. For all odd rounds $k$ and buckets $i \in \{1, \ldots, \frac{1}{g_B(T)}\}$: 
\begin{align*}
    \max_{\{h_0, h_{\frac{1}{m}}, \ldots,  h_1\} \in \{\cH_A\}^m} \E_{\cD} \left[(\bar{y}^{t, k}_A - y^{t})^{2} - \sum_{v} \mathbb{I}[\bar{y}^{t, k}_A = v](h_v(x^{t}) - y^{t})^{2} \right] \leq \frac{1}{m^2}.
\end{align*}
\end{lemma}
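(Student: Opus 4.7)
The plan is to exploit Bayes optimality of Alice's predictions, together with the fact that the rounding loss is at most $O(1/m^2)$. Since $\hat{y}^{t,k}_A$ is the conditional expectation of $y^t$ given Alice's full information, it is the squared-error-optimal predictor among all measurable functions of that information; the rounded prediction $\bar{y}^{t,k}_A$ loses at most $1/m^2$ to it.

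Concretely, I would let $\cF^{t,k-1}$ denote the $\sigma$-algebra generated by $(x_A^t, \pi^{1:t-1}, C^t_{1:k-1})$, i.e.\ Alice's information at the moment she produces her round-$k$ prediction. By Definition \ref{def:bayesian-learner}, $\hat{y}^{t,k}_A = \E_{\cD}[y^t \mid \cF^{t,k-1}]$, and the rounded version $\bar{y}^{t,k}_A$ is $\cF^{t,k-1}$-measurable with $|\bar{y}^{t,k}_A - \hat{y}^{t,k}_A| \leq 1/m$. Using $\E[y^t - \hat{y}^{t,k}_A \mid \cF^{t,k-1}] = 0$, a standard bias-variance decomposition gives
\begin{align*}
\E_{\cD}\!\left[(\bar{y}^{t,k}_A - y^t)^2 \mid \cF^{t,k-1}\right] = \E_{\cD}\!\left[(\hat{y}^{t,k}_A - y^t)^2 \mid \cF^{t,k-1}\right] + (\bar{y}^{t,k}_A - \hat{y}^{t,k}_A)^2,
\end{align*}
so the rounding contributes at most $1/m^2$.

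Next, because the swap function $\{h_v\}_{v \in [1/m]}$ is fixed (chosen before the interaction) and $\bar{y}^{t,k}_A$ is $\cF^{t,k-1}$-measurable, the quantity $\sum_v \mathbb{I}[\bar{y}^{t,k}_A = v]\, h_v(x_A^t) = h_{\bar{y}^{t,k}_A}(x_A^t)$ is itself a measurable function of Alice's information. The variational characterization of conditional expectation therefore gives
\begin{align*}
\E_{\cD}\!\left[(\hat{y}^{t,k}_A - y^t)^2 \mid \cF^{t,k-1}\right] \;\leq\; \E_{\cD}\!\left[(h_{\bar{y}^{t,k}_A}(x_A^t) - y^t)^2 \mid \cF^{t,k-1}\right].
\end{align*}
Combining the two displays, subtracting the right-hand side of the swap benchmark, and taking an outer expectation over $\cF^{t,k-1}$ yields the advertised bound of $1/m^2$. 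The same argument proceeds unchanged conditional on the event $\{\bar{y}_B^{t,k-1} \in B_i\}$, since that event is $\cF^{t,k-1}$-measurable for odd $k$ (Bob's previous message is part of Alice's transcript), which handles the per-bucket version of the statement.

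The only subtle step is justifying that the tower property can be applied with $h_{\bar{y}^{t,k}_A}$ in the role of a benchmark: even though the label $h_v$ depends on Alice's own random prediction, the level-set selector $v \mapsto h_v$ was fixed ahead of time and the selector variable $\bar{y}^{t,k}_A$ is $\cF^{t,k-1}$-measurable, so $h_{\bar{y}^{t,k}_A}(x_A^t)$ is a legitimate competing predictor in Alice's filtration. No uniform convergence or complexity control is needed at this stage --- that will enter only when we pass from expected to realized swap regret over $T$ days in the proof of Theorem \ref{thm:bayesians-swap}.
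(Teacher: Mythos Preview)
Your proof is correct and the core argument---Bayes optimality of the conditional mean together with an $O(1/m^2)$ rounding loss---is the same as the paper's. The one difference in presentation is that the paper first invokes the resampling lemma (Lemma~\ref{lem:resampling-transcript}) to argue that the transcript distribution is unchanged when $y$ is redrawn from Alice's posterior, and then reasons about the resampled process; you instead work directly with the $\sigma$-algebra $\cF^{t,k-1}$ and the variational characterization of conditional expectation. Your route is more streamlined: since $\hat{y}^{t,k}_A = \E[y^t\mid\cF^{t,k-1}]$ by definition and the swap benchmark $h_{\bar{y}^{t,k}_A}(x_A^t)$ is $\cF^{t,k-1}$-measurable, the comparison follows immediately without the resampling detour. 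Both arrive at the same bias--variance identity $\E[(\bar{y}-y)^2\mid\cF] = \E[(\hat{y}-y)^2\mid\cF] + (\bar{y}-\hat{y})^2$ for the rounding step.
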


Having established that Bayesians have low \emph{expected} swap regret with respect to any fixed set of swap functions, we now want to establish that they have low \emph{realized} swap regret with high probabiilty over sufficiently long interactions, for large families of swap functions. We do this by applying two concentration arguments. The first (which establishes that the realized squared error of each sequence of predictions made by Alice and Bob are close to their expected squared error) is just an application of Azuma's inequality:

\begin{lemma} \label{lem:bayes-loss-concentration}
    Fix $T, \delta> 0$ and bucketing $g_B(T)$. Let $\pi^{1:T,1:K}$ be the transcript after running Protocol \ref{alg:bayes-collaboration} for $T$ days. For all even rounds $k$ and buckets $i \in \{1, \ldots, \frac{1}{g_B(T)}\}$, with probability $1-\delta$, 
    \begin{align*}
        \sum_{t=1}^T (\bar{y}^{t, k}_A - y^t)^2 -\E_{\cD}[(\bar{y}_A^{:, k}-y^t)^2 | \pi^{1:t-1}] \leq 2\sqrt{2T\ln{\frac{1}{\delta}}}.
    \end{align*}
\end{lemma}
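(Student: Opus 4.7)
The plan is to recognize the left-hand side as a sum of a martingale difference sequence and apply Azuma--Hoeffding. Fix an even round $k$ and (if relevant) a bucket $i$, and let $\cF_t = \sigma(\pi^{1:t})$ be the filtration generated by the completed transcripts of days $1$ through $t$. Define
\[
Z_t := (\bar y^{t,k}_A - y^t)^2 - \E_\cD\bigl[(\bar y^{t,k}_A - y^t)^2 \,\big|\, \pi^{1:t-1}\bigr].
\]
The variable $Z_t$ is $\cF_t$-measurable, since $\bar y^{t,k}_A$ is determined by $x_A^t$, the within-day conversation, and $\pi^{1:t-1}$, all of which are observable by the end of day $t$. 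By the tower property, $\E[Z_t \mid \cF_{t-1}] = 0$, so $(Z_t)_{t=1}^T$ is a martingale difference sequence with respect to $(\cF_t)$.

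Because predictions and labels both lie in $[0,1]$, the squared error $(\bar y^{t,k}_A - y^t)^2$ lies in $[0,1]$, and hence $Z_t \in [-1,1]$ almost surely. Azuma--Hoeffding applied to $S_T = \sum_{t=1}^T Z_t$ with increments bounded in $[-1,1]$ (so $|Z_t| \le 1$) yields
\[
\Pr\!\left[\, S_T \ge \varepsilon \,\right] \le \exp\!\left(-\frac{\varepsilon^2}{8T}\right).
\]
Setting the right-hand side equal to $\delta$ and solving for $\varepsilon$ gives $\varepsilon = 2\sqrt{2T \ln(1/\delta)}$, which matches the claimed bound. (If the ``for all even $k$ and buckets $i$'' quantifier is intended to be inside the probability, one additionally takes a union bound over the at most $K/g_B(T)$ many $(k,i)$ pairs, replacing $\delta$ by $\delta \cdot g_B(T)/K$ and absorbing the resulting factor into the logarithm, which is exactly what appears inside the logarithm in Theorem~\ref{thm:bayesians-swap}.)

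I do not expect any real obstacle here: the only non-routine part is checking that $\bar y^{t,k}_A$ is $\cF_t$-measurable, which is immediate from the fact that a Bayesian's prediction at round $k$ of day $t$ is a deterministic function of the history visible to them at that point, all of which is contained in $\pi^{1:t}$. Once the martingale structure is in place, the conclusion follows from a single application of Azuma--Hoeffding.
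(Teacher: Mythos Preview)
Your proposal is correct and takes essentially the same approach as the paper: both recognize the sum as a martingale difference sequence adapted to the day-by-day filtration and apply Azuma's inequality. The paper packages this via its auxiliary Lemma~\ref{lem:azuma-app} (which also carries along a bucket indicator $E(\pi^{1:t-1})$), but the core argument is identical to yours. One minor remark: with $|Z_t|\le 1$ the standard Azuma bound is $\exp(-\varepsilon^2/(2T))$ rather than $\exp(-\varepsilon^2/(8T))$, so you could in fact obtain $\sqrt{2T\ln(1/\delta)}$; the looser constant you wrote still yields the stated $2\sqrt{2T\ln(1/\delta)}$, so the claim holds as written.
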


To argue that Bayesians have low realized swap regret with respect to a (possibly infinite) benchmark class of functions, we next need to argue that the squared error for every function in the benchmark class (across each of the levelsets of our predictions) concentrates uniformly around its expectation. To do this we recall  the fat shattering dimension, which captures the capacity of real-valued function classes \citep{anthony1999neural}. Full details are in Appendix \ref{app:bayes}.

\begin{lemma} \label{lem:bayes-benchmark-concentration}
    Fix $\eps, \delta > 0$. Let $|T_B(k-1, i)| > \frac{ C_{\cH}^{\eps/256}  \ln(\frac{1}{\eps}) + \ln(\frac{1}{\delta}) } {\eps^2 } $,  where $C^{\eps}_{\cH_A}$ is  the fat shattering dimension of $\cH_A$ at scale $\eps$. Fix bucketing $g_B(T)$. Let $\pi^{1:T,1:K}$ be the transcript after running Protocol \ref{alg:bayes-collaboration} for $T$ days. For all even rounds $k$, buckets $i \in \{1, \ldots, \frac{1}{g_B(T)}\}$ for Bob's prediction in round $k-1$, and level set $v \in [\frac{1}{m}]$ of Alice's prediction in round $k$, with probability $1-\delta$, 
    % \begin{align*}
    %     \sup_{h \in \cH_A} \left|\frac{1}{|T_B(k-1, i)|} \sum_{t \in T_B(k-1, i)} \mathbb{I}[\bar{y}^{t, k}_A = v] (h(x^t) - y)^2 - \E_{\cD}[ \mathbb{I}[\bar{y}_A(x) = v](h(x) - y)^2 | \pi^{1:t-1}] \right|
    %     & \leq \sqrt{\frac{ T\ln{\frac{1}{\delta}} + T\ln{\frac{4}{\eps}} ( 3C^{\eps}_{\cH_A} \log_2(\frac{16 e T}{C^{\eps}_{\cH_A} \eps })}    {\eps}}
    % \end{align*}
    \begin{align*}
        \sup_{h \in \cH_A} \left|\frac{1}{|T_B(k-1, i)|} \sum_{t \in T_B(k-1, i)} \mathbb{I}[\bar{y}^{t, k}_A = v] (h(x^t) - y)^2 - \E_{\cD}[ \mathbb{I}[\bar{y}_A(x) = v](h(x) - y)^2 | \pi^{1:t-1}] \right|
        & \leq \eps.
    \end{align*}
\end{lemma}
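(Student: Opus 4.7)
The plan is to reduce the statement to a standard uniform-convergence bound for $[0,1]$-valued function classes controlled by their fat-shattering dimension, in the style of Theorem~19.1 of Anthony--Bartlett; the factors $\eps/256$ and $\log(1/\eps)$ in the hypothesis are exactly those appearing in this classical bound. First I would fix $(k,i,v)$ and observe that since the days are drawn i.i.d.~from $\cD$, the per-day tuple consisting of features, within-day transcript, and label is i.i.d.~across $t$; in particular, conditioning on $\pi^{1:t-1}$ contributes no information about day $t$, so the conditional expectation on the right-hand side equals a single fixed quantity $\mu_h = \E_{\cP}[\mathbb{I}[\bar{y}_A=v](h(x)-y)^2]$, where $\cP$ denotes the one-day law induced by Protocol~\ref{alg:bayes-collaboration}. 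The event $\{t \in T_B(k-1,i)\}$ is a function of the day-$t$ tuple alone, so conditioned on $|T_B(k-1,i)|=N$, the tuples indexed by $T_B(k-1,i)$ are i.i.d.~draws from the conditional law $\cP_i = \cP(\,\cdot\mid \bar{y}_B^{k-1}\in B_i)$.

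Second, I would introduce the composite class
\[
\cF_v = \bigl\{(x,y,\pi)\mapsto \mathbb{I}[\bar{y}_A(x,\pi)=v]\cdot (h(x)-y)^2 : h \in \cH_A\bigr\},
\]
which is $[0,1]$-valued, and bound its fat-shattering dimension by $\mathrm{fat}_{\cF_v}(\eps) \le \mathrm{fat}_{\cH_A}(\eps/c)$ for an absolute constant $c$. Multiplication by the indicator $\mathbb{I}[\bar{y}_A=v]$ is a data-dependent but $h$-independent operation that can only zero out coordinates of any candidate shattered tuple, which does not increase fat-shattering; and the map $u\mapsto(u-y)^2$ is $2$-Lipschitz on $[0,1]$, so composing $\cH_A$ with it at most doubles the relevant shattering scale. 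The constant $256$ in the statement is meant to absorb $c$ together with the absolute constant appearing in the uniform-convergence bound applied next.

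Third, I would invoke the fat-shattering uniform-convergence theorem of Alon--Ben-David--Cesa-Bianchi--Haussler / Anthony--Bartlett on the $N=|T_B(k-1,i)|$ i.i.d.~samples from $\cP_i$: for $N \ge \Omega\bigl((\mathrm{fat}_{\cF_v}(\eps/256)\log(1/\eps) + \log(1/\delta))/\eps^2\bigr)$, with probability at least $1-\delta$,
\[
\sup_{h\in\cH_A}\Bigl|\tfrac{1}{N}\sum_{t\in T_B(k-1,i)} \mathbb{I}[\bar{y}^{t,k}_A=v](h(x^t)-y^t)^2 - \E_{\cP_i}[\mathbb{I}[\bar{y}_A=v](h(x)-y)^2]\Bigr| \le \eps.
\]
Combined with step two, the lower bound on $N$ matches exactly the $|T_B(k-1,i)|$ hypothesis in the lemma, and $\E_{\cP_i}$ coincides with the conditional expectation on the right-hand side once one notes that the $\pi^{1:t-1}$ conditioning is vacuous and the sample-selection event $\bar{y}_B^{k-1}\in B_i$ is the correct effective conditioning.

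The main obstacle is the fat-shattering reduction in step two: while each ingredient is individually standard, I would need to be explicit that $\bar{y}_A(x,\pi)$ is treated as a data-dependent but $h$-independent function when bounding fat-shattering, and to track multiplicative constants so that they collectively fit inside the $\eps/256$ allowance in the statement. A secondary, cosmetic issue is that the subset $T_B(k-1,i)$ has random size; this is handled either by conditioning on $|T_B(k-1,i)|=N\ge N_0$ (for $N_0$ the threshold in the hypothesis) or, equivalently, by applying the uniform-convergence bound pathwise for each realized $N$ above the threshold.
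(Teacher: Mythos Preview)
Your proposal is correct and follows essentially the same strategy as the paper: exploit the i.i.d.\ structure of the Bayesian setting, bound the fat-shattering dimension of the relevant composite class, and then invoke the Anthony--Bartlett uniform-convergence theorem. The paper packages the two reductions slightly differently: it proves a standalone lemma (Lemma~\ref{lem:shattering_bound}) that multiplying by a fixed, $h$-independent indicator cannot increase fat-shattering dimension, and then applies Theorem~\ref{thm:fat-concentration-two}, which is stated directly for squared loss and already contains the $\eps/256$ scale---so the paper never passes through the Lipschitz step you do explicitly. Your handling of the subset $T_B(k-1,i)$ via conditioning on the selection event is in fact more careful than the paper's, which simply asserts the conclusion on the subsequence.
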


Finally, we can proceed to the proof of Theorem \ref{thm:bayesians-swap}, which gives a high probability bound on the realized swap regret on the predictions made by Bayesians in Protocol \ref{alg:bayes-collaboration}. The proof is deferred to Appendix \ref{app:bayes}, but follows directly from Lemmas \ref{lem:bayes-expected-regret}, \ref{lem:bayes-loss-concentration}, and \ref{lem:bayes-benchmark-concentration}.

\subsection{Online to One-Shot Reduction}

In this section, we show that if an instance is drawn from a common prior and both agents are Bayesian, then our theorems which guarantee information aggregation with high probability on all instances over an arbitrarily long sequence of length $T$ hold in fact for a \emph{single} conversation with high probability.

We can imagine an arbitrarily long sequence of conversations over many days. Each conversation on any given day continues for exactly $K$ rounds. We have shown that Bayesians satisfy our notion of conversation swap regret with parameters growing sublinearly with $T$.
In a Bayesian setting, since instances are drawn i.i.d. from a fixed prior, Bayesians need not condition on information from prior days. Thus, instances drawn each day (and subsequently, conversations each day) are distributed identically. Therefore, the theorems we give which apply to the average cumulative regret over the course of a subsequence of length $T$ also holds in expectation over the draw from the prior, on any single instance. Since we don't actually need to run the protocol for $T$ rounds to get predictions on the first round, we can take $T\rightarrow \infty$ (as it is just a thought experiment). 

\begin{protocol}[ht]
\begin{algorithmic}
    \STATE{ {\bf Input} $\cD \in \Delta(\cX_h \times \cX_m \times \cY)$, instance for which you want information aggregation: $(x^*_h, x^*_m, y^*) \sim \cD$ }
    \STATE{ {\bf Parameter} number of samples: $T$}
  \STATE{Fix $(x^1_h, x^1_m, y^1) \sim \cD$}
  \STATE{For $t \in \{2,\ldots,T\}$ draw $(x^t_h, x^t_m, y^t) \sim \cD$}
    \FOR{each day $t = 1, \ldots,T$}
        \STATE Alice observes $x^t_A$ and Bob observes $x^t_B$.
        \FOR{each round $k = 1, 2, \ldots,L$}
            \IF { $k$ is odd}
                \STATE Alice predicts $\ymk{t}{k} $, and sends Bob $\bar{y}^{t, k}_A$
            \ENDIF
            \IF{ $k$ is even}
                \STATE Bob predicts $\yhk{t}{k} \in \cY$, and sends Alice $\bar{y}^{t, k}_B$
            \ENDIF
        \ENDFOR
        \STATE{Alice and Bob observe $y^t \in \cY$}
    \ENDFOR

\end{algorithmic}
\caption{\textsc{Online-to-One-Shot: General Bayesian Collaboration Protocol}}  \label{alg:one-shot}
\end{protocol}

\begin{theorem} \label{thm:one-shot}
        Let $\cH_J$ be a hypothesis class over the joint feature space $\cX$.
    Let $\cH_A = \{h_A:\cX_A\to\cY\}$ and $\cH_B = \{h_B:\cX_B\to\cY\}$ be hypothesis classes over $\cX_A$ and $\cX_B$.  
    Consider instance $(x_A, x_B, y) \sim \cD$. If
    \begin{itemize}
        \item Alice and Bob are both Bayesian learners
        \item  $\cH_A$ and $\cH_B$ have finite fat shattering dimension at every scale, and $\cH_A$ and $\cH_B$ jointly satisfy the $w(\cdot)$-weak learning condition with respect to $\cH_J$, for continuous $w(\cdot)$ such that $w(\gamma) > 0$ for all $\gamma > 0$,
    \end{itemize} then, if they engage in $K$ rounds of conversation on a single instance  $(x_A, x_B, y)$, the prediction in round $K$ will have regret to the best function in $ \cH_J$ bounded by:
     \begin{align*}
        \E[(\hat{y}^{1, K} - y)^2] - \min_{h_j \in \cH_J} \E[(h_j(x) - y)^2] \leq O\left(w\inv\left(K^{-\frac{1}{3}}\right)\right).
    \end{align*} 
\end{theorem}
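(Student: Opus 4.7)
\begin{proofof}{Theorem \ref{thm:one-shot} (proof plan)}
The plan is to realize the one-shot interaction as the first day of an infinitely long i.i.d.\ sequence, apply the online information-aggregation theorem from Section~\ref{sec:online-alg} to the sequence, and then exploit exchangeability to transfer the bound to a single day.

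First, I would fix an arbitrary $T$ and consider the thought experiment in Protocol~\ref{alg:one-shot}: draw $(x_A^t, x_B^t, y^t) \sim \cD$ i.i.d.\ for $t = 1, \ldots, T$ and have Alice and Bob execute Protocol~\ref{alg:bayes-collaboration} for $K$ rounds each day. Applying Theorem~\ref{thm:bayesians-swap} to each party (for a suitable bucketing $g(T)$, discretization $m$, scale $\eps$, and confidence $\delta$) shows that with probability $1 - \delta$ both parties have $(f, g, \cH_A)$- and $(f, g, \cH_B)$-conversation swap regret for an $f$ whose average $f(T)/T$ tends to zero as $T \to \infty$ (with $m$, $\eps$ also chosen to vanish slowly). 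Plugging these rates into Theorem~\ref{thm:last-round} yields, with high probability, the bound
\[
\frac{1}{T}\sum_{t=1}^{T} \bigl(\hat y^{t,K} - y^t\bigr)^2 \;-\; \min_{h_J \in \cH_J} \frac{1}{T}\sum_{t=1}^{T} \bigl(h_J(x^t) - y^t\bigr)^2 \;\leq\; 2\,w^{-1}\!\left(O(K^{-1/3}) + o_T(1)\right) + o_T(1),
\]
where $o_T(1)$ collects all terms that vanish as $T \to \infty$. Taking expectations (and using boundedness of the losses to trade the failure event against an additive $\delta$) gives the same bound in expectation over the full sample.

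Next, I would argue exchangeability. Because each party is a Bayesian whose predictions on day $t$ condition only on the current features, the current-day conversation, and the \emph{prior} $\cD$ (not on the realized labels of prior days, which are independent of day $t$ under $\cD^{\otimes T}$ conditional on day $t$'s features), the joint distribution of the tuples $\{(x^t, y^t, \hat y^{t, K})\}_{t=1}^{T}$ is invariant under permutations of $t$. In particular, for each $t$, $\E[(\hat y^{t, K} - y^t)^2]$ equals $\E[(\hat y^{1, K} - y^1)^2]$, and by a similar exchangeability argument $\E[\min_{h_J} \frac{1}{T}\sum_t (h_J(x^t) - y^t)^2] \leq \min_{h_J} \E[(h_J(x^1) - y^1)^2]$ (the empirical minimum is only smaller than the population minimum). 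Therefore
\[
\E\bigl[(\hat y^{1, K} - y^1)^2\bigr] \;-\; \min_{h_J \in \cH_J} \E\bigl[(h_J(x^1) - y^1)^2\bigr] \;\leq\; 2\,w^{-1}\!\left(O(K^{-1/3}) + o_T(1)\right) + o_T(1).
\]
Finally I would let $T \to \infty$, using continuity of $w^{-1}$ (which follows from continuity and strict monotonicity of $w$, together with $w(\gamma) > 0$ for $\gamma > 0$) to pass the limit inside, obtaining the claimed $O(w^{-1}(K^{-1/3}))$ bound.

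The main obstacle I anticipate is the careful coordination of the asymptotic parameters: the rates in Theorem~\ref{thm:bayesians-swap} involve the discretization $m$, the fat-shattering scale $\eps$, the bucketing $g$, and the confidence $\delta$, and the bound in Theorem~\ref{thm:last-round} mixes these through $w^{-1}$ in a nonlinear way. I would need to select $m, \eps, g, \delta$ as functions of $T$ that vanish slowly enough that all Bayesian swap-regret terms are $o_T(1)$, yet quickly enough that the $K^{-1/3}$ term from the collaboration theorem remains the dominant one after inversion through $w$. The only substantive use of the assumption that $w$ is continuous with $w(\gamma) > 0$ for $\gamma > 0$ is to justify commuting the limit with $w^{-1}$ at the very end; everything else is a direct composition of the earlier theorems.
\end{proofof}
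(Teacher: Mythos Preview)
Your proposal is correct and follows essentially the same approach as the paper: realize the one-shot interaction as day~1 of an i.i.d.\ thought experiment, use Theorem~\ref{thm:bayesians-swap} to verify conversation swap regret, feed this into Theorem~\ref{thm:last-round}, then use the i.i.d.\ structure (Bayesians do not condition on prior days) to identify the time-averaged bound with the day-1 expectation and send $T\to\infty$. The parameter-coordination obstacle you flag is exactly what the paper handles in the intermediate Lemma~\ref{lem:one-shot}, with the concrete choices $m=T^{1/4}$ and $g(T)=T^{-\alpha_g}$.
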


We can instantiate the above result for  bounded norm linear functions, which satisfy our weak learning guarantee (Theorem \ref{thm:linear-weak-learning}).
\begin{remark}
    If $\cH_A$ and $\cH_B$ are the classes of  linear functions with bounded norm parameter vectors: $\cH_A = \{x_A \to \theta^T x : \|\theta\|_2 < C\}$ and $\cH_B = \{x_B \to \theta^T x : \|\theta\|_2 < C\}$ and $\cH_J$ is the Minkowski sum of $\cH_A$ and $\cH_B$, then for an arbitrary prior distribution, when Bayesian learners engage in a conversation of length $K$:
   \[ \E[(\hat{y}^{1, k} - y)^2] - \min_{h_j \in \cH_J} \E[(h_j(x) - y)^2] \leq   O(C K^{-\frac{1}{6}}). \]
\end{remark}

\section{Lower Bounds: Necessity of Interaction, Weak Learning and Swap Regret}
\label{sec:lower-bounds}

Lastly, we provide qualitative lower bounds to motivate the design choices in our collaborative learning protocols. We demonstrate the necessity of interaction between parties, the necessity of a condition like our weak learning assumption for achieving information aggregation guarantees, and the necessity of using a stronger criterion than external regret (like swap regret) within the protocol.

\paragraph{Interaction is Necessary.} One might wonder if interaction is necessary, especially when the underlying function classes are simple, like linear functions, which satisfy our weak learning condition (Theorem~\ref{thm:linear-weak-learning}). Perhaps some non-adaptive combination of the optimal \emph{linear} predictors $h_A^*(x_A)$ and $h_B^*(x_B)$ is sufficient to achieve performance competitive with the optimal joint linear predictor $h_J^*(x)$. The following example, adapted from the proof of Theorem~\ref{thm:bounded-necessary}, shows this is not the case. It demonstrates that even when the Bayes optimal predictors are themselves linear for Alice, Bob, and the joint feature space, the information required for optimal joint prediction might not be recoverable just by combining the optimal individual linear predictors.

\begin{theorem}[Interaction Necessity for Linear Functions]
\label{thm:interaction-necessity}
There exists a joint distribution $\cD$ over $\cX_A \times \cX_B \times \cY$ and classes $\cH_A, \cH_B, \cH_J$ corresponding to (bounded) linear functions over $\cX_A$, $\cX_B$, and $\cX = \cX_A \times \cX_B$ respectively, such that for any  $f: \cY \times \cY \rightarrow \cY$, 
\[ \E_{\cD}\left[(f(h_A^*(x_A), h_B^*(x_B)) - y)^2\right] > \min_{h_J \in \cH_J}\E_{\cD}\left[(h_J(x) - y)^2\right], \]
where $h_A^*$, $h_B^*$ are the optimal linear predictors in $\cH_A$, $\cH_B$ respectively.
\end{theorem}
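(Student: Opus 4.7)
The plan is to exhibit a distribution whose multi-dimensional features make the Bayes-optimal individual linear predictor a lossy rank-one projection, while the Bayes-optimal joint linear predictor exploits precisely the coordinate that is discarded. Since a scalar projection cannot be inverted to recover the discarded coordinate, no post-processing $f$ of the pair $(h_A^*(x_A), h_B^*(x_B))$ can match $h_J^*$. Note that a one-dimensional construction (as in Theorem~\ref{thm:bounded-necessary}) will not suffice here: if $\cX_A\subseteq\mathbb{R}$ and $\theta_A^*\neq 0$, then $h_A^*(x_A)=\theta_A^* x_A$ is a bijection and retains all information in $x_A$, so $f$ can rescale it back. The lower bound therefore forces a higher-dimensional feature space.

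Concretely I would take $\cX_A\subseteq\mathbb{R}^2$ and $\cX_B\subseteq\mathbb{R}$, with $\cH_A,\cH_B,\cH_J$ the classes of bounded-norm linear functions on the respective spaces (with norm budget large enough to contain the witnesses below). Let $u,v,w$ be three i.i.d.\ mean-zero symmetric random variables (scaled so that $x_A$, $x_B$, and $y$ lie in their declared domains), and set
\[ x_A = (u,v), \qquad x_B = v+w, \qquad y = u+w. \]
I would then verify via the normal equations that, because $\E[y\, x_{A,2}]=\E[uv]+\E[vw]=0$ and the covariance of $x_A$ is diagonal, the Bayes-optimal linear predictor on $\cX_A$ is $h_A^*(x_A)=x_{A,1}=u$, which \emph{completely discards} the $v$ coordinate. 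A similar calculation gives $h_B^*(x_B)=(v+w)/2$.

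The two key computations are: (i) the identity $y = x_{A,1}-x_{A,2}+x_B$ exhibits $y$ itself as a bounded-norm linear function of the joint features, so $\min_{h_J\in\cH_J}\E[(h_J(x)-y)^2]=0$; and (ii) since squared error is minimized pointwise by the conditional mean,
\[ \min_{f}\E[(f(h_A^*(x_A),h_B^*(x_B))-y)^2] \;\geq\; \E[\mathrm{Var}(y\mid u,\,v+w)] \;=\; \E[\mathrm{Var}(w\mid v+w)] \;>\; 0, \]
where the last inequality uses that $v,w$ are i.i.d.\ and non-degenerate, so $\mathrm{Var}(w\mid v+w)$ is a strictly positive constant. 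Combining (i) and (ii) gives the strict gap required by the theorem, and since the bound holds for the \emph{unrestricted} minimizer $f$, it also holds for every $f:\cY\times\cY\to\cY$.

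The main obstacle I anticipate is purely book-keeping: fitting the distribution into the declared bounded domains $\cX_A, \cX_B$ and the label range $\cY$ used elsewhere in the paper. This is handled by taking $u,v,w$ uniform (or symmetric discrete) on a small enough interval and affinely shifting $y$; the rank-one projection argument and the positivity of $\mathrm{Var}(w\mid v+w)$ are both scale- and shift-invariant, so the strict gap persists under any such normalization.
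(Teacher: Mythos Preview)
Your construction is correct, but it takes a different route from the paper. The paper's proof is one-dimensional: with $\xi_A,\xi_B$ i.i.d.\ uniform on $\{-1,1\}$, set $x_A=\xi_A$, $x_B=\xi_A+\xi_B$, $y=\xi_B$. Since $y$ is independent of $x_A$, the optimal linear predictor on $\cX_A$ is the \emph{constant} $h_A^*(x_A)=\E[y]=0$, which discards all of Alice's information. Any $f(0,h_B^*(x_B))$ is then a function of $x_B$ alone, and since $\E[y\mid x_B]=x_B/2$ is already linear, the best such $f$ is $h_B^*$ itself, with strictly positive error; meanwhile $h_J^*(x)=x_B-x_A$ recovers $y$ exactly.

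Your remark that a one-dimensional construction ``will not suffice'' overlooks precisely this case $\theta_A^*=0$: you conditioned on $\theta_A^*\neq 0$ to argue that $h_A^*$ is a bijection, but the paper simply arranges for $\theta_A^*=0$, which is the most extreme form of information loss. So higher dimension is not forced. That said, your two-dimensional construction does buy something the paper's does not: in your example both $h_A^*$ and $h_B^*$ are non-constant and individually informative, so you show interaction is necessary even when neither party's optimal predictor is degenerate. The paper's version is simpler and more direct; yours is slightly stronger qualitatively but at the cost of a higher-dimensional feature space.
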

The proof uses a similar construction as \Cref{thm:bounded-necessary} where the label has no correlation with Alice's features, and weak correlation with Bob's feature. However, subtracting Alice's features from Bob's features gives the signal exactly. Since the optimal predictor on Alice's features is 0 (no correlation to the label), there is no way for any aggregation function to subtract Alice's features from Bob's to get the performance of the joint predictor.

\paragraph{The Weak Learning Condition is Necessary for Boosting.}
Our boosting result (Theorem~\ref{thm:weak-learning}) shows that if $\cH_A$ and $\cH_B$ satisfy the weak learning condition with respect to $\cH_J$, then achieving low swap regret with respect to $\cH_A \cup \cH_B$ implies low external regret with respect to $\cH_J$. We now show this condition is necessary: if a triple $(\cH_A, \cH_B, \cH_J)$ fails the weak learning condition, there exist distributions and prediction sequences with no swap regret to $\cH_A \cup \cH_B$ but positive external regret to $\cH_J$.

\begin{theorem}[Necessity of Weak Learning for Boosting]
\label{thm:weak-learning-necessity}
For any triple of function classes $(\cH_A, \cH_B, \cH_J)$ that fails to satisfy the $w(\cdot)$-weak learning condition (Definition~\ref{def:joint-weak}) for any strictly increasing function $w$, there exists a sequence of examples $(x_A^t, x_B^t, y^t)_{t=1}^T$ and predictions $\hat{y}^{1:T}$ such that, as $T \rightarrow \infty$, the sequence $\hat{y}^{1:T}$ has 0 swap regret with respect to $\cH_A$ and $\cH_B$, but has positive external regret with respect to $\cH_J$.
\end{theorem}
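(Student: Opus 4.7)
The plan is to convert the hypothesized failure of the weak learning condition into a ``bad'' sequence of distributions on which the optimal constant predictor is essentially the best element of $\cH_A \cup \cH_B$, yet is outperformed by some element of $\cH_J$. Making the constant prediction $\mu := \E[y]$ on an i.i.d.\ sequence drawn from such a distribution then yields a single-valued transcript whose swap regret against $\cH_A$ and $\cH_B$ vanishes, while its external regret against $\cH_J$ stays bounded below.

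\textbf{Step 1 (distilling a bad distribution).} Define
\[
w^\ast(\gamma) \;:=\; \inf_{\cD} \Bigl\{\, \min_c \E_\cD[(c-y)^2] - \min_{h \in \cH_A \cup \cH_B} \E_\cD[(h(x)-y)^2] \;:\; \min_c \E_\cD[(c-y)^2] - \min_{h_J \in \cH_J} \E_\cD[(h_J(x)-y)^2] \;\geq\; \gamma \Bigr\},
\]
the smallest individual improvement achievable on any distribution where the joint class already improves by $\gamma$. By construction $w^\ast$ is non-decreasing. I claim that whenever $w^\ast(\gamma) > 0$ for every $\gamma > 0$, one can construct a function $w$ valid for \Cref{def:joint-weak} with $w \leq w^\ast$: given the positive values $a_n := w^\ast(2^{-n})$, pick super-exponentially small $c_n \in (0, a_n]$ satisfying the discrete convexity inequality $3 c_n \leq c_{n-1} + 2 c_{n+1}$, and linearly interpolate through the points $(2^{-n}, c_n)$ to get a strictly increasing, continuous, convex $w$ bounded above by both $\gamma$ and $w^\ast(\gamma)$. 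Such a $w$ would contradict the hypothesis, so in fact $w^\ast(\gamma_0) = 0$ for some $\gamma_0 > 0$, which yields a sequence of distributions $\{\cD_n\}$ on each of which the joint class improves over the best constant by at least $\gamma_0$ in expected squared loss while the best element of $\cH_A \cup \cH_B$ improves by at most $\epsilon_n$, with $\epsilon_n \to 0$.

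\textbf{Step 2 (construction and regret bookkeeping).} For each $T$, choose $n(T)$ growing slowly so that $\epsilon_{n(T)} \to 0$, approximate $\cD_{n(T)}$ by an empirical distribution supported on a finite multiset, and build the length-$T$ sequence by concatenating copies of this multiset (padding by at most one extra copy to hit length exactly $T$, at additive cost $O(1)$). Predict the single constant $\hat y^t := \mu_T = \E_{\cD_{n(T)}}[y]$, the optimal constant for $\cD_{n(T)}$, on every day. Because every prediction equals $\mu_T$, only the level set $v = \mu_T$ contributes to any swap-regret sum, so for any class $\cH$,
\[
\sum_{t=1}^T (\mu_T - y^t)^2 \;-\; \min_{h \in \cH} \sum_{t=1}^T (h(x^t) - y^t)^2 \;=\; T\Bigl( \min_c \E_{\cD_{n(T)}}[(c-y)^2] \;-\; \min_{h \in \cH} \E_{\cD_{n(T)}}[(h(x)-y)^2] \Bigr) + O(1).
\]
Plugging in $\cH = \cH_A$, $\cH_B$, or their union upper-bounds the right-hand side by $T \epsilon_{n(T)} + O(1) = o(T)$, so the average swap regret with respect to each of $\cH_A$ and $\cH_B$ vanishes as $T \to \infty$. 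Plugging in $\cH = \cH_J$ lower-bounds the right-hand side by $T \gamma_0 + O(1)$, giving positive external regret.

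The only subtle ingredient is Step~1: manufacturing a strictly increasing, continuous, convex witness $w$ underneath a strictly positive but otherwise arbitrary $w^\ast$. This is a standard real-analysis exercise, but it is the one place where the structural constraints from \Cref{def:joint-weak} on $w$ meet the largely unstructured function $w^\ast$, and so requires the explicit interpolation above. Once a distribution witnessing $w^\ast(\gamma_0) = 0$ is in hand, Step~2 is elementary bookkeeping on constant-valued predictions.
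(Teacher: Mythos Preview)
The core idea---predict the optimal constant on a distribution where $\cH_J$ improves over constants but $\cH_A \cup \cH_B$ does not---is the same as the paper's. Where you differ is in justifying that such a distribution exists. The paper simply asserts that since the failure ``must hold for any strictly increasing $w$'', the individual improvement is exactly zero on a single $\cD$; you correctly recognize that this leap is not immediate, and your $w^\ast$ construction is the right way to make it precise---though it only yields a \emph{sequence} $\cD_n$ with individual improvement $\epsilon_n \to 0$, not necessarily a single witness with improvement zero. So Step~1 is a genuine refinement of the paper's argument.

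That extra care, however, cascades into a gap in Step~2. Your displayed identity
\[
\sum_{t=1}^T (\mu_T - y^t)^2 - \min_{h \in \cH} \sum_{t=1}^T (h(x^t) - y^t)^2 = T\Bigl(\min_c \E_{\cD_{n(T)}}[(c-y)^2] - \min_{h} \E_{\cD_{n(T)}}[(h(x)-y)^2]\Bigr) + O(1)
\]
presumes that ``approximating $\cD_{n(T)}$ by a finite multiset'' preserves $\min_{h \in \cH_A \cup \cH_B}$ of the loss up to $O(1/T)$. Without a uniform-convergence assumption on $\cH_A \cup \cH_B$, this can fail badly: if $\cH_A$ contains all measurable functions and the sampled $x_A$'s are distinct, the empirical minimum is $0$ regardless of $\epsilon_{n(T)}$, so the swap regret against $\cH_A$ is $\Theta(T)$, not $o(T)$. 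The paper's one-line appeal to the law of large numbers hides the identical issue, but your explicit equation makes it visible. A clean fix is to take the infimum defining $w^\ast$ only over finitely-supported distributions, so that each $\cD_n$ can be repeated verbatim in Step~2 with no approximation; you would then need to check that Step~1 still yields a contradiction---which amounts to requiring weak learning to fail on \emph{empirical} distributions, precisely the hypothesis one wants for a converse to Theorem~\ref{thm:weak-learning}.
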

The proof follows from observing that if the weak-learning condition is not satisfied for any $w(\cdot)$ then there is a distribution such that the joint predictor gets a non-zero gain over the constant predictor but both Alice and Bob do not improve over the constant predictor. Now predicting according to the best constant predictor guarantees no swap-regret to either Alice or Bob, but has non-zero external regret to the joint predictor, since there is a joint predictor better than the constant predictor on the distribution.
% \sg{TODO: add definition, then example}

\paragraph{Weak Learning is Weaker than Information Substitutes.}
We show that our weak learning condition is strictly weaker than the ``information substitutes'' condition studied by  \cite{bo2023agreement}. The concept of information substitutes, in the context of Bayesian agreement, fundamentally concerns the diminishing marginal value of information. When applied to predictors, it says that the improvement gained by adding Bob's information (or signal) is smaller if the Alice's information is already available, and vice-versa.

To translate this concept for comparing function classes $(\cH_A, \cH_B, \cH_J)$, we need a measure of the ``value'' provided by each parties features when used by functions in one of these classes. In prediction tasks with squared error loss, a natural measure of value is the reduction in expected squared error compared to a baseline constant predictor. This gives us the following condition:
\begin{definition}[Information Substitutes for function classes]
\label{def:joint-information-sub}
    Let $\cH_A: \cX_A \rightarrow \cY$ and $\cH_B: \cX_B \rightarrow \cY$ be hypothesis classes for Alice and Bob, respectively, and let $\cH_J$ be a hypothesis class of  over the joint features $\cH_J: \cX_A \times \cX_B \rightarrow \cY$. We say model classes $\cH_A$ and $\cH_B$ satisfy information substitutes with respect to $\cH_J$ if, for all distributions $\cD$, 
    \[
    \min_{h_A \in \cH_A}\E[(h_A(x) - y)^2] - \min_{h_J \in \cH_J}\E[(h_J(x) - y)^2] \le \min_{c \in \mathbb{R}}\E[(c - y)^2] - \min_{h_B \in \cH_B}\E[(h_B(x) - y)^2]
    \]
\end{definition}
Information substitutes, as defined here, imposes a stronger, quantitative relationship on the magnitudes of the maximum achievable gains compared to weak-learning which asks if any positive gain with the joint features implies some positive gain for either individual class.  
\begin{lemma}\label{lem:weak-is-weaker}
If model classes $\cH_A$ and $\cH_B$ satisfy information substitutes with respect to $\cH_J$, they also jointly satisfy the $w(\cdot)$-weak learning condition with respect to $\cH_J$ for $w(\gamma) = \gamma/2$.
\end{lemma}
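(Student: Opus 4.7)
The plan is to unpack the information substitutes condition into a simple arithmetic inequality on the four quantities $L_c := \min_{c \in \mathbb{R}}\E_\cD[(c-y)^2]$, $L_A := \min_{h_A \in \cH_A}\E_\cD[(h_A(x_A)-y)^2]$, $L_B := \min_{h_B \in \cH_B}\E_\cD[(h_B(x_B)-y)^2]$, and $L_J := \min_{h_J \in \cH_J}\E_\cD[(h_J(x)-y)^2]$, and then derive the weak learning conclusion by a short contradiction argument.

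First, I would rewrite the information substitutes condition $L_A - L_J \le L_c - L_B$ in the symmetric form
\[
L_A + L_B \le L_c + L_J.
\]
This is the only fact I will use from the hypothesis.

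Next, fix an arbitrary distribution $\cD$ and suppose the hypothesis of the weak learning condition holds at margin $\gamma$, i.e. $L_c - L_J \ge \gamma$. I want to show that either $L_c - L_A \ge \gamma/2$ or $L_c - L_B \ge \gamma/2$. I would argue by contradiction: assume both $L_c - L_A < \gamma/2$ and $L_c - L_B < \gamma/2$. Adding these gives $L_A + L_B > 2L_c - \gamma$. But the symmetric form of information substitutes, combined with $L_J \le L_c - \gamma$, gives $L_A + L_B \le L_c + L_J \le 2L_c - \gamma$, a contradiction.

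Hence one of $L_A, L_B$ witnesses an improvement of at least $\gamma/2$ over the best constant predictor, which is exactly the weak learning condition with $w(\gamma) = \gamma/2$. The only step worth a sanity check is confirming that the $h_A$ (or $h_B$) that attains the infimum $L_A$ (or $L_B$) exists or can be approximated, and that Assumption \ref{assumption:constant-functions} guarantees $L_A, L_B \le L_c$ so the quantities on both sides of the final inequality make sense; these are immediate from the standing assumptions, so I do not anticipate any real obstacle.
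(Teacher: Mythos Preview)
Your proof is correct and is essentially identical to the paper's: both rewrite information substitutes as $L_A + L_B \le L_c + L_J$, combine with $L_c - L_J \ge \gamma$ to get $(L_c - L_A) + (L_c - L_B) \ge \gamma$, and conclude that one of the two terms is at least $\gamma/2$. The only cosmetic difference is that you phrase the final step as a contradiction while the paper states it directly.
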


Combining this with \Cref{thm:quadratic} gives us that our weak-learning condition is significantly weaker than the information substitutes condition.
\begin{corollary}\label{cor:linear-weak-IS}
There exists $\cH_A, \cH_B, \cH_J$ that satisfy the $w(\cdot)$-weak learnability condition for $w(\gamma) = \Theta(\gamma^2)$ but do not satisfy the information substitutes condition. In fact, the class of bounded linear functions over $\cX_A = \cX_B = [-1,1]$ witnesses this gap.
\end{corollary}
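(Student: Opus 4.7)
The plan is to obtain this corollary by combining three earlier results — the positive weak-learning guarantee for linear functions (Theorem \ref{thm:linear-weak-learning}), its matching lower bound (Theorem \ref{thm:quadratic}), and the weak-learning consequence of information substitutes (Lemma \ref{lem:weak-is-weaker}) — with a simple asymptotic comparison of growth rates at $\gamma\to 0$.

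First I would invoke Theorem \ref{thm:linear-weak-learning} applied to the classes of norm-bounded linear functions $\cH_A,\cH_B$ over $\cX_A=\cX_B=[-1,1]$ (taking the bound $C\ge 1/2$), together with $\cH_J$ the Minkowski sum. This immediately gives the $w(\cdot)$-weak learning condition with $w(\gamma)=\gamma^2/(16C^2)=\Omega(\gamma^2)$, which establishes the ``$w(\gamma)=\Theta(\gamma^2)$'' claim in the lower direction. For tightness in the upper direction, I would invoke Theorem \ref{thm:quadratic}, whose witness construction is already expressible as bounded, star-shaped linear function classes over a one-dimensional domain; this shows the same triple $(\cH_A,\cH_B,\cH_J)$ cannot satisfy $w(\cdot)$-weak learning for any strictly increasing $w$ with $w(\gamma)=\omega(\gamma^2)$. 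Together these match to $w(\gamma)=\Theta(\gamma^2)$, which completes the first half of the statement.

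For the second half, I would argue by contradiction: suppose the same triple of linear classes satisfied the information substitutes condition of Definition \ref{def:joint-information-sub}. By Lemma \ref{lem:weak-is-weaker} this would force the $w(\cdot)$-weak learning condition to hold with the linear margin $w(\gamma)=\gamma/2$. Since $\gamma/2=\omega(\gamma^2)$ as $\gamma\to 0$, this directly contradicts the lower bound obtained from Theorem \ref{thm:quadratic}. Hence linear functions do not satisfy information substitutes, and we have exhibited the desired separation.

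The argument is essentially a bookkeeping assembly of existing statements, so the only real ``obstacle'' is ensuring that the witness distribution used in Theorem \ref{thm:quadratic} genuinely lies inside the same bounded-linear regime used in Theorem \ref{thm:linear-weak-learning} and in Definition \ref{def:joint-information-sub}; verifying that the construction (one-dimensional features in $[-1,1]$, norm-bounded linear predictors) simultaneously falls in the hypothesis class for both sides is the one place that needs a sentence of justification. Everything else is immediate from the asymptotic gap $\gamma^2\ll \gamma/2$.
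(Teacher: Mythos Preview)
Your proposal is correct and follows essentially the same approach as the paper: argue by contradiction from Lemma~\ref{lem:weak-is-weaker} and Theorem~\ref{thm:quadratic}, using that $\gamma/2=\omega(\gamma^2)$. If anything, your version is more complete---the paper's proof only spells out the contradiction establishing that information substitutes fails, leaving the $w(\gamma)=\Theta(\gamma^2)$ claim implicit from the surrounding Theorems~\ref{thm:linear-weak-learning} and~\ref{thm:quadratic}, whereas you explicitly invoke both directions.
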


\paragraph{External Regret is Insufficient.}
Our protocol aims to produce predictions $p$ that have low swap regret with respect to $\cH_A$ and $\cH_B$. One might ask if the weaker condition of low \emph{external} regret would suffice. That is, if $p$ has low external regret to $\cH_A$ and low external regret to $\cH_B$, does it follow (under the weak learning condition) that $p$ has low external regret to $\cH_J$? The following example shows the answer is no, even for linear functions where the weak learning condition holds.

\begin{theorem}
    \label{lem:swap-necessary}
    There exists a joint distribution $\cD$ over $\cX_A \times \cX_B \times \cY$ and classes $\cH_A, \cH_B, \cH_J$ corresponding to linear functions over $\cX_A, \cX_B,$ and $\cX_A  \times \cX_B$, respectively, such that there exists a sequence of examples $(x_A^t, x_B^t, y^t)_{t=1}^T$ and predictions $\hat{y}^{1:T}$ such that, as $T \rightarrow \infty$, the sequence $\hat{y}^{1:T}$ no external regret to $\cH_A$ and $\cH_B$, but has positive external regret with respect to $\cH_J$. 
\end{theorem}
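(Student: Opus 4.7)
\begin{proofof}{Theorem \ref{lem:swap-necessary} (proof plan)}
The plan is to exhibit an explicit construction witnessing the separation: a distribution over $\cX_A \times \cX_B \times \cY$ (with $\cX_A = \cX_B = [-1,1]$ and $\cY = [-1,1]$) and a sequence of predictions whose error exactly matches that of the best individual linear predictors, while being strictly beaten by a joint linear predictor. Concretely, take the cyclic sequence that repeats the four quadruples $(x_A,x_B,y) \in \{(1,1,0),\,(1,-1,1),\,(-1,1,-1),\,(-1,-1,0)\}$, each appearing $T/4$ times. The joint label satisfies $y^t = (x_A^t - x_B^t)/2$ exactly, and $\cH_A, \cH_B, \cH_J$ are the classes of norm-bounded affine functions on $\cX_A$, $\cX_B$ and $\cX_A \times \cX_B$ respectively (which fall under the setup of Theorem~\ref{thm:linear-weak-learning} and thus satisfy the weak learning condition).

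The second step is to pin down the optimal predictors in each class by a routine normal-equations calculation. Over the empirical distribution above, $\E[y]=0$, $\E[x_A y]=\E[x_A^2]/2$, $\E[x_B y]=-\E[x_B^2]/2$, and $\E[x_A x_B]=0$. This yields $h_A^{*}(x_A)=x_A/2$ with squared error $1/4$, $h_B^{*}(x_B)=-x_B/2$ with squared error $1/4$, and $h_J^{*}(x)=(x_A-x_B)/2$ with squared error $0$.

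The third step is to define the adversarial sequence of predictions $\hat{y}^t := x_A^t/2 = h_A^{*}(x_A^t)$. Since $\hat{y}^{1:T}$ coincides pointwise with $h_A^{*}$, its cumulative squared error equals exactly that of the best function in $\cH_A$, giving zero external regret to $\cH_A$. A direct computation shows the same total error is obtained against $\cH_B$: on each cycle, $\sum (\hat{y}^t - y^t)^2 = (1/2)^2 + (1/2)^2 + (1/2)^2 + (1/2)^2 = 1 = \sum (-x_B^t/2 - y^t)^2$, so the external regret to $\cH_B$ is likewise zero. Meanwhile, $\hat{y}^{1:T}$ has total error $T/4$ whereas $h_J^{*}$ has total error $0$, so the external regret to $\cH_J$ equals $T/4 = \Omega(T)$. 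As $T \to \infty$ this gives the desired separation.

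There is no genuine obstacle to overcome here beyond getting the example right: the only subtlety is that, on each level set of $\hat{y}$ (e.g.\ $\{t : x_A^t = +1\}$), the affine class $\cH_B$ can \emph{perfectly} predict $y$ via $x_B \mapsto (1-x_B)/2$. Thus $\hat{y}^{1:T}$ has substantial \emph{swap} regret to $\cH_B$ (of order $T/4$), which is exactly the slack that our collaboration protocol exploits elsewhere in the paper via Theorem~\ref{thm:weak-learning}. This is what makes the example consistent with the boosting theorem while still separating external regret from joint external regret, and explains why swap regret (rather than mere external regret) is the right condition to enforce inside the protocol.
\end{proofof}
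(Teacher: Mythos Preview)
Your proof is correct and follows the same overall strategy as the paper: exhibit an explicit distribution, take the predictions to be the optimal linear predictor on $\cX_A$, and check that this matches the error of the best function in $\cH_B$ while being strictly beaten by a joint linear predictor. The specific construction differs: the paper uses $x_A,x_B\sim_{\text{i.i.d.}}\{0,1\}$ with $y=x_Ax_B$, finds $h_A^*(x_A)=x_A/2$ and $h_B^*(x_B)=x_B/2$ each with error $1/8$, and a best joint linear predictor $h_J^*(x)=(x_A+x_B)/2-1/4$ with error $1/16$, giving a gap of $1/16$. Your construction instead takes $y=(x_A-x_B)/2$ on $\{-1,1\}^2$, so the joint linear class fits $y$ \emph{perfectly} and the gap is $1/4$; this makes the separation quantitatively cleaner and has the pleasant feature that the Bayes optimal predictor is itself linear. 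One small point: your $\cY=[-1,1]$ departs from the paper's standing convention $\cY=[0,1]$, but this is cosmetic and can be fixed by an affine rescaling. Your closing observation that $\cH_B$ achieves zero error on each level set of $\hat{y}$ (so $\hat{y}$ has large swap regret to $\cH_B$) is a nice explicit check of consistency with Theorem~\ref{thm:weak-learning} that the paper's proof does not spell out.
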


%\section{Other Extensions}

\section{Discussion and Future Work}
\label{sec:discussion}
We present efficient protocols for collaborative information aggregation, enabling two parties with distinct feature spaces—even if mutually illegible—to provably achieve the accuracy of joint feature access without sharing their features. Our protocols require the two parties to operate only on their own feature spaces and communication occurs solely through label predictions or best-response actions, making our framework practical in modern AI systems, particularly human-AI interaction and multi-modal settings, where challenges like privacy, data modality differences, and computational overheads often render feature sharing impractical. Moreover, our protocols underscore the fundamental role of interaction to achieve performance that surpasses that of the individual parties, or simple non-interactive aggregation methods, opening up a new avenue of research in collaborative learning.

Our work naturally leaves open several questions. Theoretically, extending the analysis of the weak learning condition beyond the linear-like classes and Minkowski sum structure would broaden the applicability of our framework to more complex function classes encountered in practice.  Additionally, our online guarantees hold against worst-case adversarial sequences, hence, exploring settings under beyond-worst-case assumptions—for instance, leveraging models like smoothed analysis \cite{haghtalab2024smoothed} or incorporating mechanisms such as selective prediction \cite{goel2023adversarial}—could potentially yield tighter bounds and reduce the number of communication rounds. 

From a practical and safety perspective, the current protocols assume honest participation of both parties. A crucial direction, particularly for human-AI collaboration, involves designing protocols inherently robust to strategic manipulation, mitigating risks where a capable AI might deceptively steer outcomes towards misaligned objectives. Ensuring trustworthiness in these interactions would require designing strategy-proof protocols within our collaborative framework. 

Empirically evaluating the feasibility of our protocols is an important direction. While empirical evaluations in realistic human-AI settings may be challenging, evaluations in the multi-modal setting should be a good test ground for understanding the practical challenges of scalability, performance, and communication efficiency, to guide further development of the framework.
% \sg{Moved to related work}\paragraph{Multi-model learning.} In multi-modal systems, how to best combine information from different modalities remains a fundamental challenge \cite{baltruvsaitis2018multimodal}. In practice, the different unimodal systems are combined using various fusion techniques: early fusion, late fusion, and hybrid versions of these (see the survey \cite{li2024multimodal} and references within it). Early fusion combines at the representation level by concatenating the representations of each model and training a joint model, whereas late fusion instead combines the predictions of each model using averaging to produce the final prediction. Our results highlight the limitation of late fusion by showing scenarios in which combining at prediction level does not improve prediction accuracy and also highlights a way to avoid early fusion and the cost of training joint models.

%\igh{Multiparty, multiple dimension predictions?}

%\input{multi-dimensional}

\subsection*{Acknowledgments.} We thank Avrim Blum for enlightening discussions about the relationship between Agreement and co-training, Kate Donahue for pointing out the connection to ``complementarity'' in the human/machine interaction literature and Damek Davis for identifying the necessity of boundedness for linear functions to satisfy weak-learning. This research was supported in part by an IBM PhD fellowship, the Hans Sigrist prize, the Simons collaboration on Algorithmic Fairness, and NSF grant CCF-1934876.

\bibliographystyle{plainnat}
\bibliography{refs}
\appendix

\section{Proofs of Tightness of \Cref{thm:linear-weak-learning} from Section~\ref{sec:boost}}

We first give the formal proofs for the necessity of boundedness for weak-learning and the tightness of quadratic guarantees. Then we show why some assumption on the joint class like the Minowski sum one we make is necessary to get weak-learnability.
\subsection{Proof of Theorem~\ref{thm:bounded-necessary}}
\begin{proof}
Let $\cX_A = \cX_B = [-1,1]$ and $\cF_A  = \{x_A \mapsto w_A x_A: w_A \in \mathbb{R}\}$ and $\cF_B  = \{x_B \mapsto w_B x_B: w_B \in \mathbb{R}\}$. Note that $\cF_A$ and $\cF_B$ are star-shaped since they are linear functions, but unbounded since we have no bounds on the weights. For any strictly increasing function $w$, we will construct a distribution such that the $w(\cdot)$-weak-learnability condition does not hold for these function classes.

Consider the following joint distribution $\cD_\rho$ over $\cX_A \times \cX_B \times \cY$ for any $\rho \ge 1$:
\begin{align*}
x_A  = \frac{1}{2} \xi_A,~ x_B = x_A + \frac{\xi_2}{2\rho} \text{ and } y = \xi_B \text{ for } \xi_A, \xi_B \sim_{\text{unif}} \{-1, +1\}.
\end{align*}

Observe that the optimal constant predictor $c^* = \E[Y] = 0$, giving $\min_{c \in \mathbb{R}}\E[(c - y)^2] = \E[\xi_B^2] = 1$ and the optimal joint predictor is $h_J^*(x) = 2\rho x_B - 2\rho x_A = y$, yielding $\min_{h_J \in \cH_J}\E\left[(h_J(x) - y)^2\right] = 0$. This implies that 
\[
\min_{c \in \mathbb{R}} \E[(c - y)^2] - \min_{h_J\in \cH_J} \E[(h_J(x) - y)^2] = 1.
\]
We will show that despite this, the improvement over the constant function for the optimal predictor on either feature alone is much smaller. Observe that the the label $y$ does not depend on $x_A$, hence the optimal predictor over $\cX_A$ is $h_A^*(x) = 0$ which implies $\min_{h_A \in \cH_A}\E\left[(h_A(x_A) - y)^2\right] = \E[y^2] = 1$. This implies,
\[
\min_{c \in \mathbb{R}} \E[(c - y)^2] - \min_{h_A\in \cH_A} \E[(h_A(x_A) - y)^2] = 0 \le w(0) < w(1).
\]
Here the last follows from $w(0) \in [0,1]$ and $w$ being strictly increasing.

The label $y$ does have correlation with $x_B$, and a simple calculation gives us that the optimal linear predictor over $\cX_B$ has form $h_B^*(x_B) = w_B x_B$ where
\begin{align*}
w_B &= \frac{\E[x_By]}{\E[x_B^2]}= \frac{\frac{\E[\xi_A\xi_B]}{2} + \frac{\E[\xi_B^2]}{2 \rho}}{\frac{\E[\xi_A^2]}{4} + \frac{\E[\xi_B^2]}{4 \rho^2}} = \frac{2 \rho}{\rho^2 + 1}.
\end{align*}
This gives us
\begin{align*}
\E\left[(h_B^*(x_B) - y)^2\right] & = \E\left[\left(\frac{2\rho}{\rho^2+1}x_B - \xi_B\right)^2\right] \\
&= \E\left[\left(\frac{\rho}{\rho^2+1}\xi_A - \frac{\rho^2}{\rho^2+1}\xi_B\right)^2\right] \\
&= \frac{\rho^2}{(\rho^2+1)^2}\E[\xi_A^2] + \frac{\rho^4}{(\rho^2+1)^2}\E[\xi_B^2] = \frac{\rho^2}{\rho^2+1}
\end{align*}
This in turn implies:
\begin{align*}
\min_{c \in \mathbb{R}} \E[(c - y)^2] - \min_{h_B\in \cH_B} \E[(h_B(x_B) - y)^2]  &= 1 - \frac{\rho^2}{\rho^2+1} = \frac{1}{\rho^2+1}.
\end{align*}
$w(\cdot)$-weak learnability would require us to have $\frac{1}{\rho^2+1} \ge w(1)$. However, we can always choose $\rho$ large enough to make this not hold. In particular, any $\rho > \sqrt{\frac{1 - w(1)}{w(1)}}$ will violate this condition. Note that since $w$ is strictly increasing, we will be guaranteed that $w(1) > w(0) \ge 0$, so such a $\rho$ exists. Therefore, for every fixed $w$, we can always construct a distribution that does not satisfy our weak-learnability guarantee.
\end{proof}

\subsection{Proof of Theorem~\ref{thm:quadratic}}
\begin{proof}
Let $\cX_A = \cX_B = [-1,1]$ and $\cF_A  = \{x_A \mapsto w_A x_A: w_A \in \mathbb{R}, |w_A| \le 1\}$ and $\cF_B  = \{x_B \mapsto w_B x_B: w_B \in \mathbb{R}, |w_B| \le 1\}$. Note that $\cF_A$ and $\cF_B$ are star-shaped since they are linear functions, and 1-bounded since both the input and weights are bounded by 1. For any strictly increasing function $w$, we will construct a distribution such that the $w(\cdot)$-weak-learnability condition does not hold for these function classes with respect to $\cH_J = \{h_A + h_B: h_A \in \cH_A, h_B \in \cH_B\}$. 

We will consider the same joint distribution as
in the proof of \cref{thm:bounded-necessary}. We will further assume that $\rho \ge 1$. 

Recall that the optimal joint predictor was $h_J(x) = 2 \rho x_A - 2 \rho x_B$ which required elements from the base classes to have norm $2 \rho$ which grows with increasing $\rho$. In our bounded class, however, the optimal predictor is the scaled down version of this predictor to adhere to our norm constraints: $h^*_J(x) = x_A - x_B = \frac{y}{2 \rho}$. This gives us,
\begin{align*}
    \E[(h_J^*(x) - y)^2] = \E\left[\left(\frac{y}{2 \rho} - y\right)^2\right] = \left(1 - \frac{1}{2\rho}\right)^2 \E[y^2] = \frac{(2 \rho - 1)^2}{4 \rho^2}.
\end{align*}
Which in turn implies, that the gain of the joint predictor over the constant function is
\[
\min_{c \in \mathbb{R}} \E[(c - y)^2] - \min_{h_J\in \cH_J} \E[(h_J(x) - y)^2] = 1 - \frac{(2 \rho - 1)^2}{4 \rho^2} = \frac{4 \rho - 1}{4 \rho^2} \in \left[\frac{3}{4\rho}, \frac{1}{\rho}\right].
\]
Here the last follows from using the fact that $\rho \ge 1$.

Recall that the optimal predictor over $\cX_A$ is $h_A^*(x_A) = 0$ which still belongs to our bounded class, and its gain over the constant predictor was 0. The optimal predictor over $\cX_B$ in the unbounded case is $h_B^*(x_B) = \frac{2 \rho}{\rho^2 + 1}x_B$. Since $\rho^2 + 1 \ge 2 \rho$ for all $\rho$, the norm of this predictor is actually bounded by 1. Therefore, for our bounded class, this remains an optimal predictor. The gain of this predictor over the constant predictor is
\begin{align*}
    &\min_{c \in \mathbb{R}} \E[(c - y)^2] - \min_{h_B\in \cH_B} \E[(h_B(x_B) - y)^2]  = \frac{1}{\rho^2+1} \in \left[\frac{1}{2\rho^2}, \frac{1}{\rho^2}\right]
\end{align*}
Here the last follows from using the fact that $\rho \ge 1$. 

Therefore, for $\cD_\rho$, the gain from the joint predictor over a constant is $\Theta(1/\rho)$ and from the best individual predictor over constant is $\Theta(1/\rho^2)$ implying that there is no $w(\gamma) = \omega(\gamma^2)$ for this distribution that satisfies weak-learnability.
% Since $Y$ does not depend on $X_1$, we have $\E\left[(c^* - Y)^2\right] - \E[(h^*_{A}(X_1) - Y)^2] = 0$. The optimal $h_B^*(X_2) = \frac{4\gamma}{4\gamma^2+1} X_2$ without the norm bound. However, this actually satisfies the norm constraint since $4\gamma^2 + 1 - 4 \gamma = (2 \gamma - 1)^2 \ge 0$. This gives us,
% \begin{align*}
%     \E[(h^*_{2}(X_2) - Y)^2] &= \E\left[\left(\frac{4\gamma}{4\gamma^2+1} X_2 -\xi_2\right)^2\right]\\
%     &= \E\left[\left(\frac{4\gamma}{4\gamma^2+1} (X_1 + \gamma \xi_2) -\xi_2\right)^2\right]\\
%     &= \E\left[\left(\frac{4\gamma}{4\gamma^2+1} X_1 - \frac{1}{4\gamma^2+1}\xi_2\right)^2\right]\\
%     &= \frac{4\gamma^2}{(4\gamma^2+1)^2} + \frac{1}{(4\gamma^2+1)^2} = \frac{1}{4 \gamma^2 + 1}.
% \end{align*}
% This gives us,
% \begin{align*}
%     \E\left[(c^* - Y)^2\right] - \E[(h^*_{2}(X_2) - Y)^2] = 1 - \frac{1}{4 \gamma^2 + 1} = \frac{4 \gamma^2}{4 \gamma^2 + 1} < 4\gamma^2.
% \end{align*}
% where the last inequality follows from $\gamma^2 > 0$. This completes the proof.
\end{proof}

Finally, we establish that it is necessary to make some assumption on $\cH_J$, such as the Minowski sum structure we use---multiplicative rather than additive combinations would not work:
\begin{theorem}
There exists classes $\cF_A = \{f_A: \cX_A \to \mathbb{R}\}$ and $\cF_B = \{f_B: \cX_B \to \mathbb{R}\}$ that are star-shaped and 1-bounded over some domain $\cX_A, \cX_B$ such that $\cH_A = \{f_A + b_A : f_A \in \cF_A, b_A \in \mathbb{R}\}$ and $\cH_B = \{f_B + b_B : f_B \in \cF_B, b_B \in \mathbb{R}\}$ but do not jointly satisfy $w(\cdot)$-weak learning with respect to $\cH_J = \{h_A \cdot h_B: h_A \in \cH_A, h_B \in \cH_B\}$ for any strictly increasing $w$.
% Consider $\cH_A  = \{x_A \mapsto w_A x_A + b_A: w_A, b_A \in \mathbb{R}, |w_A| \le 1\}$ and $\cH_B  = \{x_B \mapsto w_B x_B + b_B: w_B, b_B \in \mathbb{R}, |w_B| \le 1\}$ to be the class of norm-bounded 1-dimensional linear functions over $\cX_A= \cX_B = [-1,1]$ and  $\cH_J = \{h_A \cdot h_B: h_A \in \cH_A, h_B \in \cH_B\}$. Then $\cH_A$ and $\cH_B$ do not jointly satisfy $w(\cdot)$-weak learning with respect to $\cH_J$ for any strictly increasing $w$.
\end{theorem}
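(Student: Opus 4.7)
The plan is to give a single concrete construction with an XOR-like distribution that exhibits perfect predictability via a product $h_A \cdot h_B$ while being completely unpredictable from either coordinate alone. Specifically, I would take $\cX_A = \cX_B = [-1,1]$, set $\cF_A = \{x_A \mapsto w_A x_A : |w_A|\le 1\}$ and $\cF_B = \{x_B \mapsto w_B x_B : |w_B|\le 1\}$ (clearly star-shaped and $1$-bounded), and then form $\cH_A,\cH_B$ by adding arbitrary constant shifts, and $\cH_J$ as the pointwise products $h_A \cdot h_B$.

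Next, consider the distribution $\cD$ on $\cX_A \times \cX_B \times \cY$ where $x_A$ and $x_B$ are independent uniform draws from $\{-1,+1\}$ and $y = x_A \cdot x_B$. The first step is to show $\cH_J$ contains a perfect predictor: take $h_A(x_A) = x_A$ and $h_B(x_B) = x_B$, so $h_A \cdot h_B = y$ pointwise, giving
\[
\min_{c \in \mathbb{R}}\E_\cD[(c-y)^2] - \min_{h_J \in \cH_J}\E_\cD[(h_J(x)-y)^2] = 1 - 0 = 1,
\]
since the best constant predictor is $c^* = \E[y] = 0$ with error $\E[y^2] = 1$.

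The main substantive step is to show that neither individual class can improve at all over the best constant. Because $x_A$ and $x_B$ are independent and each symmetric about $0$, we have $\E[y \mid x_A] = x_A \cdot \E[x_B] = 0$ and symmetrically $\E[y \mid x_B] = 0$. Hence for any $h_A(x_A) = w_A x_A + b_A \in \cH_A$, a direct expansion using $\E[x_A]=0,\E[x_A^2]=1,\E[y]=0,\E[x_A y]=\E[x_A^2]\E[x_B]=0$ gives
\[
\E_\cD[(h_A(x_A)-y)^2] = w_A^2 + b_A^2 + 1 \ge 1 = \min_{c \in \mathbb{R}}\E_\cD[(c-y)^2],
\]
so $\min_{h_A \in \cH_A}\E_\cD[(h_A(x_A)-y)^2] = 1$, achieved only by $h_A \equiv 0$. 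The same computation yields $\min_{h_B \in \cH_B}\E_\cD[(h_B(x_B)-y)^2] = 1$.

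Finally, to conclude non-weak-learnability for every strictly increasing $w$, observe that such $w$ must satisfy $w(1) > w(0) \ge 0$, i.e.\ $w(1) > 0$. But in our construction the joint gap is $\gamma = 1$ while the individual gap is exactly $0 < w(1)$, violating Definition~\ref{def:joint-weak-decisions}'s requirement that one of $\cH_A$ or $\cH_B$ improve over the constant by at least $w(\gamma) = w(1)$. I do not anticipate a major obstacle here; the main subtlety to check is merely that multiplicative combination (unlike Minkowski sum) fails to translate individual linear signal into joint signal, which is exactly what the XOR-style parity construction exploits.
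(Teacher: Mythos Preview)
Your proposal is correct and is essentially identical to the paper's proof: same function classes (bounded linear functions on $[-1,1]$), same XOR distribution $y=x_Ax_B$ with $x_A,x_B$ independent uniform on $\{-1,+1\}$, and the same argument that the product predictor is perfect while neither coordinate alone carries any signal. One tiny note: you reference Definition~\ref{def:joint-weak-decisions}, but the relevant definition here is the regression weak-learning condition (Definition~\ref{def:joint-weak}), not the decision-theoretic variant.
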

\begin{proof}
We will consider the function classes as in the proof of \Cref{thm:quadratic}, that is, $\cX_A = \cX_B = [-1,1]$ and $\cF_A  = \{x_A \mapsto w_A x_A: w_A \in \mathbb{R}, |w_A| \le 1\}$ and $\cF_B  = \{x_B \mapsto w_B x_B: w_B \in \mathbb{R}, |w_B| \le 1\}$. We know that this class is $1$-bounded and star shaped.

Now consider the following joint distribution over $\cX_A \times \cX_B \times \cY$:
\begin{align*}
x_A & \sim_{\text{unif}} \{-1, +1\}, x_B \sim_{\text{unif}} \{-1, +1\} \text{ independent of } x_A, \text{ and }y = x_A x_B 
\end{align*}
The best constant predictor on this is $\E[y] = 0$. This has loss $\E[y^2] = 1$. The best joint predictor for this distribution is $h_J^*(x) = x_A x_B$ which can be constructed using $h_A(x_A) = x_A$ and $h_B(x_B) = x_B$. Since this perfectly predicts the label, this has loss 0, therefore its gain over the constant predictor is 1. However, the optimal predictor on either function alone is $h_A^*(x_A) = h_B^*(x_B) = 0$. This is because the label is uniformly random given only information of either $x_A$ or $x_B$. This implies that the gain of the best predictor over the constant predictor is 0. This violates the weak-learning condition for any strictly increasing $w$ ($w(1) > w(0) \ge 0$).
\end{proof}

% \label{sec:lower-bounds}
% \input{lower_bounds}

\section{Additional Material from Section \ref{sec:online-alg}}\label{sec:main_app}
\subsection{Calibration Preliminaries} \label{sec:prelims-calibration}
In this section we give the basic calibration definitions that we work with in our proofs. 

The standard measure of calibration of some sequence of predictions $\hat{y}^{1:T}$ to outcomes $y^{1:T}$ in a sequential prediction setting is \emph{expected calibration error}, defined as follows.

\begin{definition}[Expected Calibration Error] Given a sequence of predictions $\hat{y}^{1:T}$ and outcomes $y^{1:T}$, their expected calibration error is,
\[
\ECE(\hat{y}^{1:T}, y^{1:T}) = \sum_{p \in [0,1]} \left| \sum_{t=1}^T \mathbbm{1}[\hat{y}^t = p] (\hat{y}^t - y^t) \right|
\]

Here the outer sum is over the values $p$ that appear in the sequence $\hat{y}^{1:T}$. 
\end{definition}

We will sometimes measure calibration error of a sequence instead using \emph{distance to calibration}, first defined by \cite{blasiok2023unifying} (we here use the definition given by \cite{qiao2024distance} in the sequential setting). Distance to calibration measures the  $\ell_1$ distance between a sequence of predictions and the closest sequence of  \textit{perfectly calibrated} predictions. 
\begin{definition}[Distance to Calibration] Given a sequence of predictions $\hat{y}^{1:T}$ and outcomes $y^{1:T}$, the distance to calibration is,
\[
\CalDist(\hat{y}^{1:T}, y^{1:T}) = \min_{q^{1:T} \in \cC(y^{1:T})} \left\|\hat{y}^{1:T} - q^{1:T}\right\|_1
\]    
where $\cC(y^{1:T}) = \{ q^{1:T} : \ECE(q^{1:T}, y^{1:T}) = 0 \}$ is the set of predictions that are perfectly calibrated against outcomes $y^{1:T}$. 
\end{definition}

Calibration has a close relationship to squared error, which we will use as a potential function in some of our analyses. Below we define the squared error of a sequence of predictions relative to a sequence of outcomes:

\begin{definition}[Squared Error] Given a sequence of predictions $\hat{y}^{1:T}$ and outcomes $y^{1:T}$, the squared error between them is,
  \begin{align*}
       \SQE (\hat{y}^{1:T},y^{1:T}):= \sum_{t \in [T]}(\hat{y}^{t} - y^{t})^{2}. 
  \end{align*} 
  We will overload this notation for the special case of constant sequences $\hat{y}^{1} = \ldots = \hat{y}^{T} = p$:
    \begin{align*}
       \SQE (p,y^{1:T}):= \sum_{t \in [T]}(p - y^{t})^{2}.
  \end{align*} 
\end{definition}

\subsection{Conversation Calibration}\label{app:conv_calib}
Here we formally define the notion of calibration introduced in~\cite{collina2025tractable}, called \emph{conversation calibration}. This notion is defined over a transcript of days to $1...T$ and varied-length rounds. An agent is \emph{conversation calibrated} if for every round $k$, the sequence of predictions (over days $t$) that they make at round $k$ of conversation is calibrated not just marginally, but \emph{conditionally} on the value of the prediction that the other agent made at round $k-1$. We will condition on \emph{bucketings} of predictions.

\begin{definition}[Bucketing of the Prediction Space] \label{def:bucketing}
For bucket coarseness parameter $n$, let $B_n(i)= \left[\frac{i-1}{n}, \frac{i}{n} \right)$ and $B_n(n) = \left[\frac{n-1}{n}, 1 \right]$ form a set $\cB_n$ of $n$ buckets of width $1/n$ that partition the unit interval.
% refer to a partition, or bucketing, of the unit interval into $n$ sub-intervals as:
    % \begin{align*} 
    %     \cB(n) = \left\{ \left[0, \frac{1}{n} \right), \ldots, \left[\frac{i-1}{n}, \frac{i}{n} \right), \ldots, \left[\frac{n-1}{n}, 1 \right] \right\},
    % \end{align*}
    % where $B_n(i)$, for $i \in [n]$, refers to the interval with left endpoint $\frac{i-1}{n}$.
\end{definition}

\begin{definition}[Conversation-Calibrated Predictions]
\label{def:conversation-calibration}
Fix an error function $f:\{1, \ldots, T\} \rightarrow \mathbb{R}$ and bucketing function $g: \{1, \ldots, T\} \rightarrow (0,1]$. Given a prediction transcript $\pi^{1:T}$ resulting from an interaction in the Collaboration Protocol, Bob is $(f, g)$-conversation-calibrated if for all even rounds $k$ and buckets $i \in \{1, \ldots, 1/g(T)\}$:
\begin{align*}
    \CalDist(\phk{T_A(k-1, i)},y^{T_A(k-1, i)}) \leq f(|T_A(k-1, i)|),
\end{align*} 
where $T_A(k-1,i) = \left\{t  ~|~ \pmk{t}{k-1} \in B_i(1/g(T))\right\}$ is the subsequence of days where the predictions of Alice at the previous round fall in bucket $i$.

Symmetrically, Alice is $(f, g)$-conversation-calibrated if for all odd rounds $k$ and buckets $i \in \{1, \ldots, 1/g(T)\}$:
\begin{align*}
    \CalDist(\pmk{T_B(k-1, i)},y^{T_B(k-1, i)}) \leq f(|T_B(k-1, i)|),
\end{align*}
where $T_B(k-1,i) = \left\{t  ~|~ \pmk{t}{k-1} \in B_i(1/g(T))\right\}$ is the subsequence of days where the predictions of Bob at the previous round fall in bucket $i$.
\end{definition}

We also introduce a function that checks whether, on a given day $t$ and given even round $k$, the prediction $\hat{y}^{t,k}$ is within $\epsilon$ of the prediction in the previous round $\hat{y}^{t,k-1}$. Formally, we define
\begin{definition}[Agreement Condition $A_{\pi^{1:T}}(t,k, \epsilon)$ and Disagreement Subsequence $D(T^{k})$]
       \[
    A_{\pi^{1:T}}(t,k):= \begin{cases}
        \mathbb{I}[|\hat{y}_{A}^{t,k} -\hat{y}_{A}^{t,k-1}| \leq \epsilon] &\text{if }\ell \text{ is odd},\\
         \mathbb{I}[|\hat{y}_{B}^{t,k} - \hat{y}_{B}^{t,k-1}| \leq \epsilon] &\text{if }\ell \text{ is even}.
    \end{cases}
    \]

    Furthermore, let $D(T^{k})$ be the subset of days $t$ such that $A_{\pi^{1:T}}(t,k) = 0$.
\end{definition}

We are now ready to discuss the relationship between conversation calibration and conversation swap regret. 

\begin{theorem}
If $\cH$ contains all constant functions, then $(f,g,\cH)$-Conversation Swap Regret implies $(f',g)$-Conversation Calibration, where $f'(T) = \sqrt{T \cdot f(T)}$. \label{thm:CSR2CC} 
\end{theorem}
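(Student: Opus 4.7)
\begin{proofof}{Proof Proposal (Theorem \ref{thm:CSR2CC})}
The plan is to reduce the theorem to a conditional version of the classical Foster--Vohra fact that squared-error swap regret against constant predictors controls calibration error, and then note that calibration distance is dominated by ECE.

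Fix an even round $k$ and a bucket index $i \in \{1,\ldots,1/g(T)\}$; the analysis for Alice on odd rounds is symmetric. Restrict attention to the subsequence $T_A(k-1,i)$ of days on which Alice's round-$(k{-}1)$ prediction lies in bucket $i$. By Bob's $(f,g,\cH)$-conversation swap regret, on this subsequence the swap regret of $\hat y^{:,k}$ with respect to $\cH$ is at most $f(|T_A(k-1,i)|)$. Since $\cH$ contains every constant function in $[0,1]$ by hypothesis, we may lower bound $\min_{h\in\cH}\sum_{t\in T_A(k-1,i)}\mathbbm{1}[\hat y^{t,k}=v](h(x^t)-y^t)^2$ by plugging in the constant predictor $\bar y_v^{(i)} = \frac{1}{|S_v^{(i)}|}\sum_{t\in S_v^{(i)}} y^t$, where $S_v^{(i)}=\{t \in T_A(k-1,i):\hat y^{t,k}=v\}$, which is the squared-error minimizing constant on each level set.

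The standard bias/variance identity then gives
\[
\sum_{t\in S_v^{(i)}}\left[(v-y^t)^2-(\bar y_v^{(i)}-y^t)^2\right] \;=\; |S_v^{(i)}|\,(v-\bar y_v^{(i)})^2,
\]
so summing over the level-set values $v$ and combining with the swap-regret bound yields
\[
\sum_v |S_v^{(i)}|\,(v-\bar y_v^{(i)})^2 \;\leq\; f\bigl(|T_A(k-1,i)|\bigr).
\]
By Cauchy--Schwarz applied to the vectors $(\sqrt{|S_v^{(i)}|})_v$ and $(\sqrt{|S_v^{(i)}|}\,|v-\bar y_v^{(i)}|)_v$,
\[
\operatorname{ECE}\bigl(\hat y^{T_A(k-1,i),k},y^{T_A(k-1,i)}\bigr) \;=\; \sum_v |S_v^{(i)}|\,|v-\bar y_v^{(i)}| \;\leq\; \sqrt{|T_A(k-1,i)|\cdot f\bigl(|T_A(k-1,i)|\bigr)} \;=\; f'\bigl(|T_A(k-1,i)|\bigr).
\]

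To finish, I use that distance to calibration is always upper bounded by ECE: explicitly, replacing each prediction $v$ on a level set by its empirical mean $\bar y_v^{(i)}$ yields a perfectly calibrated sequence whose $\ell_1$ distance to $\hat y^{T_A(k-1,i),k}$ is exactly $\sum_v |S_v^{(i)}|\,|v-\bar y_v^{(i)}|=\operatorname{ECE}$. Hence $\operatorname{CalDist} \leq f'(|T_A(k-1,i)|)$ on every such subsequence, which is precisely $(f',g)$-conversation calibration for Bob. The only non-routine step is the Cauchy--Schwarz reduction from squared to $\ell_1$ calibration error, which is where the $\sqrt{T\cdot f(T)}$ rate enters; everything else is bookkeeping against the conversation swap regret definition on each conditioning bucket.
\end{proofof}
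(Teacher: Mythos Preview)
Your proposal is correct and follows essentially the same approach as the paper: restrict to each conditioning bucket, use the constant functions in $\cH$ to reduce conversation swap regret to the weighted sum of squared biases on level sets via the bias--variance identity, apply Cauchy--Schwarz to pass from $\ell_2$ to $\ell_1$ (yielding the $\sqrt{T\cdot f(T)}$ rate), and conclude by noting that $\operatorname{CalDist}\leq\operatorname{ECE}$.
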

\begin{proof}
Assume that Bob satisfies  $(f,g,\cH)$-Conversation Swap Regret. Let $T_{A}(k-1,i)$ be the subsequence of days where the predictions of Alice in round $k-1$ fall in bucket $i$. 
As $\cH$ contains all constant functions, $(f,g,\cH)$-Conversation Swap Regret directly implies that 
\begin{align*}
   &  \sum_{t \in T_{A}(k-1,i)}(\hat{y}_{k}^{t} - y^{t})^{2} -
    \sum_{v}\min_{h \in \cH_{B}}\left( \sum_{t \in T_{A}(k-1,i)}\mathbb{I}[\hat{y}_{k}^{t} = v](h(x^{t}) - y^{t})^{2}\right) \leq f(|T_{A}(k-1,i)|) \\
    & \implies \sum_{t \in T_{A}(k-1,i)}(\hat{y}_{k}^{t} - y^{t})^{2} -
    \sum_{v}\min_{x^{*} \in [0,1]}\left( \sum_{t \in T_{A}(k-1,i)}\mathbb{I}[\hat{y}_{k}^{t} = v](x^{*} - y^{t})^{2}\right) \leq f(|T_{A}(k-1,i)|) \\
     & \implies \sum_{v} \left(\sum_{t \in T_{A}(k-1,i)}\mathbb{I}[\hat{y}_{k}^{t} = v](\hat{y}_{k}^{t} - y^{t})^{2} -
   \min_{x_{v}^{*} \in [0,1]} \sum_{t \in T_{A}(k-1,i)}\mathbb{I}[\hat{y}_{k}^{t} = v](x_{v}^{*} - y^{t})^{2}\right) \leq f(|T_{A}(k-1,i)|) \\
    & \implies \sum_{v} \left(\sum_{t \in T_{A}(k-1,i)}\mathbb{I}[\hat{y}_{k}^{t} = v](\hat{y}_{k}^{t} - y^{t})^{2} -
    \sum_{t \in T_{A}(k-1,i)}\mathbb{I}[\hat{y}_{k}^{t} = v](x_{v}^{a} - y^{t})^{2}\right) \leq f(|T_{A}(k-1,i)|) \tag{Where $x_{v}^{a}$ is the average on the level set} \\
    & \implies \sum_{t \in T_{A}(k-1,i)}\sum_{v} \mathbb{I}[\hat{y}_{k}^{t} = v](\hat{y}_{k}^{t}  - x^{a}_{v})^{2} \leq f(|T_{A}(k-1,i)|) \tag{By Lemma~\ref{lem:squares_diff}}
    \end{align*}

Note that, by Cauchy-Schwartz, we have that $\sqrt{\sum_{t \in T_{A}(k-1,i)}\sum_{v}\mathbb{I}[\hat{y}_{k}^{t} = v](\hat{y}_{k}^{t}  - x^{a}_{v})^{2}}\sqrt{|T_{A}(k-1,i)|} \geq \sum_{t \in T_{A}(k-1,i)}\sum_{v}\mathbb{I}[\hat{y}_{k}^{t} = v]|\hat{y}_{k}^{t}  - x^{a}_{v}|$, and therefore that $\sum_{t \in T_{A}(k-1,i)}\sum_{v}\mathbb{I}[\hat{y}_{k}^{t} = v](\hat{y}_{k}^{t}  - x^{a}_{v})^{2} \geq \frac{(\sum_{t \in T_{A}(k-1,i)}\sum_{v}\mathbb{I}[\hat{y}_{k}^{t} = v]|\hat{y}_{k}^{t}  - x^{a}_{v}|)^{2}}{|T_{A}(k-1,i)|}$. Thus, we can write

\begin{align*}
   & \frac{(\sum_{t \in T_{A}(k-1,i)}\sum_{v}\mathbb{I}[\hat{y}_{k}^{t} = v]|\hat{y}_{k}^{t}  - x^{a}_{v}|)^{2}}{|T_{A}(k-1,i)|} \leq f(|T_{A}(k-1,i)|) \\
    & \implies \frac{\sum_{t \in T_{A}(k-1,i)}\sum_{v}\mathbb{I}[\hat{y}_{k}^{t} = v]|\hat{y}_{k}^{t}  - x^{a}_{v}|}{\sqrt{|T_{A}(k-1,i)|}} \leq \sqrt{f(|T_{A}(k-1,i)|)} \tag{Taking the square root of both sides} \\
    & \implies \sum_{t \in T_{A}(k-1,i)}\sum_{v}\mathbb{I}[\hat{y}_{k}^{t} = v]|\hat{y}_{k}^{t}  - x^{a}_{v}| \leq \sqrt{f(|T_{A}(k-1,i)|)\cdot|T_{A}(k-1,i)|}  \\
    & \implies  ECE(\hat{y}^{T_{A}(k-1,i)}_{k},y^{T_{A}(k-1,i)}) \leq \sqrt{f(|T_{A}(k-1,i)|)\cdot|T_{A}(k-1,i)|} \tag{As the LHS is exactly ECE} \\
    & \implies CalDist(\hat{y}^{T_{A}(k-1,i)}_{k},y^{T_{A}(k-1,i)}) \leq \sqrt{f(|T_{A}(k-1,i)|)\cdot|T_{A}(k-1,i)|} \tag{As ECE upper bounds CalDist}
\end{align*}

As Conversation Swap Regret holds true for all rounds, this implies $\sqrt{|T_{A}(k-1,i)| \cdot f(|T_{A}(k-1,i)|)}$-conversation calibration. 
The proof holds symmetrically for Alice.
\end{proof}

\begin{theorem} \label{thm:collect}
    If a sequence $\hat{y}_{k}$ has $(f, g, \cH)$-Conversation Swap Regret, then
    \begin{align*}
\sum_{t =1}^{T}(\hat{y}_{k}^{t} - y^{t})^{2} -
    \sum_{v}\min_{h \in \cH}\left( \sum_{t=1}^{T}\mathbb{I}[\hat{y}_{k}^{t} = v](h(x^{t}) - y^{t})^{2}\right) \leq \frac{f(g(T) T)}{g(T)}.
\end{align*}
\end{theorem}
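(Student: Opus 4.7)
The plan is to aggregate the per-bucket conversation swap regret bound into a single global swap regret bound, exchanging a sum-of-$\min$ for a $\min$-of-sum and then applying Jensen's inequality.

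First I would write the hypothesis of $(f,g,\cH)$-conversation swap regret bucket by bucket: for each $i\in\{1,\dots,1/g(T)\}$,
\[
\sum_{t\in T_A(k-1,i)}(\hat y^{t,k}-y^t)^2 \;-\; \sum_v \min_{h\in\cH} \sum_{t\in T_A(k-1,i)} \mathbb{I}[\hat y^{t,k}=v](h(x^t)-y^t)^2 \;\le\; f(|T_A(k-1,i)|).
\]
Summing over $i$, the left-hand squared-error term telescopes to the full-$T$ squared error since the buckets $\{T_A(k-1,i)\}_i$ partition $[T]$.

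Second, I would handle the benchmark term. The bucketwise benchmark $\sum_i \sum_v \min_{h\in\cH}\sum_{t\in T_A(k-1,i)}\mathbb{I}[\hat y^{t,k}=v](h(x^t)-y^t)^2$ is at most the corresponding unbucketed benchmark: for each $v$, since $\sum_i \min_{h} X_i(h) \le \min_{h} \sum_i X_i(h)$ (choosing a single $h$ per $v$ is a feasible suboptimal choice for each of the $i$-subproblems), we get
\[
\sum_i \sum_v \min_{h\in\cH} \sum_{t\in T_A(k-1,i)}\mathbb{I}[\hat y^{t,k}=v](h(x^t)-y^t)^2 \;\le\; \sum_v \min_{h\in\cH}\sum_{t=1}^T \mathbb{I}[\hat y^{t,k}=v](h(x^t)-y^t)^2.
\]
Moving the bucketed benchmark to the other side of the summed inequality and then replacing it by the larger unbucketed benchmark preserves the direction of the bound, yielding
\[
\sum_{t=1}^T(\hat y^{t,k}-y^t)^2 - \sum_v \min_{h\in\cH}\sum_{t=1}^T \mathbb{I}[\hat y^{t,k}=v](h(x^t)-y^t)^2 \;\le\; \sum_i f(|T_A(k-1,i)|).
\]

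Finally, I would apply Jensen's inequality. Since $f$ is concave (by assumption in Section~\ref{sec:online-alg}) and there are $1/g(T)$ buckets with $\sum_i |T_A(k-1,i)| = T$,
\[
\sum_i f(|T_A(k-1,i)|) \;\le\; \frac{1}{g(T)}\, f\!\left(g(T) \cdot \sum_i |T_A(k-1,i)|\right) \;=\; \frac{f(g(T)\,T)}{g(T)},
\]
which is exactly the stated bound. No step here seems genuinely hard — the only subtlety worth being careful about is the direction of the $\sum\min \le \min\sum$ inequality when moving from the stronger bucketwise benchmark to the weaker unconditional swap benchmark; this is where one has to verify that we are in fact \emph{weakening} the regret statement (the bucketed benchmark is a stronger competitor than the unbucketed one, so upper bounding the regret against the weaker benchmark by the regret against the stronger one is valid).
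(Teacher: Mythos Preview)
Your proposal is correct and follows essentially the same approach as the paper's proof: decompose into buckets, use $\sum_i \min_h \le \min_h \sum_i$ to pass from the stronger bucketed benchmark to the weaker global swap benchmark, and then apply concavity of $f$ (Jensen) to bound $\sum_i f(|T_A(k-1,i)|)$ by $\tfrac{1}{g(T)} f(g(T)T)$. The only cosmetic difference is that the paper starts from the global expression and pushes the sum over $i$ outside the $\min$, whereas you start from the per-bucket inequalities and sum; the key inequality and the concavity step are identical.
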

\begin{proof}
\begin{align*}
    & \sum_{t =1}^{T}(\hat{y}_{k}^{t} - y^{t})^{2} -
    \sum_{v}\min_{h \in \cH_{B}}\left( \sum_{t=1}^{T}\mathbb{I}[\hat{y}_{k}^{t} = v](h(x^{t}) - y^{t})^{2}\right) = \\
    & \sum_{i}\sum_{t \in T_{A}(k-1,i)}(\hat{y}_{k}^{t} - y^{t})^{2} -
    \sum_{v}\min_{h \in \cH_{B}}\left( \sum_{i}\sum_{t \in T_{A}(k-1,i)}\mathbb{I}[\hat{y}_{k}^{t} = v](h(x^{t}) - y^{t})^{2}\right) \\
    & \leq \sum_{i}\sum_{t \in T_{A}(k-1,i)}(\hat{y}_{k}^{t} - y^{t})^{2} -
    \sum_{i}\sum_{v}\min_{h \in \cH_{B}}\left( \sum_{t \in T_{A}(k-1,i)}\mathbb{I}[\hat{y}_{k}^{t} = v](h(x^{t}) - y^{t})^{2}\right) \tag{As by moving the sum over $i$ out of the min we are only strengthening the benchmark} \\
    & = \sum_{i} \left( \sum_{t \in T_{A}(k-1,i)}(\hat{y}_{k}^{t} - y^{t})^{2} -
    \sum_{v}\min_{h \in \cH_{B}}\left( \sum_{t \in T_{A}(k-1,i)}\mathbb{I}[\hat{y}_{k}^{t} = v](h(x^{t}) - y^{t})^{2}\right) \right) \\
    & = \sum_{i} \left( f(|T_{A}(k-1,i)|) \right) \tag{By the Conversation Swap Regret Condition} \\
    & \leq \frac{f(g(T) T)}{g(T)} \tag{By the assumption that $f$ is concave}
\end{align*}
\end{proof}

\subsection{Additional Online Preliminaries}

\begin{definition}[$\cZ-$valued Tree] \label{def:z-valued-tree}
    A $\cZ-$valued tree {\bf z} of depth $n$ is a rooted complete binary tree with nodes labeled by elements of $\cZ$. We identify the tree {\bf{z}} with the sequence $({\bf z}_1, \ldots, {\bf z}_n)$ of labeling functions ${\bf{z}}_i: \{ \pm1\}^{i-1} \to \cZ$ which provide the labels for each node. Here, $\bf{z}_1 \in \cZ$ is the root of the tree, while ${\bf z}_i, i > 1$ is the label of the node obtained by following the path of length $i-1$ from the root, with $+1$ indicating `right' and $-1$ indicating `left.'
\end{definition}

\begin{definition}
\label{def:tree-shattering}
A $\cZ-$valued tree {\bf z} of depth $d$ is shattered by a function class $\cF \subseteq \{\pm 1\}^{\cZ}$ if 
\begin{align*}
    \forall \ \eps \in \{\pm 1\}^d, \ \exists \ f \in \cF \text{ s.t. } \forall \ t \in [d], \ f({\bf z}_t(\eps)) = \eps_t.
\end{align*}
\end{definition}

\begin{definition}[Sequential Fat Shattering Dimension \citep{rakhlin2014sequential}]
\label{def:sequential-fat}
A $\cZ-$valued binary tree {\bf z} of depth $d$ is $\alpha-$shattered by a function class $\cF \subseteq \mathbb{R}^{\cZ}$ if there exists an $\mathbb{R}-$valued tree {\bf s} of depth $d$ such that
\begin{align*}
    \forall \ \eps \in \{\pm 1\}^d, \ \exists \ f \in \cF \text{ s.t. } \forall \ t \in [d], \ \eps_t(f({\bf z}_t(\eps)) - {\bf s}_t(\eps)) \geq \alpha/2.
\end{align*}
The sequential fat shattering dimension  $\textsc{fat}_{\alpha}(\cF, \cZ)$ at scale $\alpha$ is the maximal $d$ such that $\cF$ $\alpha-$shatters a $\cZ-$valued tree of depth $d$.
\end{definition}

\subsection{Proof of Theorem~\ref{thm:rounds}}

\begin{lemma}[Lemma A.1 from~\cite{collina2025tractable}] If $m = \frac{1}{T}\sum_{t=1}^{T}y^{t}$, then for any constant $x$,
\begin{equation}
\SQE(x,y^{1:T}) - \SQE(m,y^{1:T})  = \sum_{t=1}^{T}(x-m)^{2}
\end{equation} 
\label{lem:squares_diff}
\end{lemma}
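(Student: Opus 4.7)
The plan is to prove this by direct algebraic manipulation, using the fact that $m$ is the empirical mean of $y^{1:T}$ to cancel the cross terms. This is the standard bias-variance decomposition and should take only a few lines.

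First, I would expand the difference $\SQE(x,y^{1:T}) - \SQE(m,y^{1:T}) = \sum_{t=1}^{T}\left[(x-y^t)^2 - (m-y^t)^2\right]$. The most convenient way to proceed is to apply the difference-of-squares identity $a^2 - b^2 = (a-b)(a+b)$ with $a = x - y^t$ and $b = m - y^t$, which gives $a - b = x - m$ (independent of $t$) and $a + b = x + m - 2y^t$. Pulling the $(x-m)$ factor out of the sum yields
\[
\SQE(x,y^{1:T}) - \SQE(m,y^{1:T}) = (x-m)\sum_{t=1}^{T}\left(x + m - 2y^t\right).
\]

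Next I would substitute the definition of $m$: since $\sum_{t=1}^{T} y^t = Tm$, the inner sum equals $T(x+m) - 2Tm = T(x-m)$. Therefore the whole expression collapses to $T(x-m)^2 = \sum_{t=1}^{T}(x-m)^2$, which is exactly the claim. There is no real obstacle here — the identity is just the standard observation that the empirical mean minimizes squared error and the gap is precisely the squared bias against any other constant predictor.
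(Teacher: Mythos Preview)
Your proof is correct; it is the standard bias–variance identity obtained via difference of squares and the defining property of the empirical mean. The paper itself does not prove this lemma but simply cites it from \cite{collina2025tractable}, so there is no in-paper argument to compare against.
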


\begin{lemma}[Lemma A.2 from~\cite{collina2025tractable}]
Let $T_{k}^{i,p_{h}} = \{t: \phk{t}{k} = p_{h} \textit{ and } \pmk{t}{k-1} \in B_{i}(\frac{1}{g(T)})\}$ be the subsequence of days such that the predicts $p_{h}$ in round $k$ and the model predicts in bucket $B_{i}(\frac{1}{g(T)})$ in round $k-1$. If the human is $(\cdot, g_{h}(T))$-conversation calibrated, then

\begin{equation}
    \sum_{t \in T_{k}^{i,p_{h}}}(\pmk{t}{k-1} - y^{t})^{2} - \sum_{t \in T_{k}^{i,p_{h}}}(i \cdot g_{h}(T) - y^{t})^{2}  \geq - g_{h}(T) \cdot |T_{k}^{i,p_{h}}|
\end{equation}
\label{lem:v1}
\end{lemma}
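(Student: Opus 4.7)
\medskip
\noindent\textit{Proof proposal.} The plan is to prove this as a purely algebraic bound—one that quantifies how much worse the model's prediction $\pmk{t}{k-1}$ is (relative to squared loss) than simply using a representative constant for the bucket $B_i(1/g_h(T))$. The conversation-calibration hypothesis is present as context (so that the setup makes sense) but the inequality itself is driven by the fact that the bucket has width $g_h(T)$. The key observation is that for every day $t\in T_k^{i,p_h}$, by the definition of $T_k^{i,p_h}$ we have $\pmk{t}{k-1}\in[(i-1)g_h(T),\,ig_h(T))$, so $0\le ig_h(T)-\pmk{t}{k-1}\le g_h(T)$.

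With that in mind, I would proceed by the standard difference-of-squares manipulation. Fixing $t$, set $v=ig_h(T)$ and $\delta_t=v-\pmk{t}{k-1}\in[0,g_h(T)]$; then a direct expansion gives
\[
(v-y^t)^2-(\pmk{t}{k-1}-y^t)^2 \;=\; (v-\pmk{t}{k-1})\bigl(v+\pmk{t}{k-1}-2y^t\bigr) \;=\; \delta_t\bigl(v+\pmk{t}{k-1}-2y^t\bigr).
\]
Since all of $v,\pmk{t}{k-1},y^t\in[0,1]$, the second factor is bounded in absolute value by $2$, and the first factor is bounded by $g_h(T)$. Thus for each $t$ we get $(v-y^t)^2-(\pmk{t}{k-1}-y^t)^2 \le 2g_h(T)$ (and a direct refinement using that $\delta_t\le g_h(T)$ and $|v+\pmk{t}{k-1}-2y^t|\le 1$ in the relevant regime would yield the $g_h(T)$ constant as stated). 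Rearranging and summing over $t\in T_k^{i,p_h}$ yields the claimed lower bound on $\sum_t (\pmk{t}{k-1}-y^t)^2 - \sum_t (ig_h(T)-y^t)^2$.

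The only "obstacle" worth calling out is getting the constant exactly right: the straightforward bound $|v+\pmk{t}{k-1}-2y^t|\le 2$ leads to a slack of $2g_h(T)$ per day, whereas the statement has $g_h(T)$. I would recover the tighter constant either by noting that the worst case $(p+v-2y)=2$ requires $p=v=1,y=0$, in which case $\delta_t=0$ (so the product vanishes), or by a case analysis on the sign of $v+\pmk{t}{k-1}-2y^t$. In either case, once the per-day bound is established, summing over $T_k^{i,p_h}$ immediately gives the lemma. Note that the hypothesis of human conversation calibration is never invoked in the derivation itself; it is only needed to make the subsequence $T_k^{i,p_h}$ a meaningful object of study downstream.
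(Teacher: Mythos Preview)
The paper does not prove this lemma; it is quoted verbatim as Lemma~A.2 of \cite{collina2025tractable} and used as a black box in the proof of Lemma~\ref{lem:mh}. So there is no ``paper's proof'' to compare against, and your approach---difference of squares, exploiting that $\pmk{t}{k-1}$ lies in a bucket of width $g_h(T)$ with right endpoint $v=ig_h(T)$---is the natural one. You are also correct that the conversation-calibration hypothesis plays no role in this particular inequality; it is context for the downstream argument.

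Where your write-up has a genuine gap is the ``refinement'' you promise for the constant. Your naive bound gives $2g_h(T)$ per day, and neither of the two fixes you sketch actually closes the gap to $g_h(T)$. The first argument (``the worst case $p+v-2y=2$ forces $p=v=1,y=0$, so $\delta_t=0$'') only rules out a corner; the maximum of $\delta_t(v+\pmk{t}{k-1}-2y^t)$ subject to $\delta_t=v-\pmk{t}{k-1}\in[0,g_h(T)]$ is attained in the interior. Concretely, take the last bucket so $v=1$, set $\pmk{t}{k-1}=1-g_h(T)$ and $y^t=0$. Then
\[
(v-y^t)^2-(\pmk{t}{k-1}-y^t)^2 \;=\; 1-(1-g_h(T))^2 \;=\; g_h(T)\bigl(2-g_h(T)\bigr),
\]
which exceeds $g_h(T)$ whenever $g_h(T)<1$. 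The sign-based case analysis fails for the same reason: on the positive side you still need $v+\pmk{t}{k-1}-2y^t\le 1$, which is false in the example above. So the purely algebraic route you outline yields at best $g_h(T)(2-g_h(T))\le 2g_h(T)$ per day, not $g_h(T)$. This does not affect anything downstream in the present paper (the slack is absorbed into the $\beta(T,\cdot)$ terms), but you should either carry the factor of $2$ honestly or cite the external lemma rather than claim a refinement that does not exist.
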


\begin{lemma}[Lemma A.3 from~\cite{collina2025tractable}]
    Consider any sequence of predictions and labels $p^{1:T}, y^{1:T}$ and some other sequence of predictions $q^{1:T}$ such that $||p^{1:T} - q^{1:T}|| \leq \gamma$. Then, $$\sum_{t=1}^T(q^{t} - y^{t})^{2} - \sum_{t=1}^T(p^{t} - y^{t})^{2} \leq 3\gamma$$ \label{lem:bound_error_diff}
\end{lemma}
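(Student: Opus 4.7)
The plan is a one-line pointwise computation followed by summing and invoking the $\ell_1$ bound. For each $t$, I would use the difference-of-squares factorization
\[
(q^t - y^t)^2 - (p^t - y^t)^2 \;=\; \bigl((q^t - y^t) - (p^t - y^t)\bigr)\bigl((q^t - y^t) + (p^t - y^t)\bigr) \;=\; (q^t - p^t)\,(q^t + p^t - 2y^t).
\]
Since predictions and labels lie in $\cY = [0,1]$, we have $q^t + p^t \in [0,2]$ and $2y^t \in [0,2]$, so $|q^t + p^t - 2y^t| \le 2$. Therefore
\[
(q^t - y^t)^2 - (p^t - y^t)^2 \;\le\; 2\,|q^t - p^t|.
\]

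Summing over $t$ and applying the hypothesis $\|q^{1:T} - p^{1:T}\|_1 \le \gamma$ gives
\[
\sum_{t=1}^T (q^t - y^t)^2 - \sum_{t=1}^T (p^t - y^t)^2 \;\le\; 2 \sum_{t=1}^T |q^t - p^t| \;\le\; 2\gamma \;\le\; 3\gamma,
\]
which is the desired inequality (with slack, since the loose constant $3$ in the statement is more than the $2$ one obtains from this direct calculation).

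\textbf{Main obstacle.} There is essentially none: the only subtlety is making sure the sign is tracked correctly in the factorization so that we obtain a one-sided (rather than absolute-value) bound, but since the statement is already one-sided and we upper bound by the absolute value, this is harmless. The boundedness of $p^t, q^t, y^t$ in $[0,1]$ is implicit in the model ($\cY = [0,1]$ in Section~\ref{sec:prelim}) and is what makes the linear factor $|q^t + p^t - 2y^t|$ bounded by a constant; without such boundedness no such Lipschitz-in-$\ell_1$ relationship between squared error and prediction perturbation would hold.
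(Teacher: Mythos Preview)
Your proof is correct. The paper does not actually prove this lemma here---it is quoted verbatim as Lemma~A.3 from \cite{collina2025tractable} and used as a black box---so there is no in-paper proof to compare against. That said, your difference-of-squares factorization plus the bound $|q^t+p^t-2y^t|\le 2$ is exactly the standard argument, and it is the same decomposition the paper itself employs in the proofs of Lemma~\ref{lem:diffbound} and Theorem~\ref{thm:main} for closely related inequalities (where the expansion $|a-b|\cdot|a+b| + 2|y|\cdot|a-b|$ appears explicitly). Your observation that the constant $2$ suffices, with $3$ being slack, is also correct.
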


\begin{lemma}\label{lem:mh} 
   If Bob is $(0, g_{B}(T))$-conversation-calibrated, then for any even $k$, 
   \begin{align*} 
    \SQE(\phk{T}{k},y^{1:T}) \leq \SQE(\pmk{T}{k-1},y^{1:T}) -
   (\epsilon - g_{B}(T))^{2}|D(T^{k})| + g_B(T)T
   \end{align*}
   And if Alice is $(0, g_{A}(T))$-conversation-calibrated, for any odd $k$,
   \begin{align*} 
    \SQE(\pmk{T}{k},y^{1:T}) \leq \SQE(\phk{T}{k-1},y^{1:T}) -
   (\epsilon - g_{A}(T))^{2} |D(T^{k})| + g_A(T)T
   \end{align*}
\end{lemma}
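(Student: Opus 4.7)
} I will only prove the even-$k$ case; the odd case is symmetric. The plan is to decompose both sequences of predictions into sub-subsequences indexed by Alice's bucket at round $k-1$ and by Bob's exact predicted value at round $k$, use conversation calibration to replace Bob's prediction on each piece with an empirical mean, and then control each piece by the squared gap to Alice's approximately-constant prediction on the same piece.

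More concretely, for each bucket $i \in \{1,\ldots,1/g_B(T)\}$ and each predicted value $v$ that Bob outputs, let $T_k^{i,v} = \{t : \phk{t}{k} = v \text{ and } \pmk{t}{k-1} \in B_i(1/g_B(T))\}$. Perfect conversation calibration ($(0, g_B(T))$) implies that on each $T_k^{i,v}$ the empirical mean of $y$ equals $v$. Applying Lemma~\ref{lem:squares_diff} with $x = i \cdot g_B(T)$ (representing Alice's bucket) yields, on each $T_k^{i,v}$:
\begin{align*}
\sum_{t \in T_k^{i,v}} (i\cdot g_B(T) - y^t)^2 - \sum_{t \in T_k^{i,v}} (v - y^t)^2 = |T_k^{i,v}|\,(i g_B(T) - v)^2.
\end{align*}
Using Lemma~\ref{lem:v1} to replace the constant $i g_B(T)$ by Alice's actual prediction $\pmk{t}{k-1}$ (which lies in bucket $B_i$) incurs an additive loss of at most $g_B(T)|T_k^{i,v}|$. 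Summing over all $(i,v)$ gives the telescoping identity
\begin{align*}
\SQE(\pmk{T}{k-1},y^{1:T}) - \SQE(\phk{T}{k},y^{1:T}) \geq \sum_{i,v}|T_k^{i,v}|\,(i g_B(T) - v)^2 - g_B(T)\,T.
\end{align*}

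To finish, I control the right-hand sum from below by restricting attention to the disagreement set $D(T^k)$. For $t \in T_k^{i,v} \cap D(T^k)$ we have $|v - \pmk{t}{k-1}| > \epsilon$ by definition, and $\pmk{t}{k-1} \in B_i$ means $|\pmk{t}{k-1} - i g_B(T)| \leq g_B(T)$; the triangle inequality then gives $|i g_B(T) - v| > \epsilon - g_B(T)$, so $(i g_B(T) - v)^2 > (\epsilon - g_B(T))^2$ on these days. Dropping all non-disagreement days from the sum and substituting this lower bound yields $\sum_{i,v}|T_k^{i,v}|(i g_B(T) - v)^2 \geq (\epsilon - g_B(T))^2 |D(T^k)|$, which after rearrangement is exactly the claim. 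The only subtlety I anticipate is the direction of the inequality in Lemma~\ref{lem:v1} and the correct identification of the bucket endpoint, but this is essentially bookkeeping.
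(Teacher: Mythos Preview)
Your proposal is correct and follows essentially the same approach as the paper's proof: partition into sub-subsequences $T_k^{i,v}$, use perfect conversation calibration to identify $v$ with the empirical label mean, apply Lemma~\ref{lem:squares_diff} and Lemma~\ref{lem:v1} to compare against the bucket endpoint, then restrict to $D(T^k)$ via the triangle inequality. The only difference is cosmetic ordering---the paper adds and subtracts the empirical mean $m_k^{i,p_h}$ explicitly and invokes Lemma~\ref{lem:squares_diff} twice before using calibration to set $p_h = m_k^{i,p_h}$, whereas you use calibration first and invoke Lemma~\ref{lem:squares_diff} once; your version is slightly more streamlined but otherwise identical.
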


\begin{proof}
Let $T_{k}^{i,p_{h}} = \{t: t \in \Tk{k} \text{ and } \phk{t}{k} = p_{h} \text{ and } \pmk{t}{k-1} \in B_{i}(\frac{1}{g(T)})\}$ be the subsequence of days such that Bob predicts $p_{h}$ in round $k$ and Alice predicts in bucket $B_{i}(\frac{1}{g(T)})$ in round $k-1$. Let $m_{k}^{i,p_h} = \frac{\sum_{t \in T_{k}^{i,p_{h}}}y^{t}}{|T_{k}^{i,p_{h}}|} $ be the true mean on this subsequence. The difference in squared errors over this subsequence can be written as: 

\begin{align*}
 & \sum_{t \in T_{k}^{i,p_{h}}}(\pmk{t}{k-1} - y^{t})^{2} - \sum_{t \in T_{k}^{i,p_{h}}}(\phk{t}{k} - y^{t})^{2} 
 \\ & = \left[\sum_{t \in T_{k}^{i,p_{h}}}(\pmk{t}{k-1} - y^{t})^{2} - \sum_{t \in T_{k}^{i,p_{h}}}(m_{k}^{i,p_h} - y^{t})^{2} \right] - \left[\sum_{t \in T_{k}^{i,p_{h}}}(\phk{t}{k} - y^{t})^{2}  - \sum_{t \in T_{k}^{i,p_{h}}}(m_{k}^{i,p_h} - y^{t})^{2} \right]  \tag{Adding and subtracting $\sum_{t \in T_{k}^{i,p_{h}}}(m_{k}^{i,p_h} - y^{t})^{2}$}
 \\ & \geq \left[\sum_{t \in T_{k}^{i,p_{h}}}(i \cdot g_{B}(T) - y^{t})^{2} -|T_{k}^{i,p_{h}}| \cdot g_{B}(T)  - \sum_{t \in T_{k}^{i,p_{h}}}(m_{k}^{i,p_h} - y^{t})^{2} \right] - \\ &  \left[\sum_{t \in T_{k}^{i,p_{h}}}(\phk{t}{k} - y^{t})^{2}  - \sum_{t \in T_{k}^{i,p_{h}}}(m_{k}^{i,p_h} - y^{t})^{2} \right] \tag{By Lemma~\ref{lem:v1}}
  \\ & =\left[\sum_{t \in T_{k}^{i,p_{h}}}(i \cdot g_{B}(T) - m_{k}^{i,p_{h}})^{2} - |T_{k}^{i,p_{h}}| \cdot g_{B}(T) \right] -  \left[\sum_{t \in T_{k}^{i,p_{h}}}(\phk{t}{k} - y^{t})^{2}  - \sum_{t \in T_{k}^{i,p_{h}}}(m_{k}^{i,p_h} - y^{t})^{2} \right]  \tag{By Lemma~\ref{lem:squares_diff}}
    \\ & = \left[\sum_{t \in T_{k}^{i,p_{h}}}(i \cdot g_{B}(T) - m_{k}^{i,p_{h}})^{2} - |T_{k}^{i,p_{h}}| \cdot g_{B}(T)   \right] - \left[\sum_{t \in T_{k}^{i,p_{h}}}(p_h - y^{t})^{2}  - \sum_{t \in T_{k}^{i,p_{h}}}(m_{k}^{i,p_h} - y^{t})^{2} \right] \tag{As by definition of $T_{k}^{i,p_{h}}$, $\phk{t}{k} = p_{h}$} 
      \\ & \geq \left[\sum_{t \in T_{k}^{i,p_{h}}}(i \cdot g_{B}(T) - m_{k}^{i,p_{h}})^{2} - |T_{k}^{i,p_{h}}| \cdot g_{B}(T) \right]-  \left[\sum_{t \in T_{k}^{i,p_{h}}}(p_h - m_{k}^{i,p_{h}})^{2} \right]  \tag{By Lemma~\ref{lem:squares_diff}}
           \\ & \geq - |T_{k}^{i,p_{h}}| \cdot g_{B}(T)  +  \sum_{t \in T_{k}^{i,p_{h}}}(i \cdot g_{B}(T) - p_{h})^{2} \tag{As Bob is $(0, g_{B}(T))$-conversation calibrated, $p_{h} = m_{k}^{i,p_{h}}$}
    \end{align*}
Using this analysis, we can write the difference in squared errors over the entire sequence $\phk{T}{k}$ and $\pmk{T}{k-1}$ as follows, where the first term comes from summing the above expression over all $i, p_h$:
    \begin{align*}
        &\SQE(\bar{p}^{T,k-1}_A, y^{1:T}) - \SQE(\bar{p}^{T,k}_B, y^{1:T}) \\
        & = \sum_{\forall i, p_h} \left( \sum_{t \in T_{k}^{i,p_{h}}}(\pmk{t}{k-1} - y^{t})^{2} - \sum_{t \in T_{k}^{i,p_{h}}}(\phk{t}{k} - y^{t})^{2} \right)   \\
        & = \sum_{\forall i, p_{h}} \left(- |T_{k}^{i,p_{h}}| \cdot g_{B}(T)  +  \sum_{t \in T_{k}^{i,p_{h}}}(i \cdot g_{B}(T) - p_{h})^{2}  \right) \tag{by the analysis above} \\
        & = -g_{B}(T)T + \sum_{\forall i, p_{h}} \sum_{t \in T_{k}^{i,p_{h}}}(i \cdot g_{B}(T) - p_{h})^{2}  \tag{As $g_{B}(T)$ is independent of $i$ and $p_{h}$, and $\sum_{\forall i, p_{h}}\left|T_{k}^{i,p_{h}}\right| = T$} \\
        & \geq -g_{B}(T)T + \sum_{\forall i, p_{h}} \sum_{t \in T_{k}^{i,p_{h}}}\mathbbm{1}[|i \cdot g_{B}(T) - \phk{t}{k}| \geq \epsilon - g_{B}(T)](i \cdot g_{B}(T) - p_{h})^{2}   \\
        & \geq -g_{B}(T)T + (\epsilon - g_{B}(T))^{2} \sum_{\forall i, p_{h}}\sum_{t \in T_{k}^{i,p_{h}}} \mathbbm{1}[|i \cdot g_{B}(T) - \phk{t}{k}| \geq \epsilon - g_{B}(T)]  
    \end{align*}

Note that, for all days in the subsequence $T_{k}^{i,p_{h}}$, in round $k-1$ Alice predicted in bucket $B_{i}(\frac{1}{g_{B}(T)}) = i \cdot g_{B}(T)$, and therefore in each of these days, by the definition of our bucketing, $\pmk{t}{k-1} \geq (i-1) \cdot g_{B}(T)$ and $\pmk{t}{k-1} \leq i \cdot g_{B}(T)$. So consider any round $t \in T_{k}^{i,p_{h}}$. If $|\phk{t}{k} - \pmk{t}{k-1}| \geq \epsilon$, then we have:

\begin{align*}
    |\phk{t}{k} - \pmk{t}{k-1}| &\le  |\phk{t}{k} - i\cdot g_{B}(T)| + |i\cdot g_{B}(T) - \pmk{t}{k-1}|\\
    &= |\phk{t}{k} - i\cdot g_{B}(T)| + i\cdot g_{B}(T) - \pmk{t}{k-1}\\
    &\le |\phk{t}{k} - i\cdot g_{B}(T)|  + i\cdot g_{B}(T) - (i-1) \cdot g_{B}(T)\\
    &= |\phk{t}{k} - i\cdot g_{B}(T)| + g_{B}(T),\\
    \implies |\phk{t}{k} - i\cdot g_{B}(T)| & \ge |\phk{t}{k} - \pmk{t}{k-1}| - g_{B}(T) \ge \epsilon - g_{B}(T).
\end{align*}

Thus, if $|\phk{t}{k} - \pmk{t}{k-1}| \geq \epsilon$, then $|i \cdot g_{B}(T) - \phk{t}{k}| \geq \epsilon - g_{B}(T)$, $\forall t \in T_{k}^{i,p_{h}}$. Therefore the set of days for which the former condition holds is a subset of the latter condition, and we can write
    
\begin{align*}
    & -g_{B}(T)T + (\epsilon - g_{B}(T))^{2} \sum_{\forall i, p_{h}} \mathbbm{1}[|i \cdot g_{B}(T) - p_{h}| \geq \epsilon - g_{B}(T)] \cdot \left|T_{k}^{i,p_{h}}\right|  \\
    & \geq -g_{B}(T)T + (\epsilon - g_{B}(T))^{2} \sum_{\forall i, p_{h}} \sum_{t \in T_{k}^{i,p_{h}}} \mathbbm{1}[|\phk{t}{k} - \pmk{t}{k-1}| \geq \epsilon] \\
    & = -g_{B}(T)T + (\epsilon - g_{B}(T))^{2} |D(T^{k})| \tag{As on every day and round where there is not agreement, Bob and Alice disagreed by at least $\epsilon$}
\end{align*}

As Bob and Alice are perfectly symmetrical, we also obtain the symmetrical result for Alice.
\end{proof}

\begin{theorem} If Bob is $(f_{B}(\cdot), g_{B}(\cdot))$-conversation-calibrated, then after engaging in the collaboration protocol for $T$ days:
\begin{align*}
    \SQE(\phk{T}{k}, y^{1:T}) \leq  \SQE(\pmk{T}{k-1}, y^{1:T}) -(\epsilon - g_{B}(T))^{2} |D(T^{k})| + g_{B}(T) T + 3\frac{f_{B}(g_{B}(T) \cdot T)}{g_{B}(T)}
\end{align*}
% \begin{equation}
%     \SQE(p_{h}^{\Tk{k},k}, y^{\Tk{k}}) \leq  \SQE(p_{m}^{\Tk{k},k-1}, y^{\Tk{k}}) -(\epsilon - g_{B}(T))^{2} |\Tk{k+1}| + g_{B}(T) T + 3\frac{f_{B}(g_{B}(T) \cdot T)}{g_{B}(T)}
% \end{equation}
And if Alice is $(f_{A}(\cdot), g_{A}(\cdot))$-conversation-calibrated, then after engaging in the collaboration protocol for $T$ days: 
\begin{align*}
    \SQE(\pmk{T}{k}, y^{1:T}) \leq  \SQE(\phk{T}{k-1}, y^{1:T}) -(\epsilon - g_{A}(T))^{2}|D(T^{k})| + g_{A}(T) T + 3\frac{f_{A}(g_{A}(T) \cdot T)}{g_{A}(T)}
\end{align*}
% \begin{equation}
%     \SQE(p_{m}^{\Tk{k}, k}, y^{\Tk{k}}) \leq  \SQE(p_{h}^{\Tk{k},k-1}, y^{\Tk{k}}) -(\epsilon - g_{A}(T))^{2}|\Tk{k+1}| + g_{A}(T) T + 3\frac{f_{A}(g_{A}(T) \cdot T)}{g_{A}(T)}
% \end{equation}

\label{thm:cases}
\end{theorem}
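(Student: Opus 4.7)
The plan is to reduce this theorem to Lemma \ref{lem:mh} (which already establishes the conclusion under \emph{perfect} conversation calibration) by constructing a perfectly conversation-calibrated proxy sequence $p^{1:T}$ that is $\ell_1$-close to Bob's actual prediction sequence $\phk^{T}{k}$, then transferring bounds between the two using the Lipschitz estimate of Lemma \ref{lem:bound_error_diff}.

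First, for each of the at most $1/g_B(T)$ buckets indexed by $i$, the hypothesis $(f_B,g_B)$-conversation calibration of Bob gives $\CalDist(\phk^{T_A(k-1,i)}{k}, y^{T_A(k-1,i)}) \leq f_B(|T_A(k-1,i)|)$. By the definition of $\CalDist$, on each $T_A(k-1,i)$ I can extract a perfectly calibrated sequence $p^i$ with $\|\phk^{T_A(k-1,i)}{k} - p^i\|_1 \leq f_B(|T_A(k-1,i)|)$. Concatenating these across buckets yields a sequence $p^{1:T}$ that is $(0, g_B)$-conversation calibrated and satisfies
\[
\|\phk^{T}{k} - p^{1:T}\|_1 \leq \sum_i f_B(|T_A(k-1,i)|) \leq \frac{f_B(g_B(T)\,T)}{g_B(T)},
\]
where the final estimate follows from concavity of $f_B$ and Jensen's inequality, since the bucket sizes sum to at most $T$ and there are at most $1/g_B(T)$ nonempty buckets.

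Next, I apply Lemma \ref{lem:mh} to $p^{1:T}$ in place of Bob's predictions: since $p$ is perfectly conversation calibrated, I obtain $\SQE(p^{1:T}, y^{1:T}) \leq \SQE(\pmk^{T}{k-1}, y^{1:T}) - (\epsilon - g_B(T))^2 |D(T^k)| + g_B(T) T$. Then I invoke Lemma \ref{lem:bound_error_diff}, which says squared error is $3$-Lipschitz in the $\ell_1$ norm of the predictions, to get $\SQE(\phk^{T}{k}, y^{1:T}) \leq \SQE(p^{1:T}, y^{1:T}) + 3\|\phk^{T}{k} - p^{1:T}\|_1 \leq \SQE(p^{1:T}, y^{1:T}) + 3 f_B(g_B(T) T)/g_B(T)$. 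Chaining the two inequalities yields Bob's bound; Alice's bound follows by a symmetric argument.

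The main obstacle will be that when Lemma \ref{lem:mh} is invoked on $p$, the disagreement set appearing in its conclusion is naturally $D_p(T^k) = \{t : |p^t - \pmk^{t,k-1}| \geq \epsilon\}$, whereas the $D(T^k)$ in the theorem statement is defined with respect to $\phk$. I expect to reconcile this either by re-threading the bucket-wise calculation in Lemma \ref{lem:mh}'s proof to tolerate approximate calibration directly (keeping the $\sum_{i,p_h}|T_k^{i,p_h}|(p_h - m_k^{i,p_h})^2$ term that the original proof zeroed out and controlling it via $\CalDist$ using Lemma \ref{lem:squares_diff}), or by arguing that the symmetric difference $|D(T^k) \triangle D_p(T^k)|$ is at most $\|\phk - p\|_1/\delta$ for a vanishing slack $\delta$ in the $\epsilon$ threshold, which folds cleanly into the additive $3 f_B(g_B(T)\,T)/g_B(T)$ term and does not worsen the stated bound.
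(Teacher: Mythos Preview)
Your proposal is correct and follows essentially the same route as the paper's proof: construct a perfectly $(0,g_B)$-conversation-calibrated proxy $q^{1:T}$ that is $\ell_1$-close to $\phk^{T}{k}$ bucket-by-bucket, sum the calibration distances using concavity of $f_B$ to get the $f_B(g_B(T)T)/g_B(T)$ bound, apply Lemma~\ref{lem:mh} to the proxy, and transfer back via Lemma~\ref{lem:bound_error_diff}.

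Your explicit flag about $D(T^k)$ versus $D_p(T^k)$ is well-observed: the paper's own proof applies Lemma~\ref{lem:mh} to the proxy $q^k$ but silently writes $|D(T^k)|$ (defined via $\phk$) in the conclusion, without reconciling the two. Of your two suggested fixes, the first---re-threading the bucket-wise calculation of Lemma~\ref{lem:mh} to tolerate approximate calibration directly, retaining and bounding the $\sum_{i,p_h}|T_k^{i,p_h}|(p_h - m_k^{i,p_h})^2$ term---is the clean one and is exactly what is needed; the second (symmetric-difference) route does not obviously fold into the stated constant without an extra slack parameter. So your plan is sound and, on this point, more careful than the paper's write-up.
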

\begin{proof}

Let $T_{m}({k,i}) = \{t: \pmk{t}{k-1} \in B_i\left(\frac{1}{g_{B}(T)}\right)\}$ be the subsequence of days in which Alices predicts in bucket $B_{i}(\frac{1}{g_{B}(T)})$ at round $k-1$. 

Note that Bob has distance to calibration of $f_{B}(|T_m(k, i)|)$ on every such subsequence defined this way. Therefore, for predictions $p_{h}^{1:T,k}$ from Bob at round $k$: 

\begin{align*} 
\CalDist(p_{h}^{\Tk{k},k}, y^{1:T}) & =
 \min_{q^{1:T} \in C(y^{1:T})}\|p_{h}^{\Tk{k},k} - q^{1:T}\|_{1} \\
 \\ & \leq \sum_{i =1}^{\frac{1}{g_{B}(T)}}\min_{q^{1:|T_m(k, i)|} \in C^{ T_m(k, i) }(y^{1:T})}\|p^{1:T} - q_{v}^{1:T}\|_{1}\\
& \leq \sum_{i=1}^{\frac{1}{g_{B}(T)}} f_{B}(|T_m(k, i)|) \tag{By the calibration distance of Bob}\\ 
& \leq \frac{f_{B}(g_{B}(T) \cdot |\Tk{k}|)}{g_{B}(T)} \tag{By the assumption that $f_{B}$ is concave} \\
& \leq \frac{f_{B}(g_{B}(T) \cdot T)}{g_{B}(T)} 
\end{align*}

Let $q^{k}$ be the set of perfectly calibrated predictions that are $f_{B}(|T_m(k, i)|)$-close to $p_{h}^{1:T,k}$. Then, we have that 

\begin{align*}
& \SQErr(p_{h}^{T, k}, y^{1:T}) - \SQErr(p_{m}^{T, k-1}, y^{1:T}) \\ & \leq \SQErr(q^{k}, y^{1:T}) - \SQErr(p_{h}^{T, k-1}, y^{1:T}) + 3\frac{f_{B}(g_{B}(T) \cdot T)}{g_{B}(T)} \tag{By Lemma~\ref{lem:bound_error_diff}} \\
&  \leq - (\epsilon - g_{B}(T))^{2}|D(T^{k})| + g_{B}(T) T + 3\frac{f_{B}(g_{B}(T) \cdot T)}{g_{B}(T)} \tag{By Lemma~\ref{lem:mh}}.
\end{align*}

As Bob and Alice are symmetric, we also obtain the symmetric result for Alice.
\end{proof}

\begin{proof}[Proof of Theorem~\ref{thm:rounds}]
   By composing the two results in Theorem~\ref{thm:cases}, we see that 
\begin{align*}
    &\SQErr(\phk{T}{k-2},y^{1:T}) - \SQErr(\phk{T}{k}, y^{1:T})  \\
    &\geq(\epsilon - g_{B}(T))^{2} |D(T^{k})| + (\epsilon - g_{A}(T))^{2}|D(T^{k-1})|  - g_{A}(T) T - 3\frac{f_{A}(g_{A}(T) \cdot T)}{g_{A}(T)} - g_{B}(T) T - 3\frac{f_{B}(g_{B}(T) \cdot T)}{g_{B}(T)}\\
    &\geq(\epsilon - g_{B}(T))^{2}|D(T^{k})| + (\epsilon - g_{A}(T))^{2}|D(T^{k-1})|  - (g_{A}(T) + g_{B}(T))T - 3\left(\frac{f_{A}(g_{A}(T) \cdot T)}{g_{A}(T)} + \frac{f_{B}(g_{B}(T) \cdot T)}{g_{B}(T)}\right).
\end{align*}

Now we can apply the above expression recursively for $k$ rounds in order to bound the total number of days of disagreement: 

\begin{align*}
   &\SQErr(\phk{T}{k}, y^{1:T}) \\
   &\leq \SQErr(\phk{T}{2},y^{1:T}) - (\epsilon - g_{A}(T))^{2}\left(\sum_{k=1, k \text{ odd}}^{k} |D(T^{k})| \right) - (\epsilon - g_{B}(T))^{2}\left(\sum_{k=1, k \text{ even}}^{k} |D(T^{k})| \right) \\ & \qquad + (g_{A}(T) + g_{B}(T)) rT  + 3\left(\frac{f_{A}(g_{A}(T) \cdot T)}{g_{A}(T)} + \frac{f_{B}(g_{B}(T) \cdot T)}{g_{B}(T)}\right)\left(\sum_{k=1,k \text{ even}}^{k}1\right) \\
      & \leq \SQErr(\phk{T}{2},y^{1:T}) - ((\epsilon - g_{A}(T))^{2} + (\epsilon - g_{B}(T))^{2})\left(\sum_{k=1}^{k} |D(T^{k})| \right) \\ & \qquad + (g_{A}(T) + g_{B}(T)) rT  + 3\left(\frac{f_{A}(g_{A}(T) \cdot T)}{g_{A}(T)} + \frac{f_{B}(g_{B}(T) \cdot T)}{g_{B}(T)}\right)\frac{k}{2} \\
& \leq \SQErr(\phk{T}{2},y^{1:T}) - 2\epsilon^{2}\left(\sum_{k=1}^{k} |D(T^{k})| \right)  + 3k(g_{A}(T) + g_{B}(T))T + 3k\left(\frac{f_{A}(g_{A}(T) \cdot T)}{g_{A}(T)} + \frac{f_{B}(g_{B}(T) \cdot T)}{g_{B}(T)}\right) 
\\ & = \SQErr(\phk{T}{2},y^{1:T}) - 2\epsilon^{2}\left(\sum_{k=1}^{k} |D(T^{k})| \right) + 3kT \beta(T,f_{A},f_{B})
\end{align*}
Finally we can compose this expression with one more instantiation of Theorem \ref{thm:cases}:
\begin{align*}
    \SQE(\phk{T}{2}, y^{1:T}) &\leq  \SQE(\pmk{T}{1}, y^{1:T}) -(\epsilon - g_{B}(T))^{2} |D(T^{1})| + g_{B}(T) T + 3\frac{f_{B}(g_{B}(T) \cdot T)}{g_{B}(T)} \\
      & \leq \SQE(\pmk{T}{1}, y^{1:T}) -\epsilon^{2}|D(T^{1})| + T\beta(T,f_{A},f_{B})
\end{align*}

and get a final expression of:
\begin{align*}
    \SQErr(\phk{T}{k}, y^{1:T}) & \leq \SQE(\pmk{T}{1}, y^{1:T}) - 2\epsilon^{2} \left(\sum_{k=1}^{k} |D(T^{k})| \right) + 3kT \beta(T,f_{A,f_{B}})
\end{align*}

Note also that $\SQE(\pmk{T}{1}, y^{1:T}) \leq T$ and  $\SQE(\pmk{T}{k}, y^{1:T}) \geq 0$. Therefore, we have that 

\begin{align*}
    0 & \leq T - 2\epsilon^{2} \left(\sum_{k=1}^{k} |D(T^{k})| \right) + rT \beta(T,f_{A},f_{B}) \\
  &   \implies  \sum_{k=1}^{k} |D(T^{k})| \leq \frac{T + rT \beta(T,f_{A},f_{B})}{2\epsilon^{2}}
\end{align*}

Thus, the round between $1$ and $k$ with the smallest number of disagreements has no more than $\frac{T + rT\beta(T,f_{A},f_{B})}{2r\epsilon^{2}}$ disagreements. Let $k$ be the index of this round. As there are $T$ predictions total in round $k$, the fraction of predictions in the round that are disagreements is
$$\frac{T + rT\beta(T,f_{A},f_{B})}{2rT\epsilon^{2}} = \frac{1}{2r\epsilon^{2}} + \frac{\beta(T,f_{A},f_{B})}{2\epsilon^{2}}$$
\end{proof}

\subsection{Proof of Theorem \ref{thm:main}}

\begin{lemma} If the sequence of real-valued predictions $a^{1:T}$ is $(\epsilon, \delta)$-close to the sequence $b^{1:T}$, and $a,b,y$ are all bounded above by $1$, then $$\sum_{t=1}^{T}(a^{t} - y^{t})^2 - \sum_{t=1}^{T}(b^{t} - y^{t})^2 \leq 4 (\delta + \epsilon) T$$ \label{lem:diffbound}
\end{lemma}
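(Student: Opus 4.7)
My plan is as follows. First, I would expand the per-day squared error difference using the algebraic identity
\[
(a^t - y^t)^2 - (b^t - y^t)^2 = (a^t - b^t)(a^t + b^t - 2y^t).
\]
Since $a^t, b^t, y^t \in [0,1]$ by hypothesis, the factor $|a^t + b^t - 2y^t|$ is at most $2$, so each day contributes at most $2|a^t - b^t|$ to the difference of cumulative squared losses.

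Next, I would partition the days $\{1, \ldots, T\}$ according to the $(\epsilon, \delta)$-closeness assumption into the ``close'' set $G = \{t : |a^t - b^t| \le \epsilon\}$ and its complement $\bar G$. By definition of $(\epsilon, \delta)$-closeness (the number of days where $a^t$ and $b^t$ differ by more than $\epsilon$ is at most $\delta T$), we have $|\bar G| \le \delta T$. On $G$ each summand is at most $2\epsilon$, contributing at most $2\epsilon T$ in total. On $\bar G$ we use the crude bound $|a^t - b^t| \le 1$ (since both lie in $[0,1]$), yielding a contribution of at most $2 \cdot \delta T$.

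Combining these two contributions gives
\[
\sum_{t=1}^{T}(a^t - y^t)^2 - \sum_{t=1}^{T}(b^t - y^t)^2 \le 2\epsilon T + 2\delta T = 2(\epsilon+\delta)T,
\]
which is even stronger than the claimed $4(\epsilon + \delta)T$ and hence implies the lemma. There is no real obstacle here; the argument is a direct calculation once one observes that the squared error is Lipschitz in the prediction (with Lipschitz constant at most $2$ on $[0,1]$) and splits the sum based on where the two sequences are close versus far apart. The factor of $4$ in the statement simply provides extra slack that may be useful in later applications where the bound on $|a^t + b^t - 2y^t|$ is taken less tightly.
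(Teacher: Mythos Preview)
Your proposal is correct and follows essentially the same approach as the paper: both factor $(a^t-y^t)^2-(b^t-y^t)^2=(a^t-b^t)(a^t+b^t-2y^t)$ and split the sum according to whether $|a^t-b^t|$ exceeds $\epsilon$. The only difference is that you bound $|a^t+b^t-2y^t|\le 2$ directly, whereas the paper uses the looser triangle-inequality step $|a^t+b^t|+2|y^t|\le 4$, which is why the paper arrives at the constant $4$ while your argument gives $2$.
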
 
\begin{proof}
    \begin{align*}
       & \sum_{t=1}^{T}(a^{t} - y^{t})^2 - \sum_{t=1}^{T}(b^{t} - y^{t})^2  \\
        & = \sum_{t=1}^{T} \mathbbm{1}[|a^{t} - b^{t}| \geq \epsilon] \left((a^{t} - y^{t})^2 - (b^{t} - y^{t})^2\right) + \sum_{t=1}^{T} \mathbbm{1}[|a^{t} - b^{t}| < \epsilon] \left((a^{t} - y^{t})^2 - (b^{t} - y^{t})^2\right) \\
        & \leq \sum_{t=1}^{T} \mathbbm{1}[|a^{t} - b^{t}| \geq \epsilon] \left(|a^{t} - b^{t}|\cdot |a^{t} + b^{t}| + 2 |y^{t}| \cdot |a^{t} - b^{t}|\right)  \\ & +\sum_{t=1}^{T} \mathbbm{1}[|a^{t} - b^{t}| < \epsilon] \left(|a^{t} - b^{t}|\cdot |a^{t} + b^{t}| + 2 |y^{t}| \cdot |a^{t} - b^{t}|\right) \\
        & \leq \sum_{t=1}^{T} \mathbbm{1}[|a^{t} - b^{t}| \geq \epsilon] \left(|a^{t} + b^{t}| + 2 |y^{t}|  \right) + \sum_{t=1}^{T} \mathbbm{1}[|a^{t} - b^{t}| < \epsilon] \left(\epsilon \cdot |a^{t} + b^{t}| + 2 |y^{t}| \cdot \epsilon \right) \\
        & \leq \sum_{t=1}^{T} \mathbbm{1}[|a^{t} - b^{t}| \geq \epsilon] \left(4 \right) + \sum_{t=1}^{T} \mathbbm{1}[|a^{t} - b^{t}| < \epsilon] \left(4 \cdot \epsilon \right) \tag{By the upper bounds on the values} \\
        & \leq 4 \delta T + 4 \epsilon (1 - \delta) T \leq 4  T (\delta + \epsilon)
    \end{align*}
\end{proof}

\begin{proof}[Proof of Theorem~\ref{thm:main}]
By Theorem~\ref{thm:CSR2CC}, Alice is $(f'_{A},g_{A})$-conversation calibrated and Bob is $(f'_{B},g_{B})$-conversation calibrated, where $f'_{A}(x) = \sqrt{x \cdot f_{A}(x)}$, and symmetrically for $f'_{b}$. Thus, by Theorem~\ref{thm:rounds}, after the collaboration protocol is run for $K$ rounds, there is at least one round $k + 1 > 1$ where the fraction of predictions that are $\epsilon$-far from the previous round is at most $\frac{1}{2K\epsilon^{2}} + \frac{\beta(T,f'_{A},f'_{B})}{2\epsilon^{2}}$, where $\beta(T,f'_{A},f'_{B})= 3\left(g_A(T) + g_{B}(T) + \frac{f'_{A}(g_{A}(T) \cdot T)}{g_{A}(T) \cdot T} + \frac{f'_{B}(g_{B}(T) \cdot T)}{g_{B}(T) \cdot T}\right)$.
Consider the round before round $k+1$, round $k$. 

First consider the case where $k$ is an even round. Then, by definition, the predictions $\hat{y}_{k}^{1},\ldots,\hat{y}_{k}^{T}$ in this round have $(f_{B}, g_{B}, H_{B})$-conversation swap regret. We will now define a sequence of predictions $\bar{y}$ which is $g_{B}T$-far in $L_{1}$ distance from $\hat{y}_{k}^{1},\ldots,\hat{y}_{k}^{T}$, and show that $\bar{y}$ has low swap regret to $\cH_{A} \cup \cH_{B}$. This sequence is generated by combining level sets of $\hat{y}_{k}^{1},\ldots,\hat{y}_{k}^{T}$ such that each level set is mapped to the closest value in $\{\frac{1}{g_{A}(T)},\ldots,1\}$. We will first compute the swap regret of $\bar{y}$ with respect to $\cH_{B}$:

\begin{align*}
   &  \sum_{t=1}^{T}(\bar{y}^{t} - y^{t})^{2} - \sum_{v}\min_{h \in \cH_{B}} \left( \sum_{t = 1}^{T}\1[\bar{y}^{t} = v](h(x^{t}) - y^{t})^{2}\right)  \\
    & \leq \sum_{t=1}^{T}(\bar{y}^{t} - y^{t})^{2} - \sum_{v}\min_{h \in \cH_{B}} \left( \sum_{t = 1}^{T}\1[\hat{y}_{k}^{t} = v](h(x^{t}) - y^{t})^{2}\right) \tag{As $\bar{y}$ has strictly coarser level sets than $\hat{y}_{k}$, here we are only strengthening the benchmark} \\
    & =  \sum_{t=1}^{T}(\hat{y}_{k}^{t} - y^{t})^{2} - \sum_{v}\min_{h \in \cH_{B}} \left( \sum_{t = 1}^{T}\1[\hat{y}_{k}^{t} = v](h(x^{t}) - y^{t})^{2}\right) + \left(\sum_{t=1}^{T}(\bar{y}^{t} - y^{t})^{2} - \sum_{t=1}^{T}(\hat{y}_{k}^{t} - y^{t})^{2} \right) \\
    & \leq \frac{f_{B}(g_{B}(T) T)}{g_{B}(T)} + \left(\sum_{t=1}^{T}(\bar{y}^{t} - y^{t})^{2} - \sum_{t=1}^{T}(\hat{y}_{k}^{t} - y^{t})^{2} \right) \tag{By Theorem~\ref{thm:collect}} \\
    & = \frac{f_{B}(g_{B}(T) T)}{g_{B}(T)} + \sum_{t=1}^{T}\left( (\bar{y}^{t})^{2} - (\hat{y}_{k}^{t})^{2} + 2y^{t}(y_{k}^{t} - \bar{y}^{t})  \right)  \\
     & \leq \frac{f_{B}(g_{B}(T) T)}{g_{B}(T)} + \sum_{t=1}^{T}\left( |\bar{y}^{t} - \hat{y}_{k}^{t}|\cdot |\bar{y}^{t} + \hat{y}_{k}^{t}| + 2|y^{t}| \cdot|y_{k}^{t} - \bar{y}^{t}|  \right) \\
    & = \frac{f_{B}(g_{B}(T) T)}{g_{B}(T)} + \sum_{t=1}^{T}\left( \frac{g_{A}(T)}{2} \cdot |\bar{y}^{t} + \hat{y}_{k}^{t}|+ 2|y^{t}| \cdot \frac{g_{A}(T)}{2}  \right) \tag{By construction of $\bar{y}$} \\
    & \leq \frac{f_{B}(g_{B}(T) T)}{g_{B}(T)} + \sum_{t=1}^{T}\left( \frac{3}{2}g_{A}(T) \right) \tag{As $y \leq 1$} = \frac{f_{B}(g_{B}(T)  \cdot T)}{g_{B}(T)} +  \frac{3g_{A}(T) \cdot T}{2} 
\end{align*}

Next, we will compute the swap regret of $\bar{y}$ with respect to $\cH_{A}$. Here, we crucially use the fact that the sequence $\hat{y}_{k+1}$ has high agreement with $\hat{y}_{k}$, and furthermore that $\hat{y}_{k+1}$ has low swap regret to $\cH_{A}$ \emph{exactly} on the level sets of $\bar{y}$. Let $T_{B}(k,i)$ be the subsequence of days on which Bob predicts in bucket $i$ in round $k$. 

\begin{align*}
   &  \sum_{t=1}^{T}(\bar{y}^{t} - y^{t})^{2} - \sum_{v}\min_{h \in \cH_{A}} \left( \sum_{t = 1}^{T}\1[\bar{y}^{t} = v](h(x^{t}) - y^{t})^{2}\right)  \\
    & =  \left(\sum_{t=1}^{T}(\bar{y}^{t} - y^{t})^{2} - \sum_{t=1}^{T}(\hat{y}_{k+1}^{t} - y^{t})^{2}\right) + \sum_{t=1}^{T}(\hat{y}_{k+1}^{t} - y^{t})^{2} - \sum_{v}\min_{h \in \cH_{A}} \left( \sum_{t = 1}^{T}\1[\bar{y}^{t} = v](h(x^{t}) - y^{t})^{2}\right)  \\
    & = 4(\epsilon + g_{A}(T) +\frac{1}{2K\epsilon^{2}} + \frac{\beta(T,f'_{A},f'_{B})}{2\epsilon^{2}})T + \sum_{t=1}^{T}(\hat{y}_{k+1}^{t} - y^{t})^{2} - \sum_{v}\min_{h \in \cH_{A}} \left( \sum_{t = 1}^{T}\1[\bar{y}^{t} = v](h(x^{t}) - y^{t})^{2}\right) \tag{By~\ref{lem:diffbound}, and the fact that $\hat{y}_{k+1}$ is $(\epsilon + g_{A}(T), \frac{1}{2K\epsilon^{2}} + \frac{\beta(T,f'_{A},f'_{B})}{2\epsilon^{2}})$-close to $\bar{y}$} \\
   & =  4(\epsilon + g_{A}(T) +\frac{1}{2K\epsilon^{2}} + \frac{\beta(T,f'_{A},f'_{B})}{2\epsilon^{2}})T +\sum_{i} \sum_{t \in T_{B}(r,i)}(\hat{y}_{k+1}^{t} - y^{t})^{2} - \sum_{v}\min_{h \in \cH_{A}} \left( \sum_{t = 1}^{T}\1[\bar{y}^{t} = v](h(x^{t}) - y^{t})^{2}\right) \\
   & =  4(\epsilon + g_{A}(T) +\frac{1}{2K\epsilon^{2}} + \frac{\beta(T,f'_{A},f'_{B})}{2\epsilon^{2}})T +\sum_{i} \sum_{t \in T_{B}(r,i)}(\hat{y}_{k+1}^{t} - y^{t})^{2} - \sum_{i}\min_{h \in \cH_{A}} \left( \sum_{t \in  T_{B}(r,i)}(h(x^{t}) - y^{t})^{2}\right) \tag{As $\bar{y}$ attains a particular value exactly when $t \in T_{B}(r,i)$; that is, when Bob predicts in bucket $i$ in round $k$.} \\
    & \leq  4(\epsilon + g_{A}(T) +\frac{1}{2K\epsilon^{2}} + \frac{\beta(T,f'_{A},f'_{B})}{2\epsilon^{2}})T + \sum_{i} \sum_{t \in T_{B}(r,i)}(\hat{y}_{k+1}^{t} - y^{t})^{2} \\
    & \quad\quad\quad\quad\quad\quad- \sum_{i} \sum_{v}\min_{h \in \cH_{A}} \left( \sum_{t \in  T_{B}(r,i)}\mathbbm{1}[\hat{y}_{k+1}^{t} = v](h(x^{t}) - y^{t})^{2}\right) \tag{As we are only making the benchmark more powerful} \\
     & \leq  4(\epsilon + g_{A}(T) +\frac{1}{2K\epsilon^{2}} + \frac{\beta(T,f'_{A},f'_{B})}{2\epsilon^{2}})T  \\
     & \quad\quad\quad\quad\quad\quad + \sum_{i} \left( \sum_{t \in T_{B}(r,i)}(\hat{y}_{k+1}^{t} - y^{t})^{2} - \sum_{v}\min_{h \in \cH_{A}} \left( \sum_{t \in  T_{B}(r,i)}\mathbbm{1}[\hat{y}_{k+1}^{t} = v](h(x^{t}) - y^{t})^{2}\right) \right)\\
     & \leq  4(\epsilon + g_{A}(T) +\frac{1}{2K\epsilon^{2}} + \frac{\beta(T,f'_{A},f'_{B})}{2\epsilon^{2}})T + \sum_{i} \left( f_{A}(|T_{B}(r,i)|) \right) \tag{By the conversation swap regret of Alice}\\
    & = 4(\epsilon + g_{A}(T) + \frac{1}{2K\epsilon^{2}} + \frac{\beta(T,f'_{A},f'_{B})}{2\epsilon^{2}})T + \frac{f_{A}(g_{A}(T)\cdot T)}{g_{A}(T)} \tag{By the concavity of $f_{A}$}
\end{align*}

Thus, $\bar{y}$ simultaneously has $(4(\epsilon + g_{A}(T) + \frac{1}{2K\epsilon^{2}} + \frac{\beta(T,f'_{A},f'_{B})}{2\epsilon^{2}})T + \frac{f_{A}(g_{A}(T)\cdot T)}{g_{A}(T)}, \cH_{A})$-Swap Regret and $(\frac{3g_{A}(T)\cdot T}{2} + \frac{f_{B}(g_{B}(T)\cdot T)}{g_{B}(T)}, \cH_{B})$-Swap Regret. Thus, it has at most 
$$\left(4T(\epsilon + g_{A}(T) + \frac{1}{2K\epsilon^{2}} + \frac{\beta(T,f'_{A},f'_{B})}{2\epsilon^{2}}) + \frac{f_{A}(g_{A}(T)\cdot T)}{g_{A}(T)} + \frac{3g_{A}(T)\cdot T}{2} + \frac{f_{B}(g_{B}(T)\cdot T)}{g_{B}(T)}, \cH_{A} \cup \cH_{B}\right)$$-Swap Regret. 

 Note that we can select the agreement parameter $\epsilon$ here however we like in order to minimize the swap regret. In particular, we would like to pick $\epsilon$ to minimize the expression $\epsilon + \frac{1}{2K\epsilon^{2}} + \frac{\beta(T,f'_{A},f'_{B})}{2\epsilon^{2}} = \epsilon + \frac{\beta(T,f'_{A},f'_{B}) + 1/K}{2\epsilon^{2}}$. 
By setting $\epsilon = (\frac{\beta(T,f'_{A},f'_{B}) + 1/K}{2})^{\frac{1}{3}}$, we get that  

\begin{align*}
    \epsilon + \frac{\beta(T,f'_{A},f'_{B}) + 1/K}{2\epsilon^{2}} & = \\
    & (\frac{\beta(T,f'_{A},f'_{B}) + 1/K}{2})^{\frac{1}{3}} + \frac{\beta(T,f'_{A},f'_{B}) + 1/K}{2(\frac{\beta(T,f'_{A},f'_{B}) + 1/K}{2})^{\frac{2}{3}}} \\
    & = 2(\frac{\beta(T,f'_{A},f'_{B}) + 1/K}{2})^{\frac{1}{3}} 
\end{align*}

Plugging this back into the swap regret expression, we get that, if $k$ is an even round, 
$\bar{y}_{k}$ has at most

\begin{align*}
   (8T(\frac{\beta(T,f'_{A},f'_{B}) + 1/K}{2})^{\frac{1}{3}} + \frac{11}{2}Tg_{A}(T) + \frac{f_{A}(g_{A}(T)\cdot T)}{g_{A}(T)} +  \frac{f_{B}(g_{B}(T)\cdot T)}{g_{B}(T)}, \cH_{A} \cup \cH_{B})
\end{align*}-swap regret. 

In the case where $k$ is an odd round, by a symmetric argument in which we define $\bar{y}_{k}$ by combining level sets of $\hat{y}_{k}$ to map to the closest value in $g_{B}(T)$, $\bar{y}_{k}$ has

\begin{align*}
    (8T(\frac{\beta(T,f'_{A},f'_{B}) + 1/K}{2})^{\frac{1}{3}} + \frac{11}{2}Tg_{B}(T) + \frac{f_{B}(g_{B}(T)\cdot T)}{g_{B}(T)} +  \frac{f_{A}(g_{A}(T)\cdot T)}{g_{A}(T)}, \cH_{A} \cup \cH_{B})
\end{align*}-swap regret.

Thus, in all cases, the swap regret of $\bar{y}_{k}$ with respect to $\cH_{A} \cup \cH_{B}$ is at most 
\begin{align*}
  &  8T(\frac{\beta(T,f'_{A},f'_{B}) + 1/K}{2})^{\frac{1}{3}} + \frac{11}{2}Tg_{B}(T) + \frac{11}{2}Tg_{A}(T) + \frac{f_{B}(g_{B}(T)\cdot T)}{g_{B}(T)} +  \frac{f_{A}(g_{A}(T)\cdot T)}{g_{A}(T)} \\
    & = 8T(\frac{\beta(T,f'_{A},f'_{B}) + 1/K}{2})^{\frac{1}{3}} + \frac{11}{2}T\beta(T,f_{A},f_{B})
\end{align*}

Note that $\hat{y}_{k}$ is close in $L_{1}$ distance to $\bar{y}$, as we have only modified each entry by either at most $\frac{g_{A}(T)}{2}$ or at most $\frac{g_{B}(T)}{2}$, depending if it was an even or odd round. Therefore, $\hat{y}_{k}$ has at most
\begin{align*}
    (\frac{T}{2}(g_{A}(T) + g_{B}(T)), 8T(\frac{\beta(T, f'_{A},f'_{B}) + 1/K}{2})^{\frac{1}{3}} + \frac{11}{2}T\beta(T,f_{A},f_{B}), \cH_{A} \cup \cH_{B})
\end{align*}-distance to swap regret.

\end{proof}

\subsection{Proof of  Theorem~\ref{thm:last-round}}
\begin{proof}[Proof of Theorem~\ref{thm:last-round}]
By Theorem~\ref{thm:main}, if Alice has $(f_{A}, g_{A}, \cH_A)$-conversation swap regret and Bob has $(f_{B}, g_{B}, \cH_B)$-conversation swap regret, there exists a round $k$ of the protocol that has $(\frac{T}{2}(g_{A}(T) + g_{B}(T)), 8T(\frac{\beta(T,f'_{A},f'_{B}) + 1/K}{2})^{\frac{1}{3}} + \frac{11}{2}T\beta(T,f_{A},f_{B}), \cH_{A} \cup \cH_{B})$-distance to swap regret, where $\beta(T,f_{A},f_{B}) = \frac{f_{A}(g_{A}(T)\cdot T)}{Tg_{A}(T)} + \frac{f_{B}(g_{B}(T)\cdot T)}{Tg_{B}(T)} + g_{A}(T) + g_{B}(T)$, $f_{A}'(x) = \sqrt{x \cdot f_{A}(x)}$ and $f_{B}'(x) = \sqrt{x \cdot f_{B}(x)}$. Then by the fact that $\cH_A$ and $\cH_B$ jointly satisfy the $w(\cdot)$-weak learning condition with respect to $\cH_J$ and via Theorem~\ref{thm:weak-learning}, instantiating $f^{S} = 8T(\frac{\beta(T,f'_{A},f'_{B}) + 1/K}{2})^{\frac{1}{3}} + \frac{11}{2}T\beta(T,f_{A},f_{B})$ and $f^{D} = \frac{T}{2}(g_{A}(T) + g_{B}(T))$, we have that for the predictions $\hat{y}^{k,t}$ in round $k$: 
\begin{align*}
 & \sum_{t=1}^T (\hat{y}^{k,t}-y^t)^2 - \min_{h_J\in\cH_J} \sum_{t=1}^T (h_J(x^t) - y^t)^2 \\ & \leq 2Tw\inv\left(\frac{8T(\frac{\beta(T,f'_{A},f'_{B}) + 1/K}{2})^{\frac{1}{3}} + \frac{11}{2}T\beta(T,f_{A},f_{B})}{T}\right) + 3\frac{T}{2}(g_{A}(T) + g_{B}(T)) \\
 & = 2Tw\inv\left(8(\frac{\beta(T,f'_{A},f'_{B}) + 1/K}{2})^{\frac{1}{3}} + \frac{11}{2}\beta(T,f_{A},f_{B})\right) + 3\frac{T}{2}(g_{A}(T) + g_{B}(T)) \\
\end{align*}

By Theorem~\ref{thm:cases}, we can upper bound the increase in squared error from round $i$ to round $i+2$ by $3 T\beta(T,f'_{A},f'_{B})$. The maximum number of rounds between $k$ and $K$ is $K$. Therefore, we have that 

\begin{align*}
&\sum_{t=1}^T (\hat{y}^{K,t}-y^t)^2 \leq \sum_{t=1}^T (\hat{y}^{k,t}-y^t)^2 + 3TK\beta(T,f'_{A},f'_{B}) \\    
\end{align*}

Combining the above results, we have that

\begin{align*}
&\sum_{t=1}^T (\hat{y}^{K,t}-y^t)^2 - \min_{h_J\in\cH_J} \sum_{t=1}^T (h_J(x^t) - y^t)^2 \leq \\ & 2Tw^{-1}\left(8(\frac{\beta(T,f'_{A},f'_{B}) + 1/K}{2})^{\frac{1}{3}} + \frac{11}{2}\beta(T,f_{A},f_{B})\right) + 3\frac{T}{2}(g_{A}(T) + g_{B}(T)) + 3TK\beta(T,f'_{A},f'_{B}) \\    
\end{align*}
\end{proof}

\subsection{Proof of Theorem~\ref{thm:alg-conversation-sr}}
\begin{proof}[Proof of Theorem~\ref{thm:alg-conversation-sr}]
    Let $\rho' = \frac{2g(T)\rho}{K}$. Let $M$ be the algorithm given by the reduction in Theorem \ref{thm:external-to-swap}, given an online algorithm $M_0$ that achieves external regret with respect to $\cH$ bounded by $r(\tau)$ for any $\tau\in[T]$. In particular, Theorem \ref{thm:external-to-swap} guarantees that with probability $1-\rho'$, $M$ achieves $(f, \cH)$-swap regret for:
    \[
    f(\tau) \leq m \cdot r\left(\frac{\tau}{m}\right)+ \frac{3\tau}{m} + m + \max(8B,2\sqrt{B}) \cdot m \cdot C_{\cH} \cdot \sqrt{\tau \log\left(\frac{2mK}{g(T)\rho}\right)}
    \]

    By construction, on every odd round $k$, a separate copy $M_{k,i}$ is run for every subsequence on which the previous prediction falls into bucket $i$. By a union bound, the probability that any one of the copies fails is $\frac{K}{2}\cdot \frac{1}{g(T)}\cdot\rho' = \rho$. Then, since conversation swap regret measures the swap regret conditioned on subsequences on which the previous prediction falls into bucket $i$ (as parameterized by $g$), with probability $1-\rho$, Algorithm \ref{alg:csr-algorithm} also satisfies $(f, g, \cH)$-conversation swap regret.

    % Split into their bucketing \nat{formalize}, and then run the reduction~\cite{garg2024oracle}. Let 
    % $$j(T) = m \cdot reg(\frac{T}{m}, \cH) + \frac{3T}{m} + m + \max(8B,2\sqrt{B}) \cdot m \cdot C_{\cH} \cdot \frac{\log(\frac{4m}{\rho})}{T}$$
    % Then, the total conversation swap regret has at most $(j(\cdot), g_{A}(\cdot), \cH)$-conversation swap regret. 
\end{proof}

\subsection{Proof of Theorem \ref{thm:sublinear-to-sublinear}}

\begin{lemma}\label{lem:limit}
    If $w$ is continuous and strictly convex, $w(0) = 0$ and $\lim_{T \rightarrow \infty}s(T) = 0$, then $\lim_{T \rightarrow \infty}w^{-1}(s(T)) = 0$. 
\end{lemma}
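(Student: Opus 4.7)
The plan is to reduce this to continuity of $w^{-1}$ at $0$. Because $w$ is assumed strictly increasing (see Definition~\ref{def:joint-weak}) and continuous on $[0,1]$ with $w(0)=0$, the inverse $w^{-1}$ exists, is strictly increasing, and by the inverse function theorem for continuous monotone functions, $w^{-1}$ is continuous on $[0,w(1)]$. In particular $w^{-1}$ is continuous at $0$ and $w^{-1}(0)=0$. The claim then follows by composition of limits: $\lim_{T\to\infty}w^{-1}(s(T)) = w^{-1}(\lim_{T\to\infty}s(T)) = w^{-1}(0)=0$.

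To make this entirely self-contained with an $\epsilon$--$\delta$ argument, I would fix an arbitrary $\epsilon>0$ and set $\delta := w(\epsilon)$. Since $w$ is strictly increasing with $w(0)=0$, we have $\delta>0$. Because $s(T)\to 0$, there exists $T_0$ such that $0\le s(T)<\delta$ for all $T\ge T_0$. Applying $w^{-1}$ (which is strictly increasing) yields $w^{-1}(s(T))<w^{-1}(\delta)=\epsilon$ for all $T\ge T_0$, and $w^{-1}(s(T))\ge 0$ since $s(T)\ge 0$ and $w^{-1}(0)=0$. Hence $|w^{-1}(s(T))|<\epsilon$ for all $T\ge T_0$, establishing the claimed limit.

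The argument requires almost nothing beyond the hypotheses already assumed on $w$; the role of strict convexity in the lemma statement is not really needed for the conclusion, only strict monotonicity (which is already part of the standing assumption on $w$ throughout the paper), so there is no real obstacle. The only subtlety worth noting is that $s(T)$ takes values in the domain of $w^{-1}$ for all sufficiently large $T$, which is immediate once $s(T)$ is within $[0,w(1)]$.
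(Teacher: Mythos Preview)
Your proposal is correct and follows essentially the same approach as the paper: both argue that $w^{-1}$ is well-defined, continuous, and satisfies $w^{-1}(0)=0$, then conclude by composition of limits. Your treatment is in fact more careful (you supply an explicit $\epsilon$--$\delta$ argument and correctly observe that only strict monotonicity, not strict convexity, is needed), but the underlying idea is the same.
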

\begin{proof}
Note that as $w$ is strictly monotone, $w^{-1}$ is defined everywhere in the range of $(0,c)$, where $c = \lim_{x \rightarrow \infty}\inf(w(x))$ and $c > 0$. As $w(0) = 0$, it must be the case that $w^{-1}(0) = 0$. Furthermore, as $w$ is continuous, $w^{-1}$ must be continuous. Now, we can proceed to reason about $w^{-1}$:
\begin{align*}
    \lim_{T \rightarrow \infty}w^{-1}(x(T)) & =  w^{-1}\lim_{T \rightarrow \infty}(x(T)) \tag{By the continuity of $w^{-1}$}\\
    & = f(0) \tag{By the fact that $\lim_{T \rightarrow \infty}s(T) = 0$} \\
    & = 0 \tag{By the fact that $w^{-1}(0) = 0$}
\end{align*}
\end{proof}

\begin{proofof}{Theorem \ref{thm:sublinear-to-sublinear}}
    Let $\rho' = \rho/2$. We set our parameters to be sublinear in $T$. Specifically, set $m=T^{1/4}$ and $1/g_A(T)=1/g_B(T)=T^{\alpha_g}$ for some constant $\alpha_g\in(0,1)$.
    By Theorem \ref{thm:alg-conversation-sr}, there is an algorithm that achieves, with probability $1-\rho'$, $(f_A,g_A,\cH_A)$-conversation swap regret for, for any $\tau\in[T]$:
    \begin{align*}
        f_A(\tau) &\leq m \cdot r_A\left(\frac{\tau}{m}\right)+ \frac{3\tau}{m} + m + \max(8B,2\sqrt{B}) \cdot m \cdot C_{\cH} \cdot \sqrt{\tau \log\left(\frac{2mK}{g_A(T)\rho'}\right)} \tag{by Theorem \ref{thm:alg-conversation-sr}}\\
        &\leq m \cdot r_A\left(\frac{T}{m}\right)+ \frac{3T}{m} + m + \max(8B,2\sqrt{B}) \cdot m \cdot C_{\cH} \cdot \sqrt{T \log\left(\frac{4mK}{g_A(T)\rho}\right)} \\
        &\leq T^{1/4} \cdot \tilde{O}((T^{3/4})^{\alpha_A}) + 3T^{3/4} + T^{1/4} + \max(8B,2\sqrt{B}) \cdot C_\cH \cdot T^{3/4} \sqrt{\log\left(\frac{4KT^{1/4+\alpha_g}}{\rho}\right)} \\
        &\leq \tilde{O}\left(T^{\alpha_1} \sqrt{\log \left( \frac{K}{\rho} \right)}\right)
    \end{align*}
    for $\alpha_1 = \max\{1/4+3/4\cdot\alpha_A, 3/4\} \in(0,1)$. Since Bob's expression is symmetric, Theorem \ref{thm:alg-conversation-sr} similarly implies that there is an algorithm that achieves, with probability $1-\rho'$, $(f_B,g_B,\cH_B)$-conversation swap regret for:
    \begin{align*}
        f_B(\tau) \leq \tilde{O}\left( T^{\alpha_2} \sqrt{\log \left( \frac{K}{\rho} \right)}\right)
    \end{align*}
    for $\alpha_2=\max\{1/4+3/4\cdot\alpha_B, 3/4\}\in(0,1)$. Thus, by a union bound, with probability $1-2\rho'=1-\rho$, Alice has $(f_A,g_A,\cH_A)$-conversation swap regret and Bob has $(f_B,g_B,\cH_B)$-conversation swap regret. 

    Now, by Theorem \ref{thm:last-round}, the transcript on the last round has regret bounded by:   
    \begin{align*}
        &\sum_{t=1}^T (\hat{y}^{t,K}-y^t)^2 - \min_{h_J\in\cH_J} \sum_{t=1}^T (h_J(x^t) - y^t)^2 \\
        &\leq 2Tw\inv\left(8\left(\frac{\beta(T,f'_A,f'_B) + 1/K}{2}\right)^{\frac{1}{3}} + \frac{11}{2}\beta(T,f_A,f_B)\right) + 3\frac{T}{2}(g_{A}(T) + g_{B}(T)) + 3TK\beta(T,f'_A,f'_B)
    \end{align*}
    where for $\tau\leq T$:
    \begin{align*}
        f'_A(\tau) &= \sqrt{\tau \cdot f_A(\tau)} \leq \sqrt{ T \cdot \tilde{O}\left( T^{\alpha_1} \sqrt{\log \left( \frac{K}{\rho} \right)}\right)} \leq \tilde{O}\left( T^{(1+\alpha_1)/2}\log^{1/4} \left( \frac{K}{\rho} \right)\right), \\
        f'_B(\tau) &= \sqrt{\tau \cdot f_A(\tau)} \leq \sqrt{ T \cdot \tilde{O}\left( T^{\alpha_2} \sqrt{\log \left( \frac{K}{\rho} \right)}\right)} \leq \tilde{O}\left( T^{(1+\alpha_2)/2} \log^{1/4} \left( \frac{K}{\rho} \right)\right)
    \end{align*}
    and thus:
    \begin{align*}
        \beta(T, f_A, f_B) &= \frac{f_A(g_{A}(T)T)}{Tg_{A}(T)} + \frac{f_B(g_{B}(T)T)}{Tg_{B}(T)} + g_{A}(T) + g_{B}(T) \\
        &\leq \tilde{O}\left( (T^{\alpha_1+\alpha_g-1}+T^{\alpha_2+\alpha_g-1}) \sqrt{\log \left( \frac{K}{\rho} \right)} + T^{-\alpha_g} \right), \\
        \beta(T, f'_A, f'_B) &= \frac{f'_A(g_{A}(T)T)}{Tg_{A}(T)} + \frac{f'_B(g_{B}(T)T)}{Tg_{B}(T)} + g_{A}(T) + g_{B}(T) \\ 
        &\leq \tilde{O}\left( (T^{\alpha_1/2+\alpha_g-1/2}+T^{\alpha_2/2+\alpha_g-1/2}) \log ^{1/4}\left( \frac{K}{\rho} \right) + T^{-\alpha_g} \right)
    \end{align*}
    Suppose $\alpha_g < \min\{1/2-\alpha_1/2, 1/2-\alpha_2/2\}$. Then: $$T^{\alpha_1+\alpha_g-1}, T^{\alpha_2+\alpha_g-1}, T^{\alpha_1/2+\alpha_g-1/2}, T^{\alpha_2/2+\alpha_g-1/2} \leq T^{-\alpha}$$
    for some constant $\alpha\in(0,1)$.
    Hence, plugging in to the expression above, we have that:
    \begin{align*}
        &\sum_{t=1}^T (\hat{y}^{t,K}-y^t)^2 - \min_{h_J\in\cH_J} \sum_{t=1}^T (h_J(x^t) - y^t)^2 \\
        &\leq 2Tw\inv\left(\tilde{O}\left( T^{-\alpha} \sqrt{\log \left( \frac{K}{\rho} \right)} + T^{-\alpha_g}  +\frac{1}{K} \right)^{1/3} + \tilde{O}\left( T^{-\alpha} \sqrt{\log \left( \frac{K}{\rho} \right)} + T^{-\alpha_g} \right) \right) \\ & \ \ \ \ + O(T^{1-\alpha_g}) + T \cdot \tilde{O}\left( T^{-\alpha} K\log ^{1/4}\left( \frac{K}{\rho} \right) + KT^{-\alpha_g} \right) \\
        &\leq 2Tw\inv\left(\tilde{O}\left( T^{-\alpha/3} \log^{1/6} \left( \frac{K}{\rho} \right) + T^{-\alpha_g/3}   +\frac{1}{K^{1/3}} \right) + \tilde{O}\left( T^{-\alpha} \sqrt{\log \left( \frac{K}{\rho} \right)} + T^{-\alpha_g} \right) \right) \\ & \ \ \ \ + O(T^{1-\alpha_g}) + T\cdot \tilde{O}\left( T^{-\alpha} K\log ^{1/4}\left( \frac{K}{\rho} \right) + KT^{-\alpha_g} \right) \tag{by concavity of the cube root function} \\
        &\leq 2Tw\inv\left(\tilde{O}\left( T^{-\alpha/3} \sqrt{\log \left( \frac{K}{\rho} \right)} + T^{-\alpha_g/3}   +\frac{1}{K^{1/3}} \right) \right) + O(T^{1-\alpha_g}) + \tilde{O}\left( KT^{1-\alpha} \log ^{1/4}\left( \frac{K}{\rho} \right) + KT^{1-\alpha_g} \right) \\
        &\leq 2Tw\inv\left(\tilde{O}\left( T^{-\alpha'} \sqrt{\log \left( \frac{K}{\rho} \right)} +\frac{1}{K^{1/3}} \right) \right) + O(T^{1-\alpha_g}) + \tilde{O}\left( KT^{1-\alpha''} \log ^{1/4}\left( \frac{K}{\rho} \right)\right)
    \end{align*}
    where $\alpha' = \min\{\alpha/3, \alpha_g/3\}\in(0,1)$ and $\alpha'' = \min\{\alpha, \alpha_g\}\in(0,1)$. This proves the first part of the theorem. 

    To argue the second part, suppose $K=\omega(1)$ and $K=O(T^{\alpha''-\eps})$ for $\eps>0$. Then:
    \begin{align*}
        &\sum_{t=1}^T (\hat{y}^{t,K}-y^t)^2 - \min_{h_J\in\cH_J} \sum_{t=1}^T (h_J(x^t) - y^t)^2 \\ 
        &\leq 2Tw\inv\left(\tilde{O}\left( T^{-\alpha'} \sqrt{\log \left( \frac{K}{\rho} \right)} +\frac{1}{K^{1/3}} \right) \right) + O(T^{1-\alpha_g}) + \tilde{O}\left( KT^{1-\alpha''} \log ^{1/4}\left( \frac{K}{\rho} \right)\right) \\
        &= 2Tw\inv\left(\tilde{O}\left( T^{-\alpha'} \sqrt{\log \left( \frac{T}{\rho} \right)} \right) + o\left(1 \right) \right) + O(T^{1-\alpha_g}) + \tilde{O}\left( T^{1-\eps} \log ^{1/4}\left( \frac{T}{\rho} \right)\right)
    \end{align*}
    Now, observe that any function $\tilde{O}\left( T^{-\alpha'} \sqrt{\log \left( \frac{T}{\rho} \right)} \right) + o\left(1 \right)  \to 0 $ as $T\to \infty$. Thus, by Lemma \ref{lem:limit}, $w\inv\left(\tilde{O}\left( T^{-\alpha'} \sqrt{\log \left( \frac{T}{\rho} \right)} \right) + o\left(1 \right) \right) \to 0$ as $T\to\infty$. In particular, this implies that $$Tw\inv\left(\tilde{O}\left(T^{-\alpha'} \sqrt{\log \left( \frac{T}{\rho} \right)} + o(1) \right)\right) = o(T)$$ i.e. is sublinear in $T$. Notice that since $w$ is strictly increasing, $w\inv$ exists for sufficiently large $T$ (larger than a constant). Therefore, for sufficiently large $T$, the regret is bounded by: 
    \begin{align*}
        \sum_{t=1}^T (\hat{y}^{t,K}-y^t)^2 - \min_{h_J\in\cH_J} \sum_{t=1}^T (h_J(x^t) - y^t)^2 \leq o(T) + O(T^{1-\alpha_g}) + \tilde{O}\left( T^{1-\eps} \log ^{1/4}\left( \frac{T}{\rho} \right)\right) 
    \end{align*}
    which completes the proof.

    %where $\alpha'=\max\{1-\alpha/3, 1-\alpha_g/3\} \in (0,1)$ and $\alpha'' = \min\{\alpha, \alpha_g\}$. Observe that since $w$ is strictly increasing, $w\inv$ exists for sufficiently large $T$ (larger than a constant). This completes the proof. 
\end{proofof}

\subsection{Proof of Theorem~\ref{thm:linear}}

\begin{theorem}\citep{rakhlin2015online}
    Let $\cX=\{x\in\mathbb{R}^d : \|x\|_2\leq 1\}$ and $\cH = \{x\mapsto\langle\theta,x\rangle : \|\theta\|_2 \leq C\}$ be the set of all linear functions with bounded norm. $\cH$ has finite sequential fat-shattering dimension.
\end{theorem}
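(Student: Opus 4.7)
The plan is to show that the sequential fat-shattering dimension of $\cH$ at scale $\alpha$ is bounded by $O(C^2/\alpha^2)$ via a symmetrization / sequential-Rademacher style argument, mirroring the classical proof that linear functions have small sequential complexity.

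Suppose, for contradiction, that there is a $\cX$-valued tree $\mathbf{z}$ of depth $d$ that is $\alpha$-shattered by $\cH$, with witness $\mathbb{R}$-valued tree $\mathbf{s}$. By \Cref{def:sequential-fat}, for every $\epsilon \in \{\pm 1\}^d$ there exists $\theta_\epsilon$ with $\|\theta_\epsilon\|_2 \le C$ such that $\epsilon_t \big(\langle \theta_\epsilon, \mathbf{z}_t(\epsilon)\rangle - \mathbf{s}_t(\epsilon)\big) \ge \alpha/2$ for all $t \in [d]$. Summing over $t$ and taking the expectation over $\epsilon$ drawn uniformly from $\{\pm 1\}^d$, I get
\[
\mathbb{E}_\epsilon\left[\sup_{\|\theta\|_2\le C} \sum_{t=1}^d \epsilon_t \langle \theta, \mathbf{z}_t(\epsilon)\rangle\right] \;-\; \mathbb{E}_\epsilon\left[\sum_{t=1}^d \epsilon_t \mathbf{s}_t(\epsilon)\right] \;\ge\; \tfrac{d\alpha}{2}.
\]

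The key observation for the second term is that $\mathbf{s}_t(\epsilon)$ depends only on $\epsilon_1,\dots,\epsilon_{t-1}$ (since $\mathbf{s}$ is a tree of depth $d$), so $\mathbb{E}[\epsilon_t \mathbf{s}_t(\epsilon) \mid \epsilon_{1:t-1}] = 0$, and the whole subtracted term vanishes. The remaining supremum is a linear optimization over the $\ell_2$-ball, so by Cauchy--Schwarz it equals $C \big\| \sum_{t=1}^d \epsilon_t \mathbf{z}_t(\epsilon)\big\|_2$. Thus
\[
C \cdot \mathbb{E}_\epsilon\left\| \sum_{t=1}^d \epsilon_t \mathbf{z}_t(\epsilon)\right\|_2 \;\ge\; \tfrac{d\alpha}{2}.
\]

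Next, by Jensen's inequality I pass to the squared norm, and then I expand:
\[
\mathbb{E}_\epsilon\left\|\sum_{t} \epsilon_t \mathbf{z}_t(\epsilon)\right\|_2^2 = \sum_{t} \mathbb{E}_\epsilon\|\mathbf{z}_t(\epsilon)\|_2^2 \;+\; 2\sum_{s<t}\mathbb{E}_\epsilon\big[\epsilon_s\epsilon_t\langle \mathbf{z}_s(\epsilon),\mathbf{z}_t(\epsilon)\rangle\big].
\]
For $s<t$, conditioning on $\epsilon_{1:t-1}$ fixes $\epsilon_s$, $\mathbf{z}_s(\epsilon)$ and $\mathbf{z}_t(\epsilon)$ (the latter because $\mathbf{z}_t$ is a function of $\epsilon_{1:t-1}$), while $\epsilon_t$ is mean zero and independent. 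Hence each cross term vanishes, and using $\|\mathbf{z}_t(\epsilon)\|_2 \le 1$ I get the second-moment bound $\le d$.

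Combining with the earlier inequality yields $C\sqrt{d} \ge d\alpha/2$, i.e. $d \le 4C^2/\alpha^2$. This finite bound on $d$ is exactly the claim. The main subtlety to spell out carefully in the actual proof is the two places where the tree structure is used, namely (i) that $\mathbf{s}_t(\epsilon)$ depends only on the prefix $\epsilon_{1:t-1}$, so that its contribution averages out, and (ii) the analogous predictability of $\mathbf{z}_t(\epsilon)$ which kills the off-diagonal covariance terms; aside from these two martingale-style steps, everything else is Cauchy--Schwarz and Jensen.
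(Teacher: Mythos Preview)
Your argument is correct and is essentially the standard proof of this fact from the cited literature; the paper itself does not give a proof but simply quotes the result from \cite{rakhlin2015online}. One cosmetic remark: the phrase ``for contradiction'' is misplaced, since you never derive a contradiction---you directly show that any $\alpha$-shattered tree has depth $d \le 4C^2/\alpha^2$, which is exactly the finiteness claim.
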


\begin{corollary}\label{cor:linear-alg}
     Let $\cX=\{x\in\mathbb{R}^d : \|x\|_2\leq 1\}$ and $\cH = \{x\mapsto\langle\theta,x\rangle : \|\theta\|_2 \leq C\}$ be the set of all linear functions with bounded norm. Fix any bucketing function $g$.  There exists an online algorithm that, with probability $1-\rho$, achieves $(\tilde{O}\left( \max(C^2,C) d\log\left(\frac{K}{g(T)\rho}\right)T^{3/4}\right) , g, \cH)$-conversation swap regret. 
\end{corollary}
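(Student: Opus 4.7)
\begin{proofof}{Corollary \ref{cor:linear-alg}}
The plan is a direct application of Theorem~\ref{thm:alg-conversation-sr} instantiated with the Azoury--Warmuth online algorithm (Theorem~\ref{thm:linear-external-regret}) as the base external regret learner. First I would verify the hypotheses of Theorem~\ref{thm:alg-conversation-sr}: every $h \in \cH$ satisfies $|h(x)| = |\langle \theta, x\rangle| \leq \|\theta\|_2 \|x\|_2 \leq C$ by Cauchy--Schwarz, so $h(x)^2 \leq C^2 =: B$; the preceding theorem (from \cite{rakhlin2015online}) gives that $\cH$ has finite sequential fat-shattering dimension, which yields a finite constant $C_\cH$; and Theorem~\ref{thm:linear-external-regret} supplies an efficient online algorithm with external regret $r(T) \leq 2d\ln(T+1) + C^2$, which is concave in $T$.

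Plugging these into the guarantee of Theorem~\ref{thm:alg-conversation-sr}, with probability $1 - \rho$ the resulting algorithm has $(f, g, \cH)$-conversation swap regret with
\[
f(\tau) \;\leq\; m \cdot r\!\left(\tfrac{\tau}{m}\right) + \tfrac{3\tau}{m} + m + \max(8C^2, 2C) \cdot m \cdot C_\cH \cdot \sqrt{\tau \log\!\left(\tfrac{2mK}{g(T)\rho}\right)},
\]
for any $\tau \leq T$. Substituting $r(\tau/m) = 2d\ln(\tau/m + 1) + C^2$ and upper bounding $\tau$ by $T$, the four terms behave like $\tilde{O}(m(d + C^2))$, $O(T/m)$, $O(m)$, and $\tilde{O}(\max(C^2,C)\, m \sqrt{T \log(K/(g(T)\rho))})$ respectively.

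Now I would choose $m$ to balance the regret terms. Setting $m = T^{1/4}$ causes the $T/m$ term to become $T^{3/4}$ and the final concentration term to become $\tilde{O}(\max(C^2,C)\, T^{3/4} \sqrt{\log(K/(g(T)\rho))})$, which dominates the first and third terms. Combining these contributions and absorbing $d$ and polylogarithmic factors into $\tilde{O}(\cdot)$ (and bounding $\sqrt{\log x}\leq \log x$ for $x\geq 1$) yields the claimed bound
\[
f(T) \;\leq\; \tilde{O}\!\left(\max(C^2, C)\, d\, \log\!\left(\tfrac{K}{g(T)\rho}\right) T^{3/4}\right).
\]
There is no real obstacle here beyond bookkeeping: the only subtle point is confirming that $\cH$ satisfies the range-boundedness hypothesis of Theorem~\ref{thm:alg-conversation-sr} (which follows immediately from Cauchy--Schwarz on the unit ball) and that the parameter choice $m = T^{1/4}$ is indeed the minimizer---which one verifies by differentiating the sum $T/m + m\sqrt{T}$ in $m$.
\end{proofof}
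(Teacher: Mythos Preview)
Your proposal is correct and follows essentially the same approach as the paper's own proof: both verify the boundedness hypothesis via Cauchy--Schwarz, instantiate Theorem~\ref{thm:alg-conversation-sr} with the Azoury--Warmuth algorithm of Theorem~\ref{thm:linear-external-regret}, set $m = T^{1/4}$, and collect terms into the $\tilde{O}$ bound. The paper fixes $m = T^{1/4}$ at the outset rather than deriving it by balancing terms afterward, but this is purely cosmetic.
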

\begin{proof}
    We have that $\langle\theta,x\rangle^2 \leq \|\theta\|_2^2\|x\|_2^2\leq C^2$. Therefore, by setting $m = T^{\frac{1}{4}}$ and instantiating Theorem \ref{thm:alg-conversation-sr} with the external regret algorithm of Theorem \ref{thm:linear-external-regret}, we have that, with probability $1-\rho$, Algorithm \ref{alg:csr-algorithm} achieves $(f,g,\cH)$-conversation swap regret for:
    \begin{align*}
        & f(|T(k-1),i)|) \\
        &\leq T^{\frac{1}{4}} r\left(\frac{|T(k-1,i)|}{T^{\frac{1}{4}}}\right)+ \frac{3|T(k-1,i)|}{T^{\frac{1}{4}}} + T^{\frac{1}{4}} + \max(8C^2,2C) T^{\frac{1}{4}} C_{\cH} \sqrt{|T(k-1,i)|\log\left(\frac{2KT^{\frac{1}{4}}}{g(T)\rho}\right)} \tag{by Theorem \ref{thm:alg-conversation-sr} and our setting of $m$}\\
        &\leq T^{\frac{1}{4}} \left(2d\ln\left(\frac{|T(k-1,i)|}{T^{\frac{1}{4}}}+1\right) + C^2 \right)+ \frac{3|T(k-1,i)|}{T^{\frac{1}{4}}} + T^{\frac{1}{4}} \\ & \ \ \ \ + \max(8C^2,2C)  T^{\frac{1}{4}}  C_{\cH} \sqrt{|T(k-1,i)|\log\left(\frac{2KT^{\frac{1}{4}}}{g(T)\rho}\right)} \tag{by Theorem \ref{thm:linear-external-regret}} \\
        &\leq \tilde{O}\left( \max(C^2,C) d\log\left(\frac{K}{g(T)\rho}\right)T^{3/4}\right) \\
        &= \tilde{O}\left( \max(C^2,C) d\log\left(\frac{K}{g(T)\rho}\right)T^{3/4}\right) 
    \end{align*}

\end{proof}

\begin{proof}[Proof of Theorem~\ref{thm:linear}]
    Let $\rho' = \rho/2$. By Corollary \ref{cor:linear-alg}, for any bucketing function $g_A$, there is an algorithm that achieves, with probability $1-\rho'$,$(f_A, g_A, \cH_A)$-conversation swap regret for:
    \[
    f_A(|T_B(k-1,i)|) \leq \tilde{O}\left(\max(C^{2},C)d\log\left(\frac{K}{g_A(T)\rho}\right)T^{3/4}\right) 
    \]
    Likewise, for any bucketing function $g_B$, there is an algorithm that achieves, with probability $1-\rho'$, $(f_B, g_B, \cH_B)$-conversation swap regret for:
    \[
    f_B(|T_A(k-1,i)|) \leq \tilde{O}\left(\max(C^{2},C)d\log\left(\frac{K}{g_B(T)\rho}\right)T^{3/4}\right) 
    \]
    Thus by a union bound, with probability $1-2\rho' = 1-\rho$, Alice has $(f_A, g_A, \cH_A)$-conversation swap regret and Bob has $(f_B, g_B, \cH_B)$-conversation swap regret. Let $g_{A} = g_{B} = T^{-\frac{1}{8}} $.  

    Now, by Theorem \ref{thm:linear-weak-learning}, $\cH_A$ and $\cH_B$ jointly satisfy the $w(\cdot)$-weak learning condition with respect to $\cH_J$ for $w(\gamma) = \frac{\gamma^2}{16C^2}$. In particular, we have that $w\inv(\gamma) = 4C\gamma^{1/2}$ for $\gamma \leq \frac{1}{16C^2}$. Therefore, by Theorem \ref{thm:last-round}, we have that the transcript $\pi^{1:T,K}$ at the last round satisfies:
    \begin{align*}
        &\sum_{t=1}^T (\hat{y}^{t,K}-y^t)^2 - \min_{h_J\in\cH_J} \sum_{t=1}^T (h_J(x^t) - y^t)^2 \\
        &\leq  
    2Tw\inv\left(8(\frac{\beta(T,f'_{A},f'_{B}) + 1/K}{2})^{\frac{1}{3}} + \frac{11}{2}\beta(T,f_{A},f_{B})\right) + 3\frac{T}{2}(g_{A}(T) + g_{B}(T)) + 3TK\beta(T,f'_{A},f'_{B}) \\
    & = \tilde{O}\left( T ((\beta(T,f'_{A},f'_{B}) + 1/K)^{\frac{1}{3}} + \beta(T,f_{A},f_{B}))^{\frac{1}{2}} + T(g_{A}(T) + g_{B}(T)) + TK\beta(T,f'_{A},f'_{B})\right) \tag{By Theorem~\ref{thm:linear-weak-learning}} \\
     & = \tilde{O}\left(  T(\beta(T,f'_{A},f'_{B}) + 1/K)^{\frac{1}{6}} + T\beta^{\frac{1}{2}}(T,f) + Tg_{A}(T) + Tg_{B}(T) + TK\beta(T,f'_{A},f'_{B})\right)  \\
    & = \tilde{O}\left( T\beta^{\frac{1}{6}}(T,f') + TK^{-\frac{1}{6}} + T\beta^{\frac{1}{2}}(T,f) + Tg_{A}(T) + Tg_{B}(T) + TK\beta(T,f'_{A},f'_{B})\right)  \\
    & = \tilde{O}\left( T\beta^{\frac{1}{6}}(T,f') + TK^{-\frac{1}{6}} + T\beta^{\frac{1}{2}}(T,f) + T^{\frac{7}{8}} + TK\beta(T,f'_{A},f'_{B})\right)  \tag{Instantiating $g_{A}$ and $g_{B}$} \\
    \\ 
    \end{align*}
    Here, $\beta(T,f_{A},f_{B}) = \frac{f(g_{A}(T)\cdot T)}{Tg_{A}(T)} + \frac{f(g_{B}(T)\cdot T)}{Tg_{B}(T)} + g_{A}(T) + g_{B}(T)$, and $f'(x) = \sqrt{x \cdot f(x)}$.

    Plugging in $f_A$ and $f_B$, we have that:
    \begin{align*}
        \beta(T, f_A, f_B) &\leq \tilde{O}\left(\frac{d\log\left(\frac{K}{g_A(T)\rho}\right)}{g_A(T)T^{1/4}} + \frac{d\log\left(\frac{K}{g_B(T)\rho}\right)}{g_B(T)T^{1/4}} + g_A(T) + g_B(T) \right) \\
        & = \tilde{O}\left(d\log\left(\frac{KT^{\frac{1}{8}}}{\rho}\right)T^{-1/8} + T^{-1/8} \right) \\
        & = \tilde{O}\left(d\log\left(\frac{KT^{\frac{1}{8}}}{\rho}\right)T^{-1/8} \right)
    \end{align*}

   Moveover:
    \begin{align*}
        \beta(T,f'_A,f'_B) & \leq \tilde{O}\left( \frac{\sqrt{g_{A}(T) \cdot T \cdot f(g_{A}(T) \cdot T)}}{T g_{A}(T)} + \frac{\sqrt{g_{B}(T) \cdot T \cdot f(g_{B}(T) \cdot T)}}{T g_{B}(T)}  + g_{A}(T) + g_{B}(T) \right) \\
        & =  \tilde{O}\left(\sqrt{\frac{f(g_{A}(T) \cdot T)}{T g_{A}(T)}} + \sqrt{\frac{f(g_{B}(T) \cdot T)}{T g_{B}(T)}}  + g_{A}(T) + g_{B}(T) \right) \\
        & =  \tilde{O}\left(\sqrt{\frac{\max(C^{2},C)d\log\left(\frac{K}{g_A(T)\rho}\right)T^{3/4}}{T g_{A}(T)}} + \sqrt{\frac{\max(C^{2},C)d\log\left(\frac{K}{g_B(T)\rho}\right)T^{3/4}}{T g_{B}(T)}}  + g_{A}(T) + g_{B}(T) \right) \\
        & =  \tilde{O}\Bigg(T^{-\frac{1}{8}}g_{A}^{-\frac{1}{2}}(T)\sqrt{\max(C^{2},C)d\log\left(\frac{K}{g_A(T)\rho}\right)} \\ & \quad  \quad \quad\quad +  T^{-\frac{1}{8}}g_{B}^{-\frac{1}{2}}(T)\sqrt{\max(C^{2},C)d\log\left(\frac{K}{g_B(T)\rho}\right)}  + g_{A}(T) + g_{B}(T) \Bigg) \\
        & =  \tilde{O}\left(T^{-\frac{1}{8}}T^{1/16}\sqrt{\max(C^{2},C)d\log\left(\frac{KT^{1/8}}{\rho}\right)} + T^{-1/8}\right) \\
         & =  \tilde{O}\left(T^{-1/16}\sqrt{\max(C^{2},C)d\log\left(\frac{KT^{1/8}}{\rho}\right)}\right)
    \end{align*}

Plugging these expressions into the regret bound of the final round, we get that:
\begin{align*}
   &  \tilde{O}\left( T\beta^{\frac{1}{6}}(T,f'_{A},f'_{B}) + TK^{-\frac{1}{6}} + T\beta^{\frac{1}{2}}(T,f_{A},f_{B}) + T^{\frac{7}{8}} + TK\beta(T,f'_{A},f'_{B})\right) \\
   & = \tilde{O} ( T (T^{-1/8}\sqrt{\max(C^{2},C)d\log\left(\frac{KT^{1/8}}{\rho}\right)})^{1/6} + TK^{-\frac{1}{6}} + T(\max(C^{2},C)d\log\left(\frac{KT^{\frac{1}{8}}}{\rho}\right)T^{-1/16})^{1/2} \\ & + T^{\frac{7}{8}} + KT^{\frac{7}{8}}\sqrt{\max(C^{2},C)d\log\left(\frac{KT^{1/8}}{\rho}\right)}) \\
   & = \tilde{O} ( T^{47/48}(\max(C^{2},C)d\log\left(\frac{KT^{1/8}}{\rho}\right))^{1/12} + TK^{-\frac{1}{6}} + T^{31/32}(\max(C^{2},C)d\log\left(\frac{KT^{\frac{1}{8}}}{\rho}\right))^{1/2} \\ & + T^{\frac{7}{8}} + KT^{\frac{7}{8}}\sqrt{\max(C^{2},C)d\log\left(\frac{KT^{1/8}}{\rho}\right)}) \\
   & =  \tilde{O} \left( T^{47/48}\sqrt{\max(C^{2},C)d\log\left(\frac{KT^{1/8}}{\rho}\right)} + TK^{-\frac{1}{6}} + KT^{\frac{7}{8}}\sqrt{\max(C^{2},C)d\log\left(\frac{KT^{1/8}}{\rho}\right)}\right) \\
\end{align*}
 
\end{proof}

\section{Additional Material from Section \ref{sec:action}}\label{app:action-proofs}

\subsection{Proof of Theorem \ref{thm:decision-weak-learning}}
\begin{proof}[Proof of Theorem \ref{thm:decision-weak-learning}]
    Let $a^t = \BR(\hat{y}^t)$. We show the contrapositive. Suppose there exists a collection $\{c_{J,a}\}_{a\in\cA} \subseteq\cC_J$ such that:
    \[
    \sum_{a\in\cA} \sum_{t=1}^T \1[a^t=a] u(c_{J,a}(x^t), y^t) > \sum_{t=1}^T  u(a^t, y^t) + 2Tw\inv\left(\frac{f^S(T)}{T}\right)
    \]
    Equivalently,
    \[
    \frac{1}{T} \sum_{a\in\cA} \sum_{t=1}^T \1[a^t=a] u(c_{J,a}(x^t), y^t) > \frac{1}{T} \sum_{a\in\cA} \sum_{t=1}^T \1[a^t=a] u(a, y^t) + 2w\inv\left(\frac{f^S(T)}{T}\right)
    \]
    Since $\pi^{1:T}$ has $(f^S,\cC_A\cup\cC_B)$-decision swap regret, and $\cC_A$ and $\cC_B$ contain the set of all constant functions (Assumption \ref{assumption:decision-constant}), the decision swap regret with respect to the collection of best constant actions is:
    \begin{align*}
        \frac{1}{T} \sum_{a\in\cA} \max_{a^*\in\cA} \sum_{t=1}^T \1[a^t=a] u(a^*, y^t) - \frac{1}{T} \sum_{a\in\cA} \sum_{t=1}^T \1[a^t=a] u(a, y^t) \leq \frac{f^S(T)}{T} \leq w\inv\left(\frac{f^S(T)}{T} \right)
    \end{align*}
    where the second inequality uses the fact that $w(\gamma)\leq \gamma$, and so $\gamma\leq w\inv(\gamma)$. Then, since the utility of actions $a^t$ is close to the utility of the collection of best constant actions, we have that:
    \begin{align*}
        \frac{1}{T} \sum_{a\in\cA} \sum_{t=1}^T \1[a^t=a] u(c_{J,a}(x^t), y^t) &> \frac{1}{T} \sum_{a\in\cA} \sum_{t=1}^T \1[a^t=a] u(a, y^t) + 2w\inv\left(\frac{f^S(T)}{T}\right) \\
        &\geq \frac{1}{T} \sum_{a\in\cA} \max_{a^*\in\cA} \sum_{t=1}^T \1[a^t=a] u(a^*, y^t) - w\inv\left(\frac{f^S(T)}{T} \right) + 2w\inv\left(\frac{f^S(T)}{T}\right) \\
        &= \frac{1}{T} \sum_{a\in\cA} \max_{a^*\in\cA} \sum_{t=1}^T \1[a^t=a] u(a^*, y^t) + w\inv\left(\frac{f^S(T)}{T}\right)
    \end{align*}
    Let $S_a = \{t: a^t=a \}$ and $$\gamma_a = \frac{1}{|S_a|} \sum_{t=1}^T \1[a^t=a] u(c_{J,a}(x^t), y^t) - \max_{a^*\in\cA} \frac{1}{|S_a|} \sum_{t=1}^T \1[a^t=a] u(a^*, y^t) $$ 
    Then, we can rewrite the expression above as:
    \begin{align*}
        & \frac{1}{T} \sum_{a\in\cA} \sum_{t=1}^T \1[a^t=a] u(c_{J,a}(x^t), y^t) - \frac{1}{T} \sum_{a\in\cA} \max_{a^*\in\cA} \sum_{t=1}^T \1[a^t=a] u(a^*, y^t) \\
        &= \frac{1}{T} \sum_{a\in\cA} |S_a| \cdot \frac{1}{|S_a|} \sum_{t=1}^T \1[a^t=a] u(c_{J,a}(x^t), y^t) - \frac{1}{T} \sum_{a\in\cA} |S_a| \max_{a^*\in\cA}  \frac{1}{|S_a|} \sum_{t=1}^T \1[a^t=a] u(a^*, y^t) \\
        &= \frac{1}{T} \sum_{a\in\cA} |S_a| \gamma_a \\
        &> w\inv\left(\frac{f^S(T)}{T}\right)
    \end{align*}
    Observe that since $\cC_J$ contains the set of all constant functions (Assumption \ref{assumption:decision-constant}), there is always a choice of $\{c_{J,a}\}_{a\in\cA}$ such that $\gamma_a$ is non-negative for all $a$. Thus, we can invoke the weak learning condition: on any subsequence $S_a$ for which $c_{J,a}$ improves over the best constant action by $\gamma_a$, there is some $c_a \in \cC_A\cup\cC_B$ that improves over the best constant action by $w(\gamma_a)$. Specifically, there exists a collection $\{c_a\}_{a\in\cA}\subseteq \cC_A\cup\cC_B$ such that:
    \begin{align*}
        & \frac{1}{T} \sum_{a\in\cA} \sum_{t=1}^T \1[a^t=a] u(c_{a}(x^t), y^t) - \frac{1}{T} \sum_{a\in\cA} \max_{a^*\in\cA} \sum_{t=1}^T \1[a^t=a] u(a^*, y^t) \\
        &= \frac{1}{T} \sum_{a\in\cA} |S_a| \cdot \frac{1}{|S_a|} \sum_{t=1}^T \1[a^t=a] u(c_{a}(x^t), y^t) - \frac{1}{T} \sum_{a\in\cA} |S_a| \max_{a^*\in\cA}  \frac{1}{|S_a|} \sum_{t=1}^T \1[a^t=a] u(a^*, y^t) \\
        &\geq \frac{1}{T} \sum_{a\in\cA} |S_a| w(\gamma_a) \tag{by the $w$-weak learning condition} \\
        &\geq w\left( \frac{1}{T} \sum_{a\in\cA} |S_a| \gamma_a \right) \tag{by convexity of $w$ and Jensen's inequality} \\
        &> w\left( w\inv\left(\frac{f^S(T)}{T}\right) \right) \tag{by monotonicity of $w$} \\
        &= \frac{f^S(T)}{T}
    \end{align*}
    In particular, this implies that:
    \begin{align*}
        \sum_{a\in\cA} \max_{c_a^*\in\cC_A\cup\cC_B} \sum_{t=1}^T \1[a^t=a] u(c_a^*(x^t), y^t) &\geq \sum_{a\in\cA} \sum_{t=1}^T \1[a^t=a] u(c_a(x^t), y^t) \\
        &> \sum_{a\in\cA} \max_{a^*\in\cA} \sum_{t=1}^T \1[a^t=a] u(a^*, y^t) + f^S(T)\\
        &\geq \sum_{a\in\cA} \sum_{t=1}^T \1[a^t=a] u(a, y^t) + f^S(T) \\
        &= \sum_{t=1}^T u(a^t, y^t) + f^S(T)
    \end{align*}
    which violates the $(f^S, \cC_A\cup\cC_B)$-decision swap regret condition. This completes the proof. 
    
    % Suppose there exists $a\in\cA$ such that there is some $c_J\in\cC_J$ satisfying:
    % \[
    % \sum_{t=1}^T \1[\BR(\hat{y}^t)=a] u(c_J(x^t), y^t) > \sum_{t=1}^T \1[\BR(\hat{y}^t)=a] u(\BR(\hat{y}^t), y^t) + 2\alpha + \beta
    % \]
    % Since the sequence of predictions $\hat{y}^{1:T}$ is $\beta$-decision calibrated, we have by \ref{} that \mirah{need to add lemma stating this implication}:
    % \[
    % \sum_{t=1}^T \1[\BR(\hat{y}^t)=a] u(\BR(\hat{y}^t), y^t) \geq \max_{a^*\in\cA} \sum_{t=1}^T \1[\BR(\hat{y}^t)=a] u(a^*, y^t) - \beta
    % \]
    % Thus, $c_J$ satisfies:
    % \[
    % \sum_{t=1}^T \1[\BR(\hat{y}^t)=a] u(c_J(x^t), y^t) > \max_{a^*\in\cA} \sum_{t=1}^T \1[\BR(\hat{y}^t)=a] u(a^*, y^t) + 2\alpha
    % \]
    % Now, since $c_J$ improves upon the best constant action by $2\alpha$ on this subsequence, the weak learning condition implies that there exists $c\in\cC_A\cup\cC_B$ that improves upon the best constant action by $\alpha$ on the same subsequence. In particular, we have that:
    % \begin{align*}
    %     \sum_{t=1}^T \1[\BR(\hat{y}^t)=a] u(c(x^t), y^t) &> \max_{a^*\in\cA} \sum_{t=1}^T \1[\BR(\hat{y}^t)=a] u(a^*, y^t) + \alpha \\
    %     &\geq \sum_{t=1}^T \1[\BR(\hat{y}^t)=a] u(a, y^t) + \alpha
    % \end{align*}
    % This violates the $\alpha$-swap regret condition with respect to $\cC_A\cup\cC_B$, which completes the proof. 
\end{proof}

\subsection{Proof of Theorem \ref{thm:decision-online}}

We begin by introducing the key lemmas we will use. In what follows, we denote $a^{t,k}_A = \BR_u(\hat{y}^{t,k}_A)$ and $a^{t,k+1}_B=\BR_u(\hat{y}^{t,k+1}_B)$ for all $k\in[K]$ and $t\in[T]$. The first lemma shows how to convert a decision conversation swap regret guarantee into a decision swap regret guarantee for the sequence of predictions on any round $k$. Observe that decision conversation swap regret stronger than decision swap regret, since it additionally conditions on the action chosen by the other party in the previous round. 
\begin{lemma}\label{lem:dec-sr}
    If Alice has $(f_A^S, \cC_A)$-decision conversation swap regret, then for all odd $k \in [K]$, the transcript $\pi^{1:T,k}$ satisfies $(f'_A, \cC_A)$-decision swap regret, where:
    \[
    f'_A(T) \leq |\cA|f^S_A\left(\frac{T}{|\cA|}\right)
    \]
    A symmetric statement holds for Bob. 
\end{lemma}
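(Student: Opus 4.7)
The plan is to obtain decision swap regret by summing the decision conversation swap regret bounds over all possible previous actions $a' \in \cA$ of the other party. First I would observe that the subsequences $\{T_B(k-1, a')\}_{a' \in \cA}$ partition $[T]$, since on each day $t$ Bob plays exactly one action at round $k-1$. Consequently, summing the cumulative utility of Alice's chosen actions over this partition recovers the full sum: $\sum_{a'\in\cA}\sum_{t \in T_B(k-1,a')} u(a^{t,k}_A, y^t) = \sum_{t=1}^T u(a^{t,k}_A, y^t)$.

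Next, for the comparator term, I would use the standard ``max of sum $\leq$ sum of max'' inequality. For any $a \in \cA$ and any benchmark $c \in \cC_A$,
\begin{align*}
\sum_{t=1}^T \mathbb{I}[a^{t,k}_A=a]\, u(c(x^t_A), y^t) &= \sum_{a'\in\cA}\sum_{t \in T_B(k-1,a')} \mathbb{I}[a^{t,k}_A=a]\, u(c(x^t_A), y^t),
\end{align*}
and taking the max over $c \in \cC_A$ on both sides, the max on the right-hand side can be distributed over the outer sum at the cost of an inequality:
\begin{align*}
\max_{c \in \cC_A}\sum_{t=1}^T \mathbb{I}[a^{t,k}_A=a]\, u(c(x^t_A), y^t) \leq \sum_{a' \in \cA}\max_{c \in \cC_A}\sum_{t \in T_B(k-1,a')}\mathbb{I}[a^{t,k}_A=a]\, u(c(x^t_A), y^t).
\end{align*}

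Combining these two observations, the decision swap regret at round $k$ is bounded above by $\sum_{a' \in \cA}$ of the decision conversation swap regret terms, each of which is bounded by $f^S_A(|T_B(k-1,a')|)$ by hypothesis. Since $\sum_{a'\in\cA} |T_B(k-1, a')| = T$ and $f^S_A$ is concave, Jensen's inequality gives $\sum_{a'\in\cA} f^S_A(|T_B(k-1,a')|) \leq |\cA|\, f^S_A(T/|\cA|)$, yielding the claimed bound $f'_A(T) \leq |\cA|\, f^S_A(T/|\cA|)$. The proof for Bob is symmetric, swapping the roles of odd and even rounds. The only subtle step is the max-sum inequality; all remaining manipulations are just bookkeeping against the partition induced by $a'$.
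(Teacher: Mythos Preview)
Your proposal is correct and follows essentially the same approach as the paper's proof: both partition $[T]$ by Bob's previous action $a'$, push the $\max_{c\in\cC_A}$ inside the sum over $a'$, apply the decision conversation swap regret bound on each piece, and then use concavity of $f^S_A$ to combine. The paper writes this as a single chain of inequalities, but the steps are identical to yours.
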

\begin{proof}
    We can compute the decision swap regret with respect to $\cC_A$ over round $k$:
    \begin{align*}
        & \sum_{a\in\cA} \max_{c\in\cC_A} \sum_{t=1}^T \1[a^{t,k}_A = a] u(c(x^t_A),y^t) - \sum_{t=1}^T u(a^{t,k}_A,y^t) \\
        &= \sum_{a\in\cA} \max_{c\in\cC_A} \sum_{a'\in\cA} \sum_{t\in T_B(k-1,a')} \1[a^{t,k}_A = a] u(c(x^t_A),y^t) - \sum_{a'\in\cA} \sum_{t\in T_B(k-1,a')} u(a^{t,k}_A,y^t) \\
        &\leq \sum_{a'\in\cA} \left( \sum_{a\in\cA}  \max_{c\in\cC_A}  \left( \sum_{t\in T_B(k-1,a')} \1[a^{t,k}_A = a] u(c(x^t_A),y^t)\right) - \sum_{t\in T_B(k-1,a')} u(a^{t,k}_A,y^t) \right) \tag{by the fact that moving the max inside the sum only strengthens the benchmark}\\
        &\leq \sum_{a'\in\cA} f^S_A(|T_B(k-1,a')|) \tag{by $(f^S_A,\cC_A)$-decision conversation swap regret}\\
        &\leq |\cA|f^S_A\left(\frac{T}{|\cA|}\right) \tag{by concavity of $f^S_A$}
    \end{align*}
\end{proof}

% The next lemma shows that if a sequence of predictions is decision calibrated, and Alice (or Bob) best responds, then the utility evaluated under the predictions is close to the utility evaluated under the true outcomes.  

% \begin{lemma}\label{lem:utilities-outcomes} \mirah{do i use this?}
%     If Alice is $f_A$-decision conversation calibrated, then for all odd $k\in[K]$ and all $a'\in\cA$, the transcript $\pi^{1:T,k}$ satisfies:
%     \[
%     \sum_{t=1}^T u(a^{t,k}_A, \hat{y}^{t,k}_A) - \sum_{t=1}^T u(a^{t,k}_A, y^{t})  \leq L|\cA|^2 f_A\left(\frac{T}{|\cA|^2}\right)
%     \]
%     A symmetric statement holds for Bob. 
% \end{lemma}
% \begin{proof}
%     We can compute:
%     \begin{align*}
%         & \sum_{t=1}^T \left( u(a^{t,k}_A, \hat{y}^{t,k}_A) - u(a^{t,k}_A, y^{t}) \right) \\
%         &=  \sum_{a\in\cA} \sum_{a'\in\cA} \sum_{t=1}^T \1[a^{t,k}_A = a, a^{t,k-1}_B = a'] \left( u(a, \hat{y}^{t,k}_A) - u(a, y^{t}) \right) \\
%         &=  \sum_{a\in\cA} \sum_{a'\in\cA} \left( u\left(a,\sum_{t=1}^T \1[a^{t,k}_A = a, a^{t,k-1}_B = a'] \hat{y}^{t,k}_A\right) - u\left(a, \sum_{t=1}^T \1[a^{t,k}_A = a, a^{t,k-1}_B = a'] y^{t}\right) \right) \tag{by linearity of $u$} \\
%         &\leq \sum_{a\in\cA} \sum_{a'\in\cA} L \left\| \sum_{t=1}^T \1[a^{t,k}_A = a, a^{t,k-1}_B = a'] (\hat{y}^{t,k}_A - y^t) \right\|_\infty \tag{by $L$-Lipschitzness of $u$} \\
%         &\leq L \sum_{a\in\cA} \sum_{a'\in\cA} f_A(|T(k,a,a')|) \tag{by $f_A$-decision conversation calibration} \\
%         &\leq L|\cA|^2 f_A\left(\frac{T}{|\cA|^2}\right) \tag{by concavity of $f_A$}
%     \end{align*}
% \end{proof}

We next argue that if Alice and Bob communicate for sufficiently many rounds, there will exist some round where they $\eps$-agree on a large fraction of days. To do this, we use a result from \citet{collina2025tractable} showing that the utility must increase on any round they disagree. 

\begin{lemma}[Lemma 5.4 of \citet{collina2025tractable}]\label{lem:agreement-action}
    If Bob is $f_B$-decision conversation calibrated, then after engaging in Protocol \ref{alg:collaboration-protocol-decisions} for $T$ days, for all odd rounds $k\in[K]$, we have:
    \[
    \sum_{t=1}^T u(a^{t,k+1}_B, y^t) - \sum_{t=1}^T u(a^{t,k}_A, y^t) \geq \eps |D(T^{k+1})| - 2L|\cA|^2 f_B\left( \frac{T}{|\cA|^2} \right)
    \]
    where $D(T^{k+1})$ is the subset of days over round $k+1$ such that Alice and Bob $\eps$-disagree, i.e.:
    \[
    \left| u(a^{t,k}_A, \hat{y}^{t,k}_A) - u(a^{t,k+1}_B, \hat{y}^{t,k}_A) \right| > \eps
    \]
    or
    \[
    \left| u(a^{t,k}_A, \hat{y}^{t,k+1}_B) - u(a^{t,k+1}_B, \hat{y}^{t,k+1}_B) \right| > \eps
    \]
    Furthermore, if Alice is $f_A$-decision conversation calibrated, then after engaging in Protocol \ref{alg:collaboration-protocol-decisions} for $T$ days, for all even rounds $k\in[K]$, we have:
    \[
    \sum_{t=1}^T u(a^{t,k+1}_A, y^t) - \sum_{t=1}^T u(a^{t,k}_B, y^t) \geq \eps |D(T^{k+1})| - 2L|\cA|^2 f_A\left( \frac{T}{|\cA|^2} \right)
    \]
\end{lemma}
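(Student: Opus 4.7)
\begin{proofof}{Lemma \ref{lem:agreement-action}}

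The plan is to decompose the utility gap $\sum_t [u(a^{t,k+1}_B, y^t) - u(a^{t,k}_A, y^t)]$ by partitioning days according to the pair of actions $(a,a')$ with $a^{t,k+1}_B = a$ and $a^{t,k}_A = a'$, and then using Bob's $f_B$-decision conversation calibration (Definition \ref{def:conversation-decision}) to replace outcomes $y^t$ inside the utilities by Bob's own predictions $\hat{y}^{t,k+1}_B$. Concretely, for each pair $(a,a')$ Bob's calibration bounds $\bigl\| \sum_t \1[a^{t,k+1}_B=a, a^{t,k}_A=a'](\hat{y}^{t,k+1}_B - y^t) \bigr\|_\infty \leq f_B(|T(k+1,a,a')|)$. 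By linearity and $L$-Lipschitzness of $u(\tilde{a},\cdot)$ in its second argument (Assumption \ref{def:utility}), each such replacement introduces additive error of at most $L f_B(|T(k+1,a,a')|)$ per fixed action; summing over $(a,a') \in \cA^2$ and invoking concavity of $f_B$ (assumed throughout) yields $L|\cA|^2 f_B(T/|\cA|^2)$. Doing this twice, once with $\tilde{a}=a$ and once with $\tilde{a}=a'$, produces the claimed $2L|\cA|^2 f_B(T/|\cA|^2)$ calibration error.

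After this substitution the quantity of interest reduces to $\sum_t [u(a^{t,k+1}_B, \hat{y}^{t,k+1}_B) - u(a^{t,k}_A, \hat{y}^{t,k+1}_B)]$ plus the calibration error, and this summand is pointwise non-negative by the best-response property of $a^{t,k+1}_B$ with respect to $\hat{y}^{t,k+1}_B$. I then want to show the summand is at least $\eps$ whenever $t \in D(T^{k+1})$. When the second clause of the disagreement condition holds (the one evaluated at $\hat{y}^{t,k+1}_B$), this is immediate since $a^{t,k+1}_B$ best-responds to $\hat{y}^{t,k+1}_B$, so $|u(a^{t,k}_A,\hat{y}^{t,k+1}_B) - u(a^{t,k+1}_B,\hat{y}^{t,k+1}_B)| = u(a^{t,k+1}_B,\hat{y}^{t,k+1}_B) - u(a^{t,k}_A,\hat{y}^{t,k+1}_B) > \eps$. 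When only the first clause holds (evaluated at $\hat{y}^{t,k}_A$), I will combine both best-response inequalities $u(a^{t,k+1}_B,\hat{y}^{t,k+1}_B) \geq u(a^{t,k}_A,\hat{y}^{t,k+1}_B)$ and $u(a^{t,k}_A,\hat{y}^{t,k}_A) \geq u(a^{t,k+1}_B,\hat{y}^{t,k}_A)$, together with the strict $> \eps$ gap guaranteed by the first clause, to transfer the disagreement gap to the evaluation point $\hat{y}^{t,k+1}_B$.

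Summing the resulting pointwise lower bound of $\eps$ over $t \in D(T^{k+1})$ (and $\geq 0$ elsewhere) produces the main $\eps|D(T^{k+1})|$ term; subtracting the calibration error from the first paragraph yields the claimed inequality. The symmetric statement for Alice follows by interchanging the roles of the two parties and using Alice's $f_A$-decision conversation calibration on the now-Alice round $k+1$; the proof is otherwise verbatim. The main obstacle I anticipate is the first-clause case: the naive chain of best-response inequalities does not by itself turn a gap at $\hat{y}^{t,k}_A$ into a gap at $\hat{y}^{t,k+1}_B$ without auxiliary input, so I may need to exploit linearity of $u(\tilde{a},\cdot)$ to rewrite differences as $u(a^{t,k+1}_B,\cdot)-u(a^{t,k}_A,\cdot)$ evaluated at $\hat{y}^{t,k+1}_B - \hat{y}^{t,k}_A$, or alternatively to aggregate across days within a fixed action pair using calibration a second time. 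Getting this right without leaking an extra factor of $L$ into the $\eps$ term, or an extra $|\cA|^2$ into the error term, is the delicate step.

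\end{proofof}
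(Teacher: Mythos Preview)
The paper does not prove this lemma; it is imported verbatim as Lemma~5.4 of \citet{collina2025tractable}, so there is no in-paper proof to compare against. Evaluating your proposal on its own merits: the calibration-substitution step is exactly right and matches the technique the paper uses for the closely related Lemma~\ref{lem:agreement-regret-action} (partition by action pair, use linearity plus $L$-Lipschitzness to swap $y^t$ for $\hat{y}^{t,k+1}_B$, pay $L|\cA|^2 f_B(T/|\cA|^2)$ per swap via concavity). After two swaps you correctly arrive at $\sum_t [u(a^{t,k+1}_B,\hat{y}^{t,k+1}_B) - u(a^{t,k}_A,\hat{y}^{t,k+1}_B)]$, and each summand is indeed nonnegative by best response.

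The gap you flag in the first-clause case is genuine and your proposed patches do not close it. Concretely: if on day $t$ only the first clause fires, you know $u(a^{t,k}_A,\hat{y}^{t,k}_A)-u(a^{t,k+1}_B,\hat{y}^{t,k}_A)>\eps$, but the quantity you need to lower-bound is $u(a^{t,k+1}_B,\hat{y}^{t,k+1}_B)-u(a^{t,k}_A,\hat{y}^{t,k+1}_B)$, which can be anywhere in $[0,\eps]$ (the second clause failing means precisely that it is $\leq \eps$). So your argument as written only yields $\eps|D_2|$, where $D_2$ is the set of second-clause days, not $\eps|D(T^{k+1})|=\eps|D_1\cup D_2|$. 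Neither of your suggested fixes helps: rewriting via linearity as $u(a^{t,k+1}_B,\cdot)-u(a^{t,k}_A,\cdot)$ at $\hat{y}^{t,k+1}_B-\hat{y}^{t,k}_A$ just restates the problem, and ``using calibration a second time'' cannot introduce $\hat{y}^{t,k}_A$ because Bob's conversation calibration only controls $\hat{y}^{t,k+1}_B-y^t$, not $\hat{y}^{t,k}_A-y^t$ (and Alice's round-$k$ calibration conditions on $a^{t,k-1}_B$, not $a^{t,k+1}_B$). To complete the argument you will need whatever device \citet{collina2025tractable} use to account for first-clause-only days; it is not recoverable from the tools exposed in this paper alone.
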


\begin{remark}
    Lemma 5.4 of \citet{collina2025tractable} is stated for a slightly different setting where, every day, the conversation protocol halts after both parties $\eps$-agree (whereas our protocol runs for a fixed number of rounds). There, the decrease in utility is a function of the number of days the protocol advances to the next round. This is equivalent to the number of days Alice and Bob $\eps$-disagree, and so the result translates straightforwardly to our setting.
\end{remark}

\begin{lemma}\label{lem:agreement-round-action}
    After engaging in Protocol \ref{alg:collaboration-protocol-decisions} for $T$ days, each with $K$ rounds, there is at least one round $k$ (without loss, assume $k$ odd) such that the fraction of days Alice and Bob $\eps$-agree, i.e.:
    \[
    \left| u(a^{t,k}_A, \hat{y}^{t,k}_A) - u(a^{t,k+1}_B, \hat{y}^{t,k}_A) \right| \leq \eps
    \]
    and
    \[
    \left| u(a^{t,k}_A, \hat{y}^{t,k+1}_B) - u(a^{t,k+1}_B, \hat{y}^{t,k+1}_B) \right| \leq \eps,
    \]
    is at least $1- \left(\frac{1}{(K-1) \eps} + \frac{\beta(T)}{\eps}\right)$, where $\beta(T) = \frac{L|\cA|^2}{T}\left( f_A\left( \frac{T}{|\cA|^2} \right) + f_B\left( \frac{T}{|\cA|^2} \right) \right)$. 
\end{lemma}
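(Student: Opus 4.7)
The plan is to track the cumulative utility $U^k := \sum_{t=1}^T u(a^{t,k}, y^t)$ across the rounds of the protocol, and argue that $\eps$-disagreement forces $U^k$ to grow. First I would invoke Lemma~\ref{lem:agreement-action} on each pair of consecutive rounds $(k, k+1)$ for $k \in \{1, \ldots, K-1\}$, giving
\[
U^{k+1} - U^k \;\geq\; \eps\,|D(T^{k+1})| - \delta_k,
\]
where $\delta_k = 2L|\cA|^2 f_B(T/|\cA|^2)$ when $k$ is odd and $\delta_k = 2L|\cA|^2 f_A(T/|\cA|^2)$ when $k$ is even (depending on which party has just spoken).

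Next I would telescope this inequality from $k=1$ to $k=K-1$. Because $u \in [0,1]$, both $U^1$ and $U^K$ lie in $[0,T]$, so $U^K - U^1 \leq T$, which rearranges to
\[
\sum_{k=2}^{K} \eps\,|D(T^k)| \;\leq\; T + \sum_{k=1}^{K-1} \delta_k.
\]
Since the $K-1$ summands $\delta_k$ alternate between the two scales $2L|\cA|^2 f_A(T/|\cA|^2)$ and $2L|\cA|^2 f_B(T/|\cA|^2)$, their total is at most $(K-1)\cdot L|\cA|^2\bigl(f_A(T/|\cA|^2) + f_B(T/|\cA|^2)\bigr) = (K-1)\,T\beta(T)$ by the definition of $\beta$.

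Finally, by an averaging (pigeonhole) argument over the $K-1$ rounds being summed, there must exist a round $k^\star \in \{2, \ldots, K\}$ for which
\[
|D(T^{k^\star})| \;\leq\; \frac{T}{\eps(K-1)} + \frac{T\,\beta(T)}{\eps},
\]
which is exactly the claimed bound on the fraction of $\eps$-disagreement days. If $k^\star$ corresponds to an even round the conclusion holds after swapping the roles of Alice and Bob, so the ``without loss, assume $k$ odd'' comment in the statement is justified.

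The main (modest) bookkeeping obstacle is the alternation of $\delta_k$ between the two error scales across the parity of $k$, but since the counts of odd and even $k$ in $\{1, \ldots, K-1\}$ differ by at most one, and the two scales appear symmetrically in $\beta(T)$, this causes no real difficulty---the aggregate error telescopes cleanly to $(K-1)T\beta(T)$.
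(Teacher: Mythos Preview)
Your proposal is correct and follows essentially the same approach as the paper: invoke Lemma~\ref{lem:agreement-action} across consecutive rounds, telescope, use the trivial bound $U^K - U^1 \le T$, and pigeonhole. The only cosmetic difference is that the paper telescopes two rounds at a time (bounding $U^{k+2}-U^k$) rather than one, which packages the alternating $\delta_k$ terms into $2T\beta(T)$ per pair and sidesteps the parity bookkeeping you flag at the end; both routes arrive at the same bound.
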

\begin{proof}
    Using Lemma \ref{lem:agreement-action}, we can calculate the difference in utility over two rounds:
    \begin{align*}
        & \sum_{t=1}^T u(a^{t,k+2}_A, y^t) - \sum_{t=1}^T u(a^{t,k}_A, y^t) \\
        &= \sum_{t=1}^T u(a^{t,k+2}_A, y^t) - \sum_{t=1}^T u(a^{t,k+1}_B, y^t) + \sum_{t=1}^T u(a^{t,k+1}_B, y^t) - \sum_{t=1}^T u(a^{t,k}_A, y^t) \\
        &\geq \eps |D(T^{k+2})| - 2L|\cA|^2 f_A\left( \frac{T}{|\cA|^2} \right) + \eps |D(T^{k+1})| - 2L|\cA|^2 f_B\left( \frac{T}{|\cA|^2} \right) \tag{by Lemma \ref{lem:agreement-action}}\\
        &= \eps (|D(T^{k+2})| + |D(T^{k+1})|) - 2T\beta(T) \tag{by definition of $\beta(T)$}
        % &\intertext{which, letting $\beta(T) \coloneqq L|\cA|^2 f_A\left( \frac{T}{|\cA|^2} \right) + L|\cA|^2 f_B\left( \frac{T}{|\cA|^2} \right)$, is:}
        % &= \eps (|D(T^{k+2})| + |D(T^{k+1})|) - 2\beta(T)
    \end{align*}
    Now, to calculate the difference in utility over $K$ rounds (we assume without loss that $K$ is odd; we obtain the same result if $K$ is even), we iteratively apply the above $(K-1)/2$ times:
    \begin{align*}
        \sum_{t=1}^T u(a^{t,K}_A, y^t) - \sum_{t=1}^T u(a^{t,1}_A, y^t) 
        &\geq \eps \sum_{k=2}^K |D(T^k)| - \frac{K-1}{2}\cdot 2T\beta(T) \\
        &= \eps \sum_{k=2}^K |D(T^k)| - (K-1) T\beta(T)
    \end{align*}
    Observe that since utilities are bounded between $[0,1]$, the left hand side of this expression is at most $T$. Thus, rearranging, we have that the total number of $\eps$-disagreements is at most:
    \begin{align*}
        \sum_{k=2}^K |D(T^k)| &\leq \frac{T+(K-1)T\beta(T)}{\eps}
    \end{align*}
    Therefore, there must exist some round $k^*$ with a number of $\eps$-disagreements at most:
    \[
    |D(T^{k^*})| \leq \frac{T+(K-1)T\beta(T)}{(K-1) \eps} = \frac{T}{(K-1) \eps} + \frac{T\beta(T)}{\eps}
    \]
    That is, on round $k^*$, the fraction of $\eps$-disagreements over $T$ days is at most:
    \[
    \frac{|D(T^{k^*})|}{T} \leq \frac{1}{(K-1) \eps} + \frac{\beta(T)}{\eps}
    \]
    which proves the claim. 
\end{proof}

Finally, we show that on any round where Alice and Bob $\eps$-agree, the utilities under their best response actions do not differ by too much.

\begin{lemma}\label{lem:agreement-regret-action}
    Suppose that on some odd round $k\in[K]$, on at least $1-\delta$ fraction of days $t\in[T]$, we have:
    \[
    \left| u(a^{t,k}_A, \hat{y}^{t,k+1}_B) - u(a^{t,k+1}_B, \hat{y}^{t,k+1}_B) \right| \leq \eps
    \]
    If Bob is $f_B$-decision conversation calibrated, then:
    \[
    \sum_{t=1}^T u(a^{t,k+1}_B, y^t) - \sum_{t=1}^T u(a^{t,k}_A, y^t) \leq (\eps + \delta) T + L|\cA|^2 f_B\left(\frac{T}{|\cA|^2}\right)
    \]
    A symmetric statement holds for even round $k$ and Alice. 
    % and
    % \[
    % \left| u(a^{t,k}_A, \hat{y}^{t,k}_A) - u(a^{t,k+1}_B, \hat{y}^{t,k}_A) \right| \leq \eps
    % \]
\end{lemma}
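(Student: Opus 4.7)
}
The strategy is to compare the utilities not at the true outcomes $y^t$ but at Bob's round-$(k+1)$ predictions $\hat{y}^{t,k+1}_B$, where the hypothesis directly says that the two actions $a^{t,k+1}_B$ and $a^{t,k}_A$ agree (in utility) on most days. Bob's decision conversation calibration then lets us transport this bound back to the true outcomes, with only a calibration loss on the order of $L|\cA|^2 f_B(T/|\cA|^2)$.

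Concretely, I would telescope as
\[
  \sum_t u(a^{t,k+1}_B, y^t) - \sum_t u(a^{t,k}_A, y^t)
  \;=\; \underbrace{\sum_t \bigl[u(a^{t,k+1}_B, y^t)-u(a^{t,k+1}_B, \hat{y}^{t,k+1}_B)\bigr]}_{E_1}
  \;+\; \underbrace{\sum_t \bigl[u(a^{t,k+1}_B, \hat{y}^{t,k+1}_B)-u(a^{t,k}_A, \hat{y}^{t,k+1}_B)\bigr]}_{M}
  \;+\; \underbrace{\sum_t \bigl[u(a^{t,k}_A, \hat{y}^{t,k+1}_B)-u(a^{t,k}_A, y^t)\bigr]}_{E_2}.
\]
For the middle term $M$, the hypothesis says that on at least $(1-\delta) T$ days the per-term summand has absolute value at most $\eps$, and on the remaining $\delta T$ days it is bounded in absolute value by $1$ (since $u$ takes values in $[0,1]$). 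Hence $M \leq (\eps + \delta) T$.

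For the calibration terms, I would bucket days by the joint event $\{a^{t,k+1}_B=a,\ a^{t,k}_A=a'\}$, which is exactly the conditioning under which Bob is decision conversation calibrated on round $k+1$. Using linearity of $u$ in its second argument (Assumption \ref{def:utility}),
\[
  E_1 \;=\; -\sum_{a,a'} u\!\Bigl(a,\; \sum_{t \in T(k+1,a,a')}(\hat{y}^{t,k+1}_B-y^t)\Bigr),
\]
where $T(k+1,a,a')=\{t: a^{t,k+1}_B=a,\, a^{t,k}_A=a'\}$. Since $u(a,0)=0$ (by linearity), the $L$-Lipschitz assumption gives $|u(a,v)|\le L\|v\|_\infty$, so by decision conversation calibration each bucket contributes at most $L f_B(|T(k+1,a,a')|)$, and concavity of $f_B$ together with Jensen's inequality yields $|E_1|\le L|\cA|^2 f_B(T/|\cA|^2)$. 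An identical argument bounds $E_2$, giving the lemma (up to the implicit absorption of a factor of two in $f_B$).

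The only real subtlety is that we need a single calibration statement that handles both $E_1$ and $E_2$ simultaneously, which is exactly why the lemma requires $f_B$-decision conversation calibration (conditional on \emph{both} parties' most recent actions) rather than mere decision calibration for Bob; Alice's action plays the role of a level set over which Bob's predictions must still be unbiased. The symmetric statement for even $k$ is proved identically, swapping the roles of Alice and Bob and invoking $f_A$-decision conversation calibration at round $k+1$.
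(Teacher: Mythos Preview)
Your approach is essentially the paper's: replace the true outcomes $y^t$ by Bob's round-$(k{+}1)$ predictions $\hat y_B^{t,k+1}$ using linearity of $u$, Lipschitzness, and decision conversation calibration bucketed by the pair $(a_B^{t,k+1}, a_A^{t,k})$, and then bound the remaining ``middle'' term via the $\eps$-agreement hypothesis on $(1-\delta)T$ days (and the trivial bound $u\in[0,1]$ on the rest). The one structural difference is that you telescope symmetrically and apply calibration to both $E_1$ and $E_2$, whereas the paper applies calibration only to the $a_A^{t,k}$ side and handles the $a_B^{t,k+1}$ side in its very first line by invoking ``best response to $\hat y_B^{t,k+1}$'' to pass from $\sum_t u(a_B^{t,k+1}, y^t)$ to $\sum_t u(a_B^{t,k+1}, \hat y_B^{t,k+1})$. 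That step is what saves the paper the factor of two you flagged; note, however, that best response only yields $u(a_B^{t,k+1}, \hat y_B^{t,k+1}) \geq u(a, \hat y_B^{t,k+1})$ for every $a$, not $u(a_B^{t,k+1}, y^t) \leq u(a_B^{t,k+1}, \hat y_B^{t,k+1})$, so that inequality as written does not follow from best response alone. Your symmetric version, yielding $(\eps+\delta)T + 2L|\cA|^2 f_B(T/|\cA|^2)$, is the more careful argument and the extra constant is harmless for every downstream use of the lemma.
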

\begin{proof}
    We can compute:
    \begin{align*}
        &\sum_{t=1}^T u(a^{t,k+1}_B, y^t) - \sum_{t=1}^T u(a^{t,k}_A, y^t)  \\
        &\leq \sum_{t=1}^T u(a^{t,k+1}_B, \hat{y}^{t,k+1}_B) - \sum_{t=1}^T u(a^{t,k}_A, y^t) \tag{by definition of best response to $\hat{y}^{t,k+1}_B$} \\
        &= \sum_{t=1}^T u(a^{t,k+1}_B, \hat{y}^{t,k+1}_B) - \sum_{a\in\cA}\sum_{a'\in\cA} \sum_{t=1}^T \1[a^{t,k+1}_B=a, a^{t,k}_A=a']  u(a', y^t) \\
        &= \sum_{t=1}^T u(a^{t,k+1}_B, \hat{y}^{t,k+1}_B) - \sum_{a\in\cA}\sum_{a'\in\cA} u\left(a', \sum_{t=1}^T \1[a^{t,k+1}_B=a, a^{t,k}_A=a'] y^t\right)  \tag{by linearity of $u$} \\
        &\leq \sum_{t=1}^T u(a^{t,k+1}_B, \hat{y}^{t,k+1}_B) - \sum_{a\in\cA}\sum_{a'\in\cA} u\left(a', \sum_{t=1}^T \1[a^{t,k+1}_B=a, a^{t,k}_A=a'] \hat{y}^{t,k+1}_B \right) + L|\cA|^2 f_B\left(\frac{T}{|\cA|^2}\right) \\
        &= \sum_{t=1}^T  u(a^{t,k+1}_B, \hat{y}^{t,k+1}_B) - \sum_{t=1}^T u(a^{t,k}_A, \hat{y}^{t,k+1}_B)  + L|\cA|^2 f_B\left(\frac{T}{|\cA|^2}\right) \tag{by linearity of $u$} \\
        &= \sum_{t=1}^T \1\left[\left| u(a^{t,k}_A, \hat{y}^{t,k+1}_B) - u(a^{t,k+1}_B, \hat{y}^{t,k+1}_B) \right| \leq \eps\right] \left( u(a^{t,k+1}_B, \hat{y}^{t,k+1}_B) - u(a^{t,k}_A, \hat{y}^{t,k+1}_B) \right) \\ & \ \ \ \ + \sum_{t=1}^T \1\left[\left| u(a^{t,k}_A, \hat{y}^{t,k+1}_B) - u(a^{t,k+1}_B, \hat{y}^{t,k+1}_B) \right| > \eps\right] \left( u(a^{t,k+1}_B, \hat{y}^{t,k+1}_B) - u(a^{t,k}_A, \hat{y}^{t,k+1}_B) \right) \\ & \ \ \ \ + L|\cA|^2 f_B\left(\frac{T}{|\cA|^2}\right) \\
        &\leq \eps(1-\delta)T + \delta T + L|\cA|^2 f_B\left(\frac{T}{|\cA|^2}\right) \tag{by assumption}\\
        &\leq (\eps+\delta) T + L|\cA|^2 f_B\left(\frac{T}{|\cA|^2}\right) \tag{since $\eps,\delta\geq0$}
    \end{align*}
    Here, the first inequality uses $f_B$-decision conversation calibration and the fact that $u$ is $L$-Lipschitz; we can see that for any $a'\in\cA$:
    \begin{align*}
        & \left| \sum_{a\in\cA}\sum_{a'\in\cA} \left( u\left(a', \sum_{t=1}^T \1[a^{t,k+1}_B=a, a^{t,k}_A=a'] \hat{y}^{t,k+1}_B \right) - u\left(a', \sum_{t=1}^T \1[a^{t,k+1}_B=a, a^{t,k}_A=a'] y^t \right) \right) \right| \\
        &\leq \sum_{a\in\cA}\sum_{a'\in\cA} \left| u\left(a', \sum_{t=1}^T \1[a^{t,k+1}_B=a, a^{t,k}_A=a'] \hat{y}^{t,k+1}_B \right) - u\left(a', \sum_{t=1}^T \1[a^{t,k+1}_B=a, a^{t,k}_A=a'] y^t \right) \right| \\
        &\leq \sum_{a\in\cA}\sum_{a'\in\cA} L \left\| \sum_{t=1}^T \1[a^{t,k+1}_B=a, a^{t,k}_A=a'] (\hat{y}^{t,k+1}_B - y^t) \right\|_\infty \tag{by $L$-Lipschitzness} \\
        &\leq L \sum_{a\in\cA}\sum_{a'\in\cA} f_B(|T(k+1,a,a')|) \tag{by $f_B$-decision conversation calibration} \\
        &\leq L|\cA|^2 f_B\left(\frac{T}{|\cA|^2}\right) \tag{by concavity of $f_B$}
    \end{align*}
    The second inequality follows from the fact that on at least $1-\delta$ fraction of the days, the difference in utility is at most $\eps$. On the remaining days, the difference in utility is at most 1. 
\end{proof}
    
Putting this all together, we can prove Theorem \ref{thm:decision-online}.
\begin{proofof}{Theorem \ref{thm:decision-online}}
    Let $\eps = \left(\frac{1}{K-1}+\beta(T)\right)^{1/2}$.
    By Lemma \ref{lem:agreement-round-action}, there exists a round $k$ such that on $1-\left(\frac{1}{(K-1) \eps} + \frac{\beta(T)}{\eps}\right)$ fraction of the days, Alice and Bob's actions are $\eps$-approximate best responses to each others' predictions. First, consider the case where $k$ is odd, i.e. Alice communicates on round $k$. We have that:
    \[
    \left| u(a^{t,k}_A, \hat{y}^{t,k}_A) - u(a^{t,k+1}_B, \hat{y}^{t,k}_A) \right| \leq \eps
    \]
    and
    \[
    \left| u(a^{t,k}_A, \hat{y}^{t,k+1}_B) - u(a^{t,k+1}_B, \hat{y}^{t,k+1}_B) \right| \leq \eps
    \]

    Since Alice has $(f^S_A,\cC_A)$-decision conversation swap regret, by Lemma \ref{lem:dec-sr}, the transcript at round $k$ satisfies $\left( |\cA|f^S_A\left(\frac{T}{|\cA|}\right), \cC_A\right)$-decision swap regret. Next, we show that the transcript at round $k$ additionally has bounded decision swap regret with respect to $\cC_B$. 
    
    We can calculate the decision swap regret as:
    \begin{align*}
        & \sum_{a\in\cA} \max_{c\in\cC_B} \sum_{t=1}^T \1[a^{t,k}_A=a] u(c(x^t_B), y^t) - \sum_{t=1}^T u(a^{t,k}_A, y^t) \\
        &\leq \sum_{a\in\cA} \max_{c\in\cC_B} \sum_{t=1}^T \1[a^{t,k}_A=a] u(c(x^t_B), y^t) - \sum_{t=1}^T u(a^{t,k+1}_B, y^t) + \left(\eps + \frac{1}{(K-1) \eps} + \frac{\beta(T)}{\eps}\right)T + L|\cA|^2 f_B\left(\frac{T}{|\cA|^2}\right) \tag{by Lemma \ref{lem:agreement-regret-action}} \\
        &= \sum_{a\in\cA} \max_{c\in\cC_B} \sum_{t=1}^T \1[a^{t,k}_A=a] u(c(x^t_B), y^t) - \sum_{t=1}^T u(a^{t,k+1}_B, y^t) + 2T\left(\frac{1}{(K-1)}+\beta(T)\right)^{1/2} + L|\cA|^2 f_B\left(\frac{T}{|\cA|^2}\right) \tag{by our setting of $\eps$} \\
        &= \sum_{a\in\cA} \max_{c\in\cC_B} \sum_{t=1}^T \1[a^{t,k}_A=a] \left( u(c(x^t_B), y^t) -  u(a^{t,k+1}_B, y^t) \right) + 2T\left(\frac{1}{(K-1)}+\beta(T)\right)^{1/2} + L|\cA|^2 f_B\left(\frac{T}{|\cA|^2}\right) \\
        &= \sum_{a\in\cA} \max_{c\in\cC_B} \sum_{a'\in\cA} \sum_{t=1}^T \1[a^{t,k}_A=a, a^{t,k+1}_B=a'] \left( u(c(x^t_B), y^t) -  u(a^{t,k+1}_B, y^t) \right) \\ & \ \ \ \ + 2T\left(\frac{1}{(K-1)}+\beta(T)\right)^{1/2} + L|\cA|^2 f_B\left(\frac{T}{|\cA|^2}\right) \\
        &\leq \sum_{a\in\cA} \sum_{a'\in\cA} \max_{c\in\cC_B}  \sum_{t=1}^T \1[a^{t,k}_A=a, a^{t,k+1}_B=a'] \left( u(c(x^t_B), y^t) -  u(a^{t,k+1}_B, y^t) \right) \\ & \ \ \ \ + 2T\left(\frac{1}{(K-1)}+\beta(T)\right)^{1/2} + L|\cA|^2 f_B\left(\frac{T}{|\cA|^2}\right) \\
        &\leq \sum_{a\in\cA} f^S_B(|T_A(k, a)|) + 2T\left(\frac{1}{(K-1)}+\beta(T)\right)^{1/2} + L|\cA|^2 f_B\left(\frac{T}{|\cA|^2}\right) \tag{by $(f^S_B,\cC_B)$-decision conversation swap regret} \\
        &\leq |\cA| f^S_B\left(\frac{T}{|\cA|}\right) + 2T\left(\frac{1}{(K-1)}+\beta(T)\right)^{1/2} + L|\cA|^2 f_B\left(\frac{T}{|\cA|^2}\right) \tag{by concavity of $f^S_B$}
    \end{align*}
    Here, the second inequality holds, since moving the max inside the sum can only make the quantity larger.

    For brevity, let: $$\lambda_{A}^{odd} \coloneqq |\cA|f^S_A\left(\frac{T}{|\cA|}\right) \ \ \mathrm{and} \ \  \lambda_B^{odd} \coloneqq |\cA| f^S_B\left(\frac{T}{|\cA|}\right) + 2T\left(\frac{1}{(K-1)}+\beta(T)\right)^{1/2} + L|\cA|^2 f_B\left(\frac{T}{|\cA|^2}\right).$$ Hence, the transcript at round $k$ simultaneously has $(\lambda_A^{odd}, \cC_A)$-decision swap regret and $(\lambda_B^{odd}, \cC_B)$-decision swap regret. Therefore, it has $(\max\{\lambda_A^{odd}, \lambda_B^{odd}\}, \cC_A\cup\cC_B)$-decision swap regret. 

    Now, consider the case where $k$ is even. Since all statements hold symmetrically, we have that, for:
    \[
    \lambda_{A}^{even} \coloneqq |\cA| f^S_A\left(\frac{T}{|\cA|}\right) + 2T\left(\frac{1}{(K-1)}+\beta(T)\right)^{1/2} + L|\cA|^2 f_A\left(\frac{T}{|\cA|^2}\right) \ \ \mathrm{and} \ \  \lambda_B^{even} \coloneqq |\cA|f^S_B\left(\frac{T}{|\cA|}\right),
    \]
    the transcript at round $k$ simultaneously has $(\lambda_A^{even}, \cC_A)$-decision swap regret and $(\lambda_B^{even}, \cC_B)$-decision swap regret, and therefore $(\max\{\lambda_A^{even}, \lambda_B^{even}\}, \cC_A\cup\cC_B)$-decision swap regret. 

    Since $\lambda^{even}\geq \lambda^{odd}$ and $\lambda^{odd}\geq \lambda^{even}$, we can conclude that there exists a round $k$ such that the transcript at round $k$ has  $(\max\{\lambda_A^{even}, \lambda_B^{odd}\}, \cC_A\cup\cC_B)$-decision swap regret.
\end{proofof}

\subsection{Proof of Theorem \ref{thm:decision-final-regret}}

\begin{proof}[Proof of Theorem \ref{thm:decision-final-regret}]
    By Theorem \ref{thm:decision-online}, there exists a round $k^*$ of the protocol such that the transcript $\pi^{1:T,k^*}$ has $(\max\{\lambda_A, \lambda_B\},\cC_A\cup\cC_B)$-decision swap regret. Then, since $\cC_A$ and $\cC_B$ satisfy the $w(\cdot)$-weak learning condition, Theorem \ref{thm:decision-weak-learning} gives us that $\pi^{1:T,k^*}$ has $\left(2Tw\inv\left(\frac{\max\{\lambda_A, \lambda_B\}}{T}\right), \cC_J\right)$-decision swap regret. This proves the first part of the theorem.

    To prove the second part, we use Lemma \ref{lem:agreement-action}, which bounds the decrease in utility from every round $k$ to $k+1$. We have that over two rounds, the change in utility is: 
    \begin{align*}
        & \sum_{t=1}^T u(a^{t,k}_A, y^t) - \sum_{t=1}^T u(a^{t,k+2}_A, y^t) \\
        &= \sum_{t=1}^T u(a^{t,k}_A, y^t) - \sum_{t=1}^T u(a^{t,k+1}_B, y^t) + \sum_{t=1}^T u(a^{t,k+1}_B, y^t) - \sum_{t=1}^T u(a^{t,k+2}_A, y^t) \\
        &\leq  2L|\cA|^2 f_B\left( \frac{T}{|\cA|^2} \right) - \eps |D(T^{k+1})| +  2L|\cA|^2 f_A\left( \frac{T}{|\cA|^2} \right) - \eps |D(T^{k+2})| \tag{by Lemma \ref{lem:agreement-action}}\\
        &\leq  2L|\cA|^2 f_B\left( \frac{T}{|\cA|^2} \right) +  2L|\cA|^2 f_A\left( \frac{T}{|\cA|^2} \right) \\
        &= 2T\beta(T) \tag{by definition of $\beta(T)$}
    \end{align*}
    Thus, we can bound the decrease in utility by applying this expression iteratively from round $k^*$ to the last round $K$. There are at most $K-1$ rounds between $k^*$ and $K$, and so applying this expression $(K-1)/2$ times bounds the decrease in utility, i.e.: 
    \begin{align*}
        \sum_{t=1}^T u(a^{t,k^*}, y^t) - \sum_{t=1}^T u(a^{t,K}, y^t) \leq \frac{K-1}{2}\cdot 2T\beta(T) = (K-1)T\beta(T)
    \end{align*}
    Therefore, we can bound the external regret of the last round:
    \begin{align*}
        &\max_{c_J\in\cC_J} \sum_{t=1}^T u(c_J(x^t), y^t) - \sum_{t=1}^T u(a^{t,K}, y^t) \\
        &\leq \max_{c_J\in\cC_J} \sum_{t=1}^T u(c_J(x^t), y^t) - \sum_{t=1}^T u(a^{t,k^*}, y^t) + (K-1)T\beta(T) \\
        &\leq 2Tw\inv\left(\frac{\max\{\lambda_A, \lambda_B\}}{T}\right) + (K-1)T\beta(T)
    \end{align*}
    Here, the last line follows from the fact that external regret is upper bounded by decision swap regret, and we have previously bound the decision swap regret of the transcript at round $k^*$. This completes the proof. 
\end{proof}

\subsection{Proof of Theorem \ref{thm:decision-alg}}

\begin{proof}[Proof of Theorem \ref{thm:decision-alg}]
    Let $M$ be the algorithm of Theorem \ref{thm:decision-sr-alg}. Let $\rho'=\frac{2\rho}{K|\cA|}$. By Theorem \ref{thm:decision-sr-alg}, with probability $1-\rho'$, $M$ produces predictions that are $f$-decision calibrated for:
    \[
    f(\tau) \leq O\left( \ln(d|\cA||\cC|T) + \sqrt{T\ln\left(\frac{d|\cA||\cC|T}{\rho'}\right)} \right)
    \]
    Moreover, plugging the guarantees of $M$ into Theorem \ref{thm:decision-sr-bound}, we have that with probability $1-\rho'$, $M$ obtains $(f^S,\cC)$-decision swap regret for:
    \[
    f^S(\tau) \leq O\left( L|\cA|^2\ln(d|\cA||\cC|T) + L|\cA| \sqrt{T \ln\left(\frac{d|\cA||\cC|T}{\rho'}\right)} \right)
    \]
    By construction, on every odd round $k$, a separate copy $M_{k,a}$ is run for every subsequence on which the action from the previous round $a^{t,k-1}$ is $a$. By a union bound, the probability that any one of the copies fails is at most $\frac{K}{2}|\cA|\rho'=\rho$. Therefore, since decision conversation calibration asks for decision calibration on every such subsequence, with probability $1-\rho$, Algorithm \ref{alg:algorithm-decisions} is also $(f,\cC)$-decision conversation calibrated. Likewise, since decision conversation swap regret measures the decision swap regret on every such subsequence, with probability $1-\rho$, Algorithm \ref{alg:algorithm-decisions} also achieves $(f^S,\cC)$-decision conversation swap regret.
\end{proof}

\subsection{Proof of Theorem \ref{thm:decision-final-bounds}}

\begin{proof}[Proof of Theorem \ref{thm:decision-final-bounds}]
    Let $\rho' = \rho/2$. By Theorem \ref{thm:decision-alg}, Algorithm \ref{alg:algorithm-decisions} achieves, with probability $1-\rho'$, $(f^S, \cC_A)$-decision conversation swap regret and $f_A$-decision conversation calibration for:
    \[
    f^S_A(\tau) \leq O\left( L|\cA|^2\ln(d|\cA||\cC_A|T) + L|\cA| \sqrt{T \ln\left(\frac{dK|\cA||\cC_A|T}{\rho}\right)} \right)
    \]
    and
    \[
    f_A(\tau) \leq O\left( \ln(d|\cA||\cC_A|T) + \sqrt{T \ln\left(\frac{dK|\cA||\cC_A|T}{\rho}\right)} \right)
    \]
    for any $\tau\in[T]$. Likewise, Algorithm \ref{alg:algorithm-decisions} achieves, with probability $1-\rho'$, $(f^S_B, \cC_B)$-decision conversation swap regret and $f_B$-decision conversation calibration for:
    \[
    f^S_B(\tau) \leq O\left( L|\cA|^2\ln(d|\cA||\cC_B|T) + L|\cA| \sqrt{T \ln\left(\frac{dK|\cA||\cC_B|T}{\rho}\right)} \right)
    \]
    and
    \[
    f_B(\tau) \leq O\left( \ln(d|\cA||\cC_B|T) + \sqrt{T \ln\left(\frac{dK|\cA||\cC_B|T}{\rho}\right)} \right)
    \]
    Thus, by a union bound, if Alice and Bob both use Algorithm \ref{alg:algorithm-decisions} to interact, then with probability $1-2\rho'=1-\rho$, Alice has $(f^S, \cC_A)$-decision conversation swap regret and $f_A$-decision conversation calibration, and Bob has $(f^S_B, \cC_B)$-decision conversation swap regret and $f_B$-decision conversation calibration. 

    Then, by Theorem \ref{thm:decision-final-regret}, the transcript $\pi^{1:T,K}$ on the last round satisfies:
    \begin{align*}
        \max_{c_J\in\cC_J} \sum_{t=1}^T u(c_J(x^t), y^t) - \sum_{t=1}^T u(a^{t,K}, y^t) \leq 2Tw\inv\left(\frac{\max\{\lambda_A, \lambda_B\}}{T}\right) + (K-1)T\beta(T)
    \end{align*}
    where:
    \begin{align*}
        \beta(T) &= \frac{L|\cA|^2}{T}\left( f_A\left( \frac{T}{|\cA|^2} \right) + f_B\left( \frac{T}{|\cA|^2} \right) \right) \\ 
        &\leq O\left( \frac{L|\cA|^2\ln(d|\cA||\cC_A|T)}{T} + L|\cA|^2 \sqrt{\frac{ \ln\left(\frac{dK|\cA||\cC_A|T}{\rho}\right)}{T}} + \frac{L|\cA|^2\ln(d|\cA||\cC_B|T)}{T} + L|\cA|^2 \sqrt{\frac{ \ln\left(\frac{dK|\cA||\cC_B|T}{\rho}\right)}{T}} \right) \\
        &\leq O\left( \frac{L|\cA|^2\ln(d|\cA||\cC_A||\cC_B|T)}{T} + L|\cA|^2 \sqrt{\frac{ \ln\left(\frac{dK|\cA||\cC_A||\cC_B|T}{\rho}\right)}{T}} \right) \tag{by Cauchy-Schwartz}
    \end{align*}
    and thus:
    \begin{align*}
        \lambda_{A} &\leq |\cA| f^S_A\left(\frac{T}{|\cA|}\right) + L|\cA|^2 f_A\left(\frac{T}{|\cA|^2}\right) + 2T\left(\frac{1}{(K-1)}+\beta(T)\right)^{1/2} \\
        &\leq O\Bigg( L|\cA|^3\ln(d|\cA||\cC_A|T) + L|\cA|^2\sqrt{T\ln\left( \frac{dK|\cA||\cC_A|T}{\rho} \right)} \\ & \ \ \ \ +  \frac{T}{\sqrt{K-1}} + |\cA|\sqrt{TL\ln(d|\cA||\cC_A||\cC_B|T)} +|\cA|\sqrt{L} \ln^{1/4} \left( \frac{dK|\cA||\cC_A||\cC_B|T}{\rho} \right) T^{3/4} \Bigg) \tag{by concavity of the square root function} \\
        &\leq O\left( L|\cA|^3 \ln(d|\cA||\cC_A||\cC_B|T) + L|\cA|^2 \sqrt{\ln\left( \frac{dK|\cA||\cC_A||\cC_B|T}{\rho} \right)} T^{3/4} + \frac{T}{\sqrt{K-1}} \right)
    \end{align*}
    Since the expression for $\lambda_B$ is symmetric, we have that:
    \[
    \lambda_B \leq O\left( L|\cA|^3 \ln(d|\cA||\cC_A||\cC_B|T) + L|\cA|^2 \sqrt{\ln\left( \frac{dK|\cA||\cC_A||\cC_B|T}{\rho} \right)} T^{3/4} + \frac{T}{\sqrt{K-1}} \right)
    \]
    Hence, plugging this in, we can compute:
    \begin{align*}
        &\max_{c_J\in\cC_J} \sum_{t=1}^T u(c_J(x^t), y^t) - \sum_{t=1}^T u(a^{t,K}, y^t) \\
        &\leq 2Tw\inv\left(\frac{\max\{\lambda_A, \lambda_B\}}{T}\right) + (K-1)T\beta(T) \\
        &\leq 2Tw\inv\left( O\left( \frac{L|\cA|^3 \ln(d|\cA||\cC_A||\cC_B|T)}{T} + \frac{L|\cA|^2 \sqrt{\ln\left( \frac{dK|\cA||\cC_A||\cC_B|T}{\rho} \right)}}{T^{1/4}} + \frac{1}{\sqrt{K-1}} \right) \right) \\ & \ \ \ \ + O\left( (K-1)L|\cA|^2\ln(d|\cA||\cC_A||\cC_B|T) + (K-1)L|\cA|^2 \sqrt{T\ln\left( \frac{dK|\cA||\cC_A||\cC_B|T}{\rho} \right)}   \right) \\
        &\leq 2Tw\inv\left( O\left( \frac{L|\cA|^3 \ln\left( \frac{dK|\cA||\cC_A||\cC_B|T}{\rho} \right)}{T^{1/4}} + \frac{1}{\sqrt{K-1}} \right) \right) \\ & \ \ \ \ + O\left( (K-1)L|\cA|^2 \ln\left( \frac{dK|\cA||\cC_A||\cC_B|T}{\rho} \right) \sqrt{T}  \right)
    \end{align*}
    which proves the first part of the theorem.

    To prove the second part, suppose $K=\omega(1)$ and $K=o(\sqrt{T})$. Then, we can compute:
    \begin{align*}
        &\max_{c_J\in\cC_J} \sum_{t=1}^T u(c_J(x^t), y^t) - \sum_{t=1}^T u(a^{t,K}, y^t) \\
        &\leq 2Tw\inv\left( O\left( \frac{L|\cA|^3 \ln\left( \frac{d|\cA||\cC_A||\cC_B|T}{\rho} \right)}{T^{1/4}} + o(1) \right) \right) + O\left( L|\cA|^2 \ln\left( \frac{d|\cA||\cC_A||\cC_B|T}{\rho} \right) T^{\alpha}  \right)
    \end{align*}
    for some constant $\alpha\in(0,1)$. Now, observe that any function $O\left( \frac{L|\cA|^3 \ln\left( \frac{d|\cA||\cC_A||\cC_B|T}{\rho} \right)}{T^{1/4}} + o(1) \right) \to 0$ as $T\to\infty$. Hence, by Lemma \ref{lem:limit}, $w\inv\left( O\left( \frac{L|\cA|^3 \ln\left( \frac{d|\cA||\cC_A||\cC_B|T}{\rho} \right)}{T^{1/4}} + o(1) \right) \right) \to 0$ as $T\to\infty$ and thus,
    \[
    T w\inv\left( O\left( \frac{L|\cA|^3 \ln\left( \frac{d|\cA||\cC_A||\cC_B|T}{\rho} \right)}{T^{1/4}} + o(1) \right) \right) = o(T)
    \]
    Notice that since $w$ is strictly increasing, $w\inv$ exists for sufficiently large $T$ (larger than a constant). Therefore, for sufficiently large $T$, the regret is bounded by:
    \begin{align*}
        \max_{c_J\in\cC_J} \sum_{t=1}^T u(c_J(x^t), y^t) - \sum_{t=1}^T u(a^{t,K}, y^t) \leq o(T) + O\left( L|\cA|^2 \ln\left( \frac{d|\cA||\cC_A||\cC_B|T}{\rho} \right) T^{\alpha}  \right)
    \end{align*}
    which completes the proof. 
\end{proof}

\section{Details from Section \ref{sec:batch}}
\subsection{COLLABORATE and \lsboost\ Halt}
\label{app:batch-convergence}

\begin{theorem}
In training, the sub-process \lsboost\ converges after $K=m^2$ (sub)rounds, and the COLLABORATE Algorithm \ref{protocol:batch} converges after $R=m^2$ rounds on the training sample $S$.
\end{theorem}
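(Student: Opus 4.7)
Both parts of the theorem fit the same template: exhibit a bounded non-negative potential that strictly decreases by at least $1/m^2$ whenever the relevant while-loop does not halt, and conclude that the loop runs for at most $m^2$ iterations.

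\textbf{LSBoost (easy part).} Take the potential
\[
\Phi_k \;:=\; \E_{(\playerData,y)\sim \playerSample}\!\bigl[(\tilde{f}^{r,v,k}_{\bullet}(\playerData)-y)^2\bigr] \;=\; \err_k,
\]
i.e.\ the mean squared error of the internal LSBoost ensemble after $k$ phases. Since $y\in[0,1]$ and the rounded model $\playerModelBoost{k}$ outputs values in $[0,1]$, we have $\Phi_k\in[0,1]$. The while-loop continues only when $\err_{k-1}-\err_k\geq 1/m^2$, so each non-terminal iteration strictly drops $\Phi$ by at least $1/m^2$. Therefore $K\leq m^2$.

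\textbf{COLLABORATE (main part).} Take the potential
\[
\Psi_r \;:=\; \tfrac{1}{n}\sum_{i=1}^{n}(P^{r,i}-y^i)^2,
\]
the empirical mean squared error of the most recent predictions. Again $\Psi_r\in[0,1]$. Suppose round $r+1$ is Alice's (Bob's case is symmetric). The plan is to decompose $\Psi$ over the level sets $S_v = \ls(S_A, P^r, v)$ of Bob's predictions:
\[
\Psi_r \;=\; \tfrac{1}{n}\sum_{v\in[1/m]} |S_v|\,\err^v, \qquad
\Psi_{r+1} \;=\; \tfrac{1}{n}\sum_{v\in[1/m]} |S_v|\,\text{err}^v_{\mathrm{new}},
\]
where on each $S_v$ Alice's CROSS-BOOST either keeps the constant predictor $v$ (so $\text{err}^v_{\mathrm{new}}=\err^v$, and her predictions on $S_v$ match Bob's), or accepts LSBoost's output with $\err^v-\text{err}^v_{\mathrm{new}}>1/m^2$. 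In either case $\Psi_{r+1}\leq \Psi_r$. Moreover, the terminating condition $P^{r+1}=P^r$ is exactly the event that \emph{every} level set uses the constant branch---because an LSBoost-updated level set yields at least one point whose new (rounded) prediction differs from $v$. Hence whenever the loop does not halt, at least one level set contributes an averaged improvement of more than $1/m^2$, giving $\Psi_r-\Psi_{r+1}>1/m^2$. Combined with $\Psi_r\in[0,1]$, this yields $R\leq m^2$.

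\textbf{Anticipated subtlety.} The delicate step is converting the per-level-set threshold ``$\err^v-\widetilde{\err}^{r+1,v}>1/m^2$'' into a $1/m^2$ drop in the \emph{global} potential $\Psi$. The level-set improvement contributes $(|S_v|/n)\cdot(1/m^2)$ to $\Psi_r-\Psi_{r+1}$, which at face value could be as small as $1/(nm^2)$ when $|S_v|$ is tiny. The right way to close this gap, which I would spell out carefully, is to observe that a level set of size $|S_v|$ contributes $(|S_v|/n)$-weight to $\Psi$ in the \emph{first} place, so a relative improvement of $1/m^2$ on that level set already accounts for a $1/m^2$-worth slice of the global budget; iterating this bookkeeping across rounds (where level-set structure changes because the speaker alternates) yields the stated $R\leq m^2$ bound.
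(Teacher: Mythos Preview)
Your approach is the paper's: take the empirical mean squared error as a potential, decompose over the level sets of the other party's previous predictions, and claim a drop of $1/m^2$ per non-terminal round. The \lsboost\ part is fine and matches the paper.

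For COLLABORATE you correctly flag the real difficulty, but your proposed resolution does not close it. The \crossboost\ acceptance condition $\err^v-\widetilde{\err}^{r+1,v}>1/m^2$ is a threshold on the \emph{mean} error over the level set $S_v$, so an accepted cell of size $|S_v|$ contributes only $\tfrac{|S_v|}{n}\cdot\tfrac{1}{m^2}$ to the drop in $\Psi$. Your ``budget'' heuristic---each cell carries $|S_v|/n$ of the potential, so a relative $1/m^2$ gain on it exhausts a $1/m^2$ slice of that cell's own budget---would be valid only if the partition into cells were fixed across rounds. It is not: the partition at round $r+1$ is induced by the other party's round-$r$ predictions, and those change every round (Alice partitions by Bob's predictions, then Bob partitions by Alice's updated predictions, and so on). Mass can migrate arbitrarily between cells from one round to the next, so there is no stable per-cell account to iterate, and the bookkeeping you allude to does not go through. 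As written, the potential argument delivers only $\Psi_r-\Psi_{r+1}\ge \tfrac{1}{nm^2}$ and hence $R\le nm^2$, not $R\le m^2$. (For what it is worth, the paper's own derivation silently drops the $|S^{v^*}|/n$ factor at the line it writes ``$=\widetilde{\err}^{r+1,v^*}-\err^{v^*}$'', so it has the same gap; your diagnosis is accurate, but the sketch you give is not yet a proof of the stated bound.)
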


\begin{proof}
To begin, assume that \lsboost\ always terminated after at most $K$ rounds. At round $r$, let $\err^r$ refer to the empirical squared error of the predictions $P^r$ generated at round $r$:
\[
\err^r = \frac{1}{n}\sum_{i \in [n]}(P^{r,i}-y^i)^2.
\]

Consider what happens at round $r$ of Algorithm \ref{protocol:batch} when \crossboost \ is called. The \crossboost\ algorithm has two kinds of updates that can occur on the level sets of the other players' predictions: either the current player can choose to update their predictor to the output of \lsboost\ or they can set their predictions on that level set to be equivalent to Bob's. Note that if, at any round, they choose on all their level sets to use Bob's predictions, Algorithm \ref{protocol:batch} will halt, because $P^{r+1}=P^r$. 

Say that instead they choose to use the output of \lsboost\ on at least one level set $v^*$. Then, on this level set their predictions will be equal to $\tilde{f}^{r+1,v^*}(\playerData)$. Note that the player's level sets on the other players' predictions are disjoint, and that squared error is always non-negative. So,

\begin{align*}
\err^{r+1} - \err^r &= \err^{r+1}-\sum_{v \in [1/m]} \vert \playerSampleLS{r+1}{v} \vert \cdot \err^v \\
&= \frac{1}{n} \sum_{v \in [1/m]}\vert \playerSampleLS{r+1}{v} \vert \left(\sum_{\playerData^i \in \playerSampleLS{r+1}{v}}\left(P^{r+1,i}-y^i\right)^2\right)  - \sum_{v \in [1/m]} \vert \playerSampleLS{r+1}{v}\vert \cdot \err^v \\
&= \frac{1}{n} \sum_{v \in [1/m]}\vert \playerSampleLS{r+1}{v}\vert \left(\sum_{\playerData^i \in \playerSampleLS{r+1}{v} }\left(P^{r+1,i}-y^i\right)^2 - \err^v\right)\\
&\ge \frac{\vert S^{v^*}\vert}{n} \left(\sum_{\playerData^i \in \playerSampleLS{r+1}{v^*}} \left(P^{r+1,i}-y^i\right)^2 - \err^{v^*}\right) \\
&= \frac{\vert S^{v^*}\vert}{n} \left(\sum_{\playerData^i \in \playerSampleLS{r+1}{v^*}} \left(\tilde{f}^{r+1,v^*}(\playerData^i-y^i\right)^2 - \err^{v^*}\right) \\
&= \widetilde{\err}^{r+1,v^*}-\err^{v^*}\\
&\ge 1/m^2
\end{align*}

Thus, at every round $r$ in which they do not halt, they must improve the squared error of their predictions by at least $\alpha$. In the worst case, $\err^0=1$, i.e. Bob's initial predictions are maximally incorrect.  Squared error can never decrease below zero, so they must halt after at most $R=m^2$ rounds.

It remains to show that \lsboost\ also terminates. This follows a similar potential argument on the squared error as above. Modulo notational changes in our halting condition, the complete proof is equivalent to that of the halting condition proved as part of Theorem 4.3 in \cite{globus-harris2023multicalibration}.
\end{proof}

\subsection{Proof of In-Sample Accuracy Guarantee}
\label{app:batch-boosting}

We begin by proving a series of swap regret guarantees, first with respect to the individual runs of \lsboost, then with respect to runs of \crossboost, and finally with respect to the final model output by COLLABORATE.

\begin{lemma}
\label{lem:lsboost-regret}
Let $f^{r,v,K}$ be the model output by a run of \lsboost\ on a player's sample $S^\bullet$, and let $\cH_\bullet$ be that player's own hypothesis class. Then, every time \lsboost\ is run by a player, the final model has $(2/m^2,\cH_\bullet)$-swap regret on the sample $S^\bullet$ it was run on:
\[
2/m^2 \ge \sampleExp \left[\left(\roundedModel - y \right)^2 \right] - \min_{h \in \cH_\bullet} \E_{(\playerData, y) \sim S_\bullet} \left[ \1[f^{r,v,K}(\playerData)=v](h(x)-y)^2 \right].
\]
\end{lemma}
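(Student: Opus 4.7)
The plan is to combine the halting condition of \lsboost with the optimality of the squared-error regression oracle on each level set. Let $K$ denote the index of the output iterate, so that \lsboost returns $f_\bullet^{r,v,K} = \round(\tilde f_\bullet^{r,v,K}; m^2)$. Because the while loop exited rather than running yet again, the algorithm in fact computed one more unrounded iterate $\tilde f_\bullet^{r,v,K+1}$ whose error satisfies $\err_K - \err_{K+1} < 1/m^2$, where $\err_k = \E_{(x,y) \sim S_\bullet}[(\tilde f_\bullet^{r,v,k}(x) - y)^2]$. By construction $\tilde f_\bullet^{r,v,K+1}(x) = \sum_{v'} \1[f_\bullet^{r,v,K}(x) = v'] \cdot h_\bullet^{r,v,K+1,v'}(x)$, where $h_\bullet^{r,v,K+1,v'} = \cO_{\cH_\bullet}(\ls(S_\bullet, f_\bullet^{r,v,K}, v'))$ is the oracle's output on the $v'$-th level set of $f_\bullet^{r,v,K}$.

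The first real step is to apply oracle-optimality (Definition~\ref{def:batch-oracle}) level-set by level-set: for every $v'$ and every $h \in \cH_\bullet$, $\E_{S_\bullet}[\1[f_\bullet^{r,v,K}(x) = v'](h(x)-y)^2] \ge \E_{S_\bullet}[\1[f_\bullet^{r,v,K}(x) = v'](h_\bullet^{r,v,K+1,v'}(x)-y)^2]$. Minimizing the left-hand side over $h$ and summing over $v'$, the right-hand side collects exactly to $\err_{K+1}$, because the indicators $\1[f_\bullet^{r,v,K}(x) = v']$ partition the sample and $\tilde f_\bullet^{r,v,K+1}$ agrees with $h_\bullet^{r,v,K+1,v'}$ on each $v'$-level set. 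This yields $\sum_{v'} \min_{h \in \cH_\bullet} \E_{S_\bullet}[\1[f_\bullet^{r,v,K}(x) = v'](h(x)-y)^2] \ge \err_{K+1} > \err_K - 1/m^2$.

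The second step is to absorb the rounding gap between $\tilde f_\bullet^{r,v,K}$ and $f_\bullet^{r,v,K}$. Since $\round(\cdot; m^2)$ perturbs each value by at most $1/(2m^2)$ and all predictions and labels lie in $[0,1]$, the identity $a^2 - b^2 = (a-b)(a+b)$ gives $\E_{S_\bullet}[(f_\bullet^{r,v,K}(x) - y)^2] \le \err_K + 1/m^2$. Chaining the two inequalities delivers the claimed bound $\E_{S_\bullet}[(f_\bullet^{r,v,K}(x) - y)^2] - \sum_{v'} \min_{h \in \cH_\bullet} \E_{S_\bullet}[\1[f_\bullet^{r,v,K}(x) = v'](h(x)-y)^2] \le 2/m^2$, which is the desired swap-regret bound. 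I expect the only mild obstacle to be careful bookkeeping: distinguishing the rounded output $f_\bullet^{r,v,K}$ from the unrounded iterates used in the stopping criterion, and correctly identifying the ``would-have-been-next'' iterate $\tilde f_\bullet^{r,v,K+1}$ as the witness certifying the swap regret. No deeper idea beyond the stopping criterion and oracle guarantee appears necessary.
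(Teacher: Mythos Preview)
Your proposal is correct and follows essentially the same route as the paper's proof: exploit the halting condition $\err_K - \err_{K+1} < 1/m^2$, identify $\err_{K+1}$ with $\sum_{v'} \min_{h\in\cH_\bullet} \E_{S_\bullet}[\1[f_\bullet^{r,v,K}(x)=v'](h(x)-y)^2]$ via oracle optimality on each level set, and add a rounding error of order $1/m^2$. The paper computes a slightly sharper rounding constant $3/(4m^2)$ (and, somewhat oddly, invokes ``Cauchy--Schwarz'' where the indicator partition gives equality directly), but the structure of the argument is the same and both land at the claimed $2/m^2$.
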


\begin{proof}
Say that \lsboost\ is run on a sample $S_\bullet$ and outputs the model from round $K$. Recall that in \lsboost\, if the output model is the model from round $K$, then in fact the algorithm ran for $K+1$ rounds, and the stopping condition at the final round $K+1$ is in terms of the error of the \textit{unrounded} predictors $\tilde{f}^{r,v,K}$ and $\tilde{f}^{r,v,K+1}$ which were generated at that round and the previous one. So, since the algorithm halted, 
\begin{align*}
    1/m^2 &> \err_K - \err_{K+1} \\
    &= \E_{(\playerData, y) \sim S_\bullet} \left[ \left(\tilde{f}_\bullet^{r,v,K}(\playerData)-y \right)^2 \right] - \E_{(\playerData, y) \sim S_\bullet} \left[\left(\tilde{f}_\bullet^{r,v,K+1}(\playerData)-y \right)^2 \right]\\
    &= \E_{(\playerData, y) \sim S_\bullet} \left[ \left(\tilde{f}_\bullet^{r,v,K}(\playerData)-y \right)^2 \right] - 
    \E_{(\playerData, y) \sim S_\bullet} \left[\left( \sum_{v' \in [1/m]} \1\left[f^{r,v,K}(x)=v'\right] \cdot h^{r,v,K+1,v'}(x) -y \right)^2 \right]\\
    &\ge \E_{(\playerData, y) \sim S_\bullet} \left[ \left(\tilde{f}_\bullet^{r,v,K}(\playerData)-y \right)^2 \right] - 
    \sum_{v'\in[1/m]} \E_{(\playerData, y) \sim S_\bullet} \left[ \1[f^{r,v,K}(x)=v']\left(h^{r,v,K+1,v'}(x)-y \right)^2 \right]\\
    & \megamegaquad \text{(by Cauchy-Schwartz)}\\
    &= \E_{(\playerData, y) \sim S_\bullet} \left[ \left(\tilde{f}_\bullet^{r,v,K}(\playerData)-y \right)^2 \right] - \min_{h \in \cH_\bullet} \left(\E_{(\playerData, y) \sim S_\bullet} \left[ \1[f^{r,v,K}(\playerData)=v](h(x)-y)^2 \right]\right). \quad\quad (\text{Eq. } 1)\\
    & \megamegaquad \text{(by the definition of $h^{r,v,K+1,v'} \in \playerOracle$)}
\end{align*}

This expression is nearly the swap regret statement we want, except we need to bound the swap regret of our \textit{rounded} predictor $f^{r,v,K}$, rather than our unrounded $\tilde{f}^{r,v,K}$. However, note that pointwise, from the definition of $\round$, $\vert f^{r,v,K}(x) - \tilde{f}^{r,v,K}(x) \vert \le 1/(2m^2).$ Hence, 

\begin{align*}
\sampleExp \left[ \left(f_\bullet^{r,v,K} (\playerData)-y \right)^2 \right] &= 
    \sampleExp \left[ \left(\round\left(\tilde{f}_\bullet^{r,v,K}(\playerData);m^2 \right)-y \right)^2 \right] \\
&= \sampleExp \left[ \left(\roundStuff\right)^2  \right] \\ 
&\miniquad - 2\sampleExp \left[\roundStuff \cdot y\right] + \sampleExp \left[y^2 \right] \\
&\le \sampleExp \left[\left(\unroundedModel + \frac{1}{2m^2} \right)^2 -2\left(\unroundedModel-\frac{1}{2m^2}\right)y + y^2 \right]\\
& \le \sampleExp \left[\left(\unroundedModel - y \right)^2 \right] +\frac{3}{4m^2}
\end{align*}

Combining this with the bound in Equation 1 gives us that 

\begin{align*}
1/m^2 &> \E_{(\playerData, y) \sim S_\bullet} \bigg[ \big(\tilde{f}_\bullet^{r,v,K}(\playerData) -y \big)^2 \bigg] - \min_{h \in \cH_\bullet} \left(\E_{(\playerData, y) \sim S_\bullet} \left[\1[f^{r,v,K}(\playerData)=v](h(x)-y)^2 \right]\right) \\
& \ge \sampleExp \left[\left(\roundedModel - y \right)^2 \right] - \min_{h \in \cH_\bullet} \left( \E_{(\playerData, y) \sim S_\bullet} \left[\1[f^{r,v,K}(\playerData)=v](h(x)-y)^2 \right]\right) - \frac{3}{4m^2}
\end{align*}

\noindent And hence $\roundedModel$ has at most $2/m^2 > 1/m^2 + 3/4m^2$ swap-regret on $\playerSample$ with respect to $\cH_\bullet$.
\end{proof}

\begin{lemma}
\label{lem:crossboost-regret}
Let $\playerModel{r}$ be the model generated by \crossboost\ at round $r$ on the player's sample $\playerSample$. Then $\playerModel{r}$ will have $(3/m, \cH_\bullet)$-swap regret on $\playerSample$.
\end{lemma}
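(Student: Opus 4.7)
The plan is to bound the swap regret by passing through a refined partition of the sample. Let $f = \playerModel{r}$ and let $\{\playerSampleLS{r}{v}\}_{v \in [1/m]}$ be the partition of $\playerSample$ induced by the other player's previous-round predictions $\playerPreds{r-1}$. Recall from \crossboost\ that on each block $\playerSampleLS{r}{v}$ exactly one of two things happens: either (a) $f$ is the constant $v$ (whenever $\err^v - \widetilde{\err}^{r,v} \leq 1/m^2$), or (b) $f$ equals the LS-BOOST output $\playerModelLStilde{r}{v}$.

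The key idea is to compare the standard swap regret $R$ of $f$ to the ``refined'' swap regret $R'$ obtained by using the refined partition $P_{v,u} := \playerSampleLS{r}{v} \cap \{f = u\}$ and allowing a separate benchmark function per refined bucket. Because refinement enlarges the pool of swap functions, the refined benchmark is at most the standard benchmark---for each fixed $u$, the elementary inequality $\sum_v \min_h a_v(h) \leq \min_h \sum_v a_v(h)$ applied to $a_v(h) = \E_\playerSample[\1[\playerData \in P_{v,u}](h(\playerData)-y)^2]$ gives the claim bucket by bucket. Consequently $R \leq R'$, and it suffices to bound $R' \leq 3/m$.

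I would then decompose $R'$ as the weighted average $\sum_v \frac{|\playerSampleLS{r}{v}|}{n}\, R'_v$ of per-block restricted swap regrets and handle the two cases above separately. Case (b) is immediate from Lemma~\ref{lem:lsboost-regret}: $R'_v \leq 2/m^2$. Case (a) is a short calculation: $f \equiv v$ collapses $R'_v$ to the external regret of the constant predictor $v$ on $\playerSampleLS{r}{v}$; by the case-(a) test this is within $1/m^2$ of $\widetilde{\err}^{r,v} - \min_{h \in \cH_\bullet} \E_{\playerSampleLS{r}{v}}[(h-y)^2]$, and Lemma~\ref{lem:lsboost-regret} bounds the latter by $2/m^2$ as well, since swap benchmarks never exceed external benchmarks (so a $2/m^2$ swap-regret bound automatically implies the same external-regret bound). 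Hence $R'_v \leq 3/m^2$, and averaging over $v$ yields $R' \leq 3/m^2 \leq 3/m$, which is the claimed bound.

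The main subtlety I expect is keeping the direction of the refinement inequality straight: since the benchmark is a sum of minima, refining the partition \emph{decreases} the benchmark and thereby \emph{increases} the regret, which is exactly what allows the refined regret bound to serve as an upper bound on the standard one. Once this bookkeeping is fixed, the argument is a clean bucket-by-bucket case split with Lemma~\ref{lem:lsboost-regret} as the only non-trivial input.
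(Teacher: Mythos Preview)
Your proposal is correct and follows the same approach as the paper: decompose by the other player's level sets $\playerSampleLS{r}{v}$, case-split on whether \lsboost's output or the constant predictor $v$ was selected, invoke Lemma~\ref{lem:lsboost-regret} in each case (directly in case (b), and combined with the branching test $\err^v - \widetilde{\err}^{r,v} \le 1/m^2$ in case (a)), and then aggregate across blocks. Your explicit refinement argument is cleaner than the paper's presentation (which reuses the index $v$ for both partitions), and your weighted-average aggregation actually yields the tighter bound $R' \le 3/m^2$; the paper records the looser $m \cdot (3/m^2) = 3/m$, which is all the lemma claims.
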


\begin{proof}
Recall that in $\crossboost$, the player will bucket their sample into level sets based on the other players' predictions, which we call $\playerSampleLS{r}{v}$. Their final model $\playerModel{r}$ will be an ensemble of models $\playerModelLS{r}{v}$ generated on each level set $v$. On some of these level sets, their model will equal to a model $\playerModelLStilde{r}{v}$ which is output by $\lsboost$. On these level sets, we can directly invoke the swap regret guarantee from Lemma \ref{lem:lsboost-regret}. However, if the \lsboost\ process did not sufficiently improve their squared error on $\playerSampleLS{r}{v}$, they will instead set $\playerModelLS{r}{v}$ to always predict the other players' constant prediction $v$. We will first show that for any level set $v$ where this happens, there is low swap regret with respect to the sample $\playerSampleLS{r}{v}$. 

As in the statement of Algorithm \ref{protocol:crossboost}, let 
\begin{align*}
\err^v &= \sampleExpLS \left[(v-y)^2\right], \text{and}\\
\widetilde{\err}^{r,v} &= \sampleExpLS \left[(\playerModelLStilde{r}{v}(\playerData)-y)^2\right].
\end{align*}
Since the player chose to use Bob's predictor $v$, we know that $\err^v-\widetilde{err}^v \le 1/m^2.$ But this means 
\begin{align*}
1/m^2 &> \sampleExpLS \left[(v-y)^2\right] - \sampleExpLS \left[(\playerModelLStilde{r}{v}(\playerData)-y)^2\right] \\
&= \left( \sampleExpLS \left[(v-y)^2\right] - \sum_{v' \in [1/m]} \min_{h \in \cH_\bullet} \left( \sampleExpLS \left[ \1[\playerModelLStilde{r}{v}(\playerData)=v'](h(x)-y)^2\right] \right) \right) \\
& \miniquad - \left( \sampleExpLS \left[(\playerModelLStilde{r}{v}(\playerData)-y)^2\right] - \sum_{v' \in [1/m]} \min_{h \in \cH_\bullet}\left(\sampleExpLS \left[  \1[\playerModelLStilde{r}{v}(\playerData)=v'](h(x)-y)^2\right] \right) \right) \\ 
&>  \sampleExpLS \left[(v-y)^2\right] -  \sum_{v' \in [1/m]} \min_{h \in \cH_\bullet} \left(\sampleExpLS \left[ \1[\playerModelLStilde{r}{v}(\playerData)=v'](h(x)-y)^2\right]\right) - 2/m^2\\
&\megamegaquad \text{(By Lemma \ref{lem:lsboost-regret})}
\end{align*}
Recall that low swap regret always implies low external regret. And for constant predictors, swap regret and external regret are equivalent statements. So this inequality in turn implies that on the subsample $\playerSampleLS{r}{v}$ where they used Bob's constant prediction $v$ instead of $\playerModelLStilde{r}{v}$,
\begin{align*}
3/m^2 &> \sampleExpLS \left[(v-y)^2\right] - \min_{h \in \cH_\bullet} \left( \sampleExpLS \left[ (h(x)-y)^2\right]\right),\\
&= \sampleExpLS \left[(\playerModelLStilde{r}{v}(\playerData)-y)^2\right] - \sum_{v' \in [1/m]} \min_{h \in \cH_\bullet} \left(\sampleExpLS \left[\1[\playerModelLStilde{r}{v}(\playerData) = v'](h(x)-y)^2\right]\right).\\ 
\end{align*}

In other words, the player will have at most $(3/m^2, \cH_\bullet)$-swap regret with respect to the subsample $\playerSampleLS{r}{v}$ on any subsample where they chose to follow the other players' prediction, which will be a constant predictor on this subsample. We will now combine these marginal guarantees which are with respect to the subsamples $\playerSampleLS{r}{v}$ into a swap regret guarantee on the entire sample $\playerSample$. 

On any level set $\playerSampleLS{r}{v}$ where $\playerModel{r}$ evaluates to $\playerModelLStilde{r}{v}$, they will have $(2/m^2, \cH_\bullet)$-swap regret with respect to $\playerSampleLS{r}{v}$, and on any level set $\playerSampleLS{r}{v}$ where $\playerModel{r} = v$, they will have at most $(3/m^2, \cH_\bullet)$-swap regret with respect to $\playerSampleLS{r}{v}$. So in the worst case they will have swapped out to the other players' predictions on each level set, and 

\begin{align*}
\sampleExp \big[(\playerModel{r}(\playerData)&-y)^2\big] - \sum_{v \in [1/m]} \min_{h \in \cH} \left(\sampleExp \left[ \1[\playerModel{r}=v](h(x)-y)^2\right] \right)\\
&\le \E_{(\playerData,y)\sim\playerSample} \Bigg[\sum_{v \in [1/m]} \Pr\left(\playerData \in \playerSampleLS{r}{v}\right) \bigg( \sampleExpLS \big[(\playerModel{r}(\playerData)-y)^2\big] \\
& \megaquad - \min_{h \in \cH}\left( \sampleExpLS \left[ \1[\playerModel{r}(\playerData)=v](h(x)-y)^2\right]\right)\bigg)\Bigg],\\
&\le m \left(\frac{3}{m^2}\right),\\
&= 3/m.
\end{align*}
\end{proof}

\begin{corollary}
\label{cor:batch-regret}
Let $f^R_A$ and $f^R_B$ be the models output by Alice and Bob after running Algorithm \ref{protocol:batch}, which halted after $r$ rounds. Then the models will have $(3/m, \cH_A \cup \cH_B)$-swap regret with respect to the shared sample $S$. 
\end{corollary}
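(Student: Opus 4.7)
The strategy is to leverage the halting condition $P^R = P^{R-1}$ of Algorithm~\ref{protocol:batch} together with two invocations of Lemma~\ref{lem:crossboost-regret}. By construction of the while loop, at termination we have $f^R_A(x^i_A) = f^{R-1}_B(x^i_B) = P^{R,i}$ for every sample point $x^i \in S$, so the two players produce an identical sequence of predictions $\hat{y}^{1:n}$ on the training sample. This collapses the problem to analyzing a single prediction sequence to which both players' individual swap regret guarantees can be applied.

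Assuming $R \geq 2$, so that each of the two most recent rounds involved a \crossboost\ call, I would apply Lemma~\ref{lem:crossboost-regret} once to Alice's round-$R$ \crossboost\ call producing $f^R_A$ and once to Bob's round-$(R-1)$ \crossboost\ call producing $f^{R-1}_B$. This yields that $\hat{y}^{1:n}$ has $(3/m, \cH_A)$-swap regret \emph{and} $(3/m, \cH_B)$-swap regret on $S$ simultaneously. The combining step requires some care, because the benchmark for $(3/m, \cH_A \cup \cH_B)$-swap regret uses the per-level-set minimum $\min(\min_{h\in\cH_A}, \min_{h\in\cH_B})$, which is stronger than either individual benchmark. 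The enabling fact is Assumption~\ref{assumption:constant-functions}: both $\cH_A$ and $\cH_B$ contain every constant function in $[0,1]$, so on each level set (where the predictor is itself a constant) the per-level-set regret against $\cH_A$ and against $\cH_B$ are each non-negative. The per-level-set regret to the union is then the $\max$ of the two, which is bounded by their sum when both are non-negative; summing over level sets bounds the union swap regret by the sum of the two individual swap regrets, giving the claimed $O(1/m)$ bound.

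The main subtlety I anticipate is the edge case $R = 1$, in which the halting condition compares Alice's \crossboost\ output to Bob's \emph{initial} predictor $f^0_B = \round(\bobOracle(S_B); m)$. Bob's initial predictor is the rounded empirical risk minimizer rather than the output of a \crossboost\ call, so Lemma~\ref{lem:crossboost-regret} cannot be invoked on it directly. The natural route is to observe that agreement $P^1 = P^0$ forces Alice, on every Bob-level-set, to effectively replicate Bob's constant prediction; combined with the \crossboost\ stopping criterion $\err^v - \widetilde{\err}^{1,v} \leq 1/m^2$ and the per-call \lsboost\ swap regret of $2/m^2$ from Lemma~\ref{lem:lsboost-regret}, this still yields an $O(1/m)$ swap regret to $\cH_A$, and the $\cH_B$-side guarantee then follows from the same constant-containment argument applied level-set by level-set. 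Alternatively, one may simply observe that in any non-trivial instance the algorithm runs for at least two rounds, in which case the main argument goes through unchanged.
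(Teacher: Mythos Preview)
Your approach matches the paper's: invoke the halting condition $P^R=P^{R-1}$ to conclude the two prediction sequences coincide, then apply Lemma~\ref{lem:crossboost-regret} once on each side. The paper's proof is a terse three-sentence argument that simply asserts the $3/m$ bound for $\cH_A\cup\cH_B$ without your per-level-set combining step (which, as you note, actually yields $6/m$ rather than $3/m$) and does not discuss the $R=1$ edge case at all; your treatment is more careful on both fronts.
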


\begin{proof}
    Note that at the final round, Alice and Bob's predictions will agree, because otherwise Algorithm \ref{protocol:batch} will not have terminated. We know from Lemma \ref{lem:crossboost-regret} that their models on their respective samples $\aliceSample$ and $\bobSample$ will have $(3/m, \cH_A)$ and $(3/m, \cH_B)$-swap regret respectively. So, since they also agree at this round, it must be the case that they have swap regret bounded by $3/m$ with respect to $\cH_A \cup \cH_B$.
\end{proof}

This gives us all of the technical machinery needed for the proof of Theorem \ref{thm:boosting-batch}, as stated in Section \ref{sec:batch-alg-analysis}.

\subsection{Details of Generalization Guarantee in Batch Setting}
\label{app:batch-generalization}

We will now state our generalization guarantee. As the models generated by the COLLABORATE algorithm only include $m$ possible values for the final predictor, we will leverage a multiclass uniform convergence theorem which relies on the  pseudodimension of $\cH_J$. We will then in turn use a bound on the Natarajan dimension of $\cH_J$ to bound its pseudodimension, applying a lemma that states that if a model may be written as a decision rule over binary classifiers, then its Natarajan dimension is bounded above by its pseudodimension. Writing our models as such decision rules will require a small technical assumption that $\cH_J$ is ``closed" with respect to $\cH_A$ and $\cH_B$, i.e. that $\cH_J$ contains a function equivalent to any function in $\cH_A$ or $\cH_B$ but defined over its input space $\cX_A \times \cX_B$.

\begin{definition}[Closure of $\cH_J$ with respect to $\cH_A$ and $\cH_B$] We will say that $\cH_J$ is closed with respect to $\cH_A$ and $\cH_B$ if for any $h_A \in \cH_A$ there exists some $h \in \cH_J$ such that $h(x)=h((x_A,x_B))=h_A(x_A)$ and for any $h_B \in \cH_B$ there exists some $h \in \cH_J$ such that $h(x)=h((x_A, x_B))=h_B(x_B).$
\end{definition}

We now state the generalization theorem. 

\begin{theorem}
\label{thm:batch-generalization-formal}
Let $\varepsilon, \delta > 0$ and let $\cF$ be the class of models output from Algorithm \ref{protocol:batch} for any joint input distribution $\cD$. Let $d$ be the pseudodimension of Alice and Bob's joint hypothesis class $\cH_J$, and assume that $\cH_J$ is closed with respect to $\cH_A$ and $\cH_B$. Let $S = \{(\aliceData, \bobData,y_i)\}_{i\in[n]}\sim \cD^n$ be a sample of $n$ iid points drawn from $\cD$. Then, if  
    \[
    n \ge O\left(\frac{m^7 d \log(m d) + \log(1/\delta)}{\varepsilon^2} \right),
    \]
    \[
    \Pr\left( \min_{f \in \cF} \left\vert \E_{(\aliceData, \bobData, y)\sim \cD} \left[(y-f(x))^2\right]-\E_{(\aliceData, \bobData, y)\sim S} \left[(y-f(x)^2\right] \right\vert \ge \epsilon \right) \le \delta.
    \]
\end{theorem}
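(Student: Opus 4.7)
The plan is to prove this theorem by a standard uniform convergence argument, where the main work lies in bounding a combinatorial complexity measure of the output class $\cF$. Since every $f \in \cF$ outputs values in the discrete set $[1/m]$ of cardinality $m+1$, I would view $\cF$ as a multiclass hypothesis class and invoke a Natarajan-dimension-based uniform convergence theorem for the squared loss (which is bounded in $[0,1]$ and Lipschitz). With such a theorem in hand, it suffices to show that $\mathrm{Ndim}(\cF) = O(m^7 d \log(md))$; plugging this into the standard $O((D\log(Dm) + \log(1/\delta))/\epsilon^2)$ Natarajan-dimension sample bound yields the claimed rate.

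The first step is to count the base hypotheses that can appear in any transcript. By Theorem~\ref{thm:batch-convergence}, COLLABORATE halts after $R \leq m^2$ rounds; each round invokes \crossboost\ once, which iterates over $m+1$ level sets and calls \lsboost\ on each; each \lsboost\ call runs for at most $K \leq m^2$ phases, and each phase queries the oracle at most $m^2$ times (once per bucket $v' \in [1/m^2]$). Multiplying, at most $N = O(m^7)$ base hypotheses from $\cH_A \cup \cH_B$ are ever recorded in the transcript $(\Pi_A^R, \Pi_B^R)$. By the closure assumption, each such hypothesis can equivalently be regarded as an element of $\cH_J$, which has pseudodimension $d$.

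The second step is structural: for any $f \in \cF$, the value $f(x)$ on a fresh input $x$ depends on $x$ only through the real-valued outputs of the at most $N$ recorded hypotheses (each subsequently rounded to multiples of $1/m^2$). This follows by unwinding Algorithms~\ref{protocol:oos-batch-collab}, \ref{protocol:crossboost-eval} and \ref{protocol:oos-ls-boost}: at test time, each round's prediction $\hat{y}^{r}$ is obtained by (i) selecting a sub-transcript from $\Pi^R$ based on the previous rounded predictions, and (ii) evaluating a single hypothesis from that sub-transcript at $x$ and rounding. Inductively, $f(x)$ is therefore a fixed function (determined by the transcript) of the rounded outputs of the $N$ hypotheses. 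Equivalently, $\cF$ is the image of a finite-depth decision rule that queries at most $O(N m^2) = O(m^9)$ binary thresholded predicates of the form $\1[h(x) \geq v/m^2]$, each belonging to a class of VC dimension at most $d$. A standard composition bound for Natarajan dimension (e.g. the decision-rule bound of Ben-David--Cesa-Bianchi--Haussler--Long) then gives $\mathrm{Ndim}(\cF) = O(m^7 d \log(md))$, after which the standard Natarajan multiclass uniform convergence bound closes out the argument.

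The main obstacle is the structural step: one must carefully formalize that, over all transcripts producible by COLLABORATE, the evaluation $f(x)$ factors through a bounded-size collection of hypothesis evaluations at $x$, despite the fact that different inputs traverse different branches of the recursive level-set decomposition. Handling this cleanly requires enumerating the branchings induced by \crossboosteval\ and \lsboosteval\ and showing that the set of hypotheses potentially queried is already a subset of the $N$ hypotheses recorded in the training transcript. Once the hypothesis count and the branching structure are pinned down, the rest is bookkeeping: the closure assumption on $\cH_J$ is what allows the complexity to be expressed uniformly in terms of $\Pdim(\cH_J) = d$ rather than separately in $\Pdim(\cH_A)$ and $\Pdim(\cH_B)$, and the discretization of all intermediate predictions to a grid of size $O(m^2)$ is what controls the Natarajan dimension factor.
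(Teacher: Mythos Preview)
Your approach is essentially identical to the paper's: reduce to multiclass uniform convergence via the Natarajan dimension, represent every $f\in\cF$ as a fixed decision rule over a bounded number of binary classifiers derived from transcript hypotheses, use the closure assumption to push everything into $\cH_J$, bound the VC dimension of the induced binary predicates by $\Pdim(\cH_J)=d$, and apply the standard composition bound $\ndim\le O(\ell\,d\log(\ell d))$ (the paper cites this as Lemma~\ref{lem:ndim-bound}). The paper formalizes the structural step you flag as the main obstacle as a separate lemma (Lemma~\ref{lem:decision-rule}) proved by induction on rounds of COLLABORATE and phases of \lsboost, exactly as you outline.

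One internal inconsistency to fix: you count $N=O(m^7)$ base hypotheses, then $O(Nm^2)=O(m^9)$ threshold predicates, but still conclude $\ndim(\cF)=O(m^7 d\log(md))$. With $\ell=O(m^9)$ the composition bound would give $O(m^9 d\log(md))$, not $O(m^7 d\log(md))$, and hence a weaker sample bound than the theorem claims. The paper avoids this blowup by counting the binary classifiers directly in the inductive decision-rule construction rather than first counting hypotheses and then multiplying by the number of rounding levels; this yields $\ell\le m+RKm^3\le m+m^7$ predicates and hence the stated $O(m^7 d\log(md))$ Natarajan bound. You should either reproduce that tighter direct count or carry the $m^9$ through consistently (which would prove a correct but slightly weaker statement). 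This is a bookkeeping fix, not a conceptual gap.
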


\subsubsection{Definitions and Referenced Theorem Statements for Proof of Generalization}
In order to prove this statement, we will need to rely on a variety of different definitions and standard results from the machine learning theory literature. 

\begin{definition}[VC-dimension]\cite{vcdim}
Let $\cH$ be a class of binary classifiers $h: \cX \rightarrow \{0,1\}$. Let $S = \{x_1, \ldots, x_n\}$ and let $\Pi_\cH(S) = \{ \left( h(x_1), \ldots, h(x_n) \right): h \in \cH\} \subseteq \{0,1\}^m$. We say that $S$ is shattered by $\cH$ if $\Pi_\cH(S) = \{0,1\}^n$. The Vapnik-Chervonenkis (VC) dimension of $\cH$, denoted $\text{VCdim}(\cH)$, is the cardinality of the largest set $S$ shattered by $\cH$.
\end{definition}

\begin{definition}{Pseudodimension}[\cite{pollard2012convergence}]
Let $\cH$ be a class of functions from $\cX$ to $\bbR$. We say that a set $S = (x_1, \ldots, x_m, y_1, \ldots, y_m) \in \cX^m \times \bbR^m$ is pseudo-shattered by $\cH$ if for any $(b_1, \ldots, b_m) \in \{0,1\}^m$ there exists $h \in \cH$ such that $\forall i, h(x_i) > y \Longleftrightarrow b_i = 1$ The pseudodimension of $\cH,$ denoted $\pdim(\cH)$ is the largest integer $m$ for which $\cH$ pseudo-shatters some set $S$ of cardinality $m$.
\end{definition}

\begin{definition}[Shattering for multiclass functions]\cite{natarajan1989learning, shalev2014understanding}
A set $C \subseteq \cX$ is shattered by $\cH$ if there exists two functions $f_0, f_1:C \rightarrow [k]$ such that 
\begin{enumerate}
\item For every $x \in C, f_0(x) \ne f_1(x)$.
\item For every $B \subseteq C$ there exists a function $h \in \cH$ such that 
\[
\forall x \in B, h(x) = f_0(x) \text{ and } \forall x \in C \ B, h(x)=f_1(x).
\]
\end{enumerate}

\end{definition}

\begin{definition}[Natarajan dimension]\cite{natarajan1989learning, shalev2014understanding}
The Natarajan dimension of $\cH$, denoted $\ndim(\cH),$ is the maximal size of a shattered set $C \subseteq \cX$.
\end{definition}

\begin{theorem}[Multiclass uniform convergence]\cite{shalev2014understanding}
\label{thm:uc}
Let $\epsilon, \delta > 0$ and let $\cH$ be a class of functions $h: \cX \rightarrow [1/k]$ such that the Natarajan dimension of $\cH$ is $d$. Let $\cD \in \Delta (\cX \times [0,1]) $ be an arbitrary distribution and let $D = \{(x_1, y_1), \ldots, (x_n, y_n)\}_{(x_i, y_i) \sim \cD}$ be a sample of $n$ points from $\cD$.  Then for 
\[ n \ge O\left( \frac{d \log(k) + \log(1/\delta)}{\eps^2}\right),\]
\[
\Pr\left[\max_{h \in \cH}\left\vert \E_{(x,y)\sim \cD} [(y-h(x))^2] - \E_{(x,y) \sim D} [(y-h(x))^2] \right\vert \ge \epsilon]\right ] \le \delta. 
\]
\end{theorem}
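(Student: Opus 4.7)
The plan is to reduce this to a standard uniform convergence argument for a finite effective hypothesis class, where the effective size is controlled by a Natarajan-type growth function. The key observation is that although $h: \cX \to [1/k]$ and $y \in [0,1]$, the squared loss $\ell_h(x,y) = (y - h(x))^2$ is determined by $h$ only through the value $h(x) \in [1/k]$. Consequently, on any finite sample $S = \{(x_1, y_1), \ldots, (x_n, y_n)\}$, the induced loss vectors $(\ell_h(x_1, y_1), \ldots, \ell_h(x_n, y_n))$ range over at most as many distinct values as the prediction vectors $(h(x_1), \ldots, h(x_n))$ do, so it suffices to bound the growth function of $\cH$ itself.

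First, I would invoke the Natarajan lemma (the multiclass analogue of Sauer-Shelah): since $\ndim(\cH) = d$, for any set of $n$ points the number of distinct labelings produced by $\cH$ is bounded by $|\Pi_\cH(S)| \leq n^d (k+1)^{2d}$. Next, I would use the standard symmetrization step (introducing a ghost sample and Rademacher variables $\sigma_i \in \{\pm 1\}$) to reduce the two-sided uniform deviation to a Rademacher-average bound:
\[
\E\sup_{h \in \cH} \left| \E_{\cD}[\ell_h] - \E_S[\ell_h] \right| \leq 2 \E_{S, \sigma} \sup_{h \in \cH} \left| \frac{1}{n}\sum_{i=1}^n \sigma_i \ell_h(x_i, y_i) \right|.
\]
Conditional on $S$, the supremum on the right is taken over a set of at most $n^d(k+1)^{2d}$ distinct loss vectors, each entry of which lies in $[0,1]$. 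Massart's finite-class lemma (or Hoeffding combined with a union bound) then gives that this conditional Rademacher average is at most $O(\sqrt{(d \log(nk))/n})$.

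To convert from expected uniform deviation to a high-probability statement, I would apply McDiarmid's inequality to the functional $S \mapsto \sup_{h \in \cH} |\E_{\cD}[\ell_h] - \E_S[\ell_h]|$; each coordinate changes this quantity by at most $1/n$ since $\ell_h \in [0,1]$, so McDiarmid yields concentration with an additional additive term of $O(\sqrt{\log(1/\delta)/n})$. Combining the two bounds, with probability at least $1-\delta$ the uniform deviation is at most $O(\sqrt{(d \log(nk) + \log(1/\delta))/n})$, and setting this less than $\eps$ and solving for $n$ gives the claimed sample complexity $n \geq O((d \log k + \log(1/\delta))/\eps^2)$, where the $\log n$ factor is absorbed into the $O(\cdot)$ under the mild assumption that $n$ is polynomial in $d$, $k$, and $1/\eps$.

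The main obstacle is simply the bookkeeping to obtain the stated $d\log k$ dependence cleanly: one must carefully invoke Natarajan's lemma with the right constants and verify that the $\log n$ factor produced by the growth function truly absorbs into the $O(\cdot)$ when $n$ is chosen as in the conclusion. An alternative (and slightly cleaner) route avoiding a $\log n$ factor altogether would be to bound the Rademacher complexity via a chaining argument using the $L_2$-covering number of $\cH$, which in turn can be bounded in terms of $\ndim(\cH)$; this yields the tighter form of the bound without any residual $\log n$. Either route leads to the conclusion, but the chaining argument is what one needs to match the stated form exactly.
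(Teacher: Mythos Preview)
The paper does not prove this theorem at all: it is stated as a cited result from \cite{shalev2014understanding} (Shalev-Shwartz and Ben-David's textbook) and used as a black box in the proof of Theorem~\ref{thm:batch-generalization-formal}. So there is no ``paper's own proof'' to compare against.

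That said, your sketch is the standard route to this result and is essentially how it is established in the cited textbook: bound the growth function via Natarajan's lemma, symmetrize to a Rademacher average, apply Massart's finite-class lemma over the at most $n^d k^{O(d)}$ induced loss vectors, and convert to a high-probability statement via McDiarmid. Your identification of the residual $\log n$ factor is also accurate; the textbook statement typically carries a mild additional logarithmic factor (or resolves it via the implicit-equation trick you allude to), and the $O(\cdot)$ in the theorem as stated here should be read as absorbing such factors. Nothing in your outline is wrong, and either the direct Massart route or the chaining refinement you mention suffices.
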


\begin{lemma}\cite{shalev2014understanding}
\label{lem:ndim-bound}
Suppose we have $\ell$ binary classifiers from binary class $\cH_{\textup{bin}}$ and a rule $r: \{0,1\}^\ell \rightarrow [k]$ that determines a multiclass label according to the predictions of the $\ell$ binary classifiers. Define the hypothesis class corresponding to this rule as 
\[
\cH = \{ r(h_1(\cdot), \ldots, h_\ell(\cdot)) : (h_1, \ldots, h_\ell) \in (\cH_\text{bin})^\ell\}.
\]
Then, if $d = \textup{VCdim}(\cH_\textup{bin}),$
\[
\ndim(\cH) \le 3 \ell d \log (\ell d).
\]
\end{lemma}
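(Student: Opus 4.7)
The plan is to prove the bound $\ndim(\cH) \leq 3\ell d \log(\ell d)$ by a standard double-counting argument in the style of Sauer's lemma. Suppose, for the sake of deriving an upper bound, that $C \subseteq \cX$ with $|C| = m$ is Natarajan-shattered by $\cH$. The first step is to convert this shattering hypothesis into a lower bound on the size of the restriction $\cH|_C := \{h|_C : h \in \cH\}$. By the definition of Natarajan shattering, there exist $f_0, f_1 : C \to [k]$ with $f_0(x) \neq f_1(x)$ for every $x \in C$ such that for each of the $2^m$ subsets $B \subseteq C$, some $h_B \in \cH$ satisfies $h_B(x) = f_0(x)$ on $B$ and $h_B(x) = f_1(x)$ on $C \setminus B$. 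Because any two distinct subsets $B, B'$ differ on some $x \in C$ where $f_0(x) \neq f_1(x)$, the restrictions $h_B|_C$ and $h_{B'}|_C$ must also differ. Hence $|\cH|_C| \geq 2^m$.

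Next, I would bound $|\cH|_C|$ from above using the structure that each $h \in \cH$ is the image under the fixed rule $r$ of an $\ell$-tuple $(h_1, \dots, h_\ell) \in (\cH_{\text{bin}})^\ell$. The restriction $h|_C$ is entirely determined by the tuple of restrictions $(h_1|_C, \dots, h_\ell|_C)$. Applying Sauer's lemma to $\cH_{\text{bin}}$, whose VC-dimension is $d$, the number of distinct restrictions of a single $h_i$ to $C$ is at most $(em/d)^d$ (for $m \geq d$). Taking the $\ell$-fold product of the possibilities for each coordinate gives
\[
|\cH|_C| \;\leq\; \left(\frac{em}{d}\right)^{\ell d}.
\]

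Combining the two bounds yields $2^m \leq (em/d)^{\ell d}$. Taking $\log_2$ and using $\log_2(em/d) \leq 2 + \log_2 m$ (valid for $d \geq 1$), I obtain the implicit inequality $m \leq 2\ell d + \ell d \log_2 m$. The final step is a routine manipulation: substituting $m = 3\ell d \log(\ell d)$ into the right-hand side and verifying that $2\ell d + \ell d \log_2(3\ell d \log(\ell d)) < 3 \ell d \log(\ell d)$ for $\ell d \geq 2$ shows that any $m$ satisfying the implicit inequality also satisfies $m \leq 3 \ell d \log(\ell d)$; the edge cases $\ell d = 1$ or $m \leq d$ are handled directly. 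The main (mild) obstacle is pinning down the constant $3$ via this algebraic tightening; the conceptual content is entirely captured by steps one and two, the only subtle point being that the distinct-valued assumption $f_0(x) \neq f_1(x)$ is exactly what forces different subsets $B$ to produce different restrictions.
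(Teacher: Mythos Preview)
The paper does not give its own proof of this lemma; it is stated as a citation to \cite{shalev2014understanding} and invoked as a black box in the generalization argument for the batch setting. Your proposal is the standard textbook argument (lower-bound the number of restrictions via the Natarajan-shattering witness, upper-bound it via Sauer's lemma applied coordinatewise, then solve the resulting implicit inequality), and it is correct in outline. The only place to be careful is the final constant-chasing step: your quick check that $2\ell d + \ell d \log_2(3\ell d\log(\ell d)) < 3\ell d\log(\ell d)$ does not actually go through for very small values of $\ell d$ (try $\ell d = 2$), so you would need either a slightly sharper form of Sauer's bound or a direct handling of the small-$\ell d$ regime to nail down the constant $3$ exactly. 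This is bookkeeping rather than a conceptual gap.
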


\subsubsection{Generalization Proof}

First, we show that the models generated by the COLLABORATE algorithm may be written as decision rules over a polynomial number of binary predictors. 

\begin{lemma}
\label{lem:decision-rule}
    Let $K$ be an upper bound on the number of rounds that \lsboost\ ever runs for, and let $r$ be a round of COLLABORATE. Then we can write the player's model $\playerModel{r}$ at round $r$ as a decision rule $\playerDecisionRule{r}:\{0,1\}^\ell \rightarrow [1/m]$ over $\ell \le m + rKm^3$ binary predictors. Assuming that $\cH_J$ is closed with respect to $\cH_A$ and $\cH_B$, each of these binary predictor $g: \cX \rightarrow \{0,1\}$ will be a mapping from the full feature space $\cX = (\cX_A, \cX_B)$ induced by a function $h \in \cH_J$.
\end{lemma}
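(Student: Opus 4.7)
The plan is to proceed by induction on the COLLABORATE round index $r$, maintaining the invariant that the active player's model $\playerModel{r}$ can be written as a decision rule $\playerDecisionRule{r}: \{0,1\}^{\ell} \to [1/m]$ over a collection of binary predictors, each of the form $g(x) = \1[h(x) \ge \tau]$ for some $h \in \cH_J$ and threshold $\tau$. The closure of $\cH_J$ with respect to $\cH_A$ and $\cH_B$ ensures that every hypothesis produced by either player's regression oracle extends to a function on the full feature space $\cX$, so all the binary predictors that arise are legitimately induced by $\cH_J$. The bound $\ell \le m + rKm^3$ then follows by counting new predictors introduced at each level of the recursion.

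For the base case ($r=0$), Bob's initial model is $f^0_B = \round(h^0_B; m)$ with $h^0_B \in \cH_B$, and the rounded value is determined by the $m$ threshold queries $\1[h^0_B(x) \ge v]$ for $v \in [1/m]$. These $m$ binary predictors suffice, and the decision rule $\rho^0$ simply reads off the grid point on which the thresholds flip. For the inductive step, suppose the other player's model $f^{r-1}_\circ$ is already expressible as a decision rule over the existing binary predictors. CROSS-BOOST then partitions $\playerSample$ into at most $m$ level sets of $f^{r-1}_\circ$ (the decision rule can branch on which level set $x$ falls into with no new predictors), and on each level set either uses the other player's constant value (no new predictors) or invokes LS-BOOST. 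LS-BOOST runs for at most $K$ sub-rounds, and each sub-round introduces at most $m^2$ new hypotheses in the active player's class; with a constant number of threshold queries per hypothesis used to reconstruct its rounded contribution to the ensemble, each sub-round contributes $O(m^2)$ new binary predictors. Summing over $K$ sub-rounds and $m$ outer level sets yields at most $K m^3$ new binary predictors per CROSS-BOOST round, and combining with the base case gives the stated $\ell \le m + rKm^3$.

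The main obstacle is constructing the decision rules $\playerDecisionRule{r}$ explicitly so that the recursive evaluation --- which first branches on the other player's level set, then iterates $K$ sub-rounds of LS-BOOST in which each sub-round's ensemble depends on the previous sub-round's rounded prediction --- can be carried out using only the enumerated threshold queries as input. The enabling observation is that at any given input only one ``active'' branch is traversed at each level of the recursion, so the decision rule can first decode the active indices $v$ and $v'$ from already-queried binary predictors and then read off the rounded value of the single relevant hypothesis $h^{r,v,k,v'}$ from the thresholds allocated to it. The closure hypothesis on $\cH_J$ is used precisely at this step to guarantee that hypotheses originating in $\cH_A$ on $\cX_A$ (or $\cH_B$ on $\cX_B$) can be viewed as functions on the full joint domain $\cX$, so every binary predictor is faithfully induced by a member of $\cH_J$.
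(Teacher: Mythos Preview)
Your approach is essentially the same as the paper's: a double induction (outer over COLLABORATE rounds $r$, inner over LS-BOOST sub-rounds $k$), building the decision rule $\playerDecisionRule{r}$ recursively from $\playerDecisionRule{r-1}$ and counting how many new binary predictors each layer introduces, with closure of $\cH_J$ used exactly as you describe to lift hypotheses in $\cH_A$ or $\cH_B$ to functions on the full domain.

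The one difference worth noting is the choice of primitive binary predictor. You take threshold indicators $g(x)=\1[h(x)\ge\tau]$; the paper instead takes rounded-equality indicators $g^{v}(x)=\1[\round(h(x);m)=v]$ directly. These are interconvertible (a rounded-equality indicator is the conjunction of two thresholds), so the distinction is cosmetic for the purposes of this lemma, but it affects the bookkeeping: your phrase ``a constant number of threshold queries per hypothesis'' is not literally correct, since recovering the rounded value of a single hypothesis to the grid $[1/m]$ (or $[1/m^2]$ inside LS-BOOST) requires one threshold per grid point, not $O(1)$. The paper sidesteps this by making the rounded-equality indicator itself the atomic predictor, so that ``number of hypotheses $\times$ number of grid values'' is the count per sub-round. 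You arrive at the right $Km^3$-per-round tally, but the justification should be that each sub-round contributes (number of level-set hypotheses) $\times$ (grid size) predictors, not (number of hypotheses) $\times O(1)$. Also note that the paper's choice matters downstream: the generalization argument (Lemma~\ref{lem:VCtoPseudo}) is stated for the class $\inducedLS$ of rounded-equality indicators, so the paper's version of the predictors plugs directly into that bound.
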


\begin{proof}

We proceed by induction, first showing that $f^r$ may be written as a decision rule over classifiers and then arguing that the number of total classifiers is bounded by $m + rKm^3$.

\paragraph{Base Case}
Consider the following $m$ binary classifiers, $g^{0,v}: \cX \rightarrow \{0,1\}$ defined for each $v \in [1/m]$ and $x = (\aliceData, \bobData) \in \cX$:
\begin{align*}
g^{0,v}(x) &= \begin{cases}
    1 & \text{if } \bobModel{0}(\bobData) = v,\\
    0 & \text{else}
\end{cases} \\
&= \begin{cases}
    1 & \text{if } \round(h^0_B;m)(\bobData) = v,\\
    0 & \text{else}
\end{cases} 
\end{align*}

We can then write the following decision rule
\[
\rho^0(\{g^{0,v}\}_{v \in [1/m]})(x) = \arg\max_{v \in [1/m]} v \cdot \1[g^{0,v}(x)=1] = \bobModel{0}(\bobData),
\]

\noindent which exactly reconstructs the starting model.

\paragraph{Induction step} Say that at round $r$ Bob has played, and his model $f^r_B$ may be written as a decision rule $\rho^r$. We will now show that Alice's model $f^{r+1}_A$ may be written as a decision rule recursively defined in terms of $\rho^r$. First, we will will fix $v$, and consider what happens internally to $\lsboost$:

\begin{quote}
\textbf{Base case} Consider the initial round of $\lsboost$, when $k=0.$ For each $v' \in [1/m]$, let $g^{r+1,v,0,v'}: \cX \rightarrow \{0,1\}$ be a classifier
\begin{align*}
g^{r+1,v,0,v'}(x) &= \begin{cases}
1 & \text{if } f_A^{r+1,v,0}(x_A)=v', \\
0 & \text{else, }
\end{cases} \\
 &= \begin{cases}
1 & \text{if } \round(h_A^{r+1,v,0};m)(x_A)=v', \\
0 & \text{else. }
\end{cases}
\end{align*}

As in our base case for the analysis for \crossboost, we can rewrite $f_A^{r+1,v,0}$ as a decision rule $\rho^{r,v,0}$ in terms of $g^{r,v,0,v'}$:
\[
\rho^{r,v,0}\left(\{g^{r+1,v,0,v'}\}_{v'\in [1/m]}\right)(x) = \arg\max_{v' \in [1/m]} v' \cdot \1[g^{r+1,v,0,v'}(x)=1]=\aliceModelBoost{r+1}{0}(x_A).
\]
\textbf{Induction step for \lsboost} Say that the claim holds at round $k$ of $\lsboost$, i.e.\ that there is a decision rule $\rho^{r+1,v,k}$ such that $\rho^{r+1,v,k}= \aliceModelBoost{r+1}{k}.$ Let $v', i \in [1/m]$ and define the following $m^2$ binary classifiers 
\[
\aliceG(x) = \begin{cases}
1 & \text{if } \round(\aliceOracleModel{k+1}(x_A);m)=i, \\
0 & \text{else}.
\end{cases}
\]

Then, we can write
\begin{align*}
\rho^{r+1,v,k+1}(\rho^{r+1,v,k}, \{\aliceG&\}_{(v',i)\in [1/m]})(x) \\
&= \sum_{(v',i) \in [1/m]} i \cdot \1[\rho^{r+1,v,k}(x)=v']\1[\aliceG(x)=1], \\
&= \sum_{v' \in [1/m]} \1[\aliceModelBoost{r+1}{k}(\aliceData)=v'] \cdot \sum_{i \in [1/m]} i
 \cdot \1[\round(\aliceOracleModel{k+1}(x_A);m)=i] \\
 &= \sum_{v' \in [1/m]} \1[\aliceModelBoost{r+1}{k}(\aliceData)=v'] \cdot \round(\aliceOracleModel{k+1}(x_A);m),\\
 &= \aliceModelBoost{r+1}{k+1}(\aliceData),
\end{align*}
\noindent which concludes the induction internal to \lsboost.

Following the induction argument in \cite{globus-harris2023multicalibration}, $\rho^{r+1,v,k+1}$ is a decision rule over a total of $m + (k+1)m^2$ classifiers. 
\end{quote}

\noindent Now, we wish to show that $\aliceModel{r+1}$ may be written as a decision rule $\rho^{r+1}$. Recall that in $\crossboost$, on each level set of Bob's prediction, the updated model $f^{r,v}$ will \textit{either} be equivalent to Bob's predictions or a model output by \lsboost\ will be evaluated on the point. Let $V_1 \subseteq [1/m]$ be the collection of level sets at round $r$ where Alice's updated model was equivalent to $\tilde{f}^{r+1,v}$ and let $V_2$ be the collection of level sets where her model used Bob's predictions. I.e.,

\begin{align*}
V_1 &= \{v \in [1/m]: f_A^{r+1,v}(\aliceData)=\tilde{f}_A^{r+1,v}(\aliceData)\}\\
V_2 &= \{v \in [1/m]: f_A^{r+1,v}(\aliceData)=v\}.
\end{align*}

Note that $[1/m] = V_1 \cup V_2$ and the two sets are disjoint. For $v \in V_1$, let $K_v$ be the total number of rounds that \lsboost\ ran for, and define

\[
\rho^{r,v} = \begin{cases}
    \rho^{r,v,K_v} &\text{if }v \in V_1 \\
    \rho^r &\text{if }v \in V_2.
\end{cases}
\]

Then we can write 

\begin{align*}
    \rho^{r+1}\left(\rho^r, \left\{\rho^{r,v}\right\}_{v\in[1/m]}\right)(x) &= \sum_{v \in [1/m]} \1[\rho^r(x)=v] \cdot \rho^{r,v}(x)\\
    &= \sum_{v \in V_1} \1[\rho^r(x)=v]\rho^{r,v,K_v}(x) + \sum_{v \in V_2} \1[\rho^r=v]\rho^r \\
    &= \sum_{v \in V_1} \1[x \in S^{r+1,v}]\cdot\tilde{f}(x_A) + \sum_{v \in V_2} \1[x \in S^{r+1,v}]\cdot v \\
    &= f^{r+1}_A.
\end{align*}

In other words, Alice's model at round $r+1$ may be written as a decision rule recursively defined in terms of her decision rules from \lsboost\ on the level sets where these models are used and on Bob's decision rule $\rho^r$. 

We now need to give an upper bound for the number of binary predictors which $\rho^r$ is comprised of. Let $K$ be the maximum number of rounds that \lsboost\ ever runs for. Note $\rho^0$ is made up of $m$ classifiers, and say that $\rho^r$ is made up of at most $m+r(m+Km^2)m$ classifiers. Note that for any $v \in V_2$ no new classifiers will be invoked. So in the worst case, $V_1 = [1/m]$, i.e. for each of Alice's level sets on Bob's predictor, \lsboost\ is invoked. Each of these runs will add at most $m+Km^2$ classifiers to the decision rule, so in total there will be at most $m(m+Km^2)$ new classifers added to the decision rule. Hence, $\rho^{r+1}$ will be comprised of at most 

\begin{align*}
\ell &= m + r(m+Km^2)+(m+Km^2)m \\
&= m + (r+1)m(m+Km^2) \\
&\le m + (r+1)(K+1)m^3
\end{align*}

\noindent classifiers.
\end{proof}

\begin{lemma}[The VC dimension of $\inducedLS$ is bounded by the pseudodimension of $\cH_J$]
\label{lem:VCtoPseudo}
Let $\inducedLS$ be the class of Boolean classifiers induced by $\round(h(x);m)$ for $h \in \cH_J$. I.e., for any $g \in \inducedLS$ there must be some $v \in [1/m]$ such that 
\[
g(x) = \begin{cases}
1 & \text{if } \round(h(x);m)=v,\\
0 & \text{else.}
\end{cases}
\]

\noindent Let $d'$ be the VC dimension of $\inducedLS$, and let $d$ be the pseudodimension of $\cH_J$. Then $d'<d$.

\end{lemma}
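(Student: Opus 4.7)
Plan: I would proceed by contradiction, supposing $\inducedLS$ shatters a set $\{x_1, \ldots, x_{d'}\}$ with $d' \geq d = \pdim(\cH_J)$, and produce a pseudo-shattered set of size at least $d+1$ for $\cH_J$. The first step is to unpack the definition: each $g \in \inducedLS$ is determined by some pair $(h, v)$ with $h \in \cH_J$ and $v \in [1/m]$, and $g(x) = 1$ iff $\round(h(x); m) = v$, which is equivalent to $h(x)$ lying in the interval $I_v := [v - 1/(2m), v + 1/(2m))$. Shattering thus produces, for every labeling $b \in \{0,1\}^{d'}$, a witness $(h_b, v_b)$ such that $h_b(x_i) \in I_{v_b}$ iff $b_i = 1$.

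The next step is to convert shattering by intervals into pseudo-shattering by thresholds. I would note that $h_b(x_i) \in I_{v_b}$ is equivalent to the conjunction $h_b(x_i) \geq v_b - 1/(2m)$ and $h_b(x_i) \leq v_b + 1/(2m)$, so each positive witness simultaneously supplies two threshold conditions at the same point. By a pigeonhole argument over the $m+1$ possible values of $v_b$, a sub-collection of at least $2^{d'}/(m+1)$ of the witnesses share a common center $v^* \in [1/m]$; restricted to those labelings, the shattering is realized by interval conditions with a fixed endpoint $y_i \equiv v^* - 1/(2m)$. This sub-collection translates directly into a pseudo-shattering of $\{x_1, \ldots, x_{d'}\}$ against those thresholds, and the labelings not witnessed in the sub-collection can be filled in using constant functions from $\cH_J$ (available by Assumption~\ref{assumption:constant-functions}), which can be placed arbitrarily above or below any threshold.

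The main obstacle is squeezing the strict inequality $d' < d$ out of the argument, rather than the weaker $d' \leq d$ that a direct interval-to-threshold reduction gives. The plan for strictness is to exploit that every positive witness $h_b$ carries two threshold constraints per sample point, not just one: specifically, I would augment the pseudo-shattered set with an additional pair $(x_j, v^* + 1/(2m))$ for some duplicate index $j$. On this augmented threshold the positive witnesses (with $v_b = v^*$) take a determined sign opposite to the one at $(x_j, v^* - 1/(2m))$, while a constant function in $\cH_J$ can realize the complementary sign pattern. Combined with the previously constructed thresholds, this yields a pseudo-shattered set of size $d' + 1$, contradicting $\pdim(\cH_J) = d \leq d'$. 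The delicate part of this step is verifying that, across the filled-in labelings from the pigeonhole sub-collection, the constant-function patches are consistent with both the original thresholds and the new one; this is where I expect the bulk of the technical care to be needed.
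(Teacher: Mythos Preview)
Your plan has two genuine gaps.

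First, the ``filling in with constants'' step in paragraph two does not work. After pigeonhole you have at most $2^{d'}/(m+1)$ labelings witnessed by pairs $(h_b, v^*)$ with a common center, and you propose to realize the remaining labelings against the thresholds $y_i = v^* - 1/(2m)$ using constant functions in $\cH_J$. But all of your thresholds $y_i$ are equal, so any constant $c$ satisfies $c > y_i$ for all $i$ simultaneously or for none of them; constant functions can realize only the all-ones and all-zeros patterns. You are short roughly $2^{d'}(1 - 1/(m+1)) - 2$ labelings, and nothing in your plan supplies them. Without all $2^{d'}$ patterns you do not have a pseudo-shattered set of size $d'$, let alone $d'+1$.

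Second, the strictness device in paragraph three cannot succeed even in principle. You propose to add a duplicate point $x_j$ with a second threshold $y_j' = v^* + 1/(2m)$ in addition to $y_j = v^* - 1/(2m)$. But for any $h$, if $h(x_j) > y_j'$ then automatically $h(x_j) > y_j$, so the sign pattern $(b_j, b_j') = (0,1)$ is unrealizable by any function whatsoever. A set containing a repeated point with two ordered thresholds can never be pseudo-shattered, so this route cannot produce a pseudo-shattered set of size $d'+1$.

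The paper takes a structurally different route that avoids both issues: it introduces the intermediate class $\inducedThreshold$ of \emph{threshold} indicators $\1[\round(h(x);m) \ge v]$, argues that $\VCdim(\inducedLS)$ is bounded by $\VCdim(\inducedThreshold)$, and then connects $\inducedThreshold$ to the pseudodimension directly, using that pseudo-shattering is by definition shattering by threshold classifiers. The point is that thresholds, unlike your interval conditions, translate to pseudo-shattering without any pigeonhole or patching.
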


\begin{proof}
Let $d'$ be the VC dimension of $\inducedLS$, and let $d$ be the pseudodimension of $\cH_J$. First, consider the richer hypothesis class of the set of linear thresholds induced by $\round(h(x);m)$. We will call this class $\inducedThreshold$: i.e., for any $g \in \inducedThreshold$ there must be some $v \in [1/m]$ such that 
\[
g(x) = \begin{cases}
1 & \text{if } \round(h(x);m) \ge v,\\
0 & \text{else.}
\end{cases}
\]
Note that any function in $\cG_{\cH_J, m}^\le$ can be written as an (infinite) disjunction over functions in $\cG_{\cH_J, m}$. Hence, the VC dimension of $\cG_{\cH_J, m}^\le$, which we will call $d''$, must be greater than $d'$.

We will now show that the pseudodimension of $\cH_J$, $d$, bounds $d''$. Say for contradiction that it doesn't, and that $d < d''$. Since $d''>d$, it must be the case that any $d+1$ points in $\cX$ are shattered by some $g \in \inducedThreshold$. Say that the labels induced by $g$ on these $d+1$ points are $(b_1, \ldots, b_{d+1})$. By construction of $\inducedThreshold$, there must be some $v \in [1/m]$ such that $\round(h(x_i); m) \rightarrow b_i = 1$. From the definition of $\round$, this means there is some $i$ such that $h(x_i) > i \Leftrightarrow b_i = 1$. But this is the definition of pseudo-shattering, and hence $\cH_J$ must pseudo-shatter the $d+1$ points. Hence by contradiction $d'' < d$ and

\[
d' < d'' < d.
\]
\end{proof}

\begin{lemma}[Bound on Natarajan dimension of $\cF^R$]
\label{lem:ndim}
Let $\cF^R$ be the class of models that are output by Algorithm \ref{protocol:batch} after $r$ rounds, and let $d$ be the pseudodimension of $\cH_J$. Then, 
\[
\ndim(\cF^R) \le 3(m + m^7)d \log\left((m + m^7)d\right)
\]
\end{lemma}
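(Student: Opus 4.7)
The plan is to assemble the bound directly from three earlier ingredients in the paper: the decision-rule representation of models in $\cF^R$ (Lemma \ref{lem:decision-rule}), the halting guarantees for both the outer collaboration loop and the inner \lsboost\ calls (Theorem \ref{thm:batch-convergence}), and the generic Natarajan-dimension bound for decision rules over binary classifiers (Lemma \ref{lem:ndim-bound}), together with the VC-to-pseudodimension bridge (Lemma \ref{lem:VCtoPseudo}).

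First I would invoke Lemma \ref{lem:decision-rule} to write any $f \in \cF^R$ as a decision rule $\rho^R$ over $\ell \le m + R K m^3$ binary predictors, each of which lies in the class $\inducedLS$ of Boolean classifiers induced from $\cH_J$ by the rounding operator. Next, I would bound the two parameters that determine $\ell$: by Theorem \ref{thm:batch-convergence} the outer COLLABORATE loop halts within $R \le m^2$ rounds, and every internal call to \lsboost\ halts within $K \le m^2$ (sub)rounds. Substituting gives the clean upper bound
\[
\ell \;\le\; m + R K m^3 \;\le\; m + m^2\cdot m^2 \cdot m^3 \;=\; m + m^7.
\]

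Second, I would bound the VC dimension of the underlying binary class. By Lemma \ref{lem:VCtoPseudo}, $\VCdim(\inducedLS) \le d$, where $d=\pdim(\cH_J)$. Now the structural lemma (Lemma \ref{lem:ndim-bound}) applies directly: since every $f \in \cF^R$ is expressible as a fixed decision rule over at most $\ell \le m + m^7$ binary classifiers drawn from $\inducedLS$, we obtain
\[
\ndim(\cF^R) \;\le\; 3\,\ell\, d\, \log(\ell\, d) \;\le\; 3\,(m+m^7)\, d\, \log\!\bigl((m+m^7)\, d\bigr),
\]
which is exactly the claimed bound.

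There is no real obstacle here beyond bookkeeping; the only subtlety is to make sure the decision rule representation in Lemma \ref{lem:decision-rule} composes correctly across both levels of recursion (the outer round index $r$ and the inner \lsboost\ depth index $k$), so that \emph{every} model produced by Algorithm \ref{protocol:batch} — regardless of which branches of the $V_1/V_2$ split are taken on each level set — fits within a single fixed decision-rule template of size $m + R K m^3$. Once that is verified, the rest is a direct application of Lemma \ref{lem:ndim-bound} with the worst-case values $R=K=m^2$.
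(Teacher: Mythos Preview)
Your proposal is correct and follows essentially the same approach as the paper's own proof: invoke Lemma~\ref{lem:decision-rule} for the decision-rule representation with $\ell \le m + RKm^3$ predictors in $\inducedLS$, bound $\VCdim(\inducedLS)\le d$ via Lemma~\ref{lem:VCtoPseudo}, apply Lemma~\ref{lem:ndim-bound}, and then plug in the halting bounds $R,K\le m^2$ from Theorem~\ref{thm:batch-convergence}. The only cosmetic difference is that you substitute $R=K=m^2$ before applying Lemma~\ref{lem:ndim-bound}, whereas the paper does it afterward.
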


\begin{proof}
Let $r$ be the number of outer rounds that COLLABORATE runs for and let $K$ be an upper bound on any internal run of \lsboost. We combine the results of Lemmas \ref{lem:decision-rule} and \ref{lem:VCtoPseudo}:  In Lemma \ref{lem:decision-rule}, we showed that $f^R$ may be written as a collection of decision rules over no more than $\ell=m + RKm^3$ predictors in $\inducedLS$. Let $d'=\VCdim(\inducedLS)$. Plugging this in to Lemma \ref{lem:ndim-bound} and using the bound from Lemma \ref{lem:VCtoPseudo},
\begin{align*}
\ndim(\cF^R) &\le 3(m + RKm^3)d' \log((2m + RKm^3)d')\\
&\le 3(m + RKm^3)d \log((m + RKm^3)d) &&\text{(By Lemma \ref{lem:VCtoPseudo})}
\end{align*}

We know from Theorem \ref{thm:batch-convergence} that Algorithm \ref{protocol:batch} will converge after no more than $R\le m^2$ rounds and the internal runs of \lsboost\ will run for no more than $m^2$ rounds. Plugging these in as bounds on $K$ and $R$, we get 
\[
\ndim(\cF^R) \le 3(m + m^7)d \log\left((m + m^7)d\right).
\]
\end{proof}

We now have all the components to prove our generalization theorem. 

\begin{proofof}{Theorem \ref{thm:batch-generalization-formal}} This follows directly from Theorem \ref{thm:uc} and Lemma \ref{lem:ndim} and suppressing the smaller terms.
\end{proofof}

\section{Details from Section \ref{sec:bayes}} 
\label{app:bayes}

\lemma{Let $\cH$ be a class of real-valued functions $h : \mathcal{X} \to \mathbb{R}$.
Let $y : \mathcal{X} \to \mathcal{Y}$ be a fixed labeling function, and fix a label $v \in \mathcal{Y}$. Let $\cH_{A}^*$ be defined such that for each $h \in \cH_{A}$, the corresponding function $h^* \in \cH_{A}^*$ is given by:
\[
h^*(x) = h(x) \cdot \mathbf{1}[y(x) = v].
\]

Then, 
\[
C_{\cH_{A}^{*}}^{\epsilon} \leq C_{\cH_{A}}^{\epsilon}
\]
In other words, for any scale $\epsilon$, the fat-shattering dimension of $\cH_{A}^{*}$ is at most the fat-shattering dimension of $\cH_{A}$. \label{lem:shattering_bound} 
}

\begin{proof}
Let $S = \{x_1, \dots, x_n\} \subseteq \mathcal{X}$ be a set of size $n$ that is $\epsilon$-shattered by $\cH_{A}^*$. That is, there exists a witness vector $\vec{r} = (r_1, \dots, r_n) \in \mathbb{R}^n$ such that for every binary vector $\vec{b} \in \{0,1\}^n$, there exists a function $h^* \in \cH_{A}^*$ satisfying:
\[
\forall i \in [n], \quad
\begin{cases}
h^*(x_i) > r_i + \epsilon & \text{if } b_i = 1, \\
h^*(x_i) < r_i - \epsilon & \text{if } b_i = 0.
\end{cases}
\]

But for any $h^* \in \cH_{A}^*$, we have $h^*(x) = h(x) \cdot \mathbf{1}[y(x) = v]$ for some $h \in \cH_{A}$. Therefore, $h^*(x_i) = 0$ whenever $y(x_i) \neq v$. In particular, if $x_i$ has $y(x_i) \neq v$, then the above inequalities cannot hold for any $r_i$ with nonzero margin $\epsilon$.

Hence, only points $x_i$ with $y(x_i) = v$ can be involved in the $\epsilon$-shattering. Let $S_v = \{x_i \in S : y(x_i) = v\}$. Then the shattering must occur over $S_v$, and the effective shattering occurs only over this subset.

Note that by construction, for each $h^* \in \cH_{A}^*$, there is an $h \in \cH_{A}$ such that $h^*(x_i) = h(x_i)$ for all $x_i \in S_v$. So the class $\mathcal{H}$, restricted to $S_v$, can realize the same shattering. Therefore:
\[
C_{\cH_{A}^{*}}^{\epsilon} \leq C_{\cH_{A}|S_{v}}^{\epsilon} \leq C_{\cH_{A}}^{\epsilon}
\]    
\end{proof}

\subsection{Proof of Lemma \ref{lem:bayes-expected-regret}}
\begin{proof}[Proof of Lemma \ref{lem:bayes-expected-regret}]
    Consider a modified interaction under Protocol \ref{alg:bayes-collaboration} where, at each day in round $j$ (if the conversation reaches round $j$), the outcome is resampled according to the information seen by Alice so far: $y' \sim \cD_y| x_A^t, \pi^{t-1}, C^t_{j-1}, p_B^{t, j}$. Let $\hat{\pi}^j$ be the transcript from this interaction. 
    
    First, we will show that $\Pr_{\cD}[\pi] = \Pr_{\cD}[\hat{\pi}^{j}]$, where $\pi$ is the transcript under the unmodified Protocol \ref{alg:bayes-collaboration}.
    
    Let $\hat{\pi}^{1:t,j}$ denote the transcript of this interaction up to day $t$. Note that this is distinct from $\bar{\pi}^{t,j}$, which denotes the transcript of an interaction only on day $t$ where the resampling only occurs in round $j$.
    We will proceed via induction over days. 
    \begin{itemize}[leftmargin=3ex]
        \item \textbf{Base Case}:  $\Pr_{\cD}[\pi^{1:1}] = \Pr_{\cD} [\hat{\pi}^{1:1,j}]$. 
        
        \textit{Proof}: 
        On day $t=1$, we have $\Pr_{\cD}[\pi^{1}] = \Pr_{\cD}[\bar{\pi}^{1,j}]$, by Lemma~\ref{lem:resampling-transcript}. Note that $\bar{\pi}^{1,j} = \bar{\pi}^{1:1,j} = \hat{\pi}^{1:1,j}$, and therefore $\Pr_{\cD}[\pi^{1:1}] = \Pr[\hat{\pi}^{1:1,j}]$.
        \item \textbf{Inductive Step}:  If $\Pr_{\cD}[\pi^{1:t}] = \Pr_{\cD}[\hat{\pi}^{1:t,j}]$, then $\Pr_{\cD}[\pi^{1:t+1}] = \Pr_{\cD}[\hat{\pi}^{1:t+1,j}]$.
        
        \textit{Proof}: 
        Observe that the state of the model algorithm in any round $t+1$ is a function only of the algorithm $M$ and the transcript until that round: $\pi^{1:t}$ or $\bar{\pi}^{1:t}$.
        By the Inductive Hypothesis, $\Pr_{\cD}[\pi^{1:t}] = \Pr_{\cD}[\hat{\pi}^{1:t,j}]$ -- and consequently, since the model algorithm $M$ is the fixed between both interactions, therefore,  $\Pr_{\cD}[\pi^{t+1,j}] = \Pr_{\cD}[\bar{\pi}^{t+1,j}]$.
        By Lemma~\ref{lem:resampling-transcript}, this is equal to $\Pr_{\cD}[\pi^{t+1}]$. As $\Pr_{\cD}[\hat{\pi}^{1:t,j}] = \Pr_{\cD}[\pi^{1:t}]$ and $\Pr_{\cD}[\bar{\pi}^{t+1,j}] = \Pr_{\cD}[\pi^{t+1}]$, we have that $\Pr_{\cD}[\pi^{1:t+1}] = \Pr_{\cD}[\hat{\pi}^{1:t+1,j}]$.
    \end{itemize}
    
    Now, all that remains to show is that Alice's sequence of predictions in $\hat{\pi}(j)$ has low expected regret with respect to $h$.
    Recall that Alice is a Bayesian Learner (Definition \ref{def:bayesian-learner}), which means that her prediction in round $k$ is deterministic after round $k-1$, and is the posterior mean of the distribution conditioned on the transcript up to day $t-1$, their features on day $t$, and the conversation of day $t$ through round $k-1$. 
    Since squared error is a proper scoring rule, it follows that predicting the mean of the sampling distribution, as Alice does, has lower squared error than predicting any other post-processing of the information available to her, and in particular, the function $h \in \cH_A$, which is defined only on Alice's features $x_A$, a subset of the information she has conditioned on. 
    Therefore, it follows that a perfect Bayesian will have 0 regret with respect to the swap function over her $[\frac{1}{m}]$ level sets defined by the $m$ fixed functions in $\cH_A$, notated as $\{h_0, h_{\frac{1}{m}}, \ldots, h_1\}$.
    \begin{align*}
\mathbb{E}_{\cD} [ (\hat{y} - y)^2] \leq
        \mathbb{E}_{\cD} \left[\sum_{v} \mathbb{I}[\hat{y} = v](h_v(x) - y)^{2} \right].
    \end{align*}
    However, since Alice and Bob are not \emph{perfect} Bayesians in Protocol \ref{alg:bayes-collaboration}, but instead round their prediction to the nearest multiple of $\frac{1}{m}$, their expected regret with respect to $h$ will depend on this discretization. 
    \begin{align*}
        \E_{\cD}[(\bar{y} - y)^2] &= \E_{\cD} [(\hat{y} - y + \bar{y} - \hat{y}))^2] \\
        &= \E_{\cD}[(\hat{y} - y)^2 + (\bar{y} - \hat{y})^2 + 2 (\hat{y} - y) (\bar{y} - y)].
    \end{align*}
    Since $|\hat{y} - y| < \frac{1}{m}$ and $\E_{\cD}[ 2 (\hat{y} - y) (\bar{y} - y) ] = 0$, 
    we have
    \begin{align*}
       \E_{\cD}[(\bar{y} - y)^2]  \leq \E_{\cD}[(\hat{y} - y)^2] + \frac{1}{m^2}. 
    \end{align*}
    We have shown the claim for an arbitrary set of $m$ functions in $\cH_A: \{h_0, h_{\frac{1}{m}}, \ldots, h_1\}$, and can thus conclude that it holds for any swap function with respect to $\cH_A$.
\end{proof}

\begin{theorem}[Azuma's Inequality] \label{thm:azuma}
    Let $\{ X_0, X_1, \ldots \}$ be a martingale sequence such that $|X_{i+1} - X_i| < c$ for all $i$, then,
    \begin{align*}
        \Pr[X_n - X_0 \geq \epsilon] \leq \exp{ \left(- \frac{\epsilon^2}{2c^2n} \right)}.
    \end{align*}
\end{theorem}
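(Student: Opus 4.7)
The plan is to prove Azuma's inequality via the standard Chernoff-style exponential moment method combined with a conditional bound on the moment generating function of each martingale difference. Let $Y_i = X_i - X_{i-1}$ denote the martingale differences, which satisfy $\E[Y_i \mid \cF_{i-1}] = 0$ and $|Y_i| < c$ by assumption, where $\cF_i$ denotes the natural filtration. Then $X_n - X_0 = \sum_{i=1}^n Y_i$, and for any $\lambda > 0$, Markov's inequality applied to $e^{\lambda(X_n - X_0)}$ gives
\[
\Pr[X_n - X_0 \geq \epsilon] \leq e^{-\lambda \epsilon} \, \E\!\left[e^{\lambda \sum_{i=1}^n Y_i}\right].
\]

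First I would bound the moment generating function of each $Y_i$ conditional on $\cF_{i-1}$. Since $Y_i$ is bounded in $[-c, c]$ and has conditional mean zero, Hoeffding's lemma yields $\E[e^{\lambda Y_i} \mid \cF_{i-1}] \leq e^{\lambda^2 c^2 / 2}$. The proof of Hoeffding's lemma itself uses the convexity of $e^{\lambda x}$ to write $e^{\lambda y} \leq \frac{c-y}{2c} e^{-\lambda c} + \frac{c+y}{2c} e^{\lambda c}$ for $y \in [-c,c]$, takes conditional expectations (eliminating the linear term), and bounds the resulting expression by $e^{\lambda^2 c^2/2}$ through a Taylor expansion of the logarithm. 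Next, I would peel off the sum one term at a time using the tower property of conditional expectation:
\[
\E\!\left[e^{\lambda \sum_{i=1}^n Y_i}\right] = \E\!\left[e^{\lambda \sum_{i=1}^{n-1} Y_i} \cdot \E[e^{\lambda Y_n} \mid \cF_{n-1}]\right] \leq e^{\lambda^2 c^2/2} \cdot \E\!\left[e^{\lambda \sum_{i=1}^{n-1} Y_i}\right].
\]
Iterating this inequality $n$ times gives $\E[e^{\lambda(X_n - X_0)}] \leq e^{n \lambda^2 c^2 / 2}$.

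Combining the two bounds, $\Pr[X_n - X_0 \geq \epsilon] \leq \exp(-\lambda \epsilon + n \lambda^2 c^2 / 2)$. Optimizing the right-hand side over $\lambda > 0$ by setting $\lambda = \epsilon/(n c^2)$ yields the claimed bound $\exp(-\epsilon^2/(2 c^2 n))$. There is no real obstacle here: the only nontrivial ingredient is Hoeffding's lemma, which is a standard convexity argument, and everything else is a routine application of the tower property and optimization of a quadratic in $\lambda$. The strict inequality $|Y_{i+1} - Y_i| < c$ (rather than $\leq$) is inessential since Hoeffding's lemma only needs boundedness.
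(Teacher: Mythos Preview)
Your proof is the standard and correct argument for Azuma's inequality via the Chernoff bound and Hoeffding's lemma. The paper does not actually supply a proof of this theorem: it is stated as a known result and used as a black box (immediately followed by a corollary obtained by setting parameters). So there is nothing in the paper to compare against, and your proposal would serve as a complete, self-contained proof should one be desired.
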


An immediate corollary of Theorem \ref{thm:azuma} follows from appropriately setting parameters.
\begin{corollary}
Letting $X_0 = 0, \eps = c\sqrt{2 n \ln \frac{1}{\delta}}$, then we have for any $\delta \in (0, 1)$, with probability $1 - \delta$, 
\begin{align*}
    X_n \leq  c \sqrt{2 n \ln \frac{1}{\delta} } .
\end{align*}
\end{corollary}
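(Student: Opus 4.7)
The plan is to obtain the corollary by a direct substitution into Azuma's inequality (Theorem~\ref{thm:azuma}), with no further probabilistic work required. Concretely, I would start from the conclusion of Azuma's inequality applied to the martingale $\{X_0, X_1, \ldots, X_n\}$ with bounded differences $|X_{i+1}-X_i| < c$, which yields
\[
\Pr[X_n - X_0 \geq \epsilon] \leq \exp\!\left(-\frac{\epsilon^2}{2c^2 n}\right).
\]

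Next, I would plug in the prescribed value $\epsilon = c\sqrt{2n\ln(1/\delta)}$ and simplify the right-hand side. A quick computation gives $\epsilon^2/(2c^2 n) = \ln(1/\delta)$, so the tail bound becomes $\exp(-\ln(1/\delta)) = \delta$. Therefore
\[
\Pr\!\left[X_n - X_0 \geq c\sqrt{2n\ln(1/\delta)}\right] \leq \delta.
\]

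Finally, I would use the hypothesis $X_0 = 0$ to rewrite $X_n - X_0 = X_n$, and pass to the complementary event to conclude that with probability at least $1-\delta$,
\[
X_n \leq c\sqrt{2n\ln(1/\delta)},
\]
which is exactly the stated bound. There is no genuine obstacle here: the only things to verify are that $\epsilon > 0$ (true since $\delta \in (0,1)$ gives $\ln(1/\delta) > 0$) and that the chosen $\epsilon$ makes the Azuma exponent equal to $-\ln(1/\delta)$, both of which are immediate. The proposal is thus a one-line substitution followed by taking the complement of the tail event.
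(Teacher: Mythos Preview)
Your proposal is correct and matches the paper's approach exactly: the paper simply states that this is ``an immediate corollary of Theorem~\ref{thm:azuma} [that] follows from appropriately setting parameters,'' and your substitution of $\epsilon = c\sqrt{2n\ln(1/\delta)}$ together with $X_0 = 0$ is precisely that parameter setting.
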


\begin{lemma} \label{lem:azuma-app}
    Let $E: \Pi \to [0, 1]$ represent any  conditioning event. 
    Consider the random process $\{\cZ^t\}$ adapted to the sequence of random variables $\pi^t$ for $t \geq 1$ and let
    \begin{align*}
        \cZ^t \coloneqq Z^{t-1} + E(\pi^{1:t-1}) \cdot \left( y^t(\pi^{1:t-1}) - \E_{y \sim \cD} [y | \pi^{1:t-1}] \right) 
    \end{align*}
    Then,
    \begin{align*}
        \sum_{t=1}^T  E(\pi^{1:t-1}) \cdot \left( y^t(\pi^{1:t-1}) - \E_{y \sim \cD} [y | \pi^{1:t-1}] \right) \leq 2 \sqrt{ 2T \ln \frac{1}{\delta} }, 
    \end{align*}
    with probability $1 - \delta$ over the randomness of $\cD$ and $\pi^{1:t-1}$. 
\end{lemma}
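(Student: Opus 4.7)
The plan is to recognize $\{\cZ^t\}$ as a martingale with bounded differences and then invoke the one-sided Azuma bound stated in Theorem~\ref{thm:azuma} (in its corollary form). The whole argument is essentially mechanical; the only thing to check is that the increment at step $t$ has conditional mean zero and magnitude $O(1)$.

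First I would verify the martingale property. Let $\cF_{t-1}$ be the $\sigma$-algebra generated by $\pi^{1:t-1}$. By construction, $E(\pi^{1:t-1})$ is $\cF_{t-1}$-measurable, so we can pull it out of the conditional expectation:
\[
\E\bigl[\cZ^t - \cZ^{t-1} \,\big|\, \cF_{t-1}\bigr] \;=\; E(\pi^{1:t-1})\cdot\E\bigl[\,y^t(\pi^{1:t-1}) - \E_{y\sim\cD}[y\mid \pi^{1:t-1}] \,\big|\, \cF_{t-1}\bigr].
\]
In the resampling thought experiment underlying these Bayesian arguments, $y^t(\pi^{1:t-1})$ is drawn from $\cD_\cY\mid \pi^{1:t-1}$, so the inner conditional expectation equals $\E_{y\sim\cD}[y\mid\pi^{1:t-1}]-\E_{y\sim\cD}[y\mid\pi^{1:t-1}] = 0$. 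Hence $\{\cZ^t\}$ is a martingale adapted to $\{\cF_t\}$ with $\cZ^0 = 0$.

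Next I would bound the increment. Since $E$ takes values in $[0,1]$ and both $y^t$ and the conditional expectation $\E_{y\sim\cD}[y\mid \pi^{1:t-1}]$ lie in $[0,1]$, the increment
\[
\cZ^t - \cZ^{t-1} \;=\; E(\pi^{1:t-1})\bigl(y^t - \E_{y\sim\cD}[y\mid\pi^{1:t-1}]\bigr)
\]
lies in $[-1,1]$, so in particular $|\cZ^t-\cZ^{t-1}|\le 2$ (using the full range of the increment as the bounded-differences constant).

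Finally I would apply the corollary of Theorem~\ref{thm:azuma} with $n=T$ and $c=2$, which directly yields that with probability at least $1-\delta$,
\[
\sum_{t=1}^T E(\pi^{1:t-1})\bigl(y^t(\pi^{1:t-1}) - \E_{y\sim\cD}[y\mid\pi^{1:t-1}]\bigr) \;=\; \cZ^T - \cZ^0 \;\le\; 2\sqrt{2T\ln(1/\delta)},
\]
which is exactly the claimed bound. There is no real obstacle here: the only subtlety is bookkeeping the fact that in the implicit resampling construction $y^t$ is a draw from the correct conditional distribution $\cD_\cY\mid\pi^{1:t-1}$, which is what makes the increment a true martingale difference.
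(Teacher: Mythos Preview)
Your proposal is correct and follows essentially the same approach as the paper: verify the martingale property by pulling $E(\pi^{1:t-1})$ outside the conditional expectation, bound the increment by a constant, and apply Azuma's inequality (via its corollary) with $n=T$ and $c=2$. The only cosmetic difference is that the paper justifies $|Z^t-Z^{t-1}|<2$ by writing $y\in[-1,1]$, whereas you note the increment already lies in $[-1,1]$ when $y\in[0,1]$ and then loosen to $c=2$; either way the same bound follows.
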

\begin{proof}
    First, observe that the above sequence is a martingale as
    $\E_{\cD} [  E(\pi^{1:t-1}) \cdot (y^t(\pi^{1:t-1}) - \E_{y \sim \cD} [ y | \pi^{1:t-1} ]  ] = 
    E(\pi^{1:t-1}) \cdot \E_{\cD} [ (y^t(\pi^{1:t-1}) - \E_{y \sim \cD} [ y | \pi^{1:t-1} ]  ] = 0$, since $E(\pi^{1:t-1})$ is a constant at the start day $t$ as it does not depend on the outcome $y^t$. Thus, $\E_{\cD} [ Z^{t+1} ] = Z^t$.
    Next, observe that since the outcomes $y \in [-1, 1]$, we have the bounded difference condition: $|Z^t - Z^{t-1}| < 2$ for all $t$.
    We can then instantiate Azuma's Inequality with $n = T$ and $c = 2$ to get the claim. 
\end{proof}

\subsection{Proof of Lemma \ref{lem:bayes-loss-concentration}}
\begin{proof}[Proof of Lemma \ref{lem:bayes-loss-concentration}]
    Fix bucket $i \in \{1, \ldots, \frac{1}{g_B(T)}\}$ of Bob's prediction in round $k-1$.
    Since Alice's prediction is deterministic of round $k$ is deterministic after round $k-1$, we can instantiate Lemma \ref{thm:azuma} with the event $E(\pi^{1:T}) = \mathbb{I}[y_B^{t, k-1} \in i]$ and have, that with probability $1 - \delta$, 
    \begin{align*}
        \left| \sum_{t = 1}^T E(\pi^{1:t-1}) ( \hat{y}^t_k - y^t(\pi^{1:t-1}))^2 - \mathbb{E}_{y \sim \cD} [ (y-y^t)^2 | \pi^{1:t-1}]\right| \leq 2 \sqrt{2T \ln{\frac{1}{\delta}}}. 
    \end{align*}
\end{proof}

% \begin{definition}[Covering Number]
%     Let $(A, d)$ be a metric space. A set $C$ is an $\eps-$cover of $A$ if for every point $x \in A$ there is a point $y \in C$ such that $d(x, y) < \eps$. The covering number $\cN(\eps, A, d)$ is the size of the smallest $\eps-$cover (or $\infty)$.
%     \begin{align*}
%         \cN(\eps, A, d) = \min\{|C| \text{ s.t. } C \text{ is an }\eps-\text{cover}\}.
%     \end{align*}
% \end{definition}
% \begin{definition}[Uniform Covering Number] \label{def:uniform-covering}
% Let $\cH$ be a class of real-valued functions. For any $\eps > 0$ and $m$, the $d_p$ uniform covering number is defined as 
% \begin{align*}
%     \cN_p(\eps, A, m) = \max_{C: |C| = m} \cN(\eps, \cH|_C, d_p),.
% \end{align*}
% where $\cH_C$ is the restriction of $\cH$ to $C$.
% \end{definition}
% \begin{theorem}[Bounding Covering Number by Fat Shattering Dimension \citep{anthony1999neural}] \label{thm:covering-fat-bound}
%     Let $\cH$ be a set of real-valued functions from $\cX$ to $[0, 1]$. Let $d = \text{fat}_{\cH}(\eps/8)$ for $\eps \in [0, 1]$. Then, for $m \geq d$, 
%     $\cN_1(\eps, \cH, m) < 2(\frac{4}{\epsilon})^{3d\log_2(16em/d\eps)}.$
% \end{theorem}

% \begin{theorem}[\citep{HAUSSLER199278}]
%     Let $\cH$ be a set of real-valued functions and $\ell(\cdot, \cdot)$ be a loss function bounded in $[0, B]$. For any distribution $\cD$ and $\eps > 0$, we have
% \end{theorem}

\begin{definition}
    Let $\cH$ be a set of functions mapping from a domain $\cX$ to $\mathbb{R}$ and suppose that $S = \{x_1, \ldots, x_m\} \subseteq \cX$. Fix $\gamma > 0$. Then $S$ is $\gamma-$shattered by $\cH$ if there are real numbers $r_1, \ldots, r_m$, such that for each $b \in \{0, 1\}^m$ there is a function $h$ in $\cH$ satisfying, for all $i \in [m]$,

    \begin{align*}
        h(x_i) \geq r_i &+ \gamma \text{ if } b_i = 1  \\
        &\text{ and} \\
        h(x_i) \leq r_i &- \gamma \text{ if } b_i = 0.
    \end{align*}
    We say that $r = (r_1, \ldots, r_m)$ witnesses the shattering.
\end{definition}
\begin{definition}[Fat Shattering Dimension \citep{anthony1999neural}] 
    Suppose that $\cH$ is a set of functions from a domain $\cX$ to $\mathbb{R}$ and that $\gamma > 0$. Then $\cH$ has $\gamma-$dimension $d$ if $d$ is the maximum cardinality of subset $S$ of $\cX$ that is $\gamma-$shattered by $\cH$. If no such maximum exists, we say that $\cH$ has infinite $\gamma-$dimension. The $\gamma-$dimension of $\cH$ is denoted $\textsc{fat}_{\cH}(\gamma)$. This defines a function $\textsc{fat}_{\cH}: \mathbb{R}^+ \to \mathbb{N} \cup \{0, \infty\},$ which we call the fat shattering dimension of $\cH$. We say that $\cH$ has finite fat shattering dimension whenever it is the case that for all $\gamma > 0$, $\textsc{fat}_{\cH}(\gamma)$ is finite.
\end{definition}

% \begin{theorem}[\citep{anthony1999neural}] \label{thm:fat-concentration}
%     Suppose that $\cH$ is a set of real-valued functions defined on a domain $\cX$ and mapping into the real interval $[0, 1]$, $\cY \subseteq \mathbb{R},$ and $B \geq 1$. Let $\cD$ be any probability distribution on $\cX \times \cY$, $\eps \in [0, 1]$, and $m$ any positive integer. Then, for $L-$Lipschitz loss function $\ell: [0, 1] \times \cY \to [0, B]$, then
%     \begin{align*}
%         \Pr_{\cD^m} [ \sup_{h \in \cH} | \frac{1}{m} \sum_{i=1}^m \ell(h(x^i), y^i) - \E_{\cD} [\ell(h(x^i), y^i)] | \geq \eps ] \leq 4 \cN_1(\eps/8L, \cH, 2m) \exp{(-\eps^2m/32B^4)},
%     \end{align*}
%     where $\cN_1$ is the $d_1$ uniform covering number (Definition \ref{def:uniform-covering}).
% \end{theorem}

\begin{theorem}[\cite{anthony1999neural}] \label{thm:fat-concentration-two}
Let $\cH$ be a hypothesis space of real-valued functions with finite fat-shattering dimension, then
  \begin{align*}
        \sup_{h \in \cH} \left| \frac{1}{T} \sum_{i=1}^T (h(x^i) - y^i)^2 - \E_{\cD} [(h(x^i) - y^i)^2] \right| \leq \eps.
    \end{align*}
for
\begin{align*}
    M(\eps, \delta) = O( \frac{ C_{\cH}^{\eps/256}  \ln(\frac{1}{\eps}) + \ln(\frac{1}{\delta}) } {\eps^2 } ),
\end{align*}
where $M(\eps, \delta)$ is the number of samples needed to reach $\eps$ uniform convergence with probability $1 - \delta$. 
\end{theorem}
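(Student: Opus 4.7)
The plan is to derive this uniform convergence result by combining three ingredients: (i) a Lipschitz reduction from the squared loss class to the hypothesis class itself, (ii) a cover of $\cH$ whose size is controlled by the fat-shattering dimension via the classical theorem of Alon--Ben-David--Cesa-Bianchi--Haussler, and (iii) a standard union-bound plus Hoeffding concentration argument.

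First, I would assume without loss of generality that $\cH$ takes values in a bounded range (say $[0,1]$; otherwise one truncates and absorbs a constant) and define the loss class $\mathcal{L} = \{(x,y) \mapsto (h(x)-y)^2 : h \in \cH\}$. For any $h, h' \in \cH$ with $\sup_x |h(x)-h'(x)| \le \tau$, factoring gives $|(h(x)-y)^2 - (h'(x)-y)^2| = |h(x)-h'(x)|\cdot|h(x)+h'(x)-2y| \le 4\tau$ pointwise. Hence the empirical $L_\infty$ covering number of $\mathcal{L}$ at scale $4\tau$ on any sample is bounded by the corresponding $L_\infty$ covering number of $\cH$ at scale $\tau$, reducing the problem to covering $\cH$ at scale roughly $\eps/16$.

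Second, I would invoke the fat-shattering covering number bound: for any sample of size $T$, the empirical $L_\infty$ $\alpha$-covering number of $\cH$ satisfies $\log N_\infty(\alpha, \cH, T) \le O\bigl(\textsc{fat}_{\cH}(c\alpha)\log(T/\alpha)\bigr)$ for a universal constant $c$. Setting $\alpha = \eps/16$ and tracking constants through the Lipschitz step yields an exponent scaling as $C_\cH^{\eps/256}$, matching the form in the theorem statement.

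Finally, I would complete the argument by symmetrization and a union bound over the cover: for any fixed $\tilde h$ in the cover, Hoeffding's inequality applied to the bounded random variable $(\tilde h(x)-y)^2 \in [0,1]$ gives empirical-to-population deviation at most $O(\sqrt{\log(1/\delta')/T})$ with probability $1-\delta'$. Taking $\delta' = \delta/|\text{cover}|$ and approximating any $h \in \cH$ by its nearest cover element (which costs at most $\eps/4$ in both empirical and population loss) yields the claimed sample complexity $M(\eps,\delta) = O\bigl((C_\cH^{\eps/256}\log(1/\eps) + \log(1/\delta))/\eps^2\bigr)$. The one place requiring care is tracking the absolute constants through the Lipschitz reduction and the Alon et al.\ covering bound to land on exactly the constant $256$ in the exponent; in practice this is simply quoted from Chapter~19 of \cite{anthony1999neural} rather than re-derived.
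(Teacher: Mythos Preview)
The paper does not prove this statement at all: it is quoted as a black-box result from \cite{anthony1999neural} and used directly in the proof of Lemma~\ref{lem:bayes-benchmark-concentration}. Your sketch is a correct outline of the standard argument behind that cited result (Lipschitz reduction of the squared-loss class to $\cH$, the Alon--Ben-David--Cesa-Bianchi--Haussler covering bound in terms of fat-shattering, then concentration over the cover), and you even acknowledge at the end that the constant $256$ is simply inherited from Chapter~19 of the reference rather than re-derived. So there is nothing to compare against in the paper itself; your proposal matches the textbook derivation that the citation points to. One minor remark: when $h,h',y\in[0,1]$ the factor $|h+h'-2y|$ is bounded by $2$, not $4$, so the Lipschitz constant and the downstream scale constants could be sharpened slightly, but this does not affect the asymptotic form of $M(\eps,\delta)$.
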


\subsection{Proof of Lemma \ref{lem:bayes-benchmark-concentration}}
\begin{proof}[Proof of Lemma \ref{lem:bayes-benchmark-concentration}]
    Note that in a Bayesian setting, each $(x^t, y^t)$ are sampled i.i.d. from $\cD$ every day, which means that for a fixed round $k$, Bayesian predictions (and consequently the choice of the benchmark function and thus value $h(x^t)$) are independent across days.
    Secondly, note that by Lemma \ref{lem:shattering_bound}, we have that 
    for any scale $\eps > 0$, $C_{\cH_{A}^{*}}^{\epsilon} \leq C_{\cH_{A}}^{\epsilon}$ where 
    $\cH_A^*$ is the function class defined as 
    $\cH_A^* = \{h(x) \cdot \mathbf{1}[y(x) = v]: \ \forall \ v \in \cY, h \in \cH_A\}$.
    
    Thus, we can directly apply Theorem \ref{thm:fat-concentration-two}. 
     \begin{align*}
        \sup_{h \in \cH_A} \left| \frac{1}{T} \sum_{i=1}^T \ell(h(x^i), y^i) - \E_{\cD} [\ell(h(x^i), y^i)] \right| \leq \eps.
    \end{align*}
    This means, that on the subsequence $T_B(k-1, i)$, for some level set $v$ of Alice's prediction, we have:
     \begin{align*}
        \sup_{h \in \cH_A} | \frac{1}{|T_B(k-1, i)|} \sum_{t \in T_B(k-1, i)} \mathbb{I}[\bar{y}^{t, k}_A = v] (h(x^t) - y)^2 &- \E_{\cD} [\mathbb{I}[\bar{y}_A = v] (h(x) - y)^2 | \pi^{1:t-1}] |  \leq \eps.
    \end{align*}
    % Setting the RHS to $\delta$, we can see:
    % \begin{align*}
    %     \delta = &4 \cN_1(\eps/8, \cH, 2T) \exp{(-\eps^2T/32)} \\
    %     &= 4 \cdot 2(\frac{4}{\epsilon})^{3d\log_2(16em/d\eps)} \cdot \exp{(-\eps^2T/32)},
    % \end{align*}
    % where the second equality follows by Theorem \ref{thm:covering-fat-bound} and $C^{\eps}_{\cH_A}$ is the fat shattering dimension of $\cH_A$ at scale $\eps$.
    % Thus, with probability $1-\delta$, 
    % \begin{align*}
    %     \sup_{h \in \cH_A} &\left| \frac{1}{|T_B(k-1, i)|} \sum_{t \in T_B(k-1, i)} \mathbb{I}[\bar{y}^{t, k}_A = v] (h(x^t) - y)^2 - \E_{\cD} [\mathbb{I}[\bar{y}_A = v] (h(x) - y)^2 | \pi^{1:t-1}] \right| \\
    %     & \leq \sqrt{\frac{ \ln{\frac{1}{\delta}} + \ln{\frac{4}{\eps}} ( 3C^{\eps}_{\cH_A} \log_2(\frac{16 e T}{C^{\eps}_{\cH_A} \eps })}    {T \eps}}
    % \end{align*}
\end{proof}

\subsection{Proof of Theorem \ref{thm:bayesians-swap}}
\begin{proof}[Proof of Theorem \ref{thm:bayesians-swap}]
    With probability $1-\delta$, we have that
    \begin{align*}
    \sum_{t \in T_B(k-1, i)} &(\bar{y}^{t,k} - y^t)^2 - \sum_v{\min_{h \in \cH_B}} \sum_{t \in T_B(k-1, i)} \mathbb{I}[\bar{y}^{t,k} = v] (h(x^t) - y^t)^2 \\
    & \leq \sum_{t \in T_B(k-1, i)} \E[(\bar{y}^{t,k}- y^t)^2 | \pi^{1:t-1}] + 2\sqrt{2T\ln{\frac{1}{\delta}}} - \sum_v{\min_{h \in \cH_B}} \sum_{t \in T_B(k-1, i)} \mathbb{I}[\bar{y}^{t,k} = v] (h(x^t) - y^t)^2 \\
    & \leq \sum_{t \in T_B(k-1, i)} \E[(\bar{y}^{t,k}- y^t)^2 | \pi^{1:t-1}] + 2\sqrt{2T\ln{\frac{1}{\delta}}} \\ & \quad - \sum_v{\min_{h \in \cH_B}} \sum_{t \in T_B(k-1, i)} \E_{\cD} [\mathbb{I}[\bar{y}^{t,k} = v] (h(x^t) - y^t)^2 | \pi^{1:t-1}] + mT\eps \\
    &\leq 2\sqrt{2T\ln{\frac{1}{\delta}}} + \frac{T}{m^2} + mT\eps,
    \end{align*}  
    where the first inequality comes from Lemma \ref{lem:bayes-loss-concentration}, the second from applying Lemma \ref{lem:bayes-benchmark-concentration} to each level set $v$ of Alice's prediction, and the third from Lemma \ref{lem:bayes-expected-regret}.
    The final statement comes from taking a union bound over all buckets $g_B(T)$ and rounds $K$. 
\end{proof}

\subsection{Proof of Theorem \ref{thm:one-shot}}

\begin{lemma} \label{lem:one-shot}
    Let $\cH_J$ be a hypothesis class over the joint feature space $\cX$.
    Let $\cH_A = \{h_A:\cX_A\to\cY\}$ and $\cH_B = \{h_B:\cX_B\to\cY\}$ be hypothesis classes over $\cX_A$ and $\cX_B$.  
 Consider instance $(x_A, x_B, y) \sim \cD$. If
    \begin{itemize}
        \item Alice and Bob are both Bayesian learners, with discretization $m =T^{\alpha_g}$, for $\alpha \in [0, \frac{1}{4}]$
        \item $\cH_A$ and $\cH_B$ jointly satisfy the $w(\cdot)$-weak learning condition with respect to $\cH_J$ for any continuous $w(\cdot)$ such that $w(\gamma) > 0$ for $\gamma > 0$,
    \end{itemize} then under Protocol \ref{alg:one-shot} the prediction in round $K$ will have low expected error with respect to $\cH_J$ on day 1, with probability $1 - \delta$:
     \begin{align*}
        \E[(\hat{y} - y)^2] - \min_{f_j \in \cH_J} \E[(f_j(x) - y)^2] \leq 
        \frac{\tilde{O}( 
        T^{\max(\frac{3}{4}, 1 - \alpha_g)}
        \sqrt{\ln{\frac{K}{\delta} } } )}{T}
    \end{align*} 
\end{lemma}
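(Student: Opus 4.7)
The plan is to reduce this one-shot Bayesian statement to the online adversarial regret machinery of Section~\ref{sec:online-alg} by running a thought-experiment sequence of $T$ i.i.d.\ draws from $\cD$ under Protocol~\ref{alg:one-shot} and then collapsing the resulting $T$-day cumulative regret bound to a single day via permutation invariance.

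First I would verify that both parties satisfy the conversation-swap-regret hypotheses of Theorem~\ref{thm:last-round}. Because Alice is a Bayesian learner, her posterior expectation at any round minimizes squared error against any fixed benchmark swap function measurable with respect to her conditioning information; Lemma~\ref{lem:bayes-expected-regret} formalizes this as zero \emph{expected} conversation swap regret up to the $1/m^2$ discretization cost. Combining the Azuma concentration of realized squared losses (Lemma~\ref{lem:bayes-loss-concentration}) with the fat-shattering uniform-convergence bound on the benchmark squared losses (Lemma~\ref{lem:bayes-benchmark-concentration}), and union-bounding over all $K$ rounds and $1/g(T)$ buckets, Theorem~\ref{thm:bayesians-swap} gives Alice $(f_A, g, \cH_A)$-conversation swap regret with $f_A(T) = \tilde O\!\left(\sqrt{T \ln(K/\delta)} + T/m^2\right)$ with probability at least $1-\delta/2$, and symmetrically for Bob.

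Plugging these rates into Theorem~\ref{thm:last-round}, together with the assumed $w(\cdot)$-weak learning condition of $(\cH_A, \cH_B)$ with respect to $\cH_J$, produces with probability at least $1-\delta$ a cumulative bound of the shape
\[
\sum_{t=1}^{T}(\hat y^{t,K} - y^t)^2 - \min_{h_J \in \cH_J}\sum_{t=1}^{T}(h_J(x^t) - y^t)^2 \leq \tilde O\!\left(T^{\max(3/4,\,1-\alpha_g)}\sqrt{\ln(K/\delta)}\right),
\]
after tuning $g(T)$ to balance the $T/m^2$ discretization cost from the $m = T^{\alpha_g}$ rounding, the $\sqrt{T\ln(K/\delta)}$ Azuma concentration term, and the $T \cdot g(T)$ bucketing loss. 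The final step is the transfer from averages to a single instance: because the $T$ draws from $\cD$ are i.i.d.\ and a Bayesian's within-day predictions are measurable only with respect to that day's features and conversation (past days' features and labels being statistically independent of the current day's, conditioning on them is vacuous), the joint distribution of $(x^t, \pi^{t,1:K}, y^t)$ is identical across $t \in [T]$. Hence the expected day-$1$ regret equals $1/T$ times the expected cumulative regret, and dividing the display above by $T$ yields the claimed bound on $\E[(\hat y - y)^2] - \min_{h_J} \E[(h_J(x) - y)^2]$.

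The main obstacle will be parameter bookkeeping: choosing the fat-shattering scale $\eps$, the bucketing $g$, and the conversation length $K$ so that the three error sources (discretization $T/m^2$, concentration $\sqrt{T \ln(K/\delta)}$, and fat-shattering $mT\eps$) combine cleanly through the $w^{-1}$ inversion in Theorem~\ref{thm:last-round} to reproduce exactly the stated exponent $\max(3/4, 1-\alpha_g)$. The secondary (and conceptually more important) point is making the permutation-invariance step rigorous: because the prior is product across days, past days carry no Bayes-relevant information about the current day, so the marginal distribution of each day's conversation equals the day-$1$ distribution --- this independence is precisely what lets us invoke online-adversarial theorems that themselves make no distributional assumptions, and then recover a purely one-shot expectation bound with no leftover dependence on $T$ once the rate is divided through.
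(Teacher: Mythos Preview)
Your proposal is correct and follows essentially the same approach as the paper: invoke Theorem~\ref{thm:bayesians-swap} (built from Lemmas~\ref{lem:bayes-expected-regret}, \ref{lem:bayes-loss-concentration}, \ref{lem:bayes-benchmark-concentration}) to certify that Bayesians satisfy $(f,g,\cH)$-conversation swap regret, feed those rates into Theorem~\ref{thm:last-round} under the weak-learning hypothesis, and then exploit the i.i.d.\ structure of Protocol~\ref{alg:one-shot} to collapse the average cumulative regret to a single-day expectation. The only organizational difference is that the paper's proof of Lemma~\ref{lem:one-shot} stops at the averaged cumulative bound and defers the permutation-invariance step to the proof of Theorem~\ref{thm:one-shot}, whereas you fold both into the same argument; your parameter-bookkeeping concerns (balancing $T/m^2$, $\sqrt{T\ln(K/\delta)}$, $mT\eps$, and $Tg(T)$ through $w^{-1}$) are exactly the substance of the paper's derivation.
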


\begin{proof}[Proof of Lemma \ref{lem:one-shot}]
By Theorem \ref{thm:bayesians-swap}, we have that after $T$ rounds, with probability $1-\delta$, Alice will have $(2\sqrt{2T\ln{\frac{g_A(T)K}{\delta}}} + \frac{T}{m^2} + m\sqrt{32 T \ln{ \frac{4g_A(T)K}{\delta}}}, g_A(T), \cH_A)$ conversation swap regret (symmetrically for Bob).
We can instantiate this with parameters that are sublinear in $T$, specifically $m =T^{\frac{1}{4}}$ and $g_A(T) = T^{-\alpha_g}$ for some constant $\alpha_g \in (0, 1)$. 
Then, we know that Alice, with probability $1 - \delta'$, satisfy $(f_A, g_A, \cH_A)-$conversation swap regret, for:
\begin{align*}
        f_A(T) &\leq2\sqrt{2T\ln{\frac{g_A(T)K}{\delta'}}} + \frac{T}{m^2} + m\sqrt{32 T \ln{ \frac{4g_A(T)K}{\delta'}}} \tag{by Theorem \ref{thm:bayesians-swap}}\\
        &\leq 2\sqrt{2T\ln{\frac{T^{-\alpha_g}K}{\delta'}}} + \sqrt{T} + T^{\frac{3}{4}}\sqrt{32 \ln{ \frac{4T^{-\alpha_g}K}{\delta'}}} \\
        &\leq \tilde{O}\left(T^{\frac{3}{4}} \sqrt{\ln \left( \frac{K}{\delta'} \right)}\right).
    \end{align*}
Since guarantees for Bob are symmetric, the same expression holds for him.
Thus, by a union bound, with probability $\delta' = \delta/2$, with probability $1-\delta$, Alice and Bob simultaneously have $(f_A, g_A, \cH_A)$-conversation swap regret and $(f_B, g_B, \cH_B)$-conversation swap regret, respectively.
Protocol \ref{alg:one-shot} is simply a special case of Protocol \ref{alg:general-agreement}, in which $(x_A, x_B, y)$ are drawn from fixed distribution each day. Therefore, the guarantees from Theorem \ref{thm:last-round} hold, and we have that the predictions in round $K$ have low expected error with respect to $\cH_J$:
    
   \begin{align*}
       \sum_{t=1}^T (p_{K}^t-y^t)^2 &- \min_{h_J\in\cH_J} \sum_{t=1}^T (h_J(x^t) - y^t)^2 \leq \\ &     2Tw\inv\left(8(\frac{\beta(T,f'_{A},f'_{B}) + 1/K}{2})^{\frac{1}{3}} + \frac{1}{2}\beta(T,f_{A},f_{B})\right) + 3\frac{T}{2}(g_{A}(T) + g_{B}(T)) + 3KT\beta(T,f'_{A},f'_{B}),
    \end{align*} 
    where 
    \begin{align*}
        f'_A(T) = f'_B(T) = \sqrt{T \cdot f_A(T)} = \sqrt{T \cdot \tilde{O} \left( T^{\frac{1}{2}} \sqrt{\ln{\frac{K}{\delta}}} \right) } = \tilde{O}\left(T^{\frac{3}{4}}\ln^{\frac{1}{4}}\left(\frac{K}{\delta}\right) \right)
    \end{align*}
    and thus:
    \begin{align*}
        \beta(T, f_A, f_B) &= \frac{f_A(g_{A}(T)T)}{Tg_{A}(T)} + \frac{f_B(g_{B}(T)T)}{Tg_{B}(T)} + g_{A}(T) + g_{B}(T) \\
        &\leq \tilde{O}\left( (T^{\alpha_g - \frac{1}{4}}) \sqrt{\ln \left( \frac{K}{\delta} \right)} + T^{-\alpha_g} \right), \\
        \beta(T, f'_A, f'_B) &= \frac{f'_A(g_{A}(T)T)}{Tg_{A}(T)} + \frac{f'_B(g_{B}(T)T)}{Tg_{B}(T)} + g_{A}(T) + g_{B}(T) \\ 
        &\leq \tilde{O}\left( T^{-\frac{1}{4}} \ln^{\frac{1}{4}}{\frac{K}{\delta}} + T^{-\alpha_g} \right).
    \end{align*}
Returning to the expression from Theorem \ref{thm:last-round}, we see
   \begin{align*}
       \frac{1}{T} \sum_{t=1}^T (p_{K}^t-y^t)^2 &- \frac{1}{T} \min_{h_J\in\cH_J} \sum_{t=1}^T (h_J(x^t) - y^t)^2 \\ & \leq     2w\inv\left(8(\frac{\beta(T,f'_{A},f'_{B}) + 1/K}{2})^{\frac{1}{3}} + \frac{1}{2}\beta(T,f_{A},f_{B})\right) + 3\frac{1}{2}(g_{A}(T) + g_{B}(T)) + 3K\beta(T,f'_{A},f'_{B}) \\
       &\leq w\inv \left( \tilde{O}( 
       (T^{-\frac{1}{4}} \ln^{\frac{1}{4}}{\frac{K}{\delta}} + T^{-\alpha_g} + \frac{1}{K})^\frac{1}{3} \right) + \tilde{O} (T^{\alpha_g - \frac{1}{4}}) \sqrt{\ln \left( \frac{K}{\delta} \right)}  \\
       & \quad \quad \quad +
       K\tilde{O}(T^{\frac{-1}{4}} \ln^\frac{-1}{4}( \frac{K}{\delta}) + T^{-\alpha_g}).
    \end{align*} 
\end{proof}

\begin{proof}[Proof of Theorem \ref{thm:one-shot}]
By Lemma \ref{lem:one-shot} we have established that the cumulative regret grows as $o(T)$. 
The claim we want to show is about the expected regret only on a single day, which pertains $K$ rounds of conversation about our instance of interest.
In the Bayesian setting, since instances are drawn i.i.d. and Bayesian agents make predictions independently across days, only as a function of the draw from the prior at the beginning of that day - conversations are also identically and independently distributed. Therefore, to argue about the expected error on the instance on any single day, it suffices to reason about the average of the cumulative regret over all $T$ days. 
We can consider what would happen to the average expected regret in the limit as $T \to \infty$, 
\begin{align*} \lim_{T \to \infty} &\frac{  w\inv \left( \tilde{O}( 
       (T^{-\frac{1}{4}} \ln^{\frac{1}{4}}{\frac{K}{\delta}} + T^{-\alpha_g} + \frac{1}{K})^\frac{1}{3} \right) + \tilde{O} (T^{\alpha_g - \frac{1}{4}}) \sqrt{\ln \left( \frac{K}{\delta} \right)} +
       K\tilde{O}(T^{\frac{-1}{4}} \ln^\frac{-1}{4}( \frac{K}{\delta}) + T^{-\alpha_g})}{T} \\
       &= w\inv(O(\frac{1}{K})^{\frac{1}{3}}). \end{align*}
Thus, we have the claim.
\end{proof}

\section{Proofs of Lower Bounds from \Cref{sec:lower-bounds}}

\begin{proof}[Proof of \Cref{thm:interaction-necessity}]
We adapt the construction from the proof of Theorem~\ref{thm:bounded-necessary}. Define a joint distribution $\cD$ over $\cX_A \times \cX_B \times \cY$ where $\cX_A, \cX_B \subseteq \mathbb{R}$ as follows:
\begin{align*}
x_A = \xi_A, x_B = x_A + \xi_B = \xi_A + \xi_B, \text{ and } y = \xi_B = x_B - x_A,
\end{align*}
where $\xi_A, \xi_B$ are independent random variables uniformly distributed in $\{-1, +1\}$. 

We consider $\cH_A = \cH_B =  \{x \mapsto w x + b: |w| \le 1, |b| \le 1\}$ and $\cH_J = \{(x_A, x_B) \mapsto w_A x_A + w_B x_B + b: |w_A| \le 1, |w_B| \le 1, |b| \le 1\}$ to be the classes of bounded linear functions. Then we have the following:

\noindent\textit{Optimal Linear Predictor for Alice ($h_A^*$):} Since $y = \xi_B$ is independent of $x_A = \xi_A$, and $\E[y] = \E[\xi_B] = 0$, the optimal linear predictor is the constant predictor $h_A^*(x_A) = \E[y] = 0$. Its expected squared error is $\E[(0-y)^2] = \E[\xi_B^2] = 1$.

\noindent\textit{Optimal Linear Predictor for Bob ($h_B^*$):} We seek $h_B^*(x_B) = w_B x_B + c_B$. Since $\E[y]=0$ and $\E[x_B] = \E[x_A + \xi_B] = \E[x_A] + \E[\xi_B] = 0$, $c_B=0$. The optimal $w_B = \frac{\E[x_B y]}{\E[x_B^2]}$. We have,
    \begin{align*}
        \E[x_B y] &= \E[(\xi_A + \xi_B) \xi_B] = \E[\xi_A \xi_B] + \E[\xi_B^2] = 1. \\
        \E[x_B^2] &= \E[(\xi_A + \xi_B)^2] = \E[\xi_A^2 +  2\xi_A \xi_B + \xi_B^2] = 2.
    \end{align*}
    Thus, $w_B = \frac{1}{2}$ and $h_B^*(x_B) = x_B/2$. Its expected squared error is $\E[(h_B^*(x_B) - y)^2] = \E[(x_B/2 - y)^2] = \E[(\xi_B/2)^2] = 1/4$.

\noindent\textit{Optimal Linear Predictor for Joint Features ($h_J^*$):} The conditional expectation $\E[y | x_A, x_B]$ is the optimal predictor overall. Here, $y = \xi_B = x_B - x_A$. Since this relationship is linear, the optimal linear predictor is $h_J^*(x) = x_B - x_A$. Its expected squared error is $\E[(h_J^*(x) - y)^2] = \E[(y-y)^2] = 0$.

We have $h_A^*(x_A) = 0$ and $h_B^*(x_B) = x_B/2$. Any predictor $f(h_A^*(x_A), h_B^*(x_B))$ can only depend on $x_B$ since $h_A^*(x_A) = 0$ is constant. The best predictor for $y$ that is a function of $x_B$ is in this case exactly the optimal linear predictor $h_B^*(x_B) = x_B/2$, which achieves an error of $1/4$. Thus, the minimum error achievable using only $h_A^*$ and $h_B^*$ by any function $f$ is:
\[ \E_{\cD}[(f(h_A^*(x_A), h_B^*(x_B)) - y)^2] = \E_{\cD}[(h_B^*(x_B) - y)^2] \ge 1/4 > 0 = \E_{\cD}[(h_J^*(x) - y)^2]. \]
\end{proof}

\begin{proof}[Proof of \Cref{thm:weak-learning-necessity}]
Consider a triple $(\cH_A, \cH_B, \cH_J)$ that fails the $w(\cdot)$-weak learning condition for any strictly increasing $w$. This implies there exists a distribution $\cD$ such that for some $\gamma > 0$:
\[ \min_{c \in \Rset} \E_{\cD}[(c-y)^2] - \min_{h_J \in \cH_J} \E_{\cD} [(h_J(x)-y)^2] \ge \gamma, \]
but for all $h_A \in \cH_A$ and $h_B \in \cH_B$:
\[ \min_{c \in \Rset} \E_{\cD}[(c-y)^2] - \E_{\cD} [(h_A(x_A)-y)^2] < w(\gamma) \]
\[ \min_{c \in \Rset} \E_{\cD}[(c-y)^2] - \E_{\cD} [(h_B(x_B)-y)^2] < w(\gamma). \]
Since this must hold for any strictly increasing $w$ (and $w(0)=0$), it must be that the improvement over the constant predictor for both $\cH_A$ and $\cH_B$ is zero. That is, $\min_{h_A \in \cH_A} \E_{\cD}[(h_A(x_A) - y)^2] = \min_{h_A \in \cH_A} \E_{\cD}[(h_B(x_B) - y)^2] = \min_{c \in \Rset} \E_{\cD}[(c - y)^2]$. Let $c^* = \argmin_{c \in \Rset} \E_{\cD}[(c-y)^2]$ be the optimal constant predictor.

Now consider the sequence of examples $(x_A^t, x_B^t, y^t)_{t=1}^T$ be drawn i.i.d. from the distribution $\cD$ and the constant prediction sequence $\hat{y}^t = c^*$ for all $t=1, \ldots, T$. Since $\hat{y}^t=c^*$ for all $t$, the only relevant level set is $v=c^*$, swap regret with respect to $\cH_A$ reduces to:
\[ \frac{1}{T}\sum_{t=1}^T (\hat{y}^t - y^t)^2 - \min_{h_A \in \cH_A} \frac{1}{T} \sum_{t=1}^T (h_A(x_A^t) - y^t)^2. \]
As $T \rightarrow \infty$, by the law of large numbers, this reduces to
\[
\E_{\cD}[(c^* - y)^2] - \min_{h_A \in \cH_A} \E_{\cD}[(h_A(x_A) - y)^2] = 0.
\]
By the same argument, we get that the swap regret with respect to $\cH_B$ is also 0. However, the external regret (as $T \rightarrow \infty$) with respect to $\cH_J$,
\begin{align*}
    \E_{\cD}[(c^* - y)^2] - \min_{h_J \in \cH_J} \E_{\cD}[(h_J(x) - y)^2] \ge \gamma > 0.
\end{align*}
Here the inequality follows from our assumption. This implies that the external regret with respect to $\cH_J$ is positive, while swap regret with respect to both $\cH_A$ and $\cH_B$ is 0.
\end{proof}

\begin{proof}[Proof of \Cref{lem:weak-is-weaker}]
In order to prove that $\cH_A$ and $\cH_B$ satisfy weak-learnability, let us assume that for some distribution $\cD$ and $\gamma \in [0,1]$
\[
\min_{c \in \mathbb{R}}\E[(c - y)^2] - \min_{h_J \in H_J} \E[(h_J(x) - y)^2] \ge \gamma.
\]
Now we will show that, either
\[
\min_{c \in \mathbb{R}}\E[(c - y)^2] - \min_{h_A \in H_A} \E[(h_A(x_A) - y)^2] \ge \gamma/2,
\]
or
\[
\min_{c \in \mathbb{R}}\E[(c - y)^2] - \min_{h_B \in H_B} \E[(h_B(x_B) - y)^2] \ge \gamma/2.
\]
Since $\cH_A$ and $\cH_B$ satisfy information substitutes with respect to $\cH_J$, from the statement in \Cref{def:joint-information-sub}, we have
\begin{align*}
    \min_{h_A \in H_A} \E[(h_A(x_A) - y)^2] +  \min_{h_B \in H_B} \E[(h_B(x_B) - y)^2] &\le  \min_{c \in \mathbb{R}}\E[(c - y)^2] + \min_{h_J \in H_J} \E[(h_J(x) - y)^2].
\end{align*}
Substituting the assumption on the joint feature improving over the constant function, we get
\[
2 \min_{c \in \mathbb{R}}\E[(y - c)^2] - \min_{h_A \in H_A} \E[(h_A(x_A) - y)^2] -  \min_{h_B \in H_B} \E[(h_B(x_B) - y)^2] \ge \gamma.
\]
This implies that either $ \min_{c \in \mathbb{R}}\E[(y - c)^2] - \min_{h_A \in H_A} \E[(h_A(x_A) - y)^2]$ or $\min_{c \in \mathbb{R}}\E[(y - c)^2] - \min_{h_B \in H_B} \E[(h_B(x_B) - y)^2]$ must be $\ge \gamma/2$. This gives us the desired weak-learning condition.
\end{proof}

\begin{proof}[Proof of \Cref{cor:linear-weak-IS}]
Consider the class of bounded linear function over  $\cX_A = \cX_B = [-1,1]$ as defined in the proof of \Cref{thm:quadratic}. Suppose these classes satify the information substitutes condition, then by \Cref{lem:weak-is-weaker}, we know that they must satisfy $w(\cdot)$-weak learnability for $w(\gamma) = \gamma/2$. However, from \Cref{thm:quadratic}, we know that these function classes can not satisfy $w(\cdot)$-weak learnability for $w(\gamma) = \gamma/2$ giving us a contradiction. Therefore, these classes could not have satisfied the information substitutes condition.
\end{proof}

\begin{proof}[Proof of \Cref{lem:swap-necessary}]
    Consider the joint distribution $\cD$ over $\cX_A \times \cX_B \times \cY$ to be as follows:
    \[
    x_A \sim_{\text{i.i.d.}} \{0,1\}, x_B \sim_{\text{i.i.d.}} \{0,1\}, y = x_A x_B.
    \]
    Let the class of functions be bounded linear functions which satisfy our weak-learning condition.
    
    Observe that the the best linear predictor in $\cH_A$ is $h_A^*(x_A) = \E[y|x_A] = x_A/2$ and the best linear predictor in $\cH_B$ is $h_B^*(x_B) = \E[y|x_B] = x_B/2$. Observe that,
    \[
    \E[(h_A^*(x_A) - y)^2] = \E[(x_A/2 - x_A x_B)^2] = 1/8 = \E[(x_B/2 - x_A x_B)^2] = \E[(h_B^*(x_B) - y)^2].
    \]

    Now consider the sequence of examples $(x_A^t, x_B^t, y^t)_{t=1}^T$ be drawn i.i.d. from the distribution $\cD$ and the prediction sequence $\hat{y}^t = x_A^t/2$ for all $t=1, \ldots, T$. Observe that the external regret with respect to $\cH_A$ as $T \rightarrow \infty$ is
    \[
    \E_\cD[(x_A/2 - y)^2] - \min_{h_A \in \cH_A} \E_\cD[(h_A(x_A) - y)^2] = \E_\cD[(x_A/2 - y)^2] - \E_\cD[(x_A/2 - y)^2] =  0.
    \]
    Similarly the external regret with respect to $\cH_B$ as $T \rightarrow \infty$ is
    \[
    \E_\cD[(x_A/2 - y)^2] - \min_{h_B \in \cH_B} \E_\cD[(h_B(x_B) - y)^2] = \frac{1}{8} - \E[(h_B^*(x_B) - y)^2] = \frac{1}{8} - \frac{1}{8} = 0.
    \]
    Therefore the sequence of predictions has no external regret with respect to $\cH_A$ and $\cH_B$.

    However, the best linear predictor defined on both $\cX_A$ and $\cX_B$ is 
    $h_J^*(x) = (x_A + x_B)/2 - 1/4$. This has expected error over $\cD$, $\E_\cD[(h_J^*(x) - y)^2] = 1/16$. Thus, as $t \rightarrow \infty$, the predictions have external regret,
    \begin{align*}
        \E_\cD[(x_A/2 - y)^2] - \E[(h_J^*(x) - y)^2] &= \frac{1}{8} - \frac{1}{16} = \frac{1}{16} > 0.
    \end{align*}
\end{proof}

\end{document}